%%%%%%%% ICML 2019 EXAMPLE LATEX SUBMISSION FILE %%%%%%%%%%%%%%%%%

\documentclass{article}
% Recommended, but optional, packages for figures and better typesetting:
\usepackage{microtype}
\usepackage{graphicx}
\usepackage{subfigure}
\usepackage{booktabs} % for professional tables
\usepackage{multirow}
\usepackage{tabto}
% hyperref makes hyperlinks in the resulting PDF.
% If your build breaks (sometimes temporarily if a hyperlink spans a page)
% please comment out the following usepackage line and replace
% \usepackage{icml2019} with \usepackage[nohyperref]{icml2019} above.
\usepackage{hyperref}

% Attempt to make hyperref and algorithmic work together better:
\usepackage{xcolor,colortbl, soul, color}
% Attempt to make hyperref and algorithmic work together better:

\newcommand{\bb}{\textit{black-box}\xspace}
\newcommand{\wb}{\textit{white-box}\xspace}

% Use the following line for the initial blind version submitted for review:
\usepackage[accepted]{icml2019}

% If accepted, instead use the following line for the camera-ready submission:
%\usepackage[accepted]{icml2019}

% The \icmltitle you define below is probably too long as a header.
% Therefore, a short form for the running title is supplied here:
\icmltitlerunning{There are No Bit Parts for Sign Bits in Black-Box Attacks}

% my packages:
\usepackage{matharticle}
\usepackage[abs]{overpic}
\usepackage{anyfontsize}
\usepackage{paralist}
\usepackage{caption}
\captionsetup{parindent=\parindent}

\begin{document}

\twocolumn[
\icmltitle{There are No Bit Parts for Sign Bits in Black-Box Attacks}

% It is OKAY to include author information, even for blind
% submissions: the style file will automatically remove it for you
% unless you've provided the [accepted] option to the icml2019
% package.

% List of affiliations: The first argument should be a (short)
% identifier you will use later to specify author affiliations
% Academic affiliations should list Department, University, City, Region, Country
% Industry affiliations should list Company, City, Region, Country

% You can specify symbols, otherwise they are numbered in order.
% Ideally, you should not use this facility. Affiliations will be numbered
% in order of appearance and this is the preferred way.
%\icmlsetsymbol{equal}{*}

\begin{icmlauthorlist}
\icmlauthor{Abdullah Al-Dujaili}{mit}
\icmlauthor{Una-May O'Reilly}{mit}
\end{icmlauthorlist}

\icmlaffiliation{mit}{CSAIL, MIT, USA}

\icmlcorrespondingauthor{Abdullah Al-Dujaili}{aldujail@mit.edu}
\icmlcorrespondingauthor{Una-May O'Reilly}{unamay@csail.mit.edu}

% You may provide any keywords that you
% find helpful for describing your paper; these are used to populate
% the "keywords" metadata in the PDF but will not be shown in the document
\icmlkeywords{Machine Learning, ICML}

\vskip 0.3in
]

% this must go after the closing bracket ] following \twocolumn[ ...

% This command actually creates the footnote in the first column
% listing the affiliations and the copyright notice.
% The command takes one argument, which is text to display at the start of the footnote.
% The \icmlEqualContribution command is standard text for equal contribution.
% Remove it (just {}) if you do not need this facility.

\printAffiliationsAndNotice{}  % leave blank if no need to mention equal contribution
%\printAffiliationsAndNotice{\icmlEqualContribution} % otherwise use the standard text.

\begin{abstract}
We present a black-box adversarial attack algorithm which sets new state-of-the-art model evasion rates for query efficiency in the $\ell_\infty$ and $\ell_2$ metrics, where only loss-oracle access to the model is available. On two public black-box attack challenges, the algorithm achieves the highest evasion rate, surpassing all of the submitted attacks. Similar performance is observed on a model that is secure against substitute-model attacks. For standard models trained on the MNIST, CIFAR10, and IMAGENET datasets, averaged over the datasets and metrics, the algorithm is $3.8\times$ less failure-prone, and spends in total $2.5\times$  fewer queries than the current state-of-the-art attacks combined given a budget of $10,000$ queries per attack attempt. Notably, it requires no hyperparameter tuning or any data/time-dependent prior. The algorithm exploits a new approach, namely sign-based rather than magnitude-based gradient estimation.  This shifts the estimation from continuous to binary black-box optimization. With three properties of the directional derivative, we examine three approaches to adversarial attacks. This yields  a superior algorithm breaking a standard MNIST model using just 12 queries on average!
\end{abstract}

\section{Introduction}
\label{sec:introduction}

\paragraph{Problem.}
Deep Neural Networks (DNNs) are vulnerable to adversarial examples, which are malicious inputs designed to fool the network's prediction---see~\citep{biggio2018wild} for a comprehensive, recent overview of adversarial examples. Research on generating these malicious inputs started in the \wb setting, where access to the gradients of the models was assumed. Since the gradient points to the direction of steepest ascent, a malicious input can be perturbed along that gradient to maximize the network's loss, thereby fooling its prediction.
The assumption of access to the underlying gradient does not however reflect real world scenarios.
Attacks accepting a more realistic,  restrictive \bb threat model, which do not assume access to gradients, have since been studied as will be summarized shortly.

Central to the approach of generating adversarial examples in a \bb threat model is estimating the gradients of the model being attacked.
In estimating these gradients (their magnitudes and signs), the community at large has focused on formulating it as a problem in continuous optimization.
Their works seek to reduce the query complexity from the standard $O(n)$, where $n$ is the number of input features/covariates.
In this paper, we  take a different view and focus on estimating just the sign of the gradient by reformulating the problem as minimizing the Hamming distance to the gradient sign. Given access to a Hamming distance (to the gradient sign) oracle, this view guarantees a query complexity of $\Omega(n/\log_2(n+1))$: an order of magnitude lesser than the full gradient estimation's query complexity for most practically-occurring input
dimensions $n$. Our key objective is to answer the following: \textit{
Is it possible to recover the sign of the gradient with high query efficiency and generate adversarial examples as effective as those generated by full gradient estimation approaches?
}

To this end, we propose a novel formulation capitalizing on some properties of the directional derivative which, approximated by finite difference of loss queries, has been the powerhouse of black-box attacks. Particularly, this  leads to the following contributions at the intersection of adversarial machine learning and black-box (zeroth-order) optimization:
\begin{inparaenum}[1)]
\item We present three properties of the directional derivative of the loss function of the model under attack in the direction of $\{\pm1\}^n$ vectors, and propose methods to estimate the gradient sign bits exploiting these properties. Based on one of the properties, namely separability, we devise a divide-and-conquer algorithm, which we refer to as \signhunter, that reduces the search complexity from $2^n$ sign vectors to $O(n)$. When given a budget of $O(n)$ queries, \signhunter is guaranteed to perform at least as well as \fgsm~\citep{aes2015goodfellow}, which has access to the model's gradient. Through rigorous experiments on both standard and adversarially trained models, we find that \signhunter, in its search for the gradient sign, crafts adversarial examples within a fraction of this number of queries outperforming \fgsm and other state-of-the-art black-box attacks.\footnote{The code for reproducing our work will be made available at https://github.com/ALFA-group}
	\item We release a software framework %\footnote{This builds on other open-source frameworks such as the MNIST and CIFAR challenges~\citep{madry2017towards}.} 
	 to systematically benchmark adversarial black-box attacks on DNNs for \mnist, \cifar, and \imgnt datasets in terms of their success rate, query count, and other related metrics. %This was motivated by the problem we faced in comparing approaches from the literature, where different researchers evaluated their approaches on different datasets, metrics, and setups---e.g., some compared only on \mnist while others considered \cifar and \imgnt.
	\item We identify several key areas of research which we believe will help the community of adversarial learning and gradient-free optimization.
\end{inparaenum}

%We also identify several key areas of research which we believe will help the community towards query-efficient adversarial attacks and gradient-free optimization.

\begin{figure*}[t]
	\centering
	\resizebox{0.88\textwidth}{!}{
		\begin{tabular}{ccc}
			\includegraphics[width=0.3\textwidth, clip, trim=0.95cm 0.9cm 0.8cm 0.1cm]{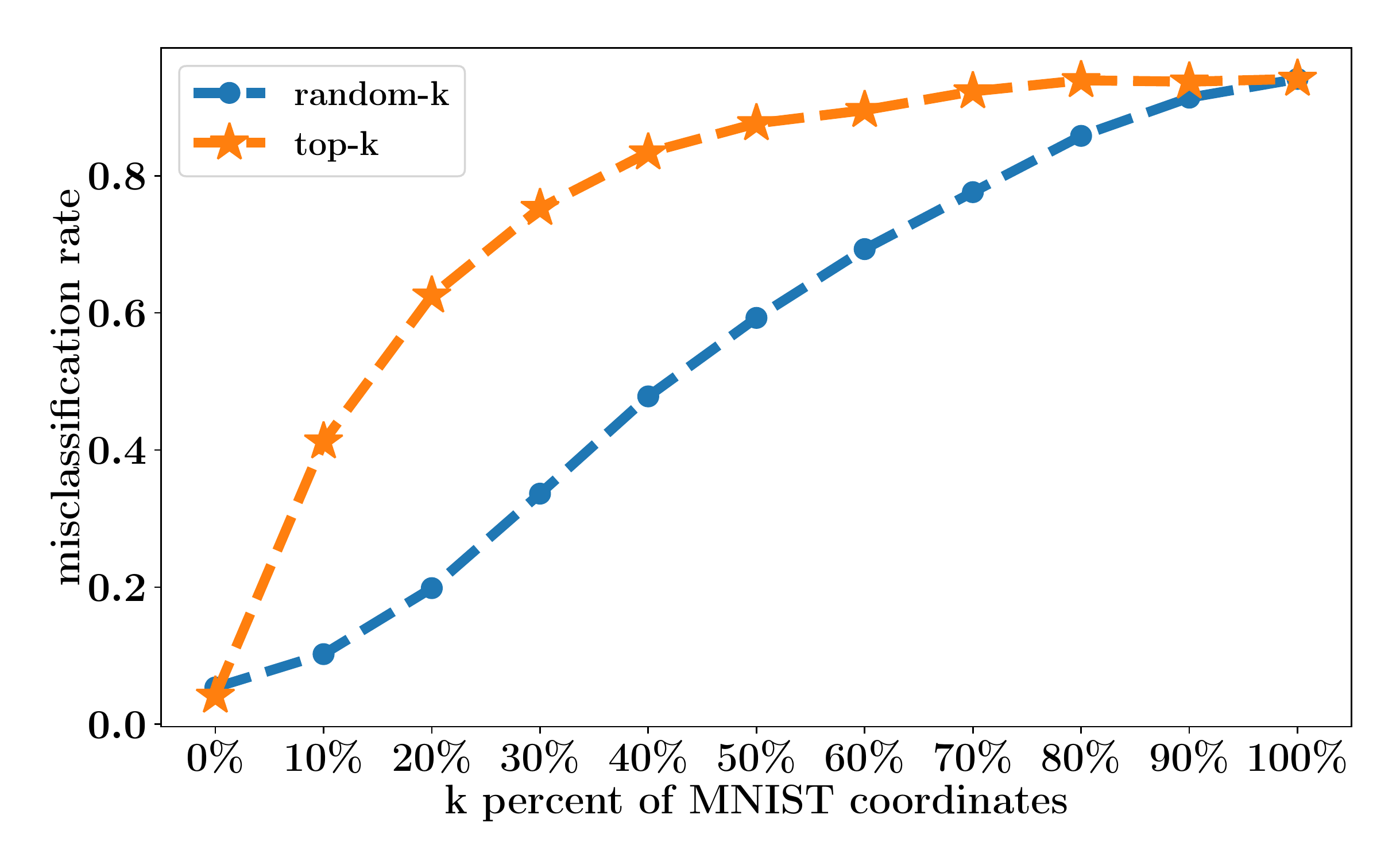} &
			\includegraphics[width=0.3\textwidth,clip, trim=0.95cm 0.9cm 0.8cm 0.1cm]{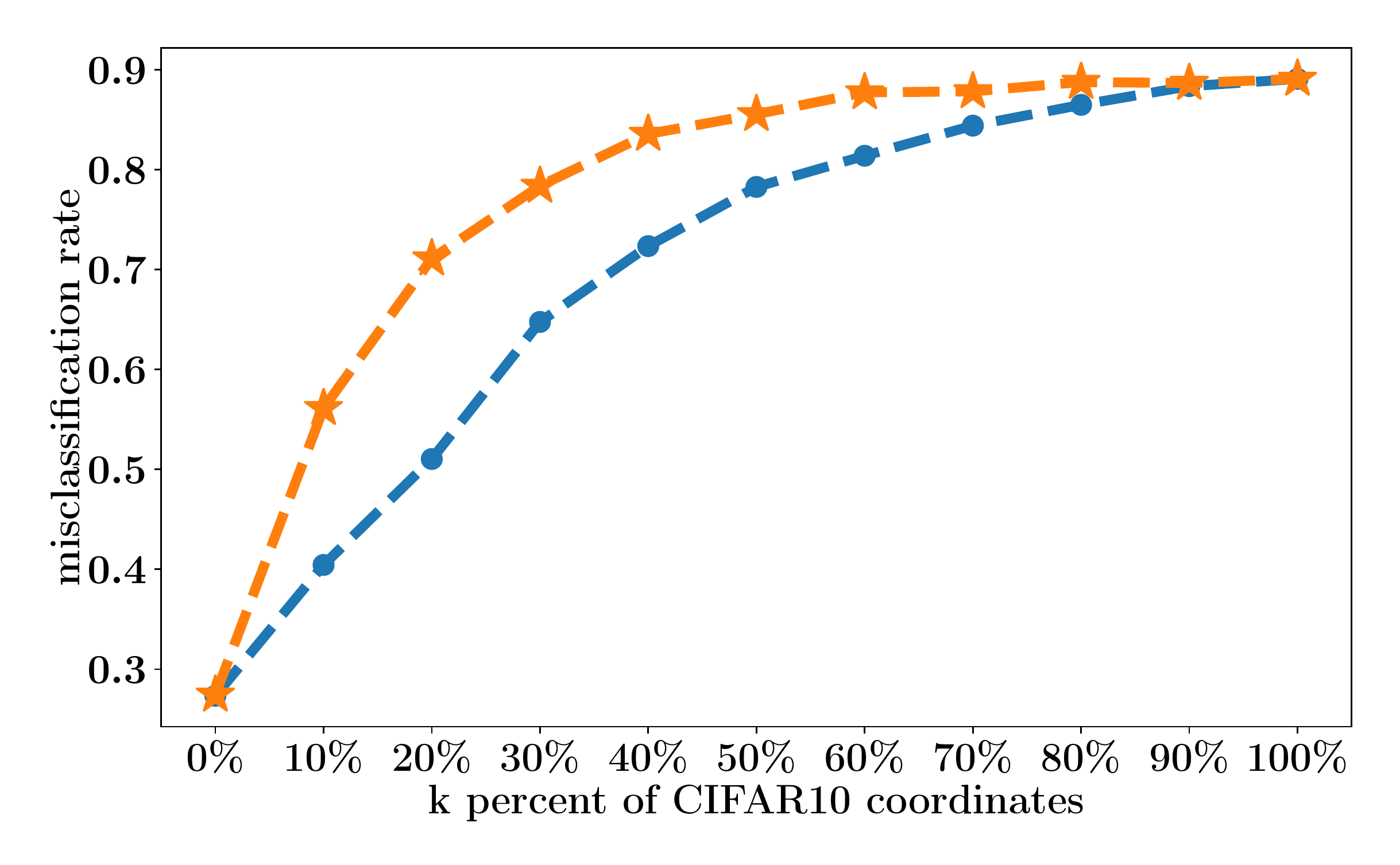} &
			\includegraphics[width=0.3\textwidth,clip, trim=0.95cm 0.9cm 0.8cm 0.1cm]{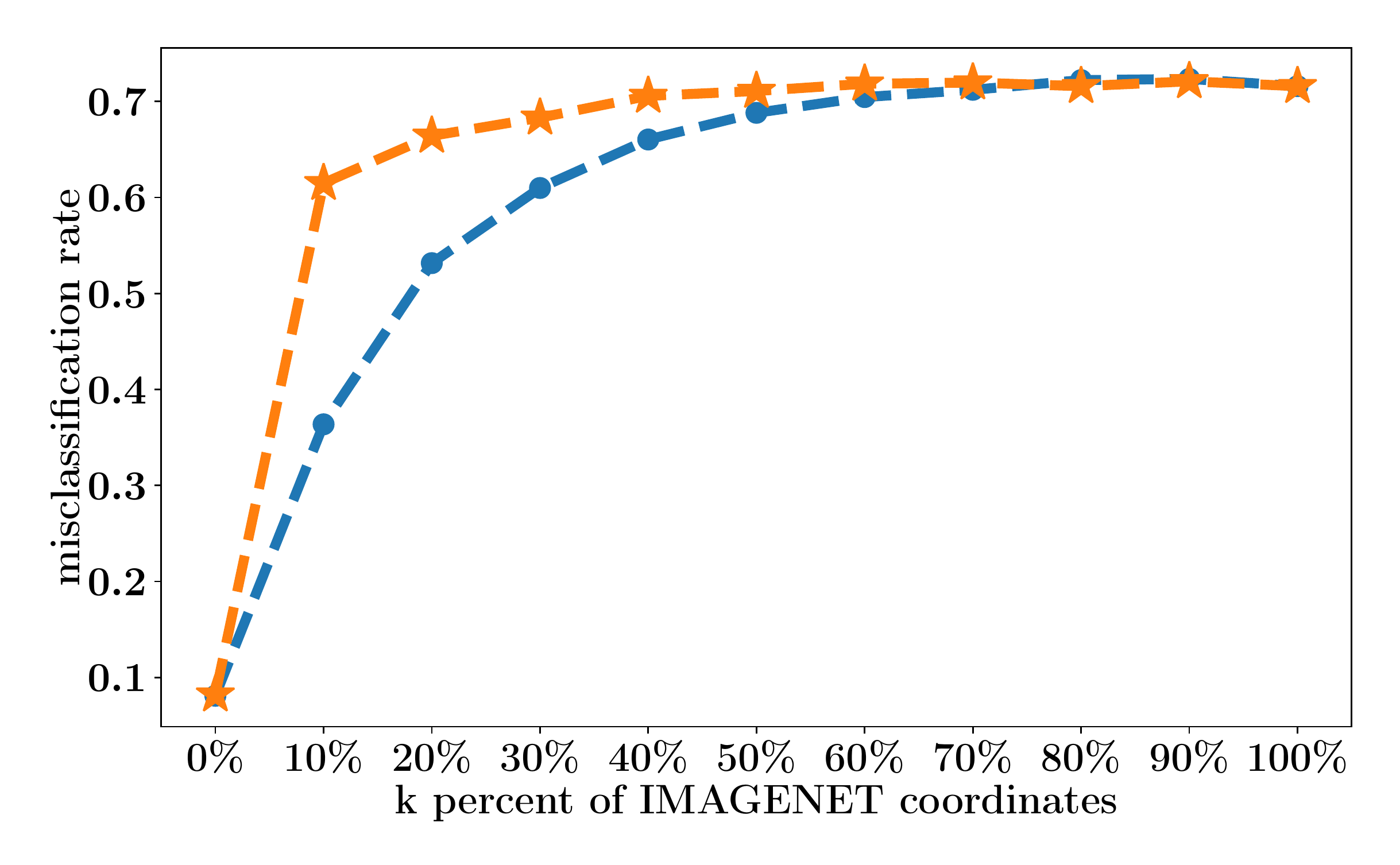} \\
			{ \textbf{(a)}} & { \textbf{(b)}} & { \textbf{(c)}}
			\vspace*{-2mm}
		\end{tabular}
	}
	\caption{Misclassification rate of three neural nets (for (a) \mnist, (b) \cifar, and (c) \imgnt) on the \emph{noisy} \fgsm's adversarial examples as a function of correctly estimated coordinates of $\sgn(\nabla_\vx f(\vx, y))$ on random $1000$ images from the corresponding datasets. Across all the models, estimating the sign of the top $30\%$ gradient coordinates (in terms of their magnitudes)  is enough to achieve a misclassification rate of $\sim70\%$. More details can be found in Appendix~A.
		%, but it is produced with \textsf{TensorFlow} rather than \textsf{PyTorch}.
	}
	\label{fig:keep_k_signs}
\end{figure*}

%\label{sec:related-work}
\paragraph{Related Work.} 
We organize the related work in two themes, namely \emph{Adversarial Example Generation} and \emph{Sign-Based Optimization}. 

\emph{Adversarial Example Generation.}
This literature can be organized as generating examples in either a \wb or a \bb setting.
\citet{nelson2012query} provide a theoretical framework to analyze adversarial querying in a \wb setting. Following the works of \citet{biggio2013evasion} and
\citet{aes2015goodfellow} who introduced the fast gradient sign method (\texttt{FGSM}), several methods to produce adversarial examples have been proposed for various learning tasks and threat perturbation constraints~\citep{carlini2017towards, moosavi2016deepfool, Hayes2017MachineLA, al2018adversarial, huang2018visual, kurk17advsatscale,shamir2019simple}. These methods assume a white-box setup and are not the focus of this work. An approach, which has received the community's attention, involves learning adversarial examples for one model (with access to its gradient information) to transfer them against another~\citep{liu2016delving, papernot2017practical}. As an alternative to the transferability phenomenon, \citet{xiao2018generating} use a Generative Adversarial Network (GAN) to generate adversarial examples which are based on small norm-bounded perturbations.
Both approaches involve learning on a different model, which is expensive, and does not lend itself to comparison in our setup, where we directly query the model of interest.  Among works which generate examples in a \bb setting through iterative optimization schemes, 
\citet{narodytska2017simple} showed how a na\"{i}ve policy of perturbing random segments of an image achieved adversarial example generation. They do not use any gradient information.
\citet{bhagoji2017exploring} reduce the dimensions of the feature space using Principal Component Analysis (PCA) and random feature grouping, before estimating gradients. This enables them to bound the number of queries made.  \citet{chen2017zoo} introduced a principled approach to solving this problem using gradient based optimization. They employ finite differences, a zeroth-order optimization tool, to estimate the gradient and then use it to design a gradient-based attack on models. 
While this approach successfully generates adversarial examples, it is expensive in the number of queries made to the model.
\citet{ilyas18nes} substitute traditional finite differences methods with \texttt{N}atural \texttt{E}volutionary \texttt{S}trategies (\nes) to obtain an estimate of the gradient. \citet{tu2018autozoom} provide an adaptive random gradient estimation algorithm that balances query counts and distortion, and introduces a trained auto-encoder to achieve attack acceleration.
\cite{ilyas2018prior} extend this line of work by proposing the idea of gradient priors and bandits: \bandit. Our work contrasts the general approach used by these works.
We investigate whether just estimating the \emph{sign} of the gradient suffices to efficiently generate examples.

\emph{Sign-Based Optimization.} In the context of general-purpose continuous optimization methods, sign-based stochastic gradient descent was studied in both zeroth- and first-order setups.  In the latter, \citet{bernstein2018signsgd} analyzed \texttt{signSGD}, a \texttt{sign}-based \texttt{S}tochastic \texttt{G}radient \texttt{D}escent, and showed that it enjoys a faster empirical convergence than \texttt{SGD} in addition to the cost reduction of communicating gradients across multiple workers. \citet{liu2018signsgd} extended \texttt{signSGD} to zeroth-order setup with the \zo algorithm.  \zo requires $\sqrt{n}$ times more iterations than \texttt{signSGD}, leading to a convergence rate of $O(\sqrt{n/T})$, where $n$ is the number of optimization variables, and $T$ is the number of iterations.

\emph{Adversarial Examples Meet Sign-based Optimization.} In the context of adversarial examples generation, the effectiveness of  sign of the gradient coordinates was noted in both white- and black-box settings. In the former, the \texttt{F}ast \texttt{G}radient \texttt{S}ign \texttt{M}ethod (\fgsm)---which is algorithmically similar to \texttt{signSGD}---was proposed to generate white-box adversarial examples~\citep{aes2015goodfellow}. \citet{ilyas2018prior} examined a noisy version of \fgsm to address the question of \emph{How accurate of a gradient estimate is necessary to execute a successful attack on a neural net}. In Figure~\ref{fig:keep_k_signs}, we reproduce their experiment on an \imgnt-based model---Plot (c)---and extended it to the \mnist and \cifar datasets---Plots (a) and (b). Observe that estimating the sign of the top $30\%$ gradient coordinates (in terms of their magnitudes)  is enough to achieve a misclassification rate of $\sim70\%$. Furthermore, \zo~\citep{liu2018signsgd} was shown to perform better than \nes at generating adversarial examples against a black-box neural network on the \mnist dataset.

%The rest of the paper is structured as follows. First, a formal background is presented in~Section~\ref{sec:background}. Section~\ref{sec:methods} describes our approach for black-box adversarial attacks by examining three properties of the loss's directional derivative of the model under attack. Experiments are discussed in Section~\ref{sec:experiments}. Using two public black-box attack challenges, we evaluate the approach against one of the defenses developed to mitigate adversarial examples in Section~\ref{sec:challenge}. Finally, open questions and conclusions are outlined in Sections~\ref{sec:open} and~\ref{sec:conclusion}.
\section{Formal Background}
\label{sec:background}

\paragraph{Notation.} 
Let $n$ denote the dimension of a neural network's input. Denote a hidden $n$-dimensional binary code by $\vq^*$. That is, $\vq^*\in \calH \equiv \{-1, +1\}^n$. The response of the Hamming (distance) oracle~$\calO$ to the $i$th query $\vq^{(i)}\in \calH$ is denoted by~$r^{(i)} \in \{0,\ldots, n\}$ and equals the Hamming distance~$r^{(i)}=\ham{\vq^{(i)} - \vq^*}$,
%\begin{equation}
%r^{(i)}=\ham{\vq^{(i)} - \vq^*}\;,
%\label{eq:hamming-dist}
%\end{equation}
where the Hamming norm $\ham{\vv}$ is defined as the number of non-zero entries of vector $\vv$. We also refer to $\calO$ as the \emph{noiseless} Hamming oracle, in contrast to the \emph{noisy} Hamming oracle $\hat{\calO}$, which returns noisy versions of $\calO $'s responses. $\mathbf{1}_n$ is the $n$-dimensional vector of ones.  The query ratio~$\rho \in (0,1]$ is defined as $m/n$ where $m$ is the number of queries to $\calO$ required to retrieve $\vq^*$. Furthermore, denote the directional derivative of some function $f$ at a point $\vx$ in the direction of a vector $\vv$ by $D_\vv f(\vx)\equiv \vv^T \nabla_\vx f(\vx)$ which often can be approximated by the \emph{finite difference} method. That is, for $\delta >0$, we have
\begin{equation}
D_\vv f(\vx) = \vv^T \nabla_\vx f(\vx) \approx 
\frac{ f(\vx + \delta \vv) - f(\vx) }{\delta}\;.
\label{eq:dir-deriv}
\end{equation}
Let $\Pi_{S}(\cdot)$ be the projection operator onto the set $S$, $B_p(\vx, \epsilon)$ be the $\ell_p$ ball of radius $\epsilon$ around $\vx$.
Next, we provide lower and upper bounds on the query ratio~$\rho$.

\paragraph{Bounds on the Query Ratio $\rho$.} Using a packing argument,
~\citet{vaishampayan2012query} proved the following lower bound on query ratio $\rho$.
\begin{theorem}\citep[Theorem 1]{vaishampayan2012query}
	\label{thm:lb}
	For the noiseless Hamming oracle $\calO$, the query ratio must satisfy $\rho=m/n \geq \frac{1}{\log_2(n+1)}$ for any sequence of $m$ queries that determine every $n$-dimensional binary code $\vq^*$ uniquely.
\end{theorem}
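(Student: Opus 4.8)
The plan is to run the standard counting / information-theoretic argument, which is exactly what underlies the ``packing'' bound. First I would fix the query sequence $\vq^{(1)},\ldots,\vq^{(m)} \in \calH$ and observe that each response $r^{(i)} = \ham{\vq^{(i)} - \vq^*}$ is an integer in $\{0,1,\ldots,n\}$, a set of only $n+1$ elements (the difference $\vq^{(i)}-\vq^*$ has entries in $\{-2,0,2\}$, and its Hamming norm just counts the disagreements, which range from $0$ to $n$). Consequently the full response transcript $\bigl(r^{(1)},\ldots,r^{(m)}\bigr)$ is an element of a set of cardinality at most $(n+1)^m$. I would package this as the transcript map $\Phi : \calH \to \{0,\ldots,n\}^m$ defined by $\Phi(\vq^*) = \bigl(\ham{\vq^{(1)} - \vq^*},\ldots,\ham{\vq^{(m)} - \vq^*}\bigr)$.

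Next I would translate the hypothesis ``the $m$ queries determine every $n$-dimensional binary code uniquely'' into the statement that $\Phi$ is injective: if two distinct codes yielded the same transcript, no decoder could distinguish them from the oracle's answers, contradicting unique determination. Injectivity forces $|\calH| \le \bigl|\{0,\ldots,n\}^m\bigr|$, i.e. $2^n \le (n+1)^m$. Taking base-$2$ logarithms gives $n \le m\log_2(n+1)$, and dividing through by the positive quantity $n\log_2(n+1)$ yields precisely $\rho = m/n \ge 1/\log_2(n+1)$, the claimed bound. Equivalently, in ``packing'' language: the nonempty preimage sets $\Phi^{-1}(\cdot)$ partition $\calH$ into at most $(n+1)^m$ cells, and exact recovery demands every cell be a singleton, which again gives $2^n \le (n+1)^m$.

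If one wants the statement to also cover adaptive querying (where $\vq^{(i)}$ may depend on the earlier responses $r^{(1)},\ldots,r^{(i-1)}$), I would replace the single transcript map by the query/response decision tree: the root issues the first query, each internal node branches on the observed Hamming value and hence has at most $n+1$ children, the tree has depth $m$, so it has at most $(n+1)^m$ leaves; unique determination means the set of codes consistent with a leaf is a singleton, so $2^n \le (n+1)^m$ as before. The numeric conclusion is identical in both cases.

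I do not expect a genuine obstacle here; the argument is short. The two points that need care are (i) phrasing ``determine uniquely'' so that it is manifestly equivalent to injectivity of $\Phi$ (equivalently, singleton leaves of the decision tree), and (ii) choosing whether to present the non-adaptive version — which is cleaner and matches the ``sequence of $m$ queries'' wording — or the adaptive decision-tree version. A small sanity check worth noting is that all $n+1$ values really are attainable for any fixed query $\vq^{(i)}$ (take $\vq^* = \vq^{(i)}$ for response $0$ and $\vq^* = -\vq^{(i)}$ for response $n$), so the per-query alphabet cannot be shrunk below $n+1$ by a cheap observation, and the bound is tight up to the usual rounding.
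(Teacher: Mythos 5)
Your proposal is correct and is essentially the same ``packing'' argument that the paper attributes to \citet{vaishampayan2012query}: each of the $m$ responses takes one of $n+1$ values, unique determination forces the transcript map to be injective on $\calH$, and $2^n \le (n+1)^m$ gives the bound. The paper simply defers to the cited reference rather than reproducing this counting step, so there is nothing to add beyond noting that your decision-tree remark correctly extends the bound to adaptive queries.
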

\vspace{-1.2em}
\begin{proof}
	See \citep[Page 4]{vaishampayan2012query}.
\end{proof}
\vspace{-1em}
In the following theorem, we show that no more than $n$ queries are required to retrieve the hidden $n$-dimensional binary code $\vq^*$.
\begin{theorem}
	\label{thm:ub}
	A hidden $n$-dimensional binary code $\vq^* \in \calH$ can be retrieved exactly with no more than $n$ queries
	to the noiseless Hamming oracle $\calO$. %\footnote{The result also applies to $n$-dimensional binary codes represented in $\{0,1\}^n$.}
\end{theorem}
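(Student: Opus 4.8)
The plan is to exhibit an explicit (adaptive) querying strategy that uses exactly $n$ queries. First I would query the all-ones vector $\vq^{(0)} = \mathbf{1}_n$. By definition of $\calO$, the response is $r^{(0)} = \ham{\mathbf{1}_n - \vq^*}$, which is precisely the number of coordinates $i$ with $q^*_i = -1$. So this one query already determines the total ``weight'' of the hidden code.

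Next, for $i = 1, \ldots, n-1$, I would query the vector $\vq^{(i)}$ obtained from $\mathbf{1}_n$ by flipping only its $i$-th entry to $-1$. Since $\vq^{(i)}$ and $\mathbf{1}_n$ differ in exactly one coordinate, the Hamming distance to $\vq^*$ can change by at most $1$, and in fact $r^{(i)} = r^{(0)} - 1$ if $q^*_i = -1$ and $r^{(i)} = r^{(0)} + 1$ if $q^*_i = +1$. Thus each of these $n-1$ queries recovers one bit of $\vq^*$ exactly, bringing the running total to $n$ queries. The final bit $q^*_n$ is then obtained for free: if $k$ denotes the number of indices $i \in \{1,\dots,n-1\}$ already found to satisfy $q^*_i = -1$, then, because $\vq^*$ has exactly $r^{(0)}$ entries equal to $-1$, we must have $q^*_n = -1$ when $k = r^{(0)} - 1$ and $q^*_n = +1$ when $k = r^{(0)}$, and these are the only two possibilities. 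Hence all $n$ bits of $\vq^*$ are determined using exactly $n$ queries. (For $n = 1$ the middle phase is vacuous and the single query $\vq^{(0)} = \mathbf{1}_1$ already settles $\vq^*$.)

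There is no deep obstacle here; the only point requiring care is the off-by-one bookkeeping that lets the procedure terminate after $n-1$ coordinate-flip queries instead of the na\"ive $n$: one has to notice that the global-weight information already carried by $r^{(0)}$ pins down the last coordinate without a further query, so that the query count is $1 + (n-1) = n$ rather than $n+1$. I would also remark that adaptivity is not essential --- the responses to $\vq^{(1)}, \ldots, \vq^{(n-1)}$ are each interpretable on their own, and only the deduction of $q^*_n$ combines them --- but writing the strategy adaptively keeps the argument shortest.
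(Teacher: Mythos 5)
Your argument is correct: the first query $\mathbf{1}_n$ reveals the number of $-1$ entries of $\vq^*$, each single-coordinate flip shifts the Hamming distance by exactly $\pm 1$ and hence reads off one bit, and the count carried by $r^{(0)}$ pins down the final bit without an $(n+1)$-st query. However, your route differs from the paper's. The paper proves the bound algebraically: it observes the identity $\ham{\vq - \vq^*} = \tfrac{1}{2}(n - \vq^T\vq^*)$, so each oracle response is a linear measurement of $\vq^*$, and any $n$ linearly independent query codes assembled into an invertible matrix $Q$ recover $\vq^*$ as $Q^{-1}(n\mathbf{1}_n - 2\vr)$. Your scheme is in fact a special case of that statement (the all-ones vector together with its $n-1$ single-flip variants are linearly independent), but you decode combinatorially rather than by matrix inversion. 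What each buys: the paper's argument is more general --- it shows \emph{any} linearly independent family of $n$ queries works, and the identity it rests on is reused elsewhere in the paper's development --- while yours is more elementary and constructive, with an explicit $O(1)$-per-bit decoding rule and the small bookkeeping observation that saves the last query. Both are valid proofs of the stated $n$-query upper bound.
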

\vspace{-1.2em}
\begin{proof}
	See Appendix C.
\end{proof}

\section{Gradient Estimation Problem} 

At the heart of black-box adversarial attacks is generating a \emph{perturbation vector} to slightly modify the original input $\vx$ so as to fool the network prediction of its true label $y$. Put differently, an adversarial example $\vx^\prime$ maximizes the network's loss $L(\vx^\prime, y)$ but still remains $\epsilon_p$-close to the original input $\vx$. Although the loss function $L$ can be non-concave, gradient-based techniques are often very successful in crafting an adversarial example~\citep{madry2017towards}. That is, setting the perturbation vector as a step in the direction of $\nabla_\vx L(\vx, y)$. Consequently, the bulk of black-box attack methods try to \emph{estimate the gradient} by querying an oracle that returns, for a given input/label pair $(\vx, y)$, the value of the network's loss $L(\vx, y)$. Using only such value queries, the basic approach relies on the \emph{finite difference method} to approximate the directional derivative~(\eqref{eq:dir-deriv}) of the function $L$ at the input/label pair $(\vx, y)$ in the direction of a vector $\vv$, which corresponds to $\vv^T\nabla_\vx L(\vx,y)$. With $n$ linearly independent vectors $\{{\vv^{(i)}}^T\nabla_\vx L(\vx,y) = d^{(i)}\}_{1\leq i \leq n}$, one can construct a linear system of equations to recover the full gradient. Clearly, this approach's query complexity is $O(n)$, which can be prohibitively expensive for large $n$ (e.g., $n=268,203$ for the \imgnt dataset). Moreover, the queries are not adaptive, whereas one could make use of the past queries' responses to construct the new query and recover the full gradient with less queries. Recent works tried to mitigate this issue by exploiting data- and/or time-dependent priors~\citep{tu2018autozoom,ilyas18nes,ilyas2018prior}.

The lower bound of Theorem~\ref{thm:lb} on the query complexity of a Hamming oracle $\calO$ to find a hidden vector $\vq^*$ suggests the following: \emph{instead of estimating the full gradient (sign and magnitude) and apart from exploiting any data- or time-dependent priors; focus on estimating its sign} After all, simply leveraging (noisy) sign information of the gradient yields successful attacks; see Figure~\ref{fig:keep_k_signs}. Therefore, our interest in this paper is the \emph{gradient sign estimation problem}, which we formally define next, breaking away from the \emph{full} gradient estimation to construct black-box adversarial attacks.

% \texttt{P}rojected \texttt{G}radient \texttt{D}escent  (\texttt{PGD}) is one such (iterative) technique~\cite{madry2017towards}. Mathematically, we have~\cite{ilyas2018prior}
%$$\vx^\prime = \Pi_{B_p(\vx, \epsilon)}(\vx + \eta \vs)\;,$$
%with 
%$$\vs = \Pi_{\partial B_p(0,1) \de}

\begin{defn}(Gradient Sign Estimation Problem) For an input/label pair $(\vx,y)$ and a loss function $L$, let $\vg^*=\nabla_\vx L(\vx,y)$ be the gradient of $L$ at $(\vx,y)$ and $\vq^*=\sgn(\vg^*) \in \calH$ be the sign bit vector of $\vg^*$.\footnote{Without loss of generality, we encode the sign bit vector in $\calH\equiv \{-1, +1\}^n$ rather than $\{0, 1\}^n$. This is a common representation in sign-related literature. 
Note that the standard $\sgn$ function has the range of $\{-1, 0, +1\}$. Here, we use the non-standard definition~\citep{zhao2018sparse} whose range is $\{-1, +1\}$. This follows from the observation that DNNs' gradients are not sparse~\citep[Appendix B.1]{ilyas2018prior}.} Then the goal of the gradient sign estimation problem is to find a binary %\footnote{Throughout the paper, we use the terms binary vectors and sign vectors/bits interchangeably.} 
vector $\vq \in \calH$ minimizing the Hamming norm
\begin{equation}
\min_{\vq \in \calH}\ \ham{\vq -\vq^*}\;,
\label{eq:grad-est-obj}
\end{equation}
or equivalently maximizing the directional derivative\footnote{The equivalence follows from  $D_{\vq}L(\vx, y)=\vq^{T}\vg^*$, which is maximized when $\vq=\vq^*=\sgn(\vg^*)$, which in turn is a minimizer of \eqref{eq:grad-est-obj}.}
\begin{equation}
\max_{\vq \in \calH} D_{\vq}L(\vx, y)\;,
\label{eq:grad-est-obj-df}
\end{equation}
from a limited number of (possibly adaptive) function value queries $L(\vx^\prime, y)$.
\label{def:grad-sign-est}
\end{defn}
Next, we tackle the gradient sign estimation problem  leveraging three properties of the loss directional derivative $D_\vq L(\vx,y)$ which, in the black-box setup, is approximated by finite difference of loss value queries $L(\vx^\prime, y)$.

\begin{comment}
\begin{rem}
Recall that our definition of the Hamming distance here is over the binary vectors (\eqref{eq:hamming-dist}), a formal statement of the gradient sign estimation problem. In contrast, \citet{shamir2019simple} consider the Hamming distance in defining the $\ell_0$ threat perturbation constraint: if the threat perturbation constraint is $k$, only $k$ data features (pixels) are allowed to be changed, and each one of them can change a lot.
\end{rem}
\end{comment}

\section{A Framework for Estimating Sign of the Gradient from Loss Oracles}
\label{sec:methods}
Our goal is to estimate the gradient sign bits of the loss function $L$ of the model under attack at an input/label pair ($\vx, y$) from a limited number of loss value queries $L(\vx^\prime, y)$. To this end, we examine the basic concept of directional derivatives that has been employed in recent black-box adversarial attacks. Particularly, we present three approaches to estimate the gradient sign bits based on three properties of the directional derivative  $D_\vq L(\vx, y)$ of the loss in the direction of a sign vector $\vq \in \calH$. For the rest of the paper, we only discuss the most successful one: Divide \& Conquer. Others are described in Appendix B.

\paragraph{Approach 1: Divide \& Conquer.}
\label{sec:approach-3}
Based on the definition of the directional derivative~(\eqref{eq:dir-deriv}), we state the following property.
\begin{property} [Separability of $D_{\vq} L(\vx, y)$]
	\label{prop:separable} The directional derivative $D_\vq L(\vx, y)$ of the loss function $L$ at an input/label pair $(\vx,y)$ in the direction of a binary code $\vq$ is separable. That is,
	\begin{equation}
	\label{eq:separable}
	\max_{\vq \in \calH} D_\vq L(\vx, y) = \max_{\vq \in \calH} \vq^T \vg^* = \sum_{i=1}^{n} \max_{q_i \in \{-1,+1\}} q_i g^*_i
	\end{equation}
\end{property}
%Instead of considering the $2^n$ search space (Section~\ref{sec:approach-2}), 
We employ the above property in a divide-and-conquer search which we refer to as \signhunter. As outlined in Algorithm~\ref{alg:sign-hunter}, the technique starts with a random guess of the sign vector $\vq_1$. It then proceeds to flip the sign of all the coordinates to get a new sign vector $\vq_2$, and revert the flips if the loss oracle returned a value $L(\vx + \delta \vq_2, y)$ (or equivalently the directional derivative ) less than the best obtained so far $L(\vx + \delta \vq_1, y)$. \signhunter applies the same rule to the first half of the coordinates, the second half, the first quadrant, the second quadrant, and so on. For a search space of dimension $n$, \signhunter needs $2^{\lceil \log(n) +1\rceil}-1$ sign flips to complete its search. If the query budget is not exhausted by then, one can update $\vx$ with the recovered signs and restart the procedure at the updated point with a new starting code $\vq_1$ ($\vs$ in Algorithm~\ref{alg:sign-hunter}). In the next theorem, we show that \signhunter is guaranteed to perform at least as well as the Fast Gradient Sign Method \fgsm with $O(n)$ oracle queries.
\begin{theorem}(Optimality of \signhunter)
	\label{thm:signhunter} Given $2^{\lceil \log(n) +1\rceil}$ queries and that the directional derivative is well approximated by the finite-difference~(\eqref{eq:dir-deriv}), \signhunter is at least as effective as \fgsm~\citep{aes2015goodfellow} in crafting adversarial examples.
\end{theorem}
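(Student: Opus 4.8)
\emph{Proof proposal.} The plan is to show that once \signhunter{} has spent its full budget of $2^{\lceil \log(n)+1\rceil}$ queries, the sign vector $\vq$ it maintains is exactly $\vq^{*}=\sgn(\vg^{*})$, so that its best-so-far query point $\vx+\delta\vq^{*}$ — with the finite-difference radius $\delta$ taken to be the perturbation bound $\epsilon$, as in Algorithm~\ref{alg:sign-hunter} — coincides with \fgsm's output $\vx+\epsilon\sgn(\vg^{*})$. This makes the two attacks produce the identical candidate, hence \signhunter{} is at least as effective; in practice it can only do better, since it typically succeeds well before the budget is exhausted, and any residual budget is used to iterate from the updated input, mimicking iterated \fgsm.

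First I would translate \signhunter's accept/revert rule into the language of objective~\eqref{eq:grad-est-obj-df}. Under the hypothesis that the finite difference in~\eqref{eq:dir-deriv} tracks the directional derivative faithfully — precisely, that for every comparison the algorithm performs one has $\sgn\!\big(L(\vx+\delta\vq^{\prime},y)-L(\vx+\delta\vq,y)\big)=\sgn\!\big(D_{\vq^{\prime}}L(\vx,y)-D_{\vq}L(\vx,y)\big)$ — the rule is equivalent to: keep a sign flip iff it does not decrease $D_{\vq}L(\vx,y)=\vq^{T}\vg^{*}$. This reduces the theorem to a combinatorial statement about the divide-and-conquer recursion acting on $\vq^{T}\vg^{*}$.

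The core of the argument is the separability of Property~\ref{prop:separable}. Consider the recursion at its finest scale, where the flipped block is a single coordinate $i$: if $\vq$ is the current iterate and $\vq^{\prime}$ flips only $q_i$, then
\begin{equation}
D_{\vq^{\prime}}L(\vx,y)-D_{\vq}L(\vx,y)=(\vq^{\prime}-\vq)^{T}\vg^{*}=-2\,q_i\,g^{*}_i\;,
\end{equation}
which depends on $q_i$ and $g^{*}_i$ alone — not on the other coordinates, not on the order in which the leaves are visited, and not on whatever flips were accepted at coarser scales. Hence the flip is kept precisely when $q_i g^{*}_i<0$, i.e. when $q_i\neq\sgn(g^{*}_i)$, so after the singleton leaf for coordinate $i$ is processed we have $q_i=\sgn(g^{*}_i)=q^{*}_i$ (here we use that DNN gradients are non-sparse, so $g^{*}_i\neq 0$ and the non-standard $\sgn$ is well defined). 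The binary tree of blocks of sizes $n,\lceil n/2\rceil,\ldots,1$ contains $1+2+\cdots+2^{\lceil\log n\rceil}=2^{\lceil\log(n)+1\rceil}-1$ flips, and one further query evaluates the starting guess $\vq_1$; thus the stated budget is exactly what is needed to reach a singleton leaf for every coordinate. Consequently the returned $\vq$ equals $\vq^{*}$ and attains $D_{\vq}L(\vx,y)=\max_{\vv\in\calH}D_{\vv}L(\vx,y)=\|\vg^{*}\|_1$, which is precisely what \fgsm's step direction $\sgn(\vg^{*})$ attains; the two candidates coincide, proving the claim.

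The main obstacle I anticipate is not the sign-recovery combinatorics — which the separability identity above makes almost immediate — but stating cleanly the approximation clause the theorem leans on: one must posit that the finite-difference surrogate is faithful enough that the sign of every queried loss \emph{difference} matches the sign of the corresponding directional-derivative difference, and I would flag that this is exactly the assumption underlying all finite-difference-based black-box attacks. A minor secondary point is the non-power-of-two bookkeeping — checking that the $\lceil\cdot/2\rceil$-splitting terminates with each coordinate in its own leaf within the quoted flip count — which is routine once the separability observation is in place.
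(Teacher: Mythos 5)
Your proposal is correct and follows essentially the same route as the paper's proof: both arguments reduce to the observation that the final level of the divide-and-conquer recursion flips each coordinate in isolation, so the accept/revert comparison recovers $q^{*}_i=\sgn(g^{*}_i)$ exactly (the paper cites its single-bit-flip identity, Eq.~(12)/(13) in Appendix~B, where you derive the equivalent $-2q_ig^{*}_i$ from separability), under the same ``rank-preserving'' finite-difference assumption. Your write-up is merely more explicit about the faithfulness condition, the independence from coarser-scale flips, and the query count, but introduces no new idea beyond the paper's argument.
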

\vspace{-1.2em}
\begin{proof} See Appendix C.
\end{proof}
\vspace{-0.5em}
Theorem~\ref{thm:signhunter} provides an upper bound on the number of queries required for \signhunter to recover the gradient sign bits, and perform as well as \fgsm. In practice (as will be shown in our experiments), \signhunter crafts adversarial examples with a fraction of this upper bound. Note that one could recover the gradient sign vector with $n+1 < 2^{\lceil \log(n) +1\rceil}$ queries by starting with an arbitrary sign vector and flipping its bits sequentially. Nevertheless, \signhunter incorporates its queries in a framework of majority voting (weighted by the magnitude of the gradient coordinates) to recover as many sign bits as possible with as few queries as possible. Consider the case where all the gradient coordinates have the same magnitude. If we start with a random sign vector whose Hamming distance to the optimal sign vector $\vq^*$ is $n/2$: agreeing with $\vq^*$ in the first half of coordinates. In this case, \signhunter needs just \emph{three} queries to recover the entire sign vector, whereas the sequential bit flipping would require $n+1$ queries. While the magnitudes of gradient coordinates may not have the same value as considered in the previous example; through empirical evaluation (see Appendix G), we found them to be concentrated. Consequently and with high probability, their votes on retaining or reverting sign flips are weighted similarly.
\begin{algorithm}[t]
	\caption{ \signhunter\\
		\textbf{Input:}\\
		\begin{tabular}{p{1.45cm}p{5.75cm}}
			\footnotesize $g:\calH \to \R$ & \footnotesize : the black-box function to be maximized over the binary  hypercube~$\calH\equiv\{-1, +1\}^n$	
		\end{tabular}
	}    
	\label{alg:sign-hunter}
	\begin{algorithmic}
		\footnotesize
		\STATE \textbf{def} init$(g):$\\
		\hspace{2em}$i\gets 0$\\
		\hspace{2em}$h\gets 0$\\
		\hspace{2em}$g\gets g$\\
		\hspace{2em}$\vs \sim \calU(\calH)$ \hfill// e.g., all ones $[+1,+1,\ldots, +1]$\\
		\hspace{2em}done $\gets false$\\
		\hspace{2em}$g_{best} \gets -\infty$
		\STATE ~\\ \textbf{def} is\_done$():$\\
		\hspace{2em} return done\\
		\STATE ~\\ \textbf{def} step$():$\\
		\hspace{2em} chunk\_len $\gets \lceil n / 2^h\rceil$ \\
		\hspace{2em} flip the bits of $\vs$ indexed from $i *$chunk\_len till \\
		\hspace{3em}$(i+1)*$chunk\_len\\
		\hspace{2em} if $g(\vs) \geq g_{best}$:\\
		\hspace{4em} $g_{best} \gets g(\vs)$\\
		\hspace{2em} else:\\
		\hspace{4em} flip back the bits of $\vs$ indexed from \\
		\hspace{5em} $i *$chunk\_len till  $(i+1)*$chunk\_len\\
		\hspace{2em} increment $i$\\ 
		\hspace{2em} if $i == 2^h$:\\
		\hspace{4em} $i \gets 0$ \\
		\hspace{4em} increment $h$\\
		\hspace{4em} if $h == \lceil \log_2(n) \rceil + 1$: done $\gets true$\\
		\STATE~\\ \textbf{def} get\_current\_sign\_estimate$():$\\
		\hspace{2em} return $\vs$
	\end{algorithmic}
\end{algorithm}
\vspace{-1pt}

\begin{algorithm}[t]
	\caption{Black-Box Adversarial Example Generation with \signhunter \\
		{\bfseries Input:} \\
		\begin{tabular}{p{0.8cm}p{6cm}}
			\footnotesize
			$\vx_{init}$& \footnotesize: input to be perturbed, \\
			\footnotesize $y_{init}$&
			\footnotesize : $\vx_{init}$'s true label,  \\
			\footnotesize$B_{p}(., \epsilon)$&
			\footnotesize : $\ell_p$ perturbation ball of radius~$\epsilon$\\
			\footnotesize$L$ &
			\footnotesize : loss function of the neural net under attack
		\end{tabular}
	}  
	\label{alg:craft-adv}
	\begin{algorithmic}[1]
		\footnotesize
		\label{line:signhunter-ae}
		\STATE  $\delta \gets \epsilon$ \hfill // set finite-difference probe to perturbation bound \\
		\STATE $\vx_o \gets \vx_{init}\ $ \\
		\STATE Define the function $g$ as  
		$$g(\vq) = \frac{L(\Pi_{B_p(\vx_{init}, \epsilon)}(\vx_o + \delta \vq), y_{init}) - L(\vx_o, y_{init})}{\delta}$$\\
		\STATE  \signhunter.init$(g)$
		\STATE $// C(\cdot)$ returns top class
		\WHILE	{$C(\vx) = y_{init}$}  
		\STATE \signhunter.step()
		\STATE $\vs \gets$ \signhunter.get\_current\_sign\_estimate()
		\STATE  $\vx \gets \Pi_{B_p(\vx_{init}, \epsilon)}(\vx_o + \delta \vs) $
		\IF {\signhunter.is\_done()} 
		
		\STATE $\vx_o \gets \vx$ \label{line:start-if}\\
		\STATE Define the function $g$ as  
		$$g(\vq) = \frac{L(\Pi_{B_p(\vx_{init}, \epsilon)}(\vx_o + \delta \vq), y_{init}) - L(\vx_o, y_{init})}{\delta}$$\\ 
		
		\STATE \signhunter.init$(g)$ \label{line:end-if}\\
		\ENDIF
		\ENDWHILE
		\STATE \textbf{return} $\vx$
	\end{algorithmic}
\end{algorithm}
\vspace{-1pt}

\begin{comment}
\begin{algorithm}[t]
	\SetKwInOut{Input}{Input}
	\SetKwInOut{Variables}{Variables}
	\SetKwInOut{Output}{Output}
	\SetKwInOut{Initialization}{Initialization}
	
	\Input{
		$g:\calH \to \R$ : the black-box linear function to be maximized over the binary  hypercube~$\calH\equiv\{-1, +1\}^n$	
	}
	%\Initialization{$\mathcal{T}_1\gets\{(0,0)\}$\\ $t\gets 1$}
	%\Output{}
	\BlankLine
	\caption{\signhunter}
	\label{alg:sign-hunter}
	\textbf{def} init$(g):$\\
	\hspace{2em}$i\gets 0$\\
	\hspace{2em}$h\gets 0$\\
	\hspace{2em}$g\gets g$\\
	\hspace{2em}$\vs \sim \calU(\calH)$\\
	\hspace{2em}done $\gets false$\\
	\hspace{2em}$g_{best} \gets g(\vs)$
	~\newline \\
	\textbf{def} is\_done$():$\\
	\hspace{2em} return done
	~\newline \\
	\textbf{def} step$():$\\
	\hspace{2em} chunk\_len $\gets \lceil n / 2^h\rceil$ \\
	\hspace{2em} flip the bits of $\vs$ indexed from $i *$chunk\_len till  $(i+1)*$chunk\_len\\
	\hspace{2em} if $g(\vs) \geq g_{best}$:\\
	\hspace{4em} $g_{best} \gets g(\vs)$\\
	\hspace{2em} else:\\
	\hspace{4em} flip back the bits of $\vs$ indexed from $i *$chunk\_len till  $(i+1)*$chunk\_len\\
	\hspace{2em} increment $i$\\ 
	\hspace{2em} if $i == 2^h$:\\
	\hspace{4em} $i \gets 0$ \\
	\hspace{4em} increment $h$\\
	\hspace{4em} if $h == \lceil \log_2(n) \rceil + 1$:\\
	\hspace{6em} done $\gets true$
	~\newline \\
	\textbf{def} get\_current\_sign\_estimate$():$\\
	\hspace{2em} return $\vs$
\end{algorithm}
\end{comment}

Further, \signhunter is amenable to parallel hardware architecture and thus can carry out attacks in batches more efficiently, compared to the other two approaches we considered. We tested all the proposed approaches (\signhunter and those in Appendix B) on a set of toy problems and found that \signhunter perform significantly better. For these reasons, in our experiments on the real datasets \mnist, \cifar, \imgnt; we opted for \signhunter as our algorithm of choice to estimate the gradient sign in crafting black-box adversarial attacks as outlined in Algorithm~\ref{alg:craft-adv}.

\begin{comment}

\begin{algorithm}[t]
	\SetKwInOut{Input}{Input}
	\SetKwInOut{Variables}{Variables}
	\SetKwInOut{Output}{Output}
	\SetKwInOut{Initialization}{Initialization}
	
	\Input{\\
		$\vx_{init}$: input to be perturbed, \\
		$y_{init}$: $\vx_{init}$'s true label,  \\
		$B_{p}(., \epsilon)$: $\ell_p$ perturbation ball of radius~$\epsilon$,\\
		$L$: loss function of the neural net under attack
	}
	%\Initialization{$\mathcal{T}_1\gets\{(0,0)\}$\\ $t\gets 1$}
	%\Output{}
	\BlankLine
	$\delta \gets \epsilon$\\
	$\vx_o \gets \vx_{init}\ //$ Adversarial input to be constructed\\
	Define the function $g$ as  
	$$g(\vq) = \frac{L(\Pi_{B_p(\vx_{init}, \epsilon)}(\vx_o + \delta \vq), y_{init}) - L(\vx_o, y_{init})}{\delta}$$\\
	\textit{\signhunter.init$(g)$}\\
	$// C(\cdot)$ returns top class\\
	\While{$C(\vx) = y_{init}$}  {
		\textit{\signhunter.step()}\\
		$\vs \gets$ \textit{\signhunter.get\_current\_sign\_estimate()}\\
		$\vx \gets \Pi_{B_p(\vx_{init}, \epsilon)}(\vx_o + \delta \vs) $ \label{line:signhunter-ae}\\
		\If{\signhunter.is\_done()}{ \label{line:start-if}
			$\vx_o \gets \vx$\\
			Define the function $g$ as  
			$$g(\vq) = \frac{L(\Pi_{B_p(\vx_{init}, \epsilon)}(\vx_o + \delta \vq), y_{init}) - L(\vx_o, y_{init})}{\delta}$$\\
			\textit{\signhunter.init$(g)$}}} \label{line:end-if}
	\textbf{return} $\vx$\\
	\caption{Black-Box Adversarial Example Generation with \signhunter}
	\label{alg:craft-adv}
\end{algorithm}
\end{comment}

%\input{methodotherproperties}

\section{Experiments}
\label{sec:experiments}

We evaluate \signhunter and compare it with established algorithms from the literature: \zo~\citep{liu2018signsgd}, \nes~\citep{ilyas18nes}, and \bandit~\citep{ilyas2018prior} in terms of their effectiveness in crafting (without loss of generality) untargeted black-box adversarial examples. Both $\linf$ and $\ltwo$ threat models are considered on the \mnist, \cifar, and \imgnt datasets.

\paragraph{Experiments Setup.} Our experiment setup is similar to \citep{ilyas2018prior}. Each attacker is given a budget of $10,000$ oracle queries per attack attempt and is evaluated on $1000$ images from the test sets of \mnist, \cifar, and the validation set of \imgnt. We did not find a standard practice of setting the perturbation bound $\epsilon$, arbitrary bounds were used in several papers.

We set the perturbation bounds based on the following. For the $\ell_\infty$ threat model, we use \citep{madry2017towards}'s bound for \mnist and \citep{ilyas2018prior}'s bounds for both~\cifar and \imgnt. 

For the $\ell_2$ threat model, \citep{ilyas2018prior}'s bound is used for \imgnt. \mnist's bound is set based on the sufficient distortions observed in \citep{liu2018signsgd}, which are smaller than the one used in \citep{madry2017towards}. We use the observed bound in \citep{cohen2019smoothing} for \cifar.

We show results based on standard models--i.e., models that are not adversarially hardened. For \mnist and \cifar, the naturally trained models from~\citep{madry2017towards}'s \mnist\footnote{\tiny \url{https://github.com/MadryLab/mnist_challenge}} and \cifar\footnote{\tiny \url{https://github.com/MadryLab/cifar10_challenge}} challenges are used.
For \imgnt, the  \emph{Inception-V3} model from \textsf{TensorFlow} is used.\footnote{\tiny \url{
		https://bit.ly/2VYDc4X}
		%https://github.com/tensorflow/tensorflow/blob/master/tensorflow/contrib/slim/python/slim/nets/inception_v3_test.py}
} The loss oracle represents the cross-entropy loss of the respective model. General setup of the experiments is summarized in Appendix D.

\paragraph{Hyperparameters Setup.} To ensure a fair comparison among the considered algorithms, we did our best in tuning their hyperparameters. Initially, the hyperparameters were set to the values reported by the corresponding authors, for which we observed suboptimal performance. This  can be attributed to either using a different software framework,  %(e.g., \textsf{TensorFlow} vs. \textsf{PyTorch})
 models, or the way the model's inputs are transformed. %(e.g., some models take pixel values to be in the range $[0,1]$ while others are built for $[0, 255]$)
 We made use of a synthetic concave loss function to tune the algorithms for each dataset $\times$ perturbation constraint combination. The performance curves on the synthetic loss function using the tuned values of the hyperparameters did show consistency with the reported results from the literature. For instance, we noted that \zo converges faster than \nes. Further, \bandit outperformed the rest of the algorithms towards the end of query budget. That said, we invite the community to provide their best tuned attacks. Note that \signhunter does not have any hyperparameters to tune. The finite difference probe $\delta$ for \signhunter is set to the perturbation bound $\epsilon$ because this perturbation is used for for both computing the finite difference and crafting the adversarial examples---see Line~1 in Algorithm~\ref{alg:craft-adv}. This parameter-free setup of \signhunter offers a robust edge over the state-of-the-art black-box attacks, which often require expert knowledge to carefully tune their parameters as discussed above. More details on the  hyperparameters setup can be found in Appendix D.

\begin{figure*}[t]
	\centering\resizebox{0.9\textwidth}{!}{
		\begin{tabular}{ccc}
			\includegraphics[width=0.3\textwidth ]{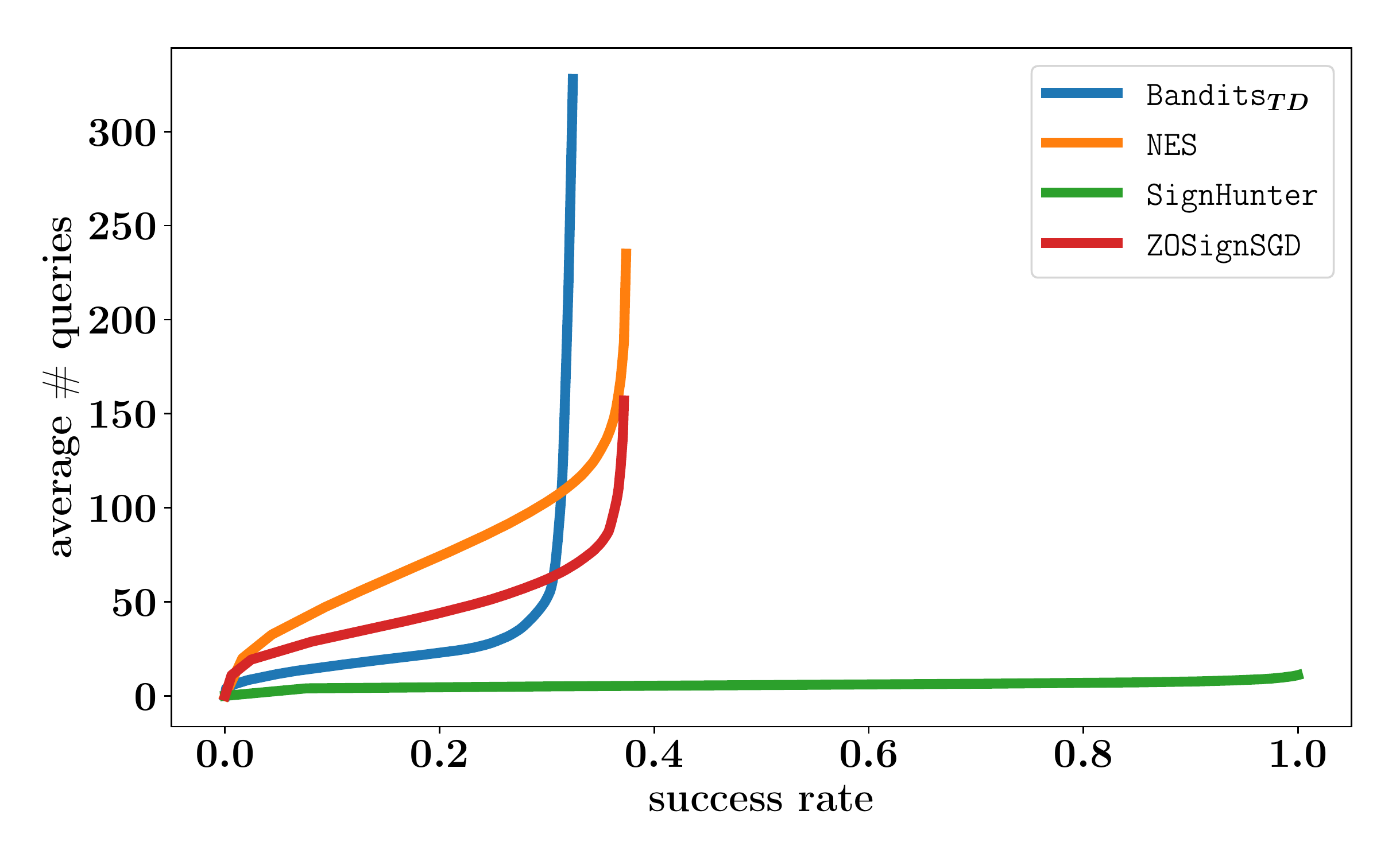} &
			\includegraphics[width=0.3\textwidth ]{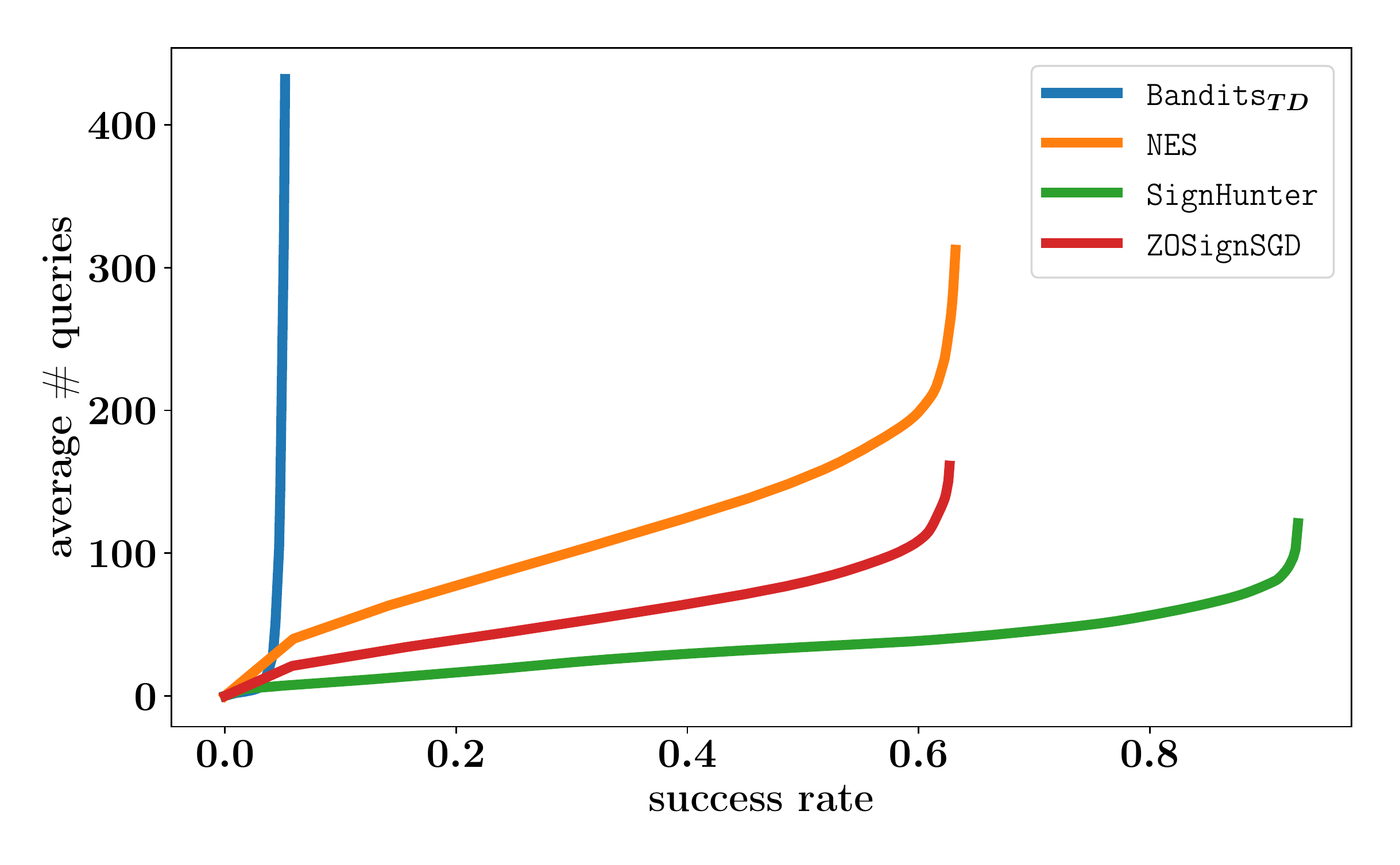} &
			\includegraphics[width=0.3\textwidth ]{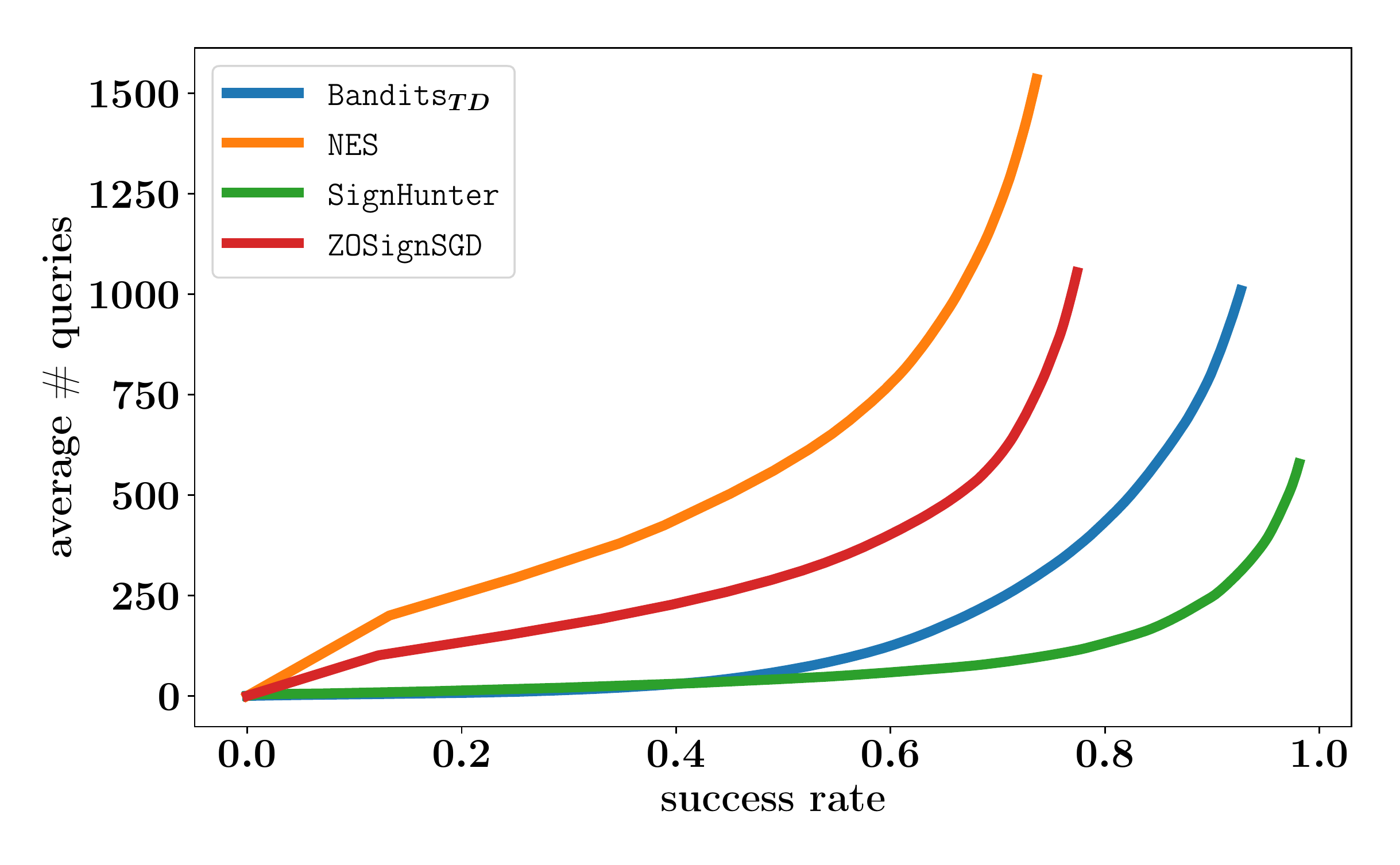}  \\
			(a) \mnist $\linf$ & (b)  \cifar $\linf$  & (c) \imgnt $\linf$ \\ 
			\includegraphics[width=0.3\textwidth ]{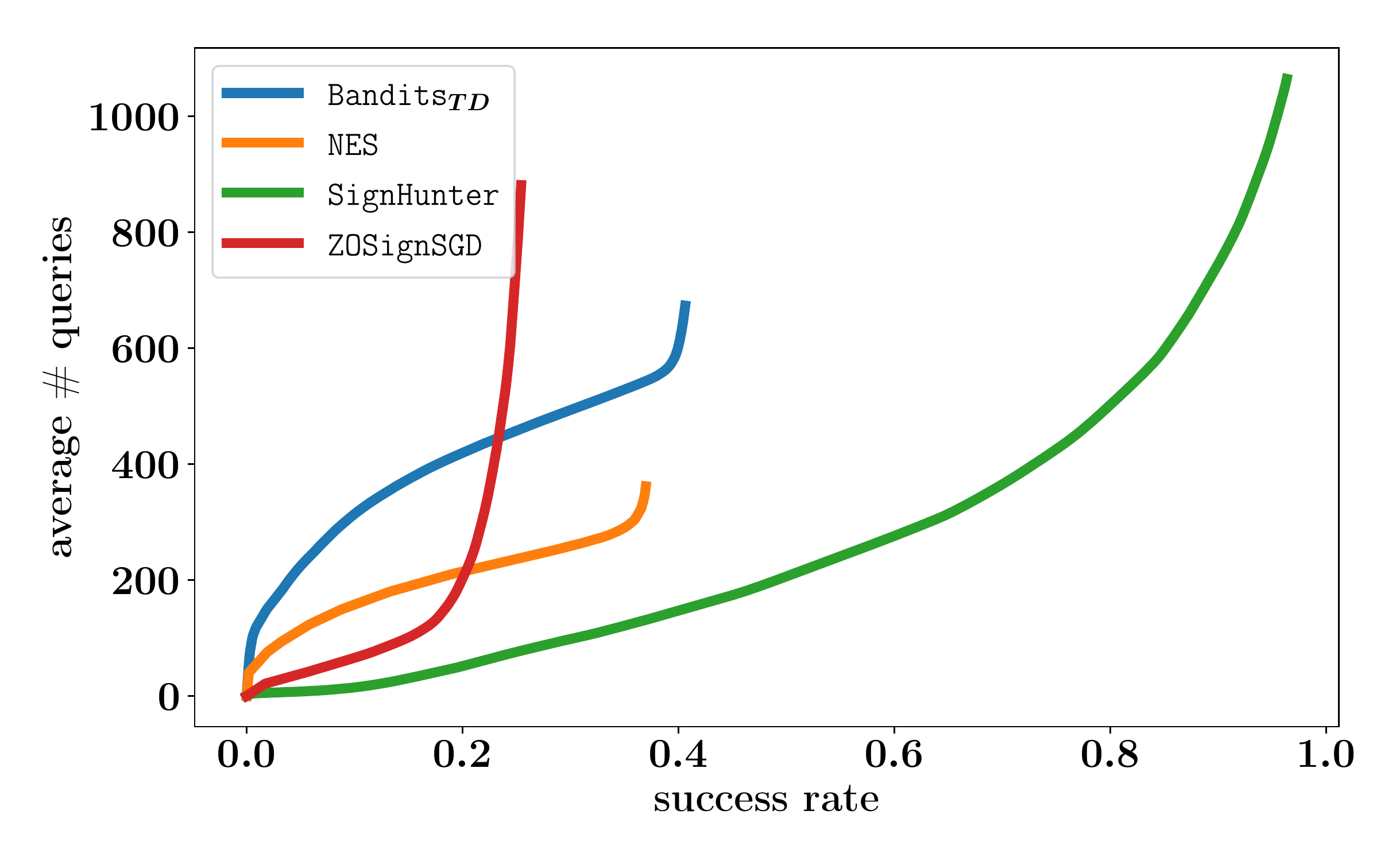} &
			\includegraphics[width=0.3\textwidth ]{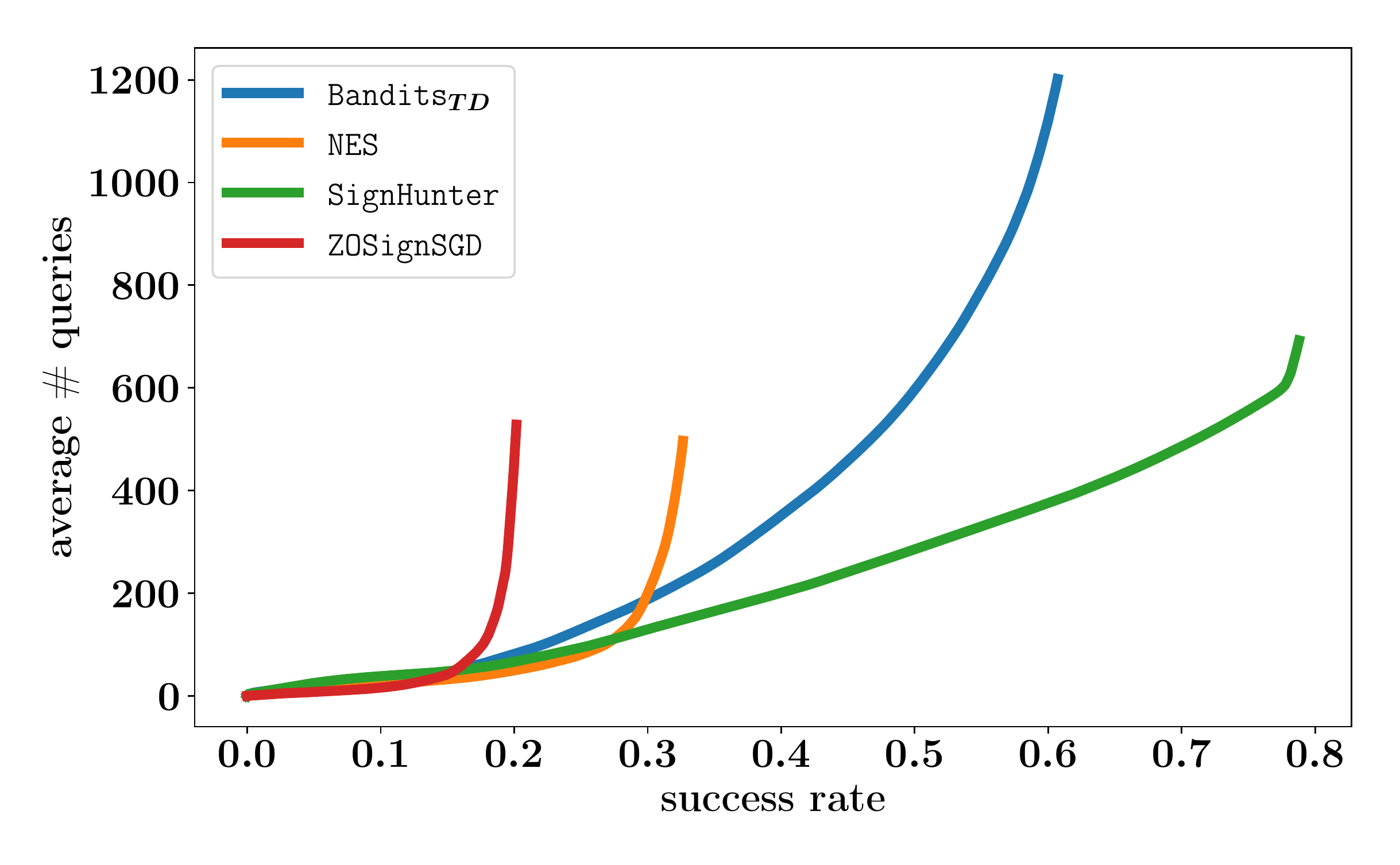} &
			\includegraphics[width=0.3\textwidth ]{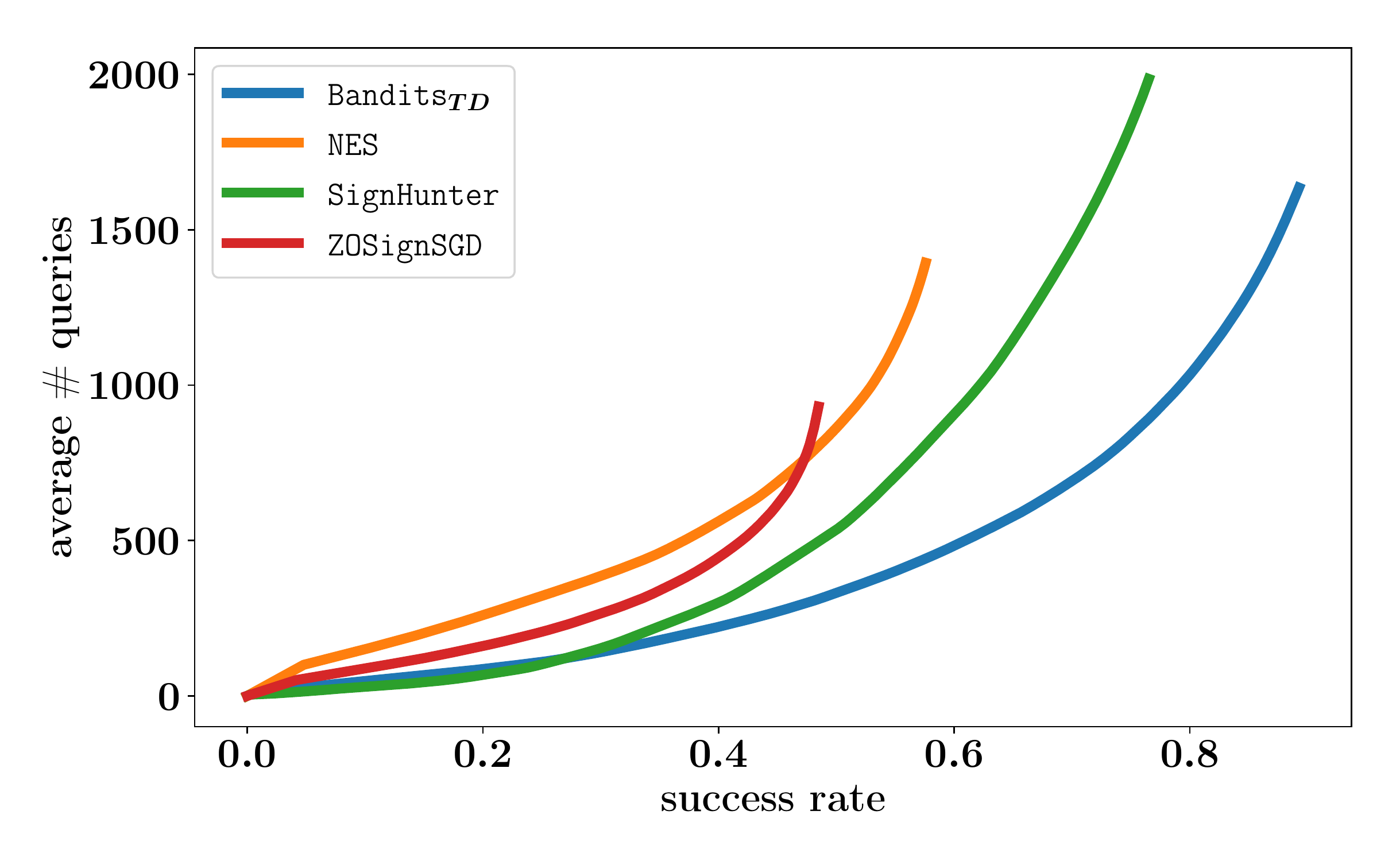}  \\
			(d) \mnist $\ltwo$ & (e)  \cifar $\ltwo$ & (f) \imgnt $\ltwo$ \\
			\\
	\end{tabular}}
	\caption{Performance of black-box attacks in the $\linf$ and $\ltwo$ perturbation constraint. The plots show the average number of queries used per successful image for each attack when reaching a specified success rate.}
	\label{fig:res-linf}
\end{figure*}

%We record the effectiveness of the compared algorithms in terms of the number of queries used and the success (model evasion) rate. 

\paragraph{Results.} Figure~\ref{fig:res-linf} shows the trade-off between the success (evasion) rate and the mean number of queries (of the successful attacks) needed to generate an adversarial example for the \mnist, \cifar, and \imgnt classifiers in the $\linf$ and $\ltwo$ perturbation constraints. In other words, these figures indicate the average number of queries required for a desired success rate. Tabulated summary of these plots can be found in Appendix E, namely Tables~6, 7, and 8. Furthermore, we plot the classifier loss and the gradient estimation quality (in terms of Hamming distance and Cosine similarity) averaged over all the images as a function of the number of queries used in Figures~10, 11, and 12 of Appendix E. Based on the results, we observe the following:

\begin{comment}
\begin{figure*}[h]
	\centering\resizebox{0.9\textwidth}{!}{
	\begin{tabular}{ccc}
		\includegraphics[width=0.3\textwidth ]{figs/mnist_sota_tbl_plots/mnist_2_qrt_plt.pdf} &
		\includegraphics[width=0.3\textwidth ]{figs/cifar10_sota_tbl_plots/cifar10_2_qrt_plt.pdf} &
		\includegraphics[width=0.3\textwidth ]{figs/imagenet_sota_tbl_plots/imagenet_2_qrt_plt.pdf}  \\
		(a) \mnist & (b)  \cifar & (c) \imgnt\\
	\end{tabular}}
	\caption{Performance of black-box attacks in the $\ltwo$ perturbation constraint. The plots show the average number of queries used per successful image for each attack when reaching a specified success rate.}
	\label{fig:res-l2}
\end{figure*}
\end{comment}

For any given success rate, \signhunter dominates the previous state of the art approaches in all settings except the \imgnt $\ltwo$ setup,\footnote{Strictly speaking, all the algorithms are comparable in the \cifar $\ltwo$ setup for success rate $\leq 0.3$.} where \bandit shows a better query efficiency when the desired success rate is greater than or equal $\sim0.35$.  Our approach is remarkably efficient in the $\linf$ setup (e.g., achieving a $\bf 100\%$ evasion using---on average---just $\bf 12$ queries per image against the \mnist classifier!). Its performance degrades---yet, still outperforms the rest, most of the time---in the $\ltwo$ setup. This is expected, since \signhunter perturbs all the coordinates with the same magnitude and the $\ltwo$ perturbation bound $\epsilon_2$ for all the datasets in our experiments is set such that $\epsilon_{2} / \sqrt{n} \ll \epsilon_{\infty} $ as shown in Table~1 of Appendix D. Take the case of \mnist ($n=28\times 28$), where $\epsilon_\infty = 0.3$ and $\epsilon_2=3$. For \signhunter, the $\ltwo$ setup is equivalent to an $\linf$ perturbation bound of $3/28\approx 0.1$. The employed $\ell_2$ perturbation bounds give the state of the art---continuous optimization  based---approaches more perturbation options. For instance, it is possible for \nes to perturb just one pixel in an MNIST image by a magnitude of $3$; two pixels by a magnitude of $2.1$ each; ten pixels by a magnitude of $0.9$ each, etc. On the other hand, the binary optimization view of \signhunter limits it to always perturb all $28\times 28$ pixels by a magnitude of $0.1$. Despite its less degrees of freedom, \signhunter maintains its effectiveness in the $\ltwo$ setup. The plots can be viewed as a sensitivity assessment of \signhunter as $\epsilon$ gets smaller for each dataset. Moreover, the performance of \signhunter is in line with Theorem~\ref{thm:signhunter} when compared with the performance of \fgsm (the noisy \fgsm at $k=100\%$ in Figures~1 and 2 of Appendix A) in both $\linf$ and $\ltwo$ setups for \mnist and \cifar---for \imgnt, $2n=536,406$ is beyond our query budget of $10,000$ queries. For example, \fgsm has a failure rate of $0.32$ for \cifar $\ltwo$ (Appendix A, Figure~2 (b)), while \signhunter achieves a failure rate of $0.21$ with $692.39 < 2n = 2 \times 3 \times 32 \times 32= 6144$ queries (Appendix~E, Table~7). 

Incorporating \signhunter in an iterative framework of perturbing the data point $\vx$ till the query budget is exhausted (Lines~10 to~14 in Algorithm~\ref{alg:craft-adv}) supports the observation in white-box settings that iterative \fgsm---or \texttt{P}rojected \texttt{G}radient \texttt{D}escent (\texttt{PGD})---is stronger than \fgsm~\citep{madry2017towards,al2018adversarial}. This is evident by the upticks in \signhunter's performance on the \mnist $\ltwo$ case (Figure~10 of Appendix E: classifier's loss, Cosine and Hamming similarity plots), which happens after every iteration (after every other $2\times 28 \times 28$ queries). Plots of the Hamming similarity capture the quality of the gradient sign estimation in terms of~\eqref{eq:grad-est-obj}, while plots of the average Cosine similarity capture it in terms of~\eqref{eq:grad-est-obj-df}. Both \signhunter and \bandit consistently optimize both objectives. In general, \signhunter enjoys a faster convergence especially on the Hamming metric as it is estimating the signs compared to \bandit's full gradient estimation. This is highlighted in the \imgnt $\ltwo$ setup. Note that once an attack is successful, the gradient sign estimation at that point is used for the rest of the plot. This explains why, in the $\linf$ settings, \signhunter's plot does not improve compared to its $\ltwo$ counterpart, as most of the attacks are successful in the very first few queries made to the loss oracle.

Overall, \signhunter is $3.8\times$ less failure-prone than the state-of-the-art approaches combined, and spends over all the images (successful and unsuccessful attacks) $2.5\times$ less queries. The number of queries spent is computed based on Tables~6,~7,~and~8 of Appendix E as \newline~\tabto{0.75cm}~\textit{ \small (1 - fail\_rate) * avg_\#_queries +  fail\_rate * 10,000}.

\section{Attack Effectiveness Under Defenses}
\label{sec:challenge}

\begin{table}[t]
	\caption{Top-3 attacks on the \mnist black-box challenge. Adapted from the challenge's website---as of Feb 22, 2019.}
	\label{tbl:mnist-challenge}
	\centering
	\resizebox{0.49\textwidth}{!}{
		\begin{tabular}{p{11cm}|c}
			\toprule
			\textbf{Black-Box Attack} & \textbf{Model Accuracy}\\ 
			\toprule
			\signhunter (Algorithm~\ref{alg:craft-adv}) & $\bf 91.47\%$\\
			\cite{xiao2018generating}	& $92.76\%$	\\
			\texttt{PGD} against three independently and
			adversarially trained copies of the network	&	$93.54\%$	\\ 
			%\fgsm on the CW loss for model B from 
			%\citep{tramer2017ensemble}	&	$94.36\%$ \\
			%\fgsm on the CW loss for the 
			%naturally trained public network&$96.08\%$ \\
			%\texttt{PGD} on the cross-entropy loss for the
			%naturally trained public network	&	$96.81\%$ \\
			%Attack using Gaussian Filter for selected pixels
			%on the adversarially trained public network	&	$97.33\%$	\\
			%\fgsm on the cross-entropy loss for the
			%adversarially trained public network	&$97.66\%$	\\
			%\texttt{PGD} on the cross-entropy loss for the
			%adversarially trained public network	&	$97.79\%$	\\
			\bottomrule
		\end{tabular}
	}
\end{table}

\begin{table}[h!]
	\caption{Top-3 attacks for the \cifar black-box challenge. Adapted from the challenge's website---as of Feb 22, 2019.}
	\label{tbl:cifar-challenge}
	\centering
	\resizebox{0.49\textwidth}{!}{
		\begin{tabular}{p{11cm}|c}
			\toprule
			\textbf{Black-Box Attack} & \textbf{Model Accuracy}\\ 
			\toprule
			\signhunter (Algorithm~\ref{alg:craft-adv}) & $\bf 47.16\%$ \\
			\texttt{PGD} on the cross-entropy loss for the
			adversarially trained public network &	$63.39\%$\\
			\texttt{PGD} on the CW loss for the
			adversarially trained public network&	$64.38\%$\\
			%	\fgsm on the CW loss for the
			%	adversarially trained public network&	$67.25\%$\\
			%	\fgsm on the CW loss for the
			%	naturally trained public network&	$85.23\%$ \\
			\bottomrule
		\end{tabular}
	}
\end{table}

\begin{table}[h!]
	\caption{Top 1 Error percentage. The numbers between brackets are computed on 10,000 images from the validation set. The rest are from~\citep[Table 4]{tramer2017ensemble}.}
	\label{tbl:img-challenge}
	\centering
	\resizebox{0.49\textwidth}{!}{
		\begin{tabular}{lcccc}
			\toprule
			\multirow{2}{1cm}{\textbf{Model}} & \multirow{2}{1cm}{\textbf{clean}} & \multirow{2}{3cm}{\textbf{Max. Black-box}} & \multicolumn{2}{c}{\textbf{\signhunter}} \\ 
			& & & after 20 queries & after 1000 queries \\
			\toprule
			v3$_{\text{adv-ens4}}$ & 24.2 (26.73) & 33.4 & (40.61)& \textbf{(90.75)}\\
			\bottomrule
		\end{tabular}
	}
\end{table}

To complement our results in Section~\ref{sec:experiments}, we evaluated \signhunter against \emph{adversarial training},  an effective way to improve
the robustness of DNNs~\citep{madry2017towards}. In particular, we attacked the \emph{secret} models used in public challenges for \mnist and \cifar. There was no corresponding challenge for \imgnt. Instead, we used \emph{ensemble adversarial training}, a method that argues security against black-box attacks based on transferability/substitute models~\cite{tramer2017ensemble}. The same metrics used in Section~\ref{sec:experiments} are recorded for the experiments here  in Appendix~F.

\paragraph{Public MNIST Black-Box Attack Challenge.}
\label{sec:mnist-challenge}
In line with the challenge setup, $10,000$ test images were used with an $\linf$ perturbation bound of $\epsilon=0.3$. Although the secret model is released, we treated it as a black box similar to our experiments in Section~\ref{sec:experiments}. No maximum query budget was specified, so we set it to $5,000$ queries. This is similar to the number of iterations given to  a \texttt{PGD} attack in the white-box setup of the challenge: 100-steps
with 50 random restarts. As shown in Table~\ref{tbl:mnist-challenge}, \signhunter's attacks resulted in the lowest model
accuracy of $\bf 91.47\%$, outperforming all other state-of-the-art
black-box attack strategies submitted to the challenge with an average number of queries of $\bf 233$ per successful attack. We would like to note that the attacks submitted to the challenge are based on transferability and do not query the model of interest. On the other hand, the most powerful \emph{white-box} attack  by~\citet{zheng2018distributionally}---as of Feb 22, 2019---resulted in a model accuracy of  $88.56\%$--not shown in the table. Further, a \texttt{PGD} attack with $5000$ iterations/back-propagations ($100$ steps and $50$ random restarts) achieves $89.71\%$ in contrast to \signhunter's $91.47\%$ with just $5000$ forward-propagations.

\paragraph{Public CIFAR10 Black-Box Attack Challenge.} In line with the challenge setup, $10,000$ test images were used with an $\linf$ perturbation bound of $\epsilon=8$. Although the secret model is released, we treated it as a black box similar to our experiments in Section~\ref{sec:experiments}. Similar to the MNIST challenge, the query budget is $5,000$ queries. From Table~\ref{tbl:cifar-challenge}, \signhunter's attacks resulted in the lowest model
accuracy of $\bf 47.16\%$, outperforming all other state-of-the-art
black-box attack strategies submitted to the challenge with an average number of queries of $\bf 569$ per successful attack. We would like to note that the attacks submitted to the challenge are based on transferability and do not query the model of interest. On the other hand, the most powerful \emph{white-box} attack by \citet{zheng2018distributionally}, ---as of Feb 22, 2019---resulted in a model accuracy of  $44.71\%$--not shown in the table. Further, a \texttt{PGD} attack with $200$ iterations/back-propagations ($20$ steps and $10$ restarts) achieves $45.71\%$ in contrast to \signhunter's $47.16\%$ with $5000$ forward-propagations.

\paragraph{Ensemble Adversarial Training on IMAGENET.} In line with~\cite{tramer2017ensemble}, we set~$\epsilon=0.0625$ and report the model's misclassification over 10,000 random images from \imgnt's validation set. We attack the v3$_{\text{adv-ens4}}$ model.\footnote{\tiny\url{https://bit.ly/2XWTdKx} %\url{http://download.tensorflow.org/models/ens4_adv_inception_v3_2017_08_18.tar.gz}
	} As shown in Table~\ref{tbl:img-challenge}, after $20$ queries, \signhunter achieves a top-1 error of $40.61\%$ greater than the $33.4\%$ rate of a series of black-box attacks
(including \texttt{PGD} with $20$ iterations) transferred from a substitute model. With $1000$ queries, \signhunter breaks the model's robustness with a top-1 error of $90.75\%$!

	\begin{figure*}[h!]
		\centering
		\resizebox{\textwidth}{!}{
			\begin{tabular}{ccc}
				\includegraphics[width=0.3\textwidth]{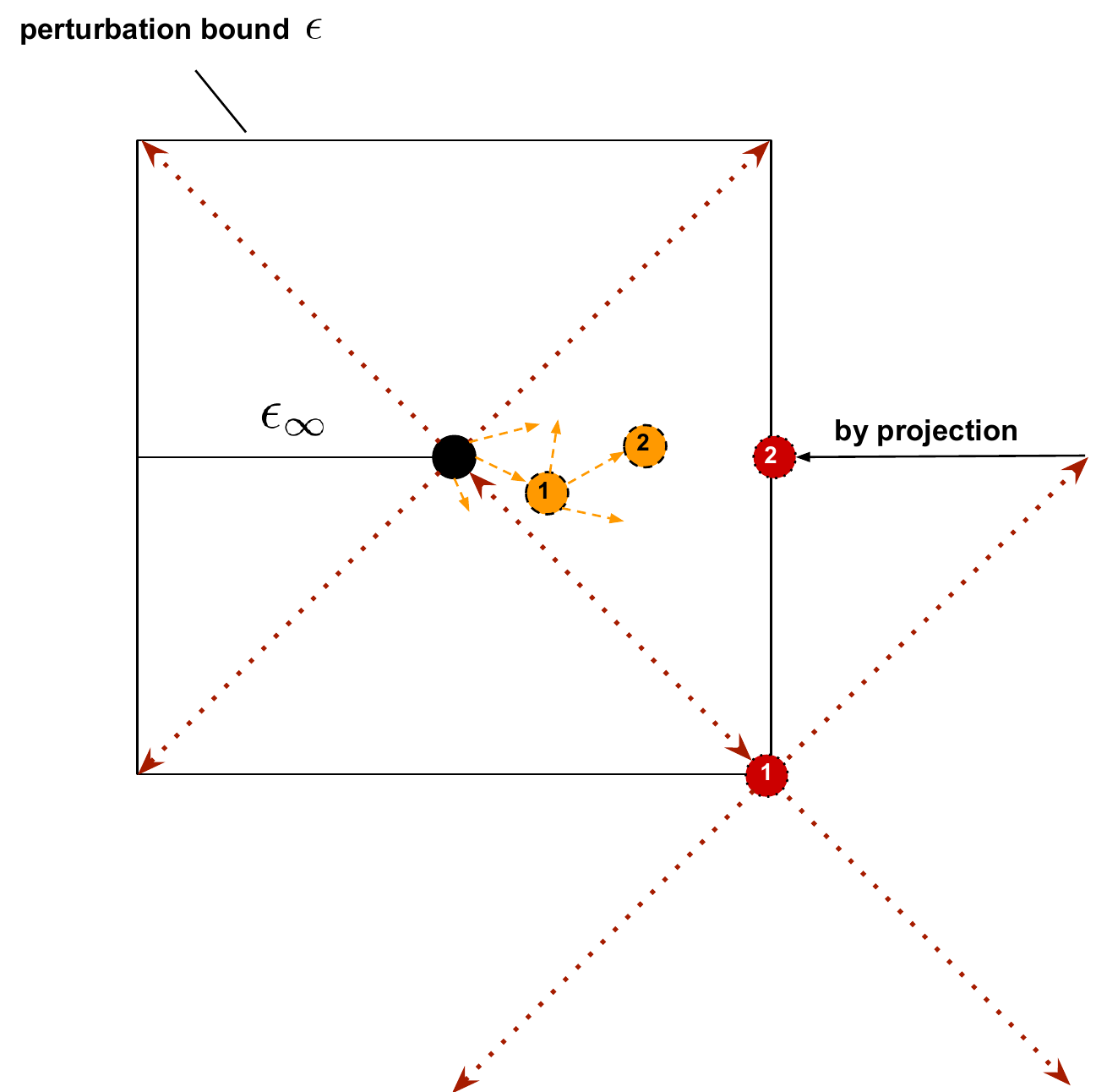} &
				\includegraphics[width=0.33\textwidth]{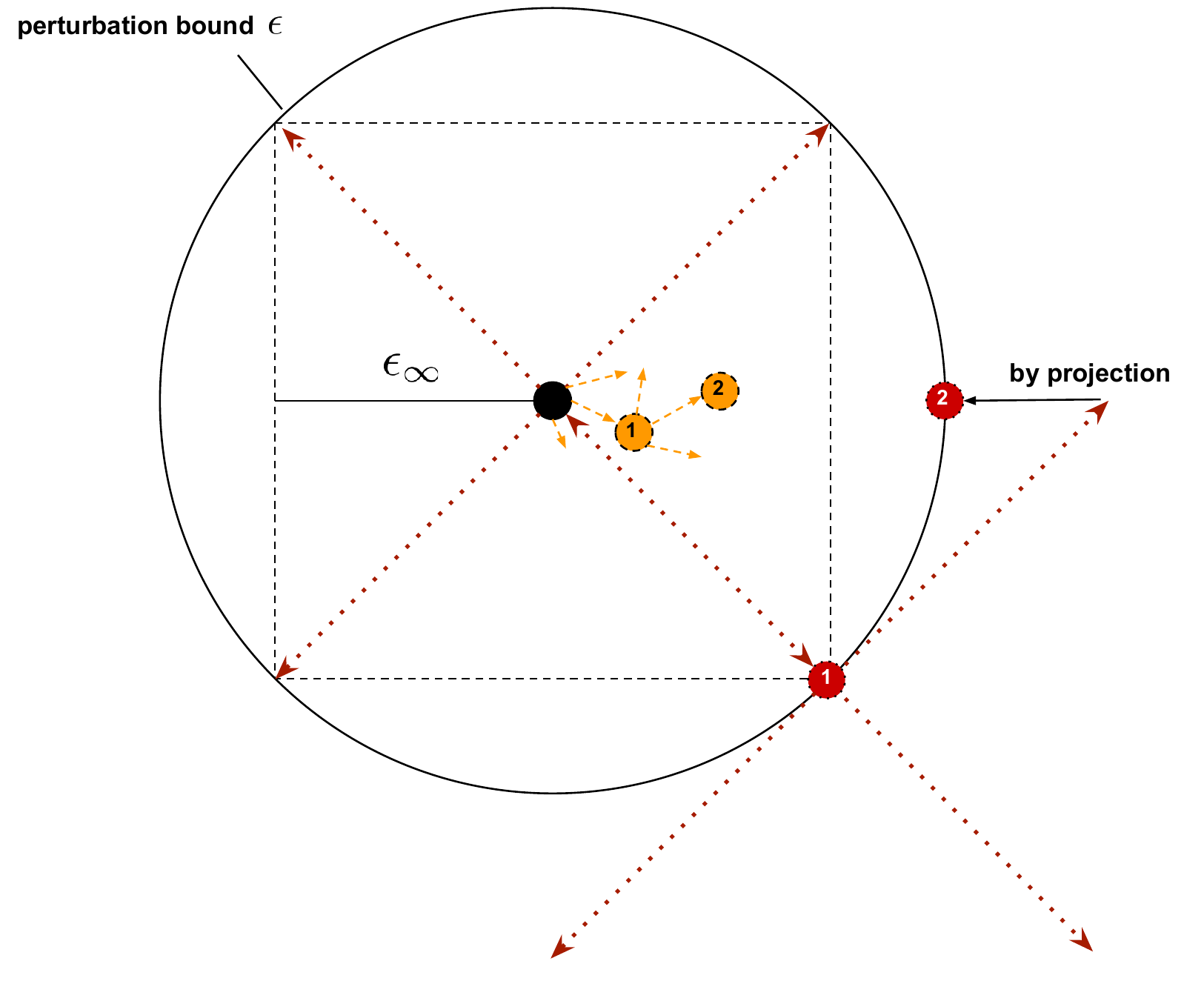} 
				&
				\includegraphics[width=0.21\textwidth]{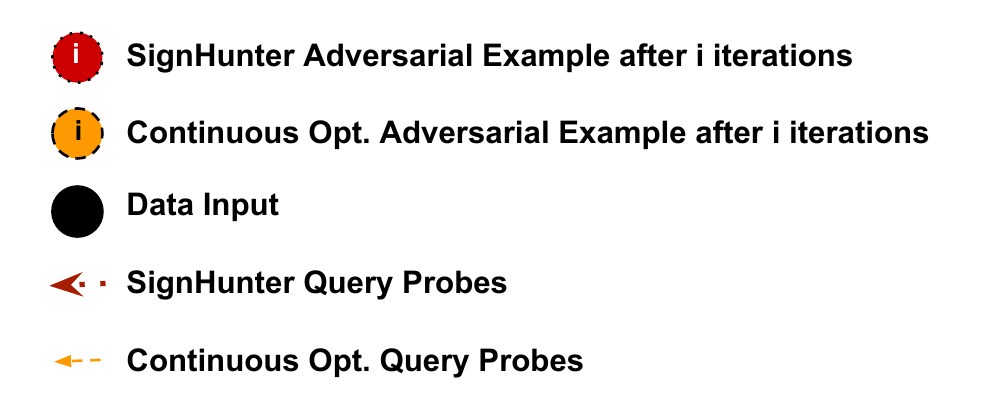} \\
				(a) $\linf$ perturbation &  (b) $\ltwo$ perturbation &\\
			\end{tabular}
		}
		\caption{Illustration of adversarial examples crafted by \signhunter in comparison to attacks that are based on the continuous optimization in both (a) $\linf$ and (b) $\ltwo$ settings. If \signhunter is given a query budget $> 2n$, which is the case here, the crafted adversarial examples are not necessary at the perturbation vertices, e.g., the red ball 2. We can modify \signhunter to strictly look up perturbation vertices. This could be done by doubling the step size from $\epsilon_\infty$
			to $2\epsilon_\infty$ and we leave this for future work as outlined in Section~\ref{sec:open}.
		}
		\label{fig:perturb-illust}
	\end{figure*}

\section{Open Questions}
\label{sec:open}

\begin{figure*}[t!]
	\centering
	\resizebox{\textwidth}{!}{
		\begin{tabular}{cccc}
			\includegraphics{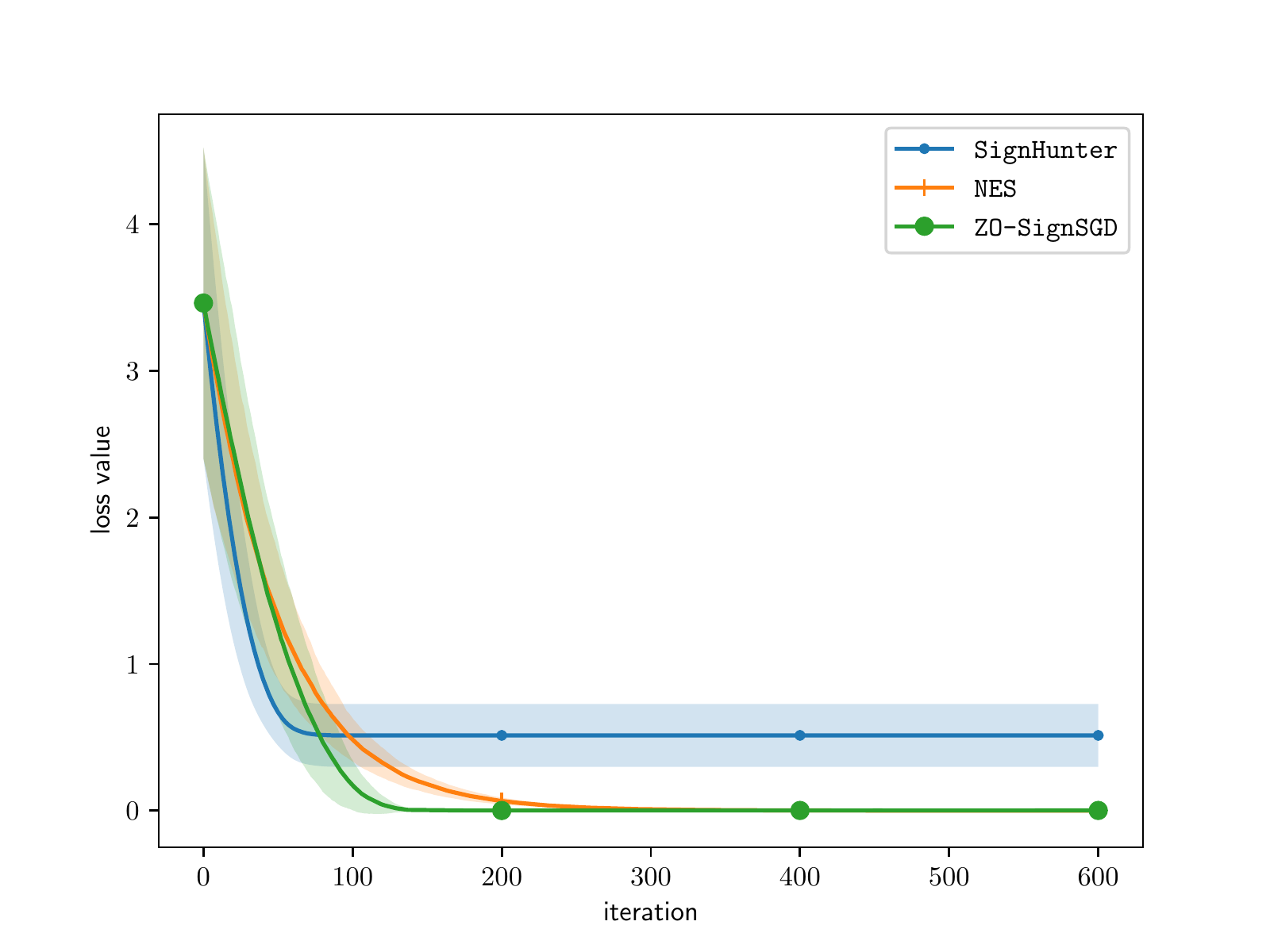} &
			\includegraphics{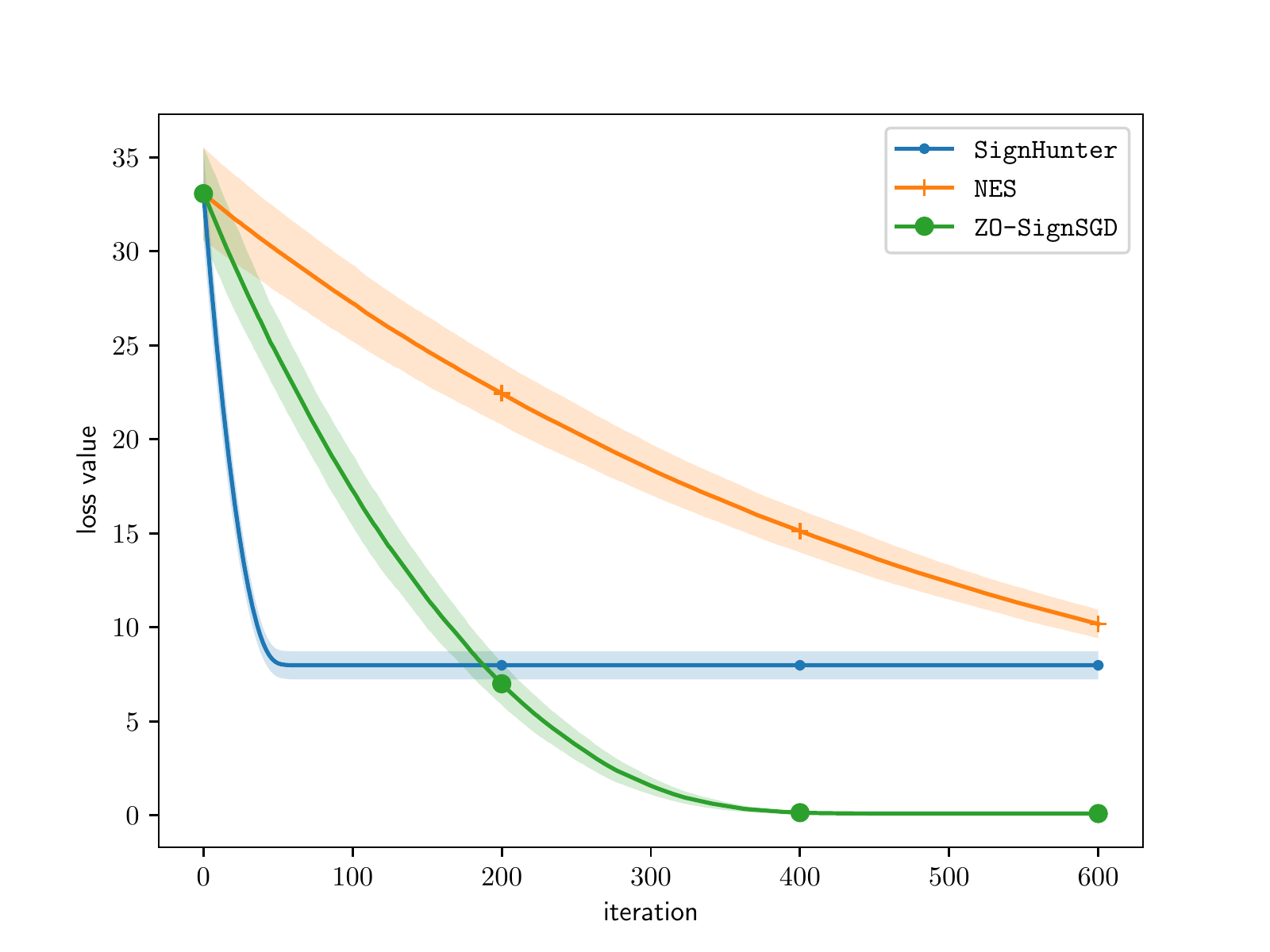} &
			\includegraphics{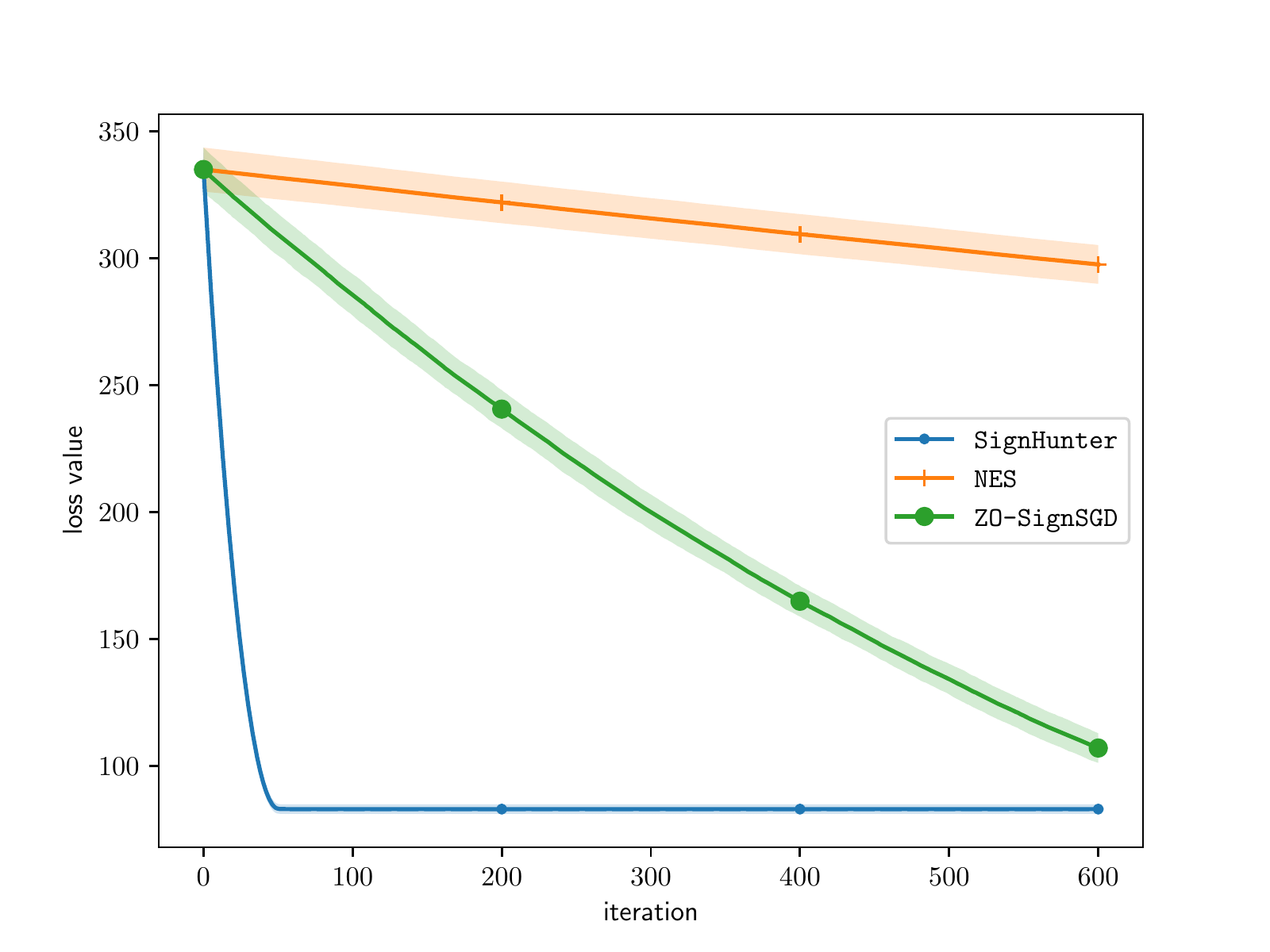}
			& 
			\includegraphics{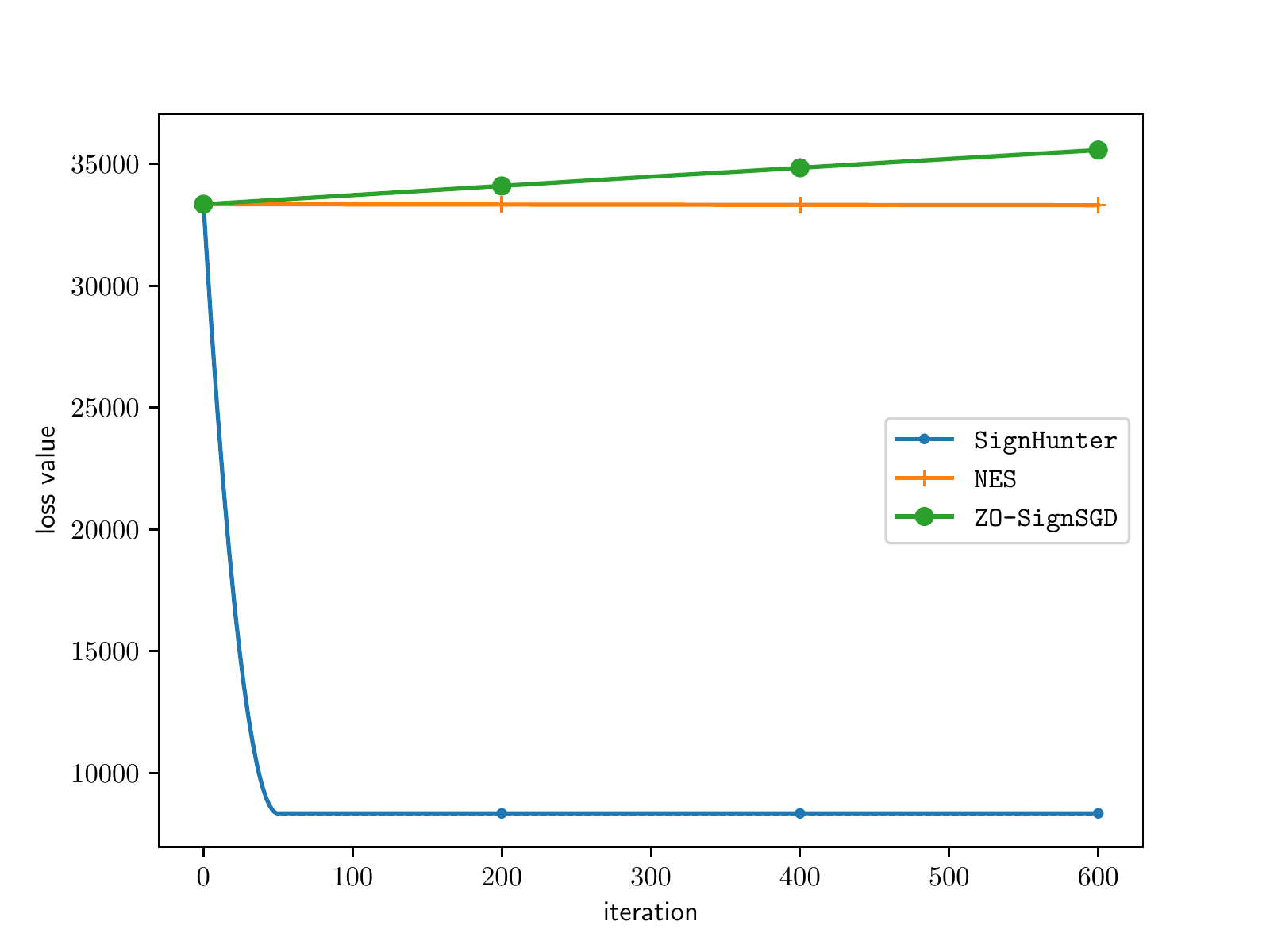}\\
			\Huge (a) $n=10$ & \Huge (b) $n=100$ & \Huge (c) $n=1,000$ &  \Huge (d) $n=100,000$\\
		\end{tabular}}
		\caption{ \emph{\signhunter for continuous optimization.} In this basic experiment, we run \nes, \zo and \signhunter to minimize a function $f:\mathbb{R}^n \to \mathbb{R}$ of the form $||\vx - \vx^*||^2_2$ for $n\in \{10, 100, 1000, 100000\}$. The solid line represents the loss averaged over 30 independent
			trials with random $\vx^* \sim \calU([0,1]^n)$ and the shaded region indicates the standard deviation of results over random
			trials.  We used a fixed step size of $0.01$ in line with \citep{liu2018signsgd} and a finite difference perturbation of $0.001$.  The starting point $\vx^{(0)}$ for all the algorithms was set to be the all-one vector~$\mathbf{1}_n$. 
		} 
		\label{fig:cont-opt}
	\end{figure*}

There are many interesting questions left open by our research:

\paragraph{Priors.} Current version of \signhunter does not exploit any data- or time-dependent priors. With these priors, algorithms such as \bandit operate on a search space of dimensionality~$\sim36\times$ less than that of \signhunter for \imgnt. In domain-specific examples such as images, \emph{can Binary Partition Trees (BPT)~\citep{al2015graphbpt} be incorporated  in \signhunter to have a data-dependent  grouping of gradient coordinates instead of the current equal-size grouping?}

\paragraph{Adversarial Training.} Compared to other attacks that are based on transferability and generative adversarial networks, our approach showed more effectiveness towards (ensemble) adversarial training. Standard adversarial training relies on attacks that employ iterative continuous optimization methods such as \texttt{PGD} in contrast to our attack which stems from a binary optimization view. \emph{What are the implications?}

\paragraph{Other Domains.} Much of the work done to understand and counter adversarail examples has occurred in the image classification domain. The binary view of our approach lends itself naturally to other domains where binary features are used (e.g., malware detection~\citep{al2018adversarial,luca2019explaining}). \emph{How effective our approach is on these domains?}

\paragraph{Perturbation Vertices.}\footnote{We define perturbation vertices as  extreme points of the  region $B_p(\vx, \epsilon)$. That is,  $\vx\pm \epsilon_\infty$, where $\epsilon_\infty = \epsilon$ when $p=\infty$ and $\epsilon_\infty = \epsilon / \sqrt{n}$ when $p=2$. See Figure~\ref{fig:perturb-illust}.} Using its first $O(n)$ queries, \signhunter probes $O(n)$ extreme points of the perturbation region as potential adversarial examples, while iterative continuous optimization such as \nes probes points in the Gaussian sphere around  the current point as shown in Figure~\ref{fig:perturb-illust}. \emph{Does looking up extreme points (vertices) of the perturbation region suffice to craft adversarial examples? If that is the case, how to efficiently search through them?} \signhunter searches through $2n$ vertices out of $2^n$ and it could find adversarial examples among a tiny fraction of these vertices. Recall, in the \mnist $\linf$ setup in Section~\ref{sec:experiments}, it was enough to look up just $\sim 12$ out of $2^{784}$ vertices for each image achieving a $100\%$ evasion over $1,000$ images. Note that after $2n$ queries, \signhunter may not visit other vertices as they will be $2\epsilon_\infty$ away as shown in Figure~\ref{fig:perturb-illust}. We ignored this effect in our experiments.\footnote{This effect is negligible for \imgnt as $2n < 10,000$.} \emph{Will \signhunter be more effective if the probes are made strictly at the perturbation vertices?} This question shows up clearly in the public MNIST challenge where the loss value at the potential adversarial examples dips after every $\sim2n$ queries (see top left plot of Figure~13 in Appendix!F). We conjecture the reason is that these potential adversarial examples are not extreme points as illustrated in Figure~\ref{fig:perturb-illust}: they are like the red ball~2 rather than the red ball~1.%Does there exist an analogy here to the \emph{simplex} and \emph{interior point} methods of linear programming?

\paragraph{\signhunter for Black-Box Continuous Optimization.} In~\citep{salimans2017evolution, chrabaszcz2018back}, it was shown that a class of black-box continuous optimization algorithms (\nes as well as a very basic canonical \texttt{ES} algorithm) rival the performance of standard reinforcement learning techniques. On the other hand, \signhunter is tailored towards recovering the gradient sign bits and creating adversarial examples similar to \fgsm using the best gradient sign estimation obtained so far. Can we incorporate \signhunter in an iterative framework for continuous optimization? Figure~\ref{fig:cont-opt} shows a small, preliminary experiment comparing \nes and \zo to a simple iterative framework employing \signhunter. In the regime of high dimension/few iterations, \signhunter can be remarkably faster. However, with more iterations, the algorithm fails to improve further and starts to oscillate. The reason is that \signhunter always provides $\pm 1$ updates (non-standard sign convention) compared to the other algorithms whose updates can be zero. \emph{Can we get the best of both worlds?}

\section{Conclusion}
\label{sec:conclusion}

Assuming a \emph{black-box} threat model, we studied the problem of generating adversarial examples for neural nets % Motivated by i) the significant empirical effectiveness of gradient sign information; and ii) the low query complexity of recovering a sign vector  using a noiseless Hamming distance oracle, 
and proposed the gradient \emph{sign} estimation problem as the core challenge in crafting these examples. We formulate the problem as a \emph{binary black-box optimization} one: minimizing the Hamming distance to the gradient sign or, equivalently, maximizing the directional derivative. Approximated by the finite difference of the loss value queries, we examine three properties of the directional derivative of the model's loss in the direction of $\{\pm1\}^n$ vectors. 
%Based on the first property, the loss oracle can be used as a noisy Hamming distance oracle. We found that current search Hamming search strategies (e.g. \citet{maurer2009search}) are not suitable for such oracles. The second property lets us employ the \emph{optimism in the face of uncertainty principle} in the form of hierarchical bandits. This resulted in \goo, an optimistic optimization algorithm for binary black-box optimization problems with a finite-time analysis on its regret. However, its query complexity is worse than the continuous optimization setup.  
The separability property helped us devise \signhunter, a divide-and-conquer algorithm that is guaranteed to perform \emph{at least} as well as \fgsm after $O(n)$ queries. In practice, \signhunter needs a fraction of this number of queries to craft adversarial examples. To verify its effectiveness on real-world datasets, \signhunter was evaluated on neural network models for the \mnist, \cifar, and \imgnt datasets. \signhunter yields black-box attacks that
are $2.5\times$ more query efficient and $3.8\times$ less failure-prone than the state of the art attacks combined. Moreover, \signhunter achieves the highest evasion rate on
two public black-box attack challenges. We also show that models that are robust against substitute-model attacks are vulnerable to our attack.

% Acknowledgements should only appear in the accepted version.
\section*{Acknowledgements}
This work was supported by the MIT-IBM Watson AI Lab. We would like to thank
Shashank Srikant for his timely help. We are grateful for feedback from Nicholas Carlini. 

%\textbf{Do not} include acknowledgements in the initial version of
%the paper submitted for blind review.

%If a paper is accepted, the final camera-ready version can (and
%probably should) include acknowledgements. In this case, please
%place such acknowledgements in an unnumbered section at the
%end of the paper. Typically, this will include thanks to reviewers
%who gave useful comments, to colleagues who contributed to the ideas,
%and to funding agencies and corporate sponsors that provided financial
%support.

% In the unusual situation where you want a paper to appear in the
% references without citing it in the main text, use \nocite
%\nocite{langley00}

\onecolumn
\section*{Appendix A. Noisy \fgsm}

This section shows the performance of the noisy \fgsm on standard models (described in Section 5 of the main paper) on the \mnist, \cifar and \imgnt datasets. In Figure~\ref{fig:keep_k_signs_linf}, we consider the $\linf$ threat perturbation constraint. Figure~\ref{fig:keep_k_signs_l2} reports the performance for the $\l2$ setup. Similar to~\cite{ilyas2018prior}, for each $k$ in the experiment, the top $k$ percent of the signs of the coordinates---chosen either randomly (\texttt{random-k}) or by the corresponding magnitude $|\partial L(\vx, y)/\partial x_i|$~(\texttt{top-k})---are set correctly, and the rest are set to $-1$ or $+1$ at random. The misclassification rate shown considers only images that were correctly classified (with no adversarial perturbation). In accordance with the models' accuracy, there were $987$, $962$, and $792$ such images for \mnist, \cifar, and \imgnt out of the sampled $1000$ images, respectively. These figures also serve as a validation for Theorem~\ref{thm:signhunter} when compared to \signhunter's performance shown in Appendix C.

\begin{figure*}[h!]
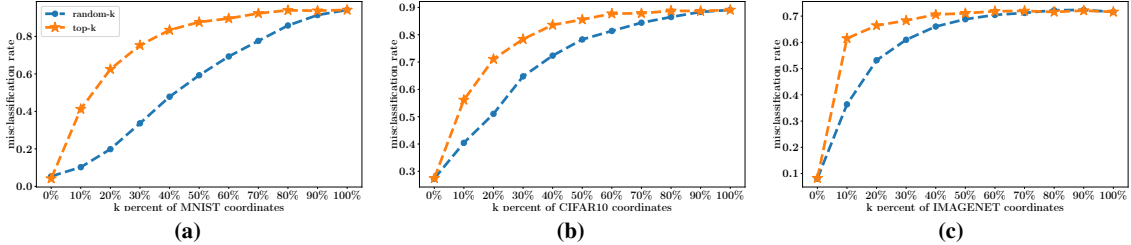

	\centering
	\resizebox{0.9\textwidth}{!}{
		\begin{tabular}{ccc}
			\includegraphics[width=0.32\textwidth, clip, trim=0.95cm 0.9cm 0.8cm 0.1cm]{figs/keep_k/keep_k_sign_mnist_linf.pdf} &
			\includegraphics[width=0.32\textwidth,clip, trim=0.95cm 0.9cm 0.8cm 0.1cm]{figs/keep_k/keep_k_sign_cifar10_linf.pdf} &
			\includegraphics[width=0.32\textwidth,clip, trim=0.95cm 0.9cm 0.8cm 0.1cm]{figs/keep_k/keep_k_sign_imagenet_linf.pdf} \\
			{ \textbf{(a)}} & { \textbf{(b)}} & { \textbf{(c)}}
			\vspace*{-2mm}
		\end{tabular}
	}
	\caption{ \small Misclassification rate of three neural nets (for (a) \mnist, (b) \cifar, and (c) \imgnt, respectively) on the \emph{noisy} \fgsm's adversarial examples as a function of correctly estimated coordinates of $\sgn(\nabla_\vx f(\vx, y))$ on random $1000$ images from the corresponding evaluation dataset, with the maximum allowed $\linf$ perturbation $\epsilon$ being set to $0.3$, $12$, and $0.05$, respectively.  Across all the models, estimating the sign of the top $30\%$ gradient coordinates (in terms of their magnitudes)  is enough to achieve a misclassification rate of $\sim70\%$. Note that Plot (c) is similar to \cite{ilyas2018prior}'s Figure 1, but it is produced with \textsf{TensorFlow} rather than \textsf{PyTorch}.
	}
	\label{fig:keep_k_signs_linf}
\end{figure*}

\begin{figure*}[h!]
	\centering
	\resizebox{0.9\textwidth}{!}{
		\begin{tabular}{ccc}
			\includegraphics[width=0.32\textwidth, clip, trim=0.95cm 0.9cm 0.8cm 0.1cm]{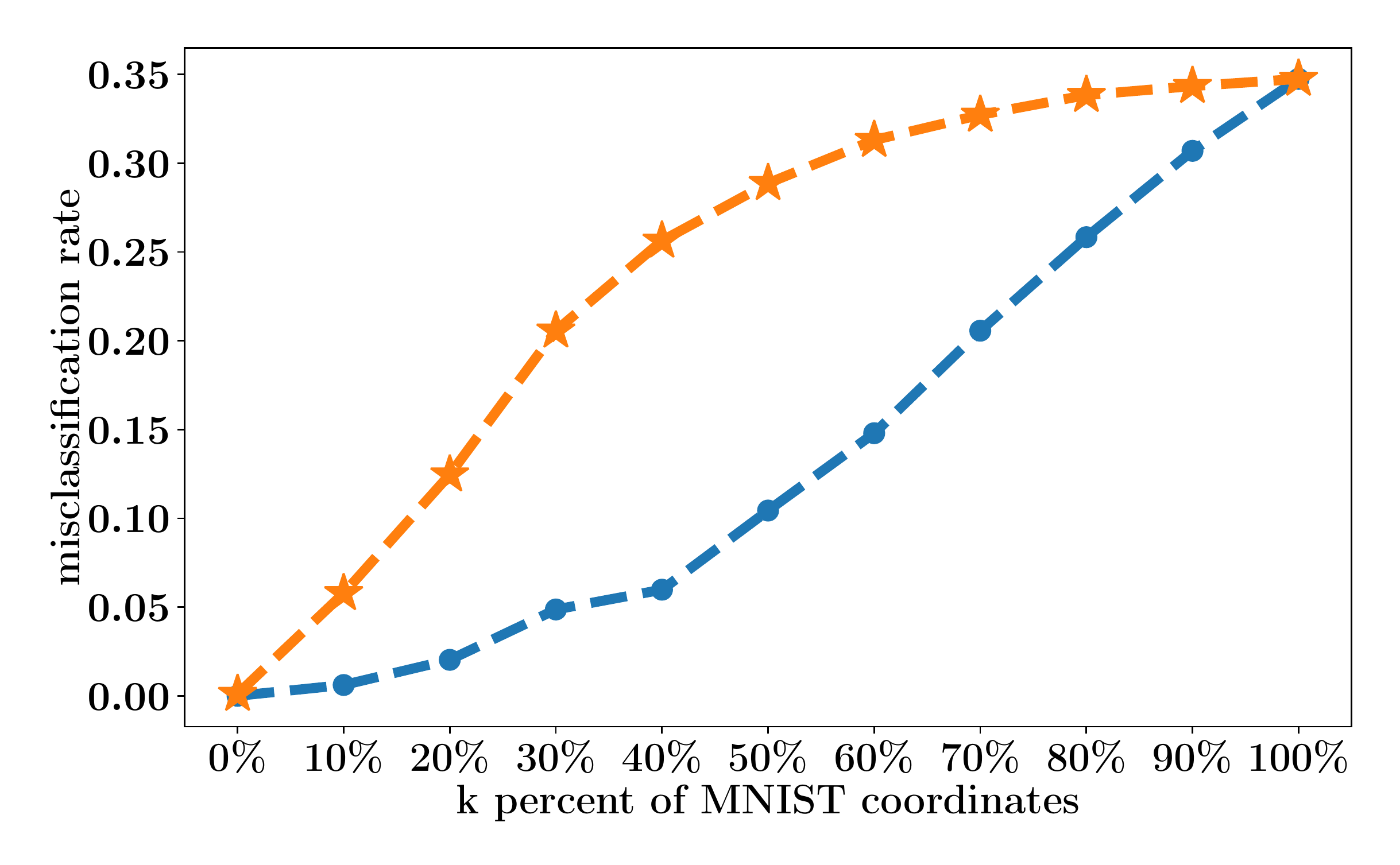} &
			\includegraphics[width=0.32\textwidth,clip, trim=0.95cm 0.9cm 0.8cm 0.1cm]{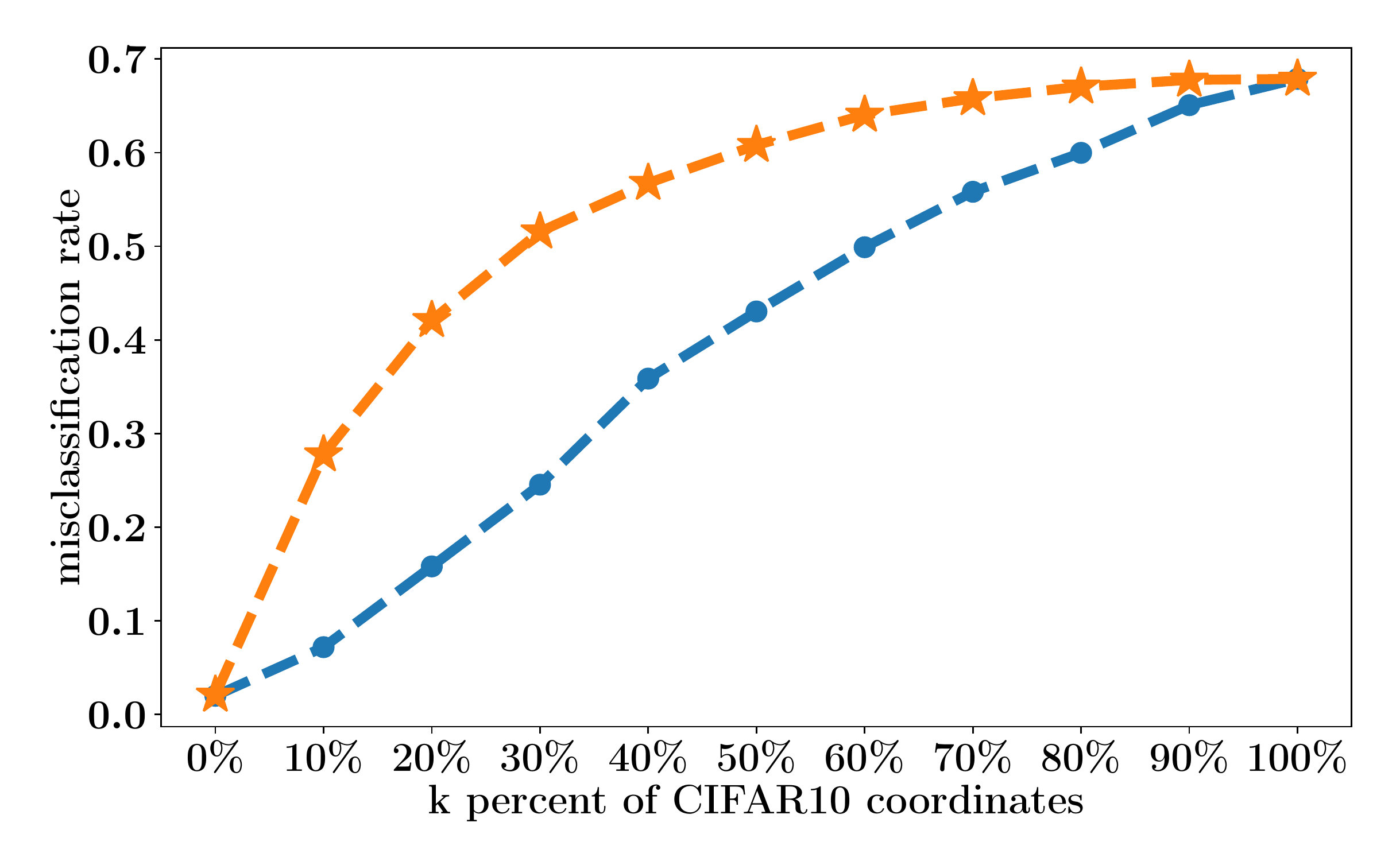} &
			\includegraphics[width=0.32\textwidth,clip, trim=0.95cm 0.9cm 0.8cm 0.1cm]{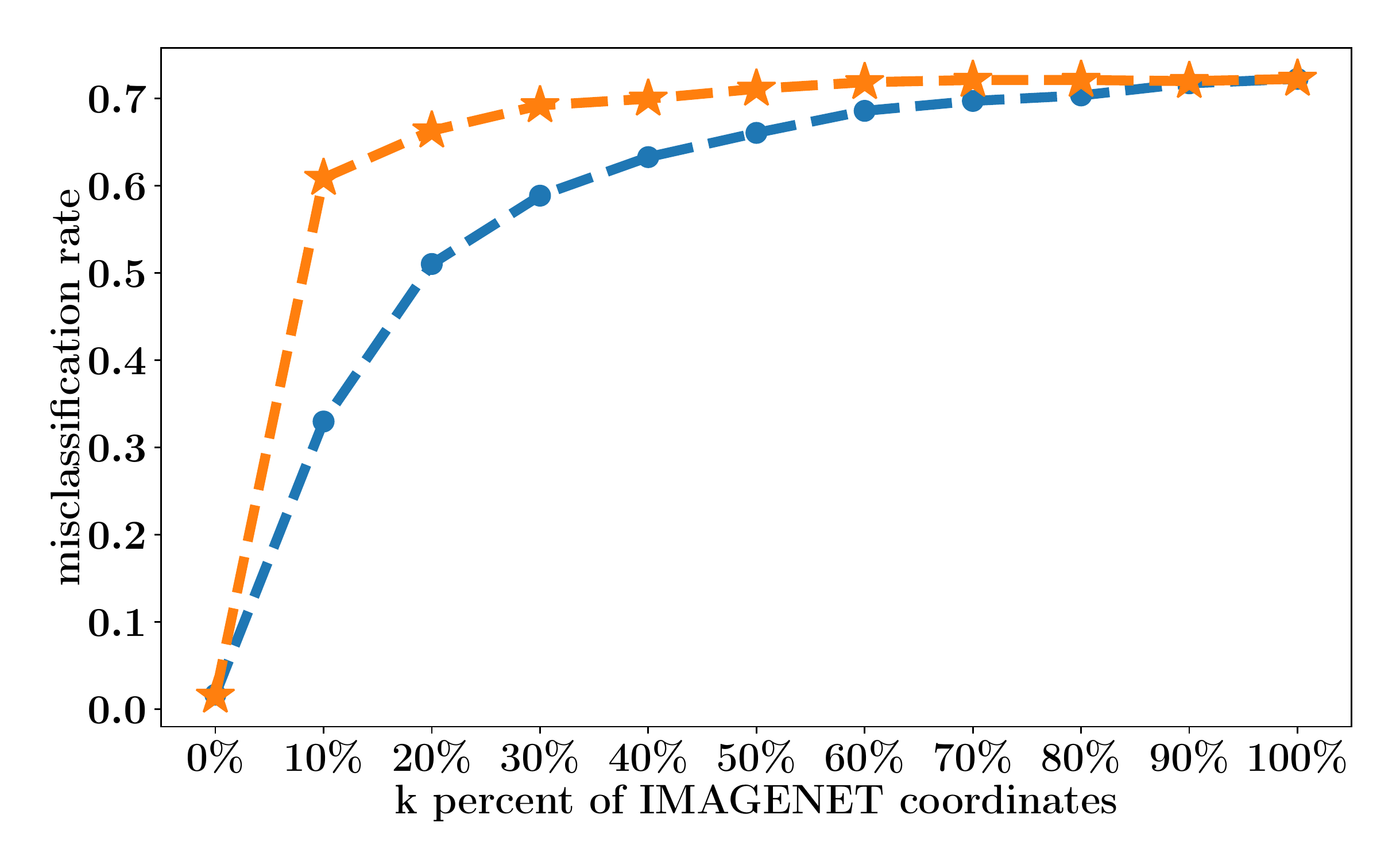} \\
			{ \textbf{(a)}} & { \textbf{(b)}} & { \textbf{(c)}}
			\vspace*{-2mm}
		\end{tabular}
	}
	\caption{ \small Misclassification rate of three neural nets (for (a) \mnist, (b) \cifar, and (c) \imgnt, respectively) on the \emph{noisy} \fgsm's adversarial examples as a function of correctly estimated coordinates of $\sgn(\nabla_\vx f(\vx, y))$ on random $1000$ images from the corresponding evaluation dataset, with the maximum allowed $\ell_2$ perturbation $\epsilon$ being set to $3$, $127$, and $5$, respectively. Compared to~Figure~\ref{fig:keep_k_signs_linf}, the performance on \mnist and \cifar drops significantly.
	}
	\label{fig:keep_k_signs_l2}
\end{figure*}
\newtheorem{innercustomprop}{Property}
\newenvironment{customprop}[1]
{\renewcommand\theinnercustomprop{#1}\innercustomprop}
{\endinnercustomprop}

\newtheorem{innercustomdefn}{Definition}
\newenvironment{customdefn}[1]
{\renewcommand\theinnercustomdefn{#1}\innercustomdefn}
{\endinnercustomdefn} 
\twocolumn
\section*{Appendix B. A Framework for Estimating Sign of the Gradient from Loss Oracles}
\label{sec:methods}

Our interest in this section is to estimate the gradient sign bits of the loss function $L$ of the model under attack at an input/label pair ($\vx, y$) from a limited number of loss value queries $L(\vx^\prime, y)$. To this end, we examine the basic concept of directional derivatives that has been employed in recent black-box adversarial attacks. Particularly, we present three approaches to estimate the gradient sign bits based on three properties of the directional derivative  $D_\vq L(\vx, y)$ of the loss in the direction of a sign vector $\vq \in \calH$.

\subsection*{Approach 1: Divide \& Conquer}

Covered in the main article.

\subsection*{Approach 2: Loss Oracle as a Noisy Hamming Oracle}

The directional derivative of the loss function $L$ at $(\vx,y)$ in the direction of a binary code $\vq$ can be written as 
\begin{eqnarray}
D_\vq L(\vx, y) &=& \vq^T \vg^* \nonumber \\
&=& \sum_{i \in \calI^+_\vq} |g^*_i| -  \sum_{i \in \calI^-_\vq} |g^*_i| \nonumber \\
&=& |\calI^+_q| \bar{g}_{\calI^+_\vq} -  |\calI^-_q| \bar{g}_{\calI^-_\vq}\;, \label{eq:dir-deriv-ham-dist}
\end{eqnarray}
where $\calI^+_\vq \equiv \{i \mid i \in [n]\;,\;q^*_i= q_i \}$, $\calI^-_\vq\equiv [n]\setminus \calI^+_\vq$. Note that $|\calI^+_q| + |\calI^-_q| =n$. The quantities $\bar{g}_{\calI^+_\vq}$ and $\bar{g}_{\calI^-_\vq}$ are the means of $\{|g_i|\}_{i\in \calI^+_\vq}$ and $\{|g_i|\}_{i\in \calI^-_\vq}$, respectively. Observe that $|\calI^-_\vq|=\ham{\vq - \vq^*}:$ the Hamming distance between $\vq$ and the gradient sign $\vq^*$. In other words, the directional derivative $D_\vq L(\vx, y) $ has the following property.
\begin{customprop}{2}	\label{prop:hamming} The directional derivative $D_\vq L(\vx, y)$ of the loss function $L$ at an input/label pair $(\vx,y)$ in the direction of a binary code $\vq$ can be written as an affine transformation of the Hamming distance between $\vq$ and $\vq^*$. Formally, we have
	\begin{equation}
	D_\vq L(\vx, y) = n \bar{g}_{\calI^+_\vq} -  (\bar{g}_{\calI^-_\vq} +  \bar{g}_{\calI^+_\vq}) \ham{\vq - \vq^*} 
	\label{eq:prop1}
	\end{equation}
\end{customprop}

If we can recover the Hamming distance from the directional derivative based on~\eqref{eq:prop1}, efficient Hamming search strategies---e.g., \citep{maurer2009search}---can then be used to recover the gradient sign bits $\vq^*$ with a query complexity $\Omega(n/ \log_2(n+1))$ as stated in Theorem~1. However, not all terms of~\eqref{eq:prop1} is known to us. While $n$ is the number of data features (known a priori) and $D_\vq L(\vx, y)$ is available through a finite difference oracle, $\bar{g}_{\calI^+_\vq}$ and $\bar{g}_{\calI^-_\vq}$ are not known. Here, we propose to approximate these values by their Monte Carlo estimates: averages of the magnitude of sampled gradient components. Our assumption is that the magnitudes of gradient coordinates are not very different from each other, and hence a Monte Carlo estimate is good enough (with small variance). Our experiments on \mnist, \cifar, and \imgnt confirm the same---see Figure~18 in Appendix G.

To use the $i$th gradient component $g^*_i$ as a sample for our estimation, one can construct two binary codes $\vu$ and $\vv$ such that \emph{only} their $i$th bit is different, i.e., $\ham{\vu - \vv} = 1$. Thus, we have
\begin{eqnarray}
|g^*_i|&=& \frac{|D_\vu L(\vx, y) - D_\vv L(\vx, y)|}{2} \label{eq:mc-abs-val}
\\
q^*_i = \sgn(g^*_i)&=&  
\begin{cases}
u_i& \text{if } D_\vu L(\vx, y) > D_\vv L(\vx, y)\;,\\
v_i              & \text{otherwise}\;.
\end{cases}
\label{eq:mc-sign}
\end{eqnarray}
Let $\calD$ be the set of indices of gradient components we have recovered---magnitude and sign---so far through~\eqref{eq:mc-abs-val} and~\eqref{eq:mc-sign}. Then, 
\begin{eqnarray}
\bar{g}_{\calI^+_\vq}\approx \frac{1}{|\calD^+_\vq| } \sum_{d \in \calD^+_\vq} |g^*_d| \;,\\
\bar{g}_{\calI^-_\vq}\approx  \frac{1}{|\calD^-_\vq| } \sum_{d \in \calD^-_\vq} |g^*_d| \;,
\end{eqnarray}
where $\calD^+_\vq \equiv \{d \mid d \in \calD\;,\;  q^*_d = q_i\}$ and $\calD^-_\vq \equiv \calD \setminus \calD^+_\vq$.\footnote{It is possible that one of $\calD^+_\vq$ and $\calD^-_\vq$ will $\emptyset$ (e.g., when we only have one sample). In this case, we make the approximation as $\bar{g}_{\calI^+_\vq} = \bar{g}_{\calI^-_\vq} \approx \frac{1}{|\calD| } \sum_{d \in \calD} |g^*_d|\;$.} As a result, the Hamming distance between $\vq$ and the gradient sign $\vq^*$ can be approximated with the following quantity, which we refer to as the \emph{noisy} Hamming oracle $\hat{\calO}$.
\begin{equation}
\ham{\vq - \vq^*} \;\approx \;  \frac{ \frac{n}{|\calD^+_\vq| } \sum_{d \in \calD^+_\vq} |g^*_d| - D_\vq L(\vx, y)}{ 
	\frac{1}{|\calD^+_\vq| } \sum_{d \in \calD^+_\vq} |g^*_d| + \frac{1}{|\calD^-_\vq| } \sum_{d \in \calD^-_\vq} |g^*_d|
}
\label{eq:ham-dist-approx}
\end{equation}

\begin{figure*}[t!]
	\centering
	\resizebox{0.9\textwidth}{!}{
		\begin{tabular}{cc}
			\includegraphics[width=0.25\textwidth,clip,]{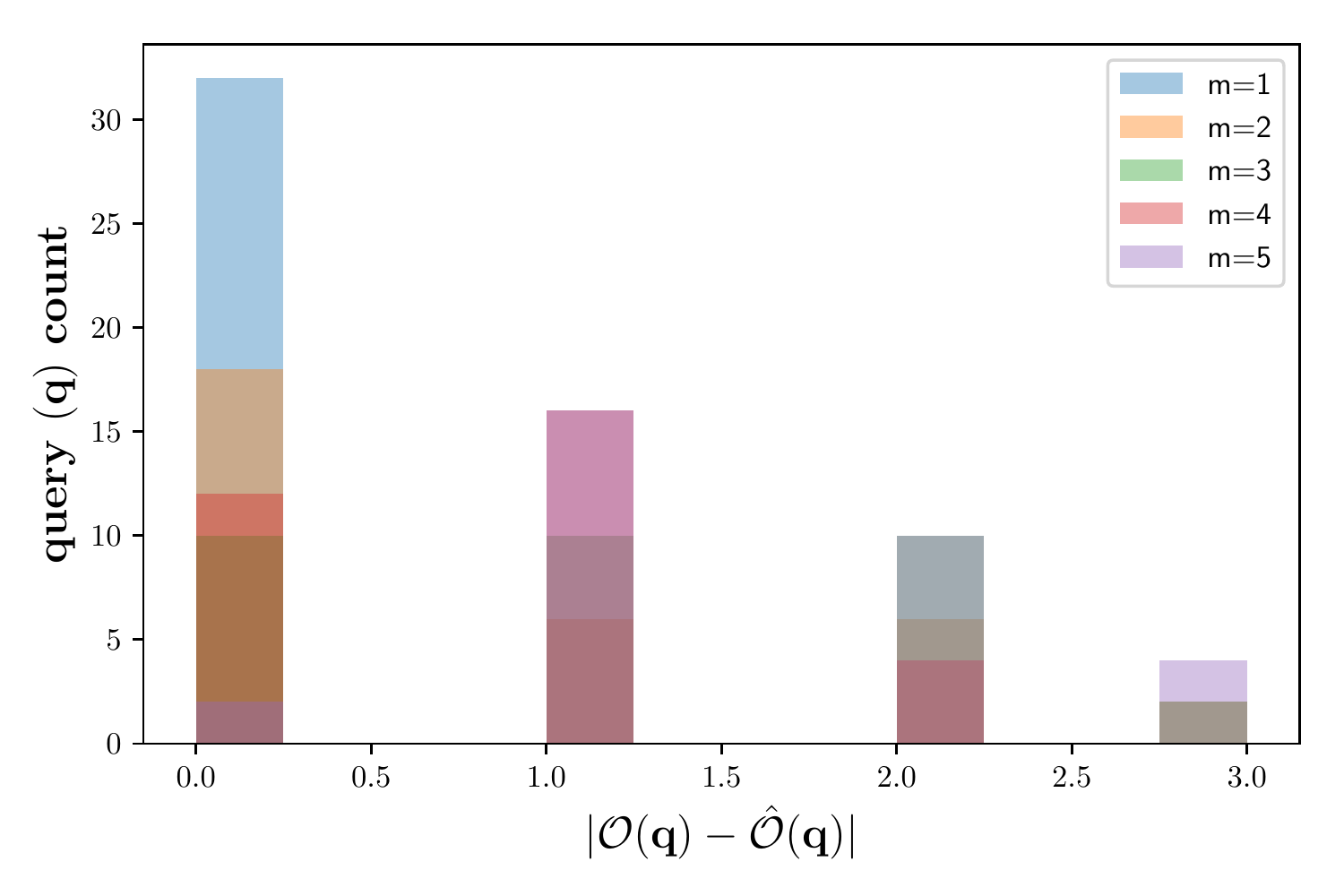} &
			\includegraphics[width=0.25\textwidth]{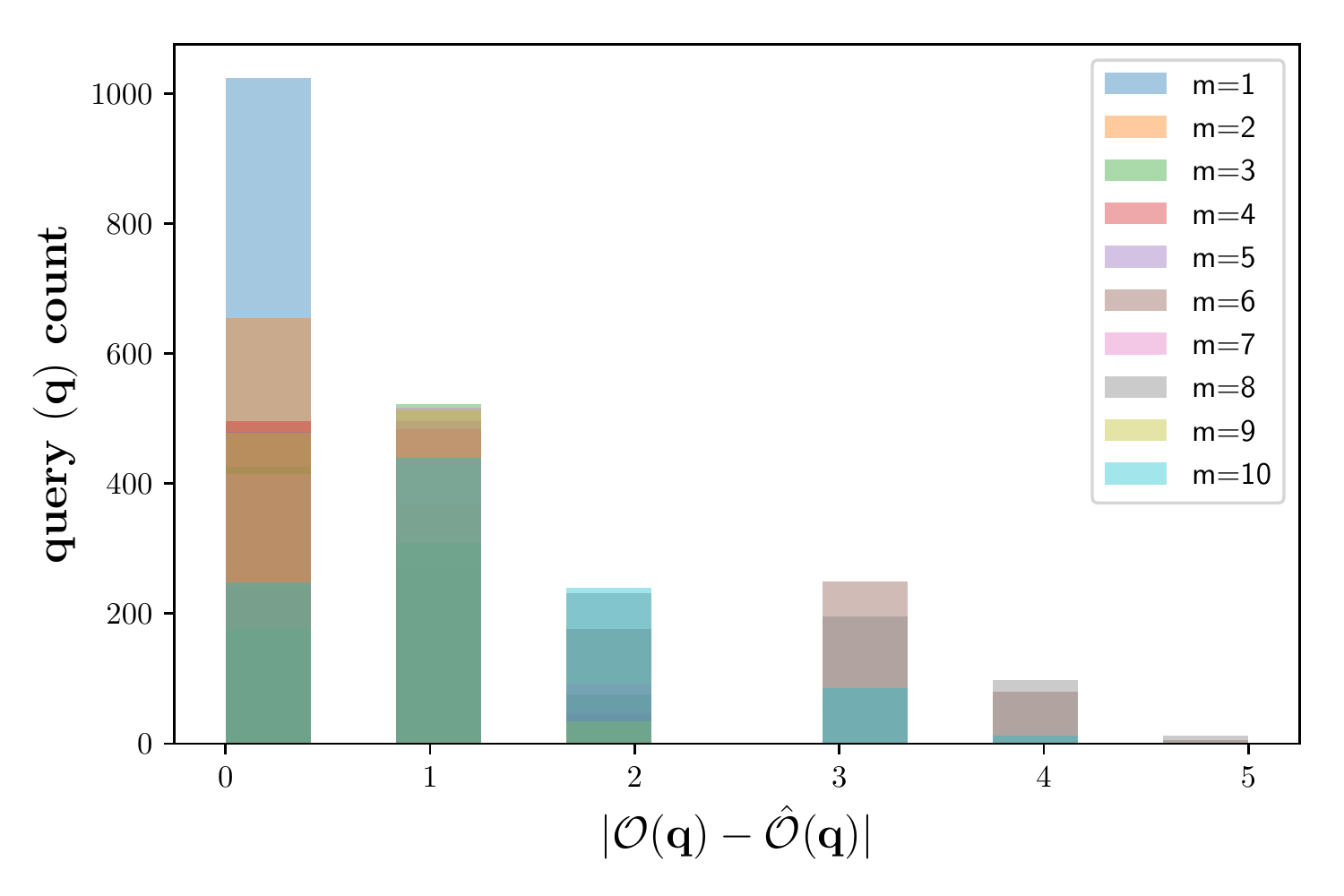}\\
			\tiny (a) $n=5$ & \tiny (b) $n=10$
		\end{tabular}
	}
	\caption{The error distribution of the \emph{noisy} Hamming oracle $\hat{\calO}$ (right side of~\eqref{eq:ham-dist-approx}) compared to the 
		\emph{noiseless} counterpart $\calO$ (left side of~\eqref{eq:ham-dist-approx}) as a function of the number of unique values (magnitudes) of the gradient coordinates $m$. Here, $L(\vx, y)$ has the form $\mathbf{c}^T\vx$. That is, $m=|\texttt{uniq}(|\mathbf{c}|)| \leq n$ with $n\in \{5, 10\}$ being the input length. With $m=1$, the estimation is exact ($\hat{\calO} = \calO$) for all the binary code queries $\vq$---32 codes for $n=5$, 1028 codes for $n=10$. The error seems to be bounded by $\lceil \frac{n}{2}\rceil$. For a given $m$, $c_i$---the $i^{th}$ coordinate of $\mathbf{c}$---is randomly assigned a value from the $m$ evenly spaced numbers in the range $[0.1, m/n ]$. We set the size of the sampled gradient coordinates set~$|\calD|$ to $\lfloor n / 4 \rfloor$.}
	\label{fig:hd-est-quality}
\end{figure*}

We empirically evaluated the quality of $\hat{\calO}$'s responses on a toy problem where we controlled the magnitude spread/concentration of the gradient coordinates with $m$ being the number of unique values (magnitudes) of the gradient coordinates. As detailed in Figure~\ref{fig:hd-est-quality}, the error can reach $\lceil n / 2\rceil$. This a big mismatch, especially if we recall the Hamming distance's range is $[0,n]$. The negative impact of this on the Hamming search strategy by~\citet{maurer2009search} was verified empirically in Figure~\ref{fig:maur-perf}. We considered the simplest case where \maur was given access to the noisy Hamming oracle $\hat{\calO}$ in a setup similar to the one outlined in Figure~\ref{fig:hd-est-quality}, with $n=80$, $|\calD|=n/4= 20$, $m\in \{1,2\}$, and the hidden code $\vq^*=[+1,\ldots, +1]$. To account for the randomness in constructing $\calD$, we ran $30$ independent runs and plot the average Hamming distance (with confidence bounds) over \maur queries. In Figure~\ref{fig:maur-perf} (a), $m=1$ which corresponds to exact estimation $\hat{\calO} = \calO$, \maur spends $21$ queries to construct $\calD$ and terminates one query afterwards with the true binary code $\vq^*$, achieving a query ratio of 21/80. On the other hand, when we set $m=2$ in Figure~\ref{fig:maur-perf} (b); \maur returns a 4-Hamming-distance away solution from the true binary code $\vq^*$ after $51$ queries. This is not bad for an $80$-bit long code. However, this is in a tightly controlled setup where the gradient magnitudes are  just one of two values. To be studied further is the bias/variance decomposition of the returned solution and the corresponding query ratio. We leave this investigation for future work.

\begin{figure}[h!]
	\centering
	\resizebox{0.45\textwidth}{!}{
		\begin{tabular}{cc}
			\includegraphics[width=0.25\textwidth]{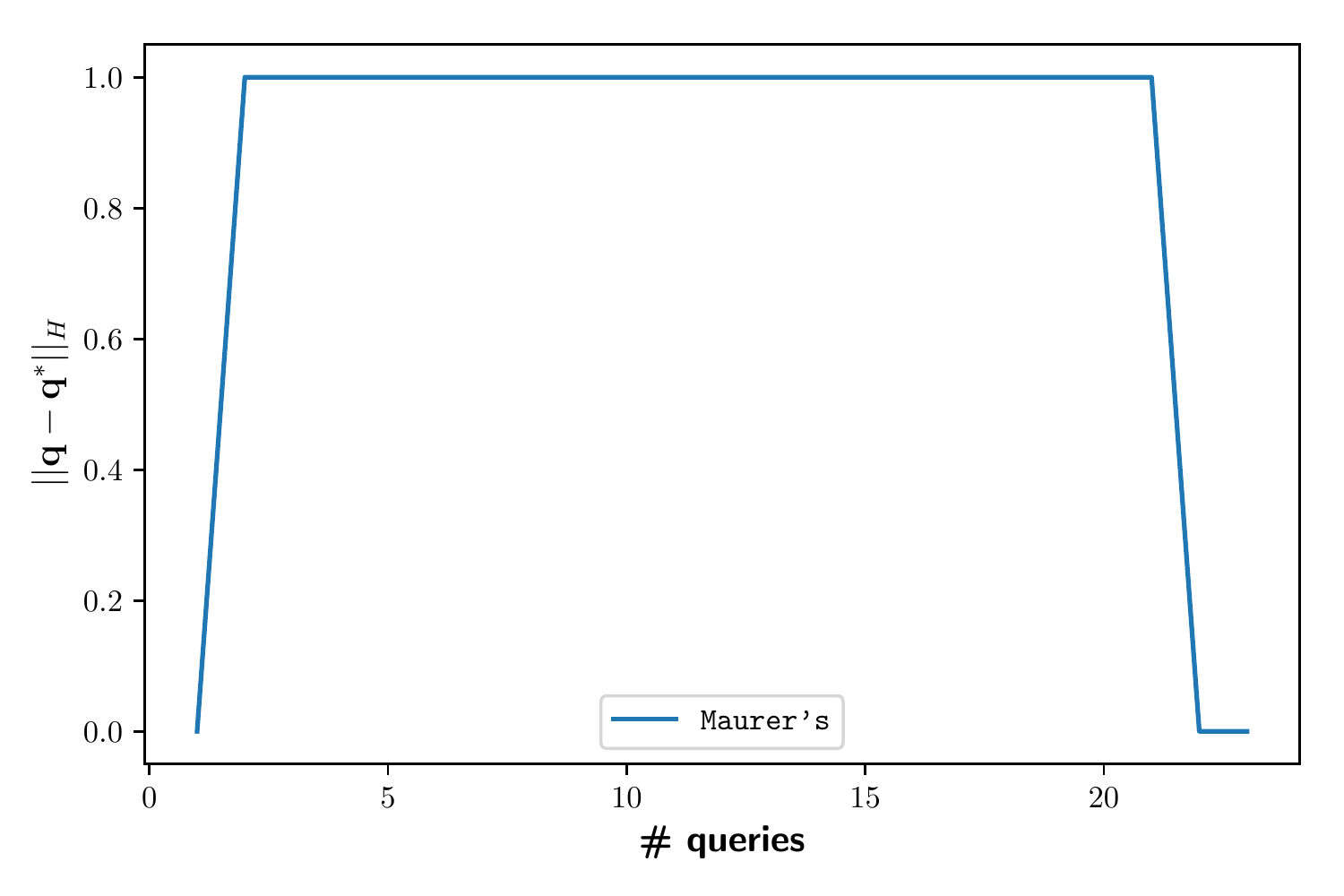} &
			\includegraphics[width=0.25\textwidth]{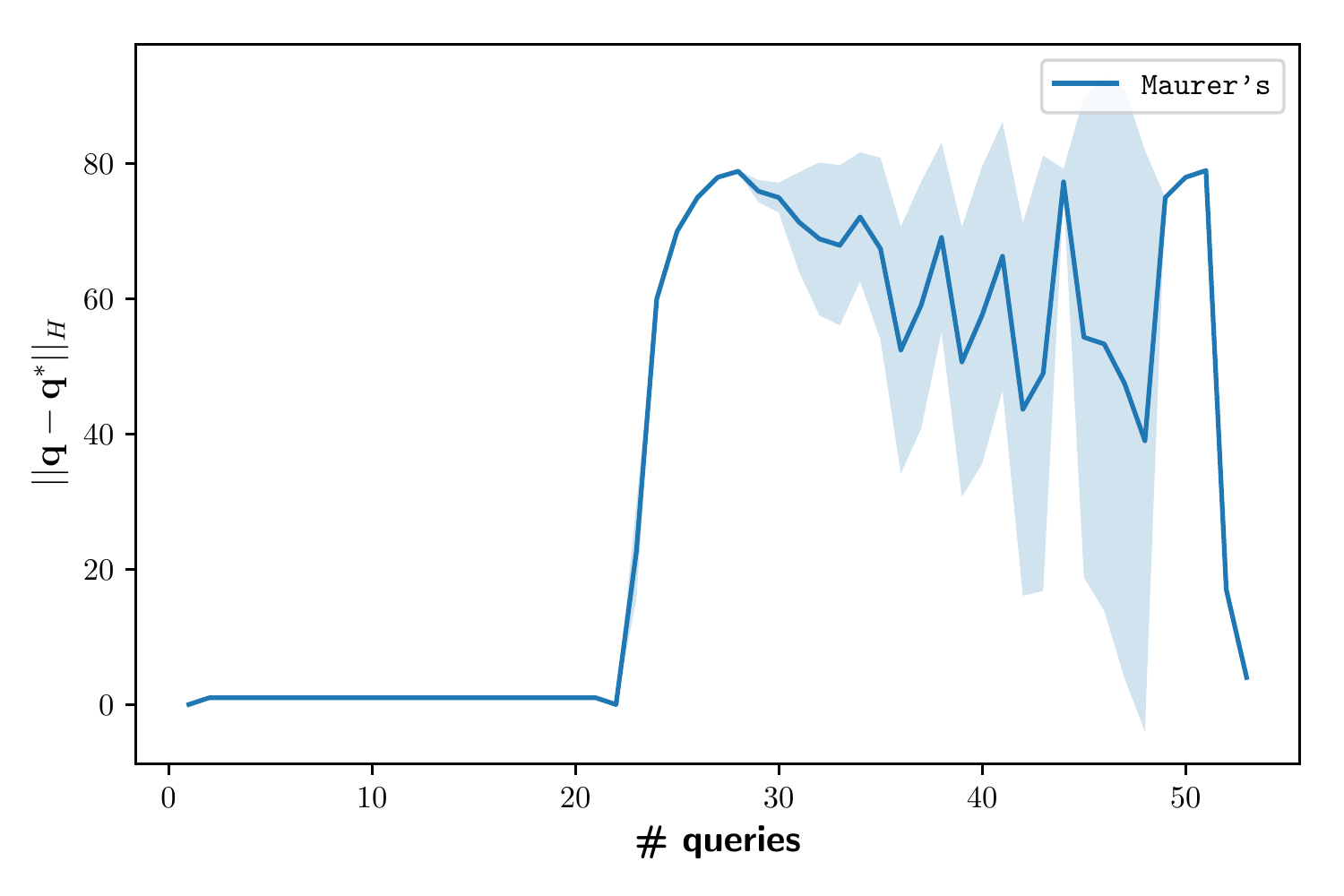}\\
			\tiny (a) $m=1$ & \tiny (b) $m=2$
		\end{tabular}
	}
	\caption{Performance of \maur on the noisy Hamming oracle $\hat{\calO}$. The setup is similar to that of Figure~\ref{fig:hd-est-quality}, with $n=80$ and $m\in \{1,2\}$.
	} 
	\label{fig:maur-perf}
\end{figure}

\subsection*{Approach 3: Optimism in the Face of Uncertainty} 
\label{sec:approach-2}
In the previous approach, we considered the approximated Hamming distance~(\eqref{eq:ham-dist-approx}) as a surrogate for the formal optimization  objective~(Eq. 3 in the main paper) of the gradient sign estimation problem. We found that current Hamming search strategies are not robust to approximation error. In this approach, we consider maximizing the directional derivative (Eq. 4 in the main paper)
\begin{equation}
\max_{\vq \in \calH} D_{\vq}L(\vx, y)\;,
\label{eq:grad-est-obj-df}
\end{equation}
as our formal objective of the gradient sign estimation problem. Formally, we treat the problem as a binary black-box optimization over the $2^n$ hypercube vertices, which correspond to all possible sign vectors. This is significantly worse than $O(n)$ of the continuous optimization view. Nevertheless, the rationale here is that we do not need to solve \eqref{eq:grad-est-obj-df} to optimality (recall the effectiveness of noisy \fgsm in Appendix A); we rather need a fast convergence to a suboptimal but \emph{adversarially helpful} sign vector $\vq$. In addition, the continuous optimization view often employs an iterative scheme of $T$ steps within the perturbation ball $B_p(\vx, \epsilon)$, calling the \emph{gradient estimation routine} in every step leading to a search complexity of $nT$. In our setup, we use the best obtained solution for~\eqref{eq:grad-est-obj-df} so far in a similar fashion to the noisy \fgsm.  In other words, our \emph{gradient sign estimation routine} runs at the top level of our adversarial example generation procedure instead of calling it as a subroutine. In this and the next approach, we address the following question: \emph{how do we solve \eqref{eq:grad-est-obj-df}?}

Optimistic methods, i.e., methods that implement the optimism in the face of uncertainty principle have demonstrated a theoretical as well as empirical success when applied to black-box optimization problems~\citep{munos2011optimistic,al2017embedded, al2018multi}. Such a principle finds its foundations in the machine learning field addressing the exploration vs. exploitation dilemma, known as the multi-armed bandit problem. Within the context of function optimization, optimistic approaches formulate the complex
problem of optimizing an arbitrary black-box function $g$ (e.g., \eqref{eq:grad-est-obj-df})  over the search space ($\calH$ in this paper) as a hierarchy of simple bandit problems~\citep{kocsis2006bandit} in the form of space-partitioning tree search $\calT$. At step $t$, the algorithm optimistically expands a leaf node (partitions the
corresponding subspace) from the set of leaf nodes $\calL_t$ that may contain the global optimum. The $i^{th}$ node at depth $h$, denoted by $(h,i)$, corresponds to the subspace/cell $\calH_{h,i}$ such that $\calH = \cup_{0\leq i < K^h}\calH_{h,i}$. To each node $(h,i)$, a representative point $\vq_{h,i}\in \calH_{h,i}$ is assigned, and the value of the node $(h,i)$ is set to $g(\vq_{h,i})$. See Figure~\ref{fig:goo-ilust} for an example of a space-partitioning tree $\calT$ of $\calH$, which will be used in our second approach to estimate the gradient sign vector.

Under some assumptions on the optimization objective and the hierarchical partitioning~$\calT$ of the search space, optimistic methods enjoy a finite-time bound on their \emph{regret} $R_t$ defined as
\begin{equation}
R_t =  g(\vq^*) - g(\vq(t))\;,
\end{equation}
where $\vq(t)$ is the best found solution by the optimistic method after $t$ steps. The challenge is how to align the search space such that these assumptions hold. In the following, we show that these assumptions can be satisfied for our optimization objective~(\eqref{eq:grad-est-obj-df}). In particular, when $g(\vq)$ is the directional derivative function $D_{\vq}L(\vx, y)$, and $\calH$'s vertices are aligned on a 1-dimensional line according to the Gray code ordering, then we can construct an optimistic algorithm with a finite-time bound on its regret. To demonstrate this, we adopt the Simultaneous Optimistic Optimization framework by \citet{munos2011optimistic} and the assumptions therein.

\begin{figure}[t]
	\centering
	\resizebox{0.49\textwidth}{!}{
		\begin{tabular}{c||c|c|}
			&\bf \Huge $n=3$ & \bf \Huge $n=4$ \\
			\toprule
			\toprule
			\put(-15,80){
				\rotatebox{90}{  \Huge Linear $f$}} &\includegraphics[]{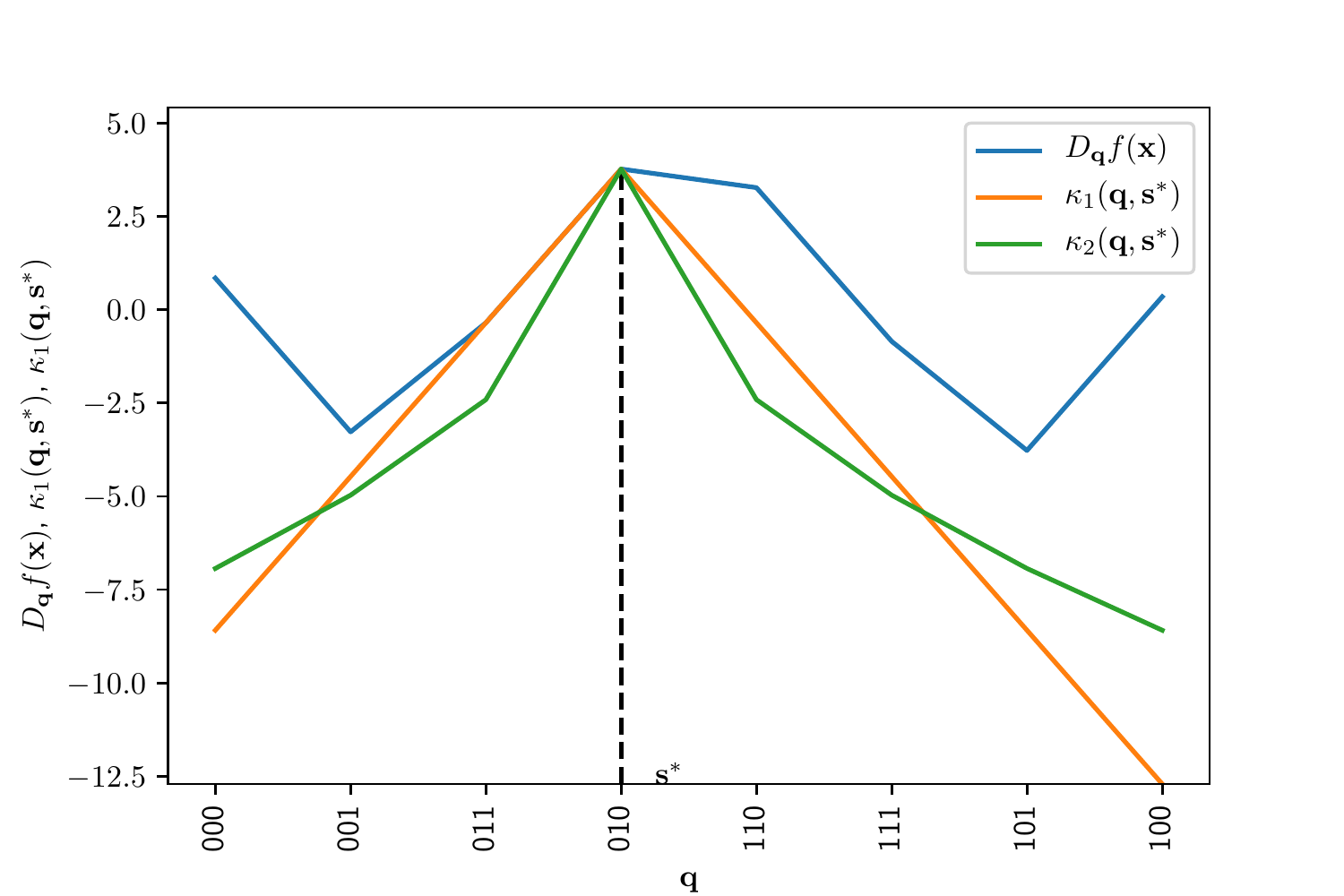} &
			\includegraphics[]{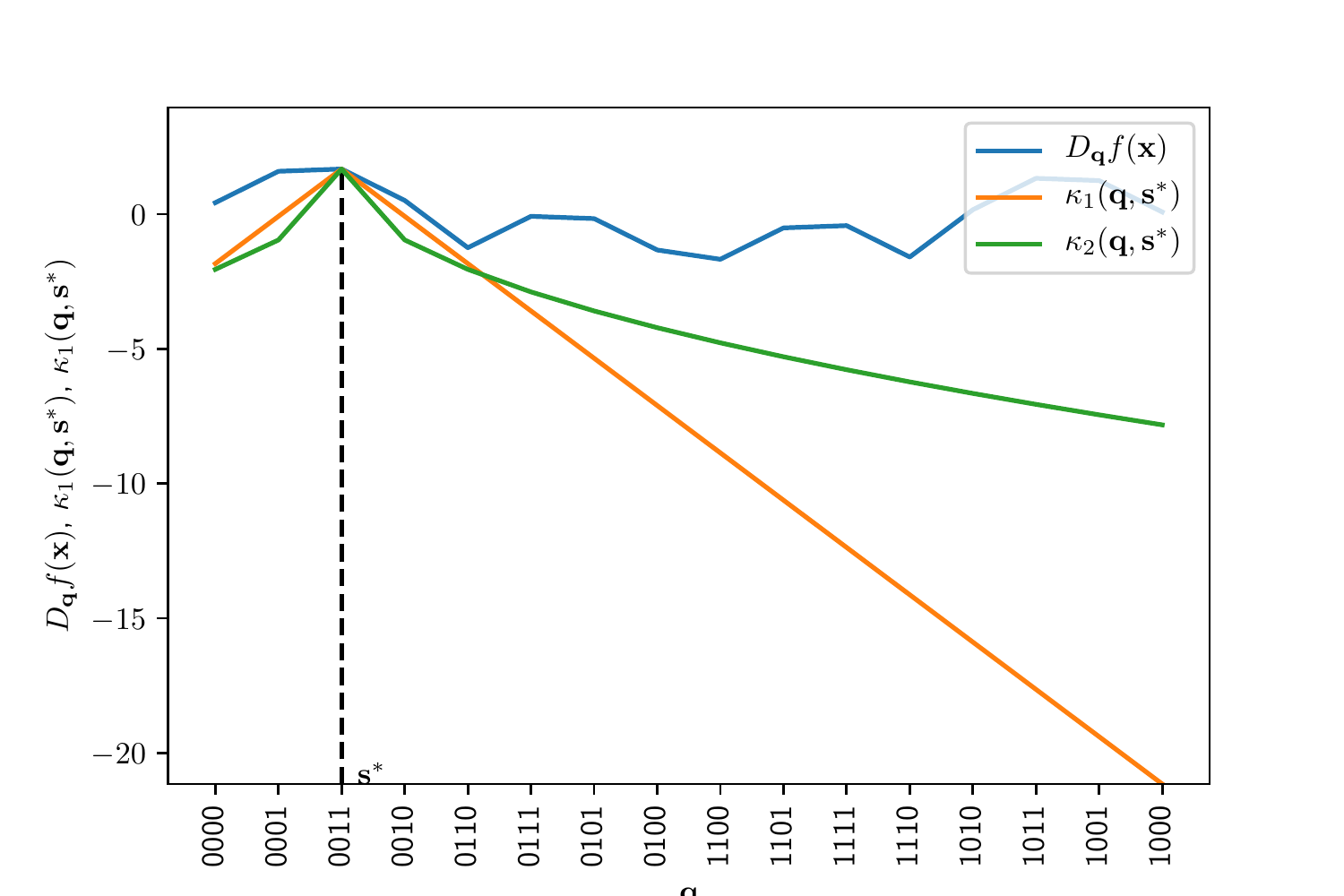}\\
			&\bf \Huge (a) & \bf \Huge (b) \\ \midrule
			\put(-15,80){
				\rotatebox{90}{  \Huge Quadratic $f$}}
			&\includegraphics[]{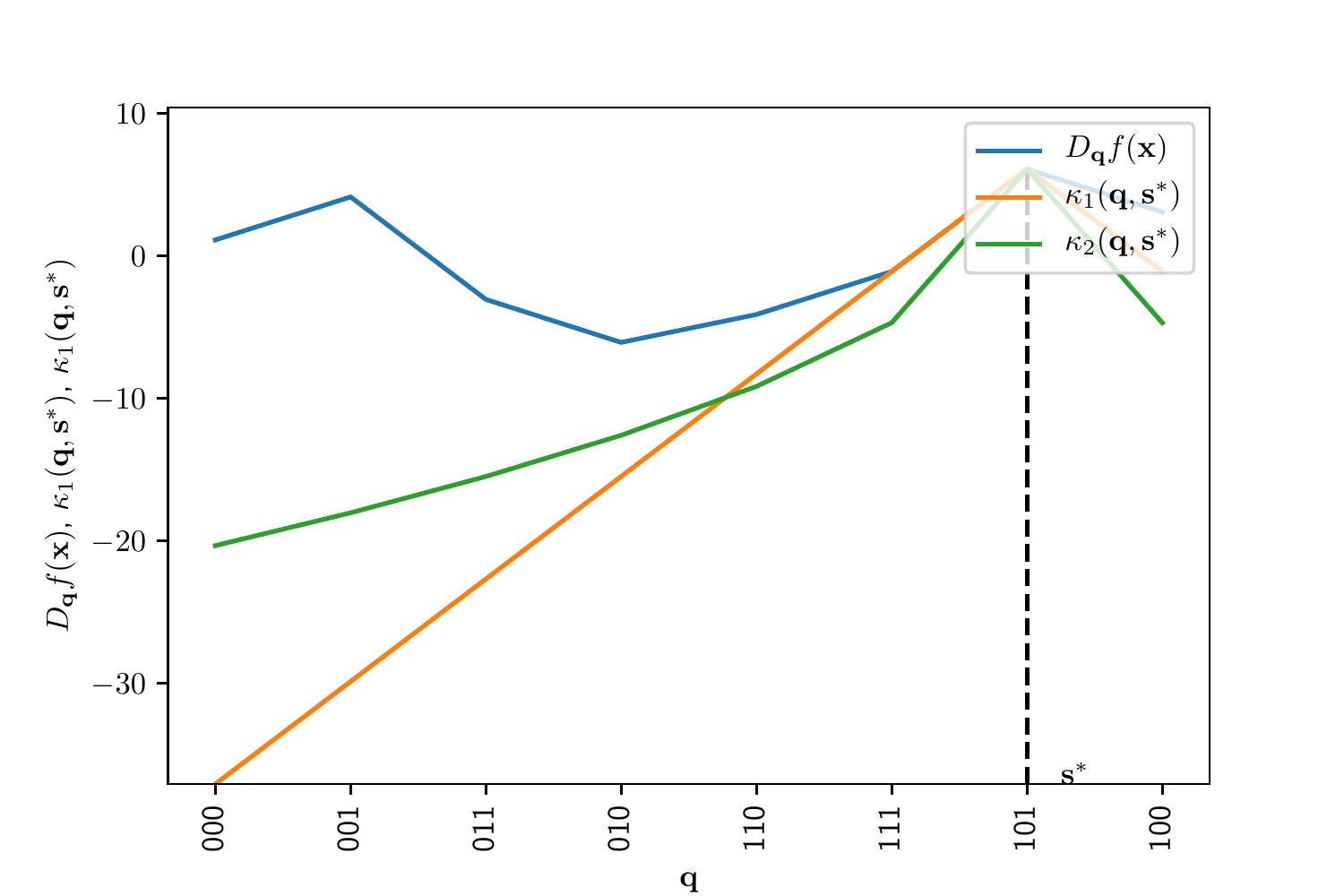} &
			\includegraphics[]{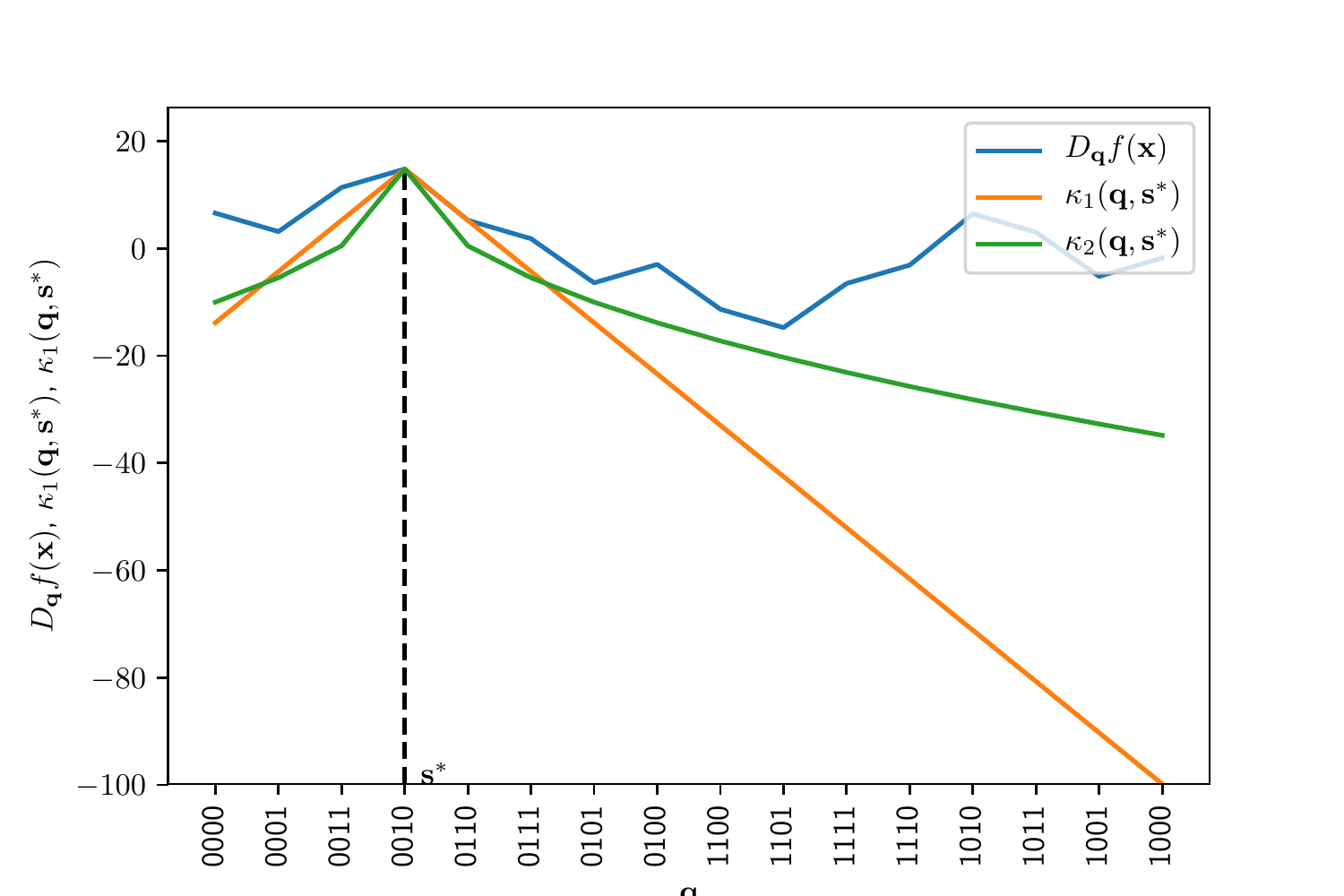}\\ 
			&\bf \Huge (c) & \bf \Huge (d) \\
			\bottomrule
		\end{tabular}
	}
	\caption{\small The directional derivative $D_{\vq}f(\vx)$ of some function $f$ at a point $\vx$ in the direction of a binary vector $\vq \in \calH \equiv \{-1, +1\}^n$ is locally smooth around the gradient sign vector, $\vq^*=\sgn(\nabla_\vx f(\vx))\in \calH$, when $\calH$ is ordered over one coordinate as a sequence of Gray codes. The plots show the local smoothness property---with two semi-metrics $\kappa_1(\cdot, \cdot)$ and $\kappa_2(\cdot,\cdot)$---of the directional derivative of functions $f$ of the form $\mathbf{c}^T\vx$ and $\vx^TQ\vx$ for $n\in \{3, 4\}$, as tabulated in Plots (a), (c) and (b, d), respectively. The local smoothness is evident as $D_{\vq^*}f(\vx) - D_{\vq}f(\vx)\leq \kappa_1(\vq, \vq^*)$ and $D_{\vq^*}f(\vx) - D_{\vq}f(\vx)\leq \kappa_2(\vq, \vq^*)$ for all $\vq \in \calH$. The semi-metrics $\kappa_1(\vq, \vq^*)$ and $\kappa_2(\vq, \vq^*)$ have the form $K |\texttt{rank}_{Gray}(\vq)- \texttt{rank}_{Gray}(\vq^*)|^\alpha$, where $\texttt{rank}_{Gray}(\cdot)$ refers to the rank of an $n$-binary code in the Gray ordering of $n$-binary codes (e.g., $\texttt{rank}_{Gray}([-1, -1, +1, -1])=4$), $K>0$ , and $\alpha >0$. With this property at hand, we employ the optimism in the face of uncertainty principle in \goo to maximize $D_{\vq}f(\vx)$  over $\calH$. For legibility, we replaced $-1$ with $0$ when enumerating $\calH$ on the x-axis.}
	\label{fig:goo-smoothness}
\end{figure}

For completeness, we reproduce \cite{munos2011optimistic}'s basic definitions and assumptions with respect to our notation. At the same time we show how the gradient sign estimation problem~(\eqref{eq:grad-est-obj-df}) satisfies them based on the second property of the directional derivative as follows.

\begin{customdefn}{2}[Semi-metric]
	\label{assmp:metric} We assume that $\kappa: \calH \times \calH\to \R^+$ is such that for all $\vp, \vq \in \calH$, we have $\kappa(\vp, \vq) = \kappa(\vq, \vp)$ and $\kappa(\vp, \vq)=0$ if and only if $\vp =\vq$.
\end{customdefn}

\begin{customdefn}{3}[Near-optimality dimension]
	The near-optimal dimension is the smallest $d\geq 0$ such that there exists $C >0$ such that for any $\varepsilon >0$, the maximal number of disjoint $\kappa$-balls of radius $\upsilon\varepsilon$ and center in $\calH_{\varepsilon}$ is less than $C\varepsilon^{-d}$.
\end{customdefn}

\begin{customprop}{3}[Local smoothness of $D_{\vq} L(\vx, y)$]
	\label{prop:smoothness}
	For any input/label pair $(\vx,y)$, there exists at least a global optimizer $\vq^*\in \calH$ of $D_{\vq} L(\vx, y)$ (i.e., $D_{\vq^*} L(\vx, y)= \sup_{\vq \in \calH} D_{\vq} L(\vx, y)$) and for all $\vq \in \calH$,
	$$D_{\vq^*} L(\vx, y) - D_{\vq} L(\vx, y) \leq \kappa(\vq, \vq^*)\;.$$
\end{customprop}
Refer to Figure~\ref{fig:goo-smoothness} for a pictorial proof of Property~\ref{prop:smoothness}.
\begin{assumption}[Bounded diameters]
	\label{assmp:bounded-diameter}
	There exists a decreasing a decreasing sequence $\omega(h) > 0$, such that for any depth $0 \leq h <n$, for any  cell $\calH_{h,i}$ of depth $h$, we have $\sup_{\vq \in \calH_{h,i}} \kappa(\vq_{h,i}, \vq) \leq \omega(h)$.
\end{assumption}
To see how Assumption~\ref{assmp:bounded-diameter} is met, refer to Figure~\ref{fig:goo-ilust}.
\begin{assumption}[Well-shaped cells] 
	\label{assmp:well-shaped}
	There exists $\upsilon > 0$ such that for any depth $0\leq h < n$, any cell $\calH_{h,i}$ contains a $\kappa$-ball of radius $\upsilon \omega(h)$ centered in $\vq_{h,i}$.
\end{assumption}
To see how Assumption~\ref{assmp:well-shaped} is met, refer to Figure~\ref{fig:goo-ilust}. With the above assumptions satisfied, we propose the \texttt{G}ray-code \texttt{O}ptimistic \texttt{O}ptimization (\goo), which is an instantiation of \citep[Algorithm 2]{munos2011optimistic} tailored to our optimization problem~(\eqref{eq:grad-est-obj-df}) over a 1-dimensional alignment of $\calH$ using the Gray code ordering. The pseudocode is outlined in Algorithm~\ref{alg:goo}. The following theorem bounds \goo's regret.

\begin{theorem}{Regret Convergence of \goo}
	Let us write $h(t)$ the smallest integer $h$ such that 
	\begin{equation}
	C h_{max}(t) \sum_{l=0}^{h} \omega(l)^{-d} + 1\geq t\;.
	\label{eq:iteration-bound}
	\end{equation}
	Then, with $g(\vq) = D_\vq L(\vx, y)$, the regret of \goo (Algorithm~\ref{alg:goo}) is bounded as
	$$R_t \leq \omega(\min(h(t), h_{max}(t) + 1))$$
\end{theorem}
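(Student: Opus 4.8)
The plan is to show that \goo is simply an instantiation of the Simultaneous Optimistic Optimization (SOO) algorithm of \citet{munos2011optimistic} applied to the particular objective $g(\vq) = D_\vq L(\vx, y)$ over the one-dimensional Gray-code alignment of the hypercube $\calH$, so that the stated regret bound follows by directly invoking their main theorem. The bulk of the work is therefore \emph{verification of hypotheses} rather than a fresh argument: I would (i) recall that Properties~\ref{prop:separable}--\ref{prop:smoothness} and Assumptions~\ref{assmp:bounded-diameter}--\ref{assmp:well-shaped}, together with the semi-metric Definition~\ref{assmp:metric} and the near-optimality-dimension Definition~3, are exactly the ingredients that \citet{munos2011optimistic} require of their ``function + partitioning'' pair; (ii) observe that, since $\calH$ is finite with $2^n$ points and the partitioning tree has depth at most $n$ (each split halves a contiguous block of Gray-code ranks until singletons remain), the representative point $\vq_{h,i}$ of any leaf reached at the maximal depth satisfies $g(\vq_{h,i}) = g(\vq^*)$ once that leaf is the optimal singleton; and (iii) conclude that the abstract SOO regret guarantee specializes to the displayed inequality.

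Concretely, I would structure the proof in three short steps. \textbf{Step 1 (setup).} Fix $(\vx,y)$, set $g(\vq) = D_\vq L(\vx,y)$, and align the $2^n$ vertices of $\calH$ on a line in Gray-code order; the partitioning $\calT$ recursively bisects rank-intervals, so it is a binary tree ($K=2$) of depth $\le n$, and $\omega(h)$ is the diameter bound from Assumption~\ref{assmp:bounded-diameter}. \textbf{Step 2 (hypotheses hold).} Property~\ref{prop:smoothness} gives the local-smoothness hypothesis $g(\vq^*) - g(\vq) \le \kappa(\vq,\vq^*)$ with $\kappa(\vp,\vq) = K|\mathtt{rank}_{Gray}(\vp) - \mathtt{rank}_{Gray}(\vq)|^\alpha$, which is a semi-metric in the sense of Definition~\ref{assmp:metric}; Assumptions~\ref{assmp:bounded-diameter} and~\ref{assmp:well-shaped} are the bounded-diameter and well-shaped-cell conditions, verified pictorially in Figures~\ref{fig:goo-smoothness} and~\ref{fig:goo-ilust}; and the near-optimality dimension $d$ (with constant $C$) exists because $\calH_\varepsilon$ is finite. \textbf{Step 3 (invoke SOO).} These are precisely the hypotheses of \citep[Theorem~?]{munos2011optimistic} for Algorithm~2 there; since \goo (Algorithm~\ref{alg:goo}) is that algorithm with the above instantiation, its theorem applies verbatim. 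Defining $h(t)$ as the smallest $h$ with $C\,h_{max}(t)\sum_{l=0}^{h}\omega(l)^{-d} + 1 \ge t$ exactly as in their statement, we get
\[
R_t \;\le\; \omega\bigl(\min(h(t),\, h_{max}(t)+1)\bigr),
\]
which is the claim. I would also note the sanity check that, because the tree has finite depth $n$, after enough expansions every leaf is a singleton and $R_t = 0$; this is consistent with $\omega(h) \to 0$ as $h \to n$.

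The main obstacle — and the only place where real care is needed — is \textbf{Step 2}: checking that the Gray-code alignment genuinely makes Property~\ref{prop:smoothness} and Assumptions~\ref{assmp:bounded-diameter}--\ref{assmp:well-shaped} simultaneously true with a \emph{single} semi-metric $\kappa$ and a \emph{single} diameter sequence $\omega(h)$. The subtlety is that consecutive Gray codes differ in exactly one bit, so moving one rank-step changes $g$ by $2|g^*_i|$ for the flipped coordinate $i$; bounding $g(\vq^*) - g(\vq)$ by $K|\Delta\mathtt{rank}|^\alpha$ uniformly requires controlling how these per-step increments accumulate over a rank-interval, and the well-shaped/bounded-diameter conditions then pin down the admissible $(K,\alpha,\upsilon)$. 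Once a valid triple is exhibited (the figures indicate $\alpha$ can be taken small and $K$ proportional to $\max_i |g^*_i|$, or more simply to $\|\vg^*\|_1$), the rest is a mechanical appeal to the cited theorem, so I would keep Steps 1 and 3 to a couple of sentences each and spend the proof's weight on Step 2 — most likely deferring the detailed diameter computation to the figures already provided and to \citep{munos2011optimistic}'s framework.
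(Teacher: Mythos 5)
Your proposal follows essentially the same route as the paper: verify that the Gray-code partitioning of $\calH$ together with $g(\vq)=D_\vq L(\vx,y)$ satisfies the local-smoothness property and the bounded-diameter and well-shaped-cell assumptions (which the paper, like you, defers to the figures), and then invoke \citep[Theorem~2]{munos2011optimistic} verbatim. The only detail you do not call out is that the $+1$ in the iteration bound is there to account for the initial evaluation of the anomaly leaf node $(n,0)$ before the tree is grown, but this is a cosmetic point and your argument is otherwise the paper's.
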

\begin{proof}
	We have showed that our objective function~(\eqref{eq:grad-est-obj-df}) and the hierarchical partitioning of $\calH$ following the Gray code ordering confirm to Property~\ref{prop:smoothness} and Assumptions~\ref{assmp:bounded-diameter}  and~\ref{assmp:well-shaped}. The $+1$ term in \eqref{eq:iteration-bound}  is to accommodate the evaluation of node $(n, 0)$ before growing the space-partitioning tree $\calT$---see Figure~\ref{fig:goo-ilust}. The rest follows from the proof of \citep[Theorem 2]{munos2011optimistic}.
\end{proof}

\begin{algorithm}[t!]
	\caption{ \texttt{G}ray-code \texttt{O}ptimistic \texttt{O}ptimization (\goo) \\
		\textbf{Input:}\\
		\begin{tabular}{p{1.8cm}p{5.75cm}}
			\footnotesize $g:\calH \to \R$ &
			\footnotesize : the black-box linear function to be maximized over the binary  hypercube~$\calH\equiv\{-1, +1\}^n$\\	
			\footnotesize $h_{max}(t)$ &
			\footnotesize : the maximum depth function which limits the tree to grow up to $h_{max}(t)+1$ after $t$ node expansions
		\end{tabular}
	}    
	\label{alg:goo}
	\begin{algorithmic}			
		\tiny
		\STATE {\bfseries Initialization:} Set $t=1$, $\calT_t= \{(0,0)\}$ (root node). Align $\calH$ over $\calT_t$ using the Gray code ordering.
		\WHILE {True}
		\STATE $v_{max}\gets -\infty$
		\FOR {$h=0$ \textbf{to} $\min(depth(\calT_t), h_{max}(t))$} 
		\STATE Among all leaves $(h,j) \in \calL_t$ of depth $h$, select $$(h,i) \in \argmax_{(h,j)\in \calL_t} g(\vq_{h,i})$$
		\IF {$g(\vq_{h,i}) \geq v_{max}$}
		\STATE Expand this node: add to $\calT_t$ the two children $(h+1, i_k)_{1\leq k \leq 2}$	
		\STATE 			$v_{max}\gets g(\vq_{h,i}) $
		\STATE $t \gets t + 1$
		\IF {query budget is exhaused}
		\STATE \textbf{return} the best found solution $\vq(t)$.
		\ENDIF
		\ENDIF
		\ENDFOR
		\ENDWHILE
	\end{algorithmic}
\end{algorithm}

Despite being theoretically-founded, \goo is slow in practice. This is expected since it is a global search technique that considers all the $2^n$ vertices of the $n$-dimensional hypercube $\calH$. Recall that we are looking for adversarially helpful solution $\vq$ that may not be necessarily optimal. %To this end, we consider the \emph{separability} property of the directional derivative, a more useful property than its local smoothness as described in our third approach next.

\subsection*{Empirical Evaluation of the Approaches on a Set of Toy Problems}

We tested both \goo and \signhunter (along with \maur and \elim)\footnote{See Appendix C for details on these algorithms.} on a set of toy problems and found that \signhunter performs significantly better than \goo, while \maur and \elim were sensitive to the approximation error---see Figure~\ref{fig:approx-vs-exact-hd}. We remind the reader that \maur and \elim are two search strategies for $\vq^*$ using the Hamming oracle. After response $r^{(i)}$ to query $\vq^{(i)}$, \elim eliminates all binary codes $\vq\in \calH$ with $||\vq - \vq^{(i)}||_H \neq r^{(i)}$ in an iterative manner. Note that \elim is a naive technique that is not scalable with $n$. For \maur, we refer the reader to~\cite{maurer2009search}.

\begin{figure*}[t]
	\centering 
	\resizebox{0.49\textwidth}{!}{
		\begin{tabular}{c||c|c|}
			& \bf \Huge Hamming Distance Trace & \bf \Huge Directional Derivative Trace \\
			\toprule
			\toprule
			\put(-20,-3){
				\rotatebox{90}{\bf \Huge Noiseless Hamming Oracle
				}}
				&\includegraphics[]{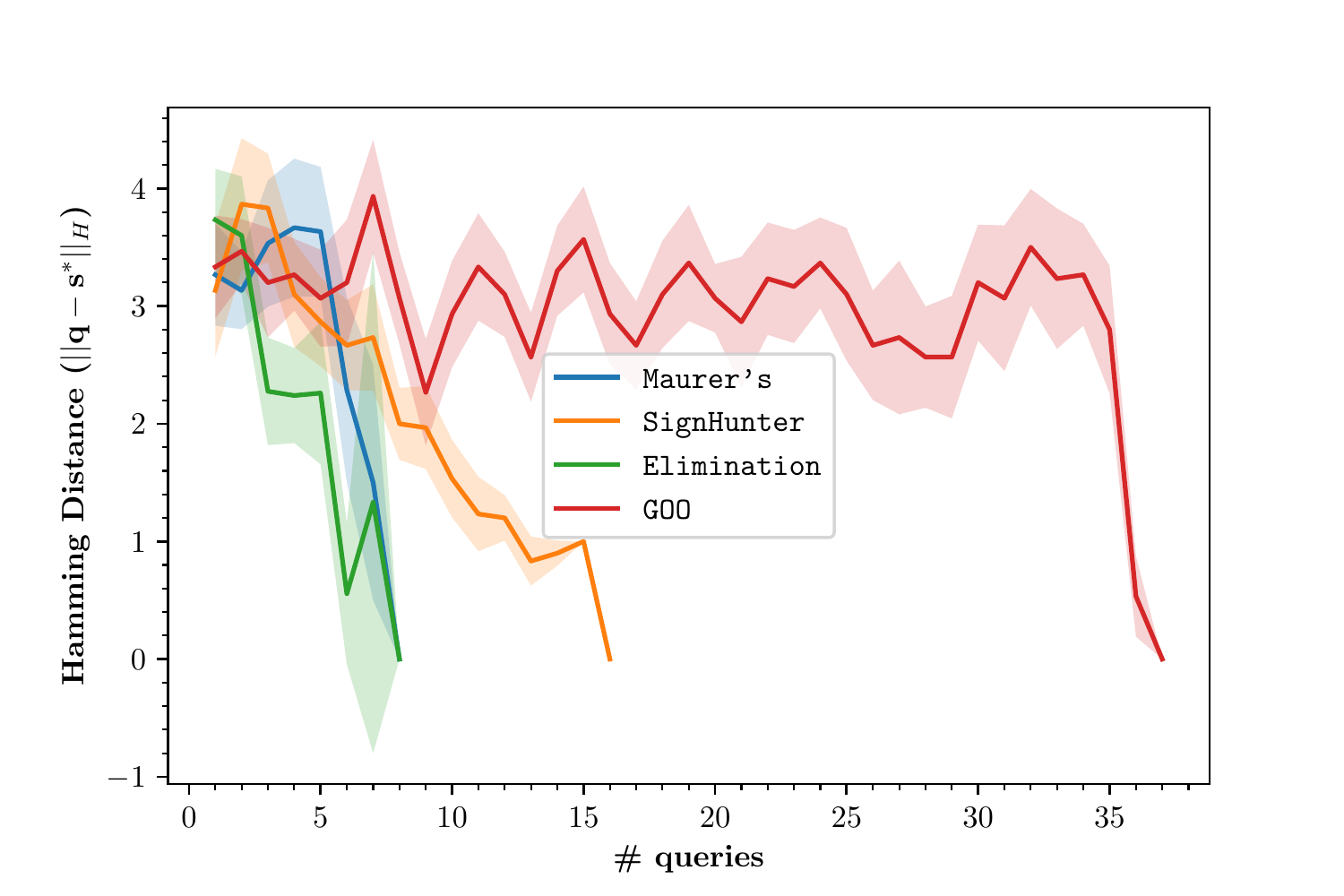} &
				\includegraphics[]{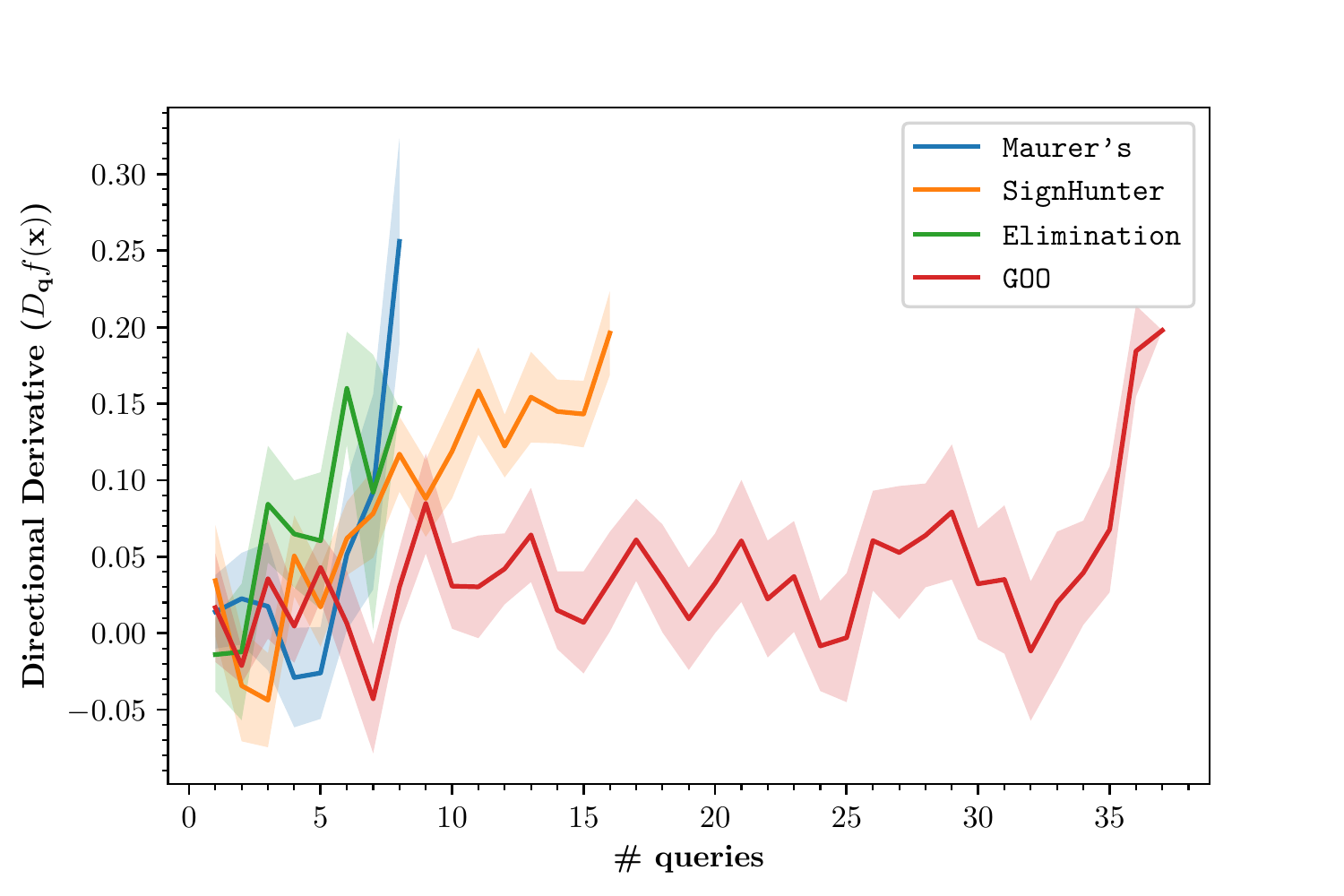}\\
				&\bf \Huge (a) & \bf \Huge (b) \\
				\midrule
				\put(-20,5){
					\rotatebox{90}{\bf \Huge Noisy Hamming Oracle
					}} 
					&\includegraphics[]{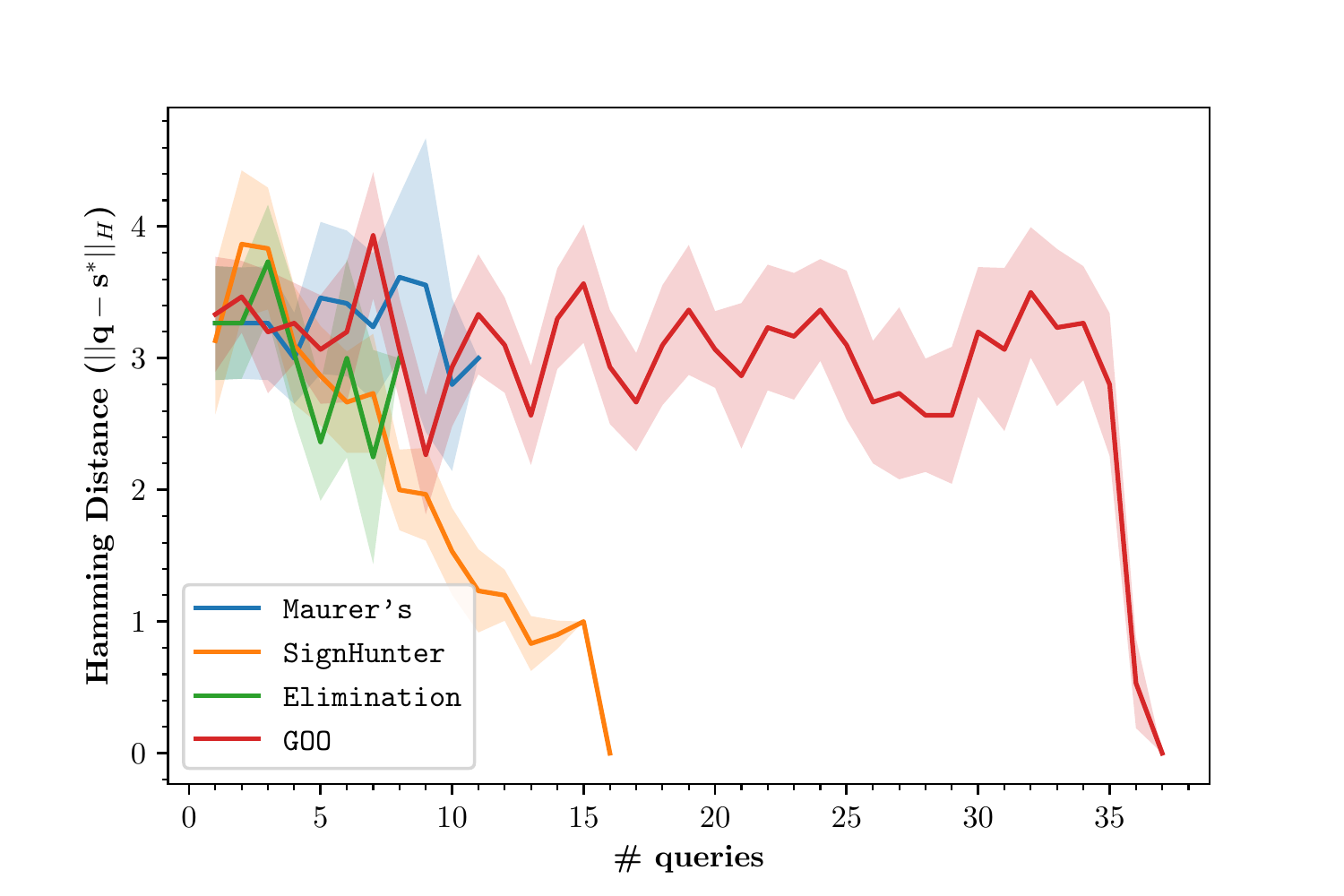} &
					\includegraphics[]{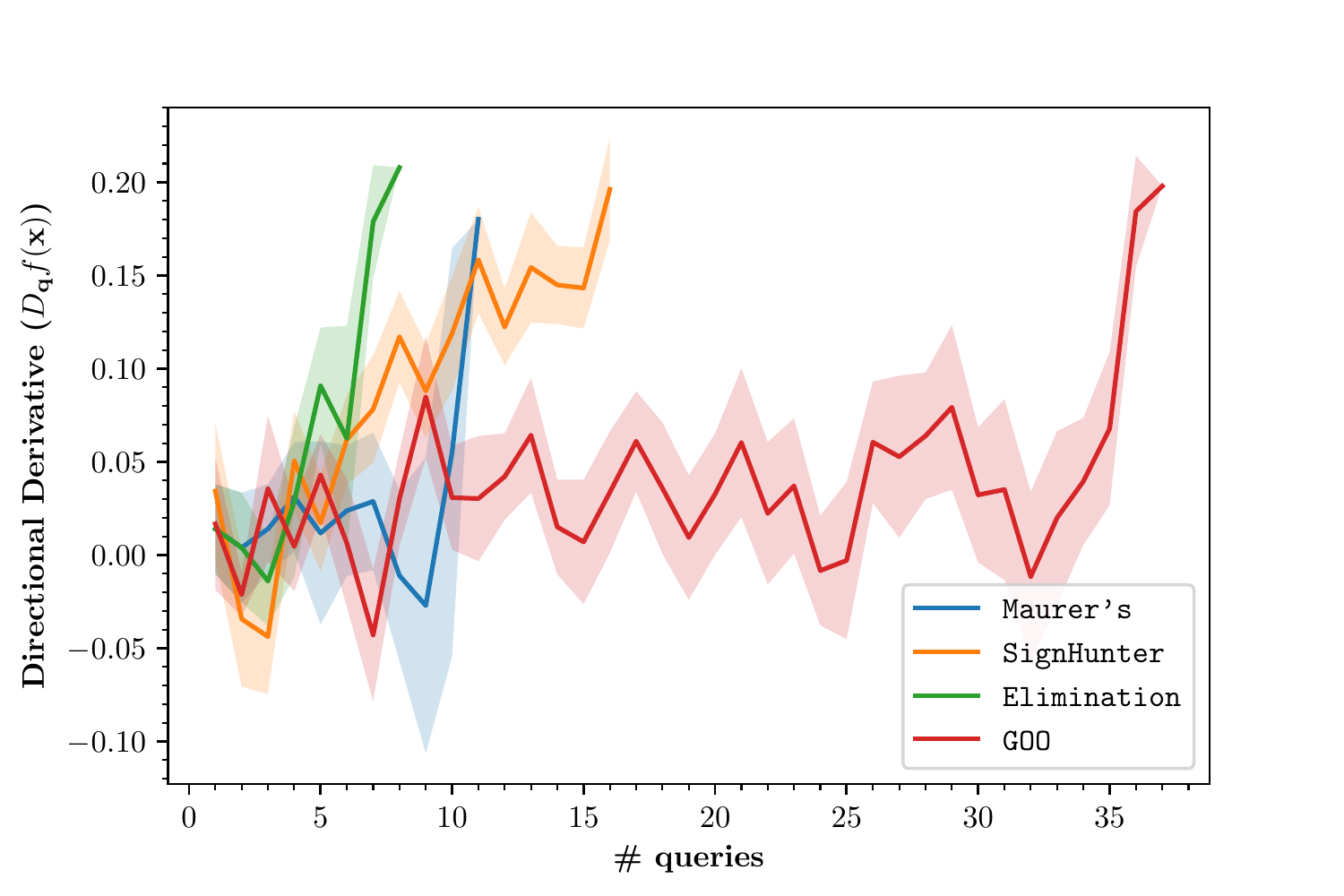}\\
					&\bf \Huge (c) &  \bf \Huge (d) \\
					\bottomrule
				\end{tabular}
			}
			\caption{ \small \emph{Noiseless vs. Noisy Hamming Oracle:} The trace  of the Hamming distance (first column, where the lower the trace the better) and directional derivative (second column, where the higher the trace the better) values of \elim and \maur queries, when given access to a noiseless/ideal (first row) and noisy Hamming oracles (second row)---through a directional derivative approximation as discussed in Approach 2---for a synthetic function $f$ of the form $\vx^TQ\vx$ with $n=7$. We expect the traces to go up and down as they explore the $2^n$ search space. The end of an algorithm's trace represents the value of the Hamming distance (directional derivative) for the first column (for the second column) at the algorithm's solution. For comparison, we also plot \goo and \signhunter's traces. Note that the performance of \goo and \signhunter is the same in both noiseless and noisy cases as both algorithms operate directly on the directional derivative approximation rather than the noiseless/noisy Hamming oracle. In the case of noiseless Hamming oracle, both \elim and \maur finds the optimal vector $\vq^* \in \calH\equiv \{-1, +1\}^7$ with \# queries $\leq 7$---their traces end at most at $8$ just to show that the algorithm's solution achieves a zero Hamming distance as shown in Plot (a), which corresponds to the maximum directional derivative in Plot (b). With a noisy Hamming oracle, these algorithm break as shown in Plots (c) and (d): taking more than $n$ queries and returning sub-optimal solutions---e.g., \maur returns on average a three-Hamming-distance solution. On the other hand, \goo and \signhunter achieve a zero Hamming distance in both cases at the expense of being less query efficient. While being theoretically-founded, \goo is slow as it employs a global search over the $2^n$ space. Despite \signhunter's local search, it converges to the optimal solution after $2\times7$ queries in accordance with Theorem~\ref{thm:signhunter}.
				The solid curves indicate the corresponding averaged trace surrounded by a $95\%$-confidence bounds using $30$ independent runs. For convenience, we plot the symmetric bounds where in fact they should be asymmetric in Plots (a) and (c) as the Hamming distance's range is $\mathbb{Z}^+_0.$}
			\label{fig:approx-vs-exact-hd}
		\end{figure*}

\begin{figure*}[t]
	\centering
	\resizebox{0.9\textwidth}{!}{
		\begin{tabular}{cc}
			\includegraphics[width=0.49\textwidth]{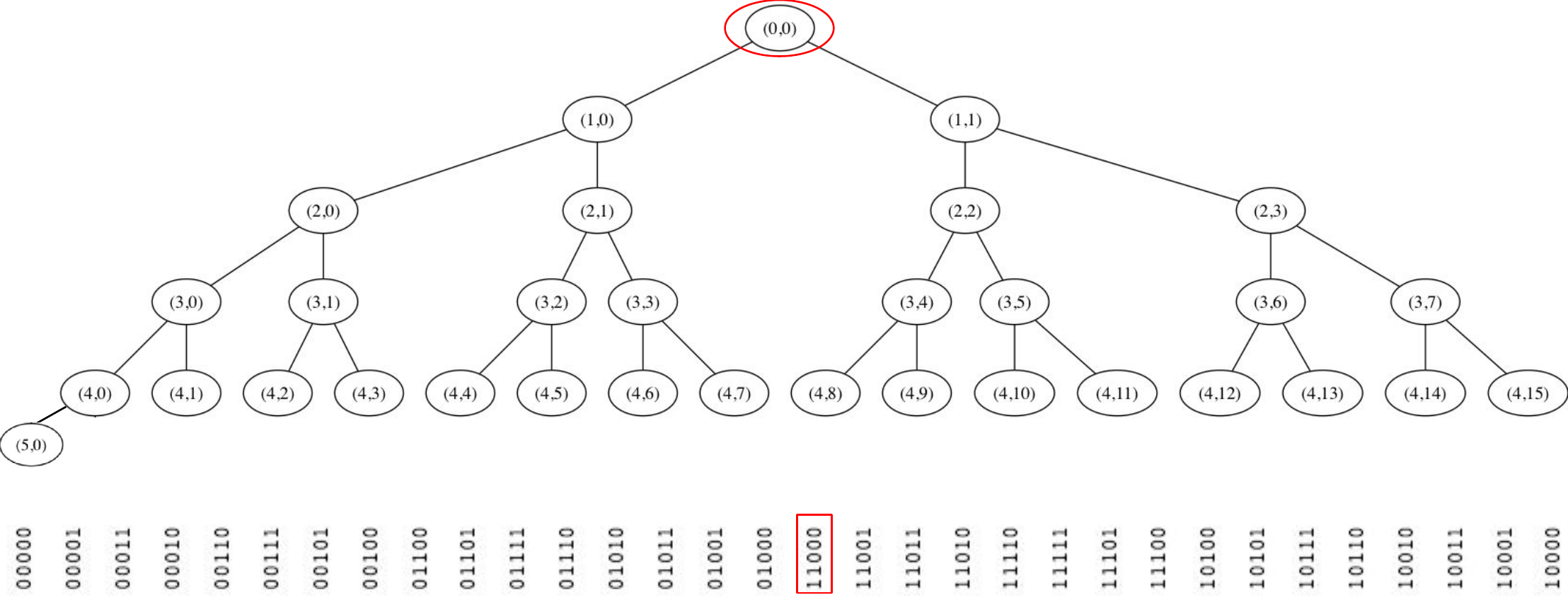} &
			\includegraphics[width=0.49\textwidth]{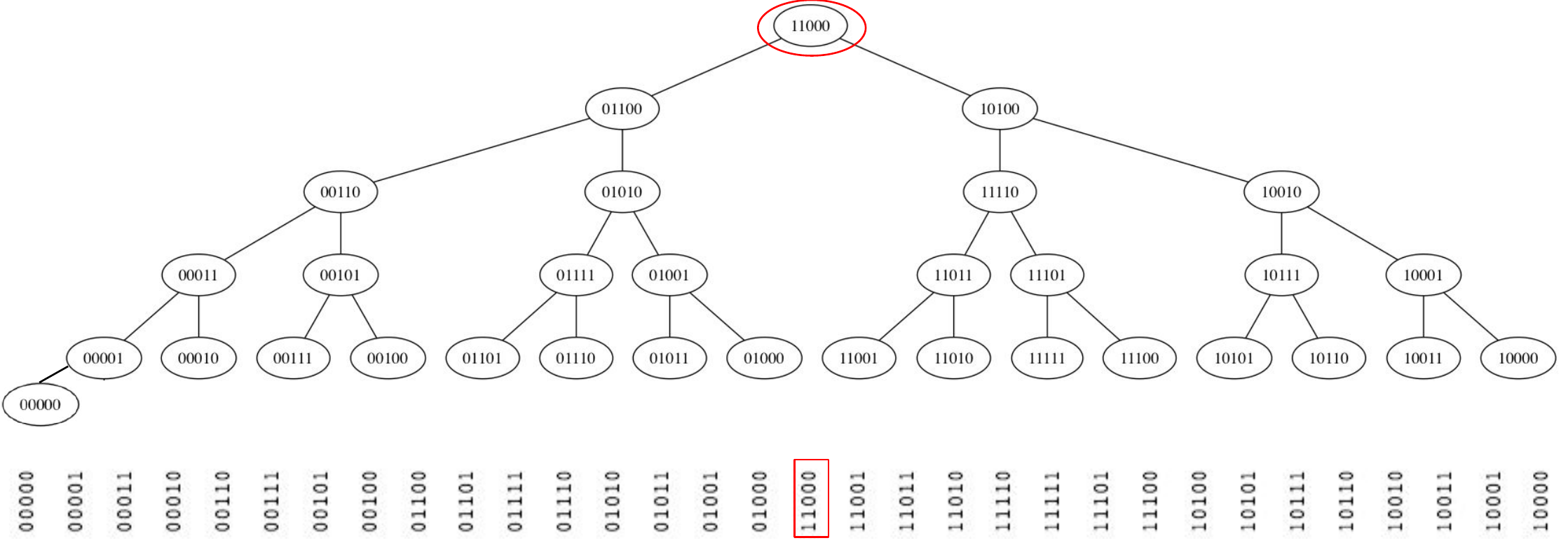}\\
			\large (a) $(h,i)$ view & \large (b) $\vq_{h,i}$ view
		\end{tabular}
	}
	\caption{\small 
		Illustration of the proposed Gray-ordering based partitioning (fully expanded) tree $\calT$ of the search space $\calH=\{-1, +1\}^n$---with $n=5$---used in the \texttt{G}ray-code \texttt{O}ptimistic \texttt{O}ptimization (\goo). The plots are two different views of the same tree. Plot (a) displays the node name $(h,i)$, while Plot (b) shows its representative binary code $\vq_{h,i} \in \calH$. For brevity, we replaced $-1$s with $0$s. The red oval and rectangle highlights the tree's root and its corresponding binary code, respectively. Consider the node $(2,1)$ whose representative code is $\vq_{2,1}=[-1, +1, -1, +1, -1]$ and its corresponding subspace $\calH_{2,1} = \{\vq_{2,1}, \vq_{3,2},  \vq_{3,3} ,  \vq_{4,4},  \vq_{4,5},
		\vq_{4,6}, \vq_{4,7} \}$. The same reasoning applies to the rest of the nodes. To maintain a valid binary partition tree,
		one can ignore the anomaly leaf node $(5,0)$, this corresponds to the code $\vq_{5,0} = [-1, -1, -1, -1, -1]$, which in practice can be evaluated prior to building the tree. Let us consider the nodes at depth $h=2$,
		observe that for all $i \in \{0, 1, 2, 3\}$ 1) $|\calH_{2, i}|=7$ ; 2) $\vq_{2,i}$ is centered around the other $6$ members of $\calH_{2, i}$ in the Gray code ordering; and that 3) $\calH_{2, i}$ constitutes a contiguous block of codes along the 1-dimensional alignment shown below the tree.  Thus, it suffices to define a semi-metric based on the corresponding indices of the codes along this alignment. For a given depth $h$, the index of any code $\vq \in \calH_{h,i}$ is at most $\omega(h)=\sup_{j}|\calH_{h+1, j}|$, which establishes Assumption 2. Assumption 3 follows naturally from the fact that nodes at a given depth $h$ partition the search space $\calH$ equally (e.g., $|\calH_{2, i}|=7$ for all $i$).
	} 
	\label{fig:goo-ilust}
\end{figure*}
\clearpage
\onecolumn
\section*{Appendix C. Proofs for Theorems in the Main Paper}

\newtheorem{innercustomthm}{Theorem}
\newenvironment{customthm}[1]
{\renewcommand\theinnercustomthm{#1}\innercustomthm}
{\endinnercustomthm}

Along with the proofs, we restate the theorems for completeness.
\begin{customthm}{2}
	\label{thm:ub}
	A hidden $n$-dimensional binary code $\vq^* \in \calH$ can be retrieved exactly with no more than $n$ queries
	to the noiseless Hamming oracle $\calO$. %\footnote{The result also applies to $n$-dimensional binary codes represented in $\{0,1\}^n$.}
\end{customthm}

\begin{proof}
	The key element of this proof is that the Hamming distance between two $n$-dimensional binary codes $\vq, \vq^*\in \calH$  can be written as 
	\begin{equation}
	r = \ham{\vq - \vq^*} = \frac{1}{2}(n - \vq^T \vq^*)\;.
	\label{eq:hamming-dot}
	\end{equation}
	Let $Q$ be an $n\times n$ matrix where the $i$th row is the $i$th query code $\vq^{(i)}$. Likewise, let $r^{(i)}$ be the corresponding $i$th query response, and $\vr$ is the concatenating vector. In matrix form, we have
	$$\vq^* = Q^{-1}({n}\mathbf{1}_n-2 \vr)\;,$$
	where $Q$ is invertible if we construct linearly independent queries $\{\vq^{(i)}\}_{1\leq i \leq n}$.
\end{proof}

In Figure~\ref{fig:query-ratio}, we plot the bounds above for $n=\{1, \ldots, 10\}$, along with two search strategies for $\vq^*$ using the Hamming oracle: i) \maur~\cite{maurer2009search}; and ii) search by \elim which, after response $r^{(i)}$ to query $\vq^{(i)}$, eliminates all binary codes $\vq\in \calH$ with $||\vq - \vq^{(i)}||_H \neq r^{(i)}$ in an iterative manner. Note that \elim is a naive technique that is not scalable with $n$. %\cite{ewert2010efficient}

\begin{customthm}{3}(Optimality of \signhunter)
	\label{thm:signhunter} Given $2^{\lceil \log(n) +1\rceil}$ queries and that the directional derivative is well approximated by the finite-difference~(Eq. 2 in the main paper), \signhunter is at least as effective as \fgsm~\citep{aes2015goodfellow} in crafting adversarial examples.
\end{customthm}

\begin{proof} Recall that the $i$th coordinate of the gradient sign vector can be recovered as outlined in \eqref{eq:mc-sign}. From the definition of \signhunter, this is carried out for all the $n$ coordinates after $2^{\lceil \log(n) +1\rceil}$ queries. Put it differently, after $2^{\lceil \log(n) +1\rceil}$ queries, \signhunter has flipped every coordinate alone recovering its sign exactly as shown in \eqref{eq:mc-sign}.  Therefore, the gradient sign vector is fully recovered, and one can employ the \fgsm attack to craft an adversarial example. Note that this is under the assumption that our finite difference approximation of the directional derivative (Eq. 2 in the main paper) is good enough (or at least a rank-preserving).
\end{proof}

\begin{figure}[h!]
	\centering
	\includegraphics[width=0.45\textwidth]{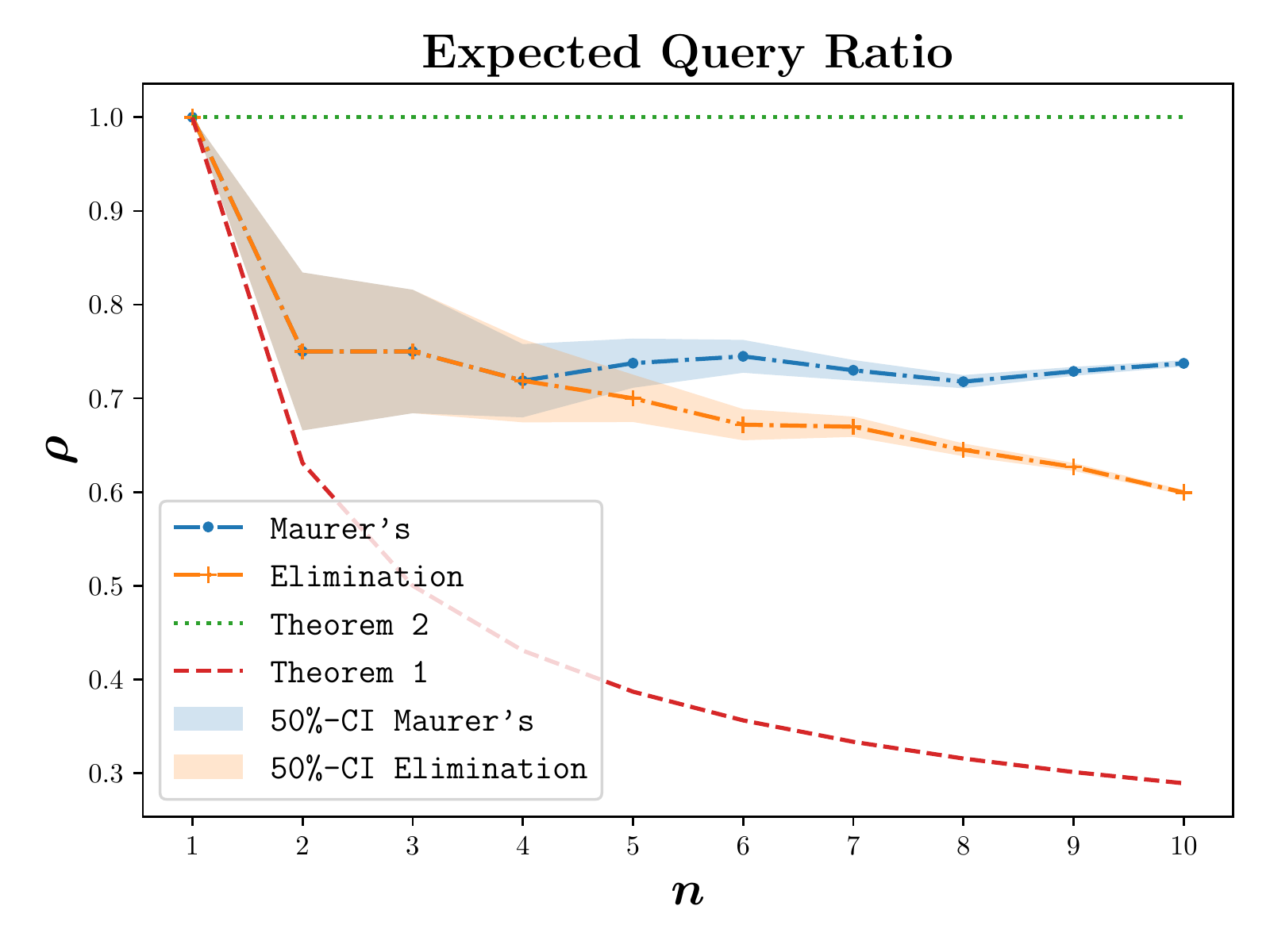}
	\caption{Expected Query Ratios for $n=\{1,\ldots, 10\}$ with the \emph{noiseless} Hamming oracle~$\calO$.}
	\label{fig:query-ratio}
\end{figure}

\cleardoublepage
\twocolumn[]
\section*{Appendix D. Experiments Setup}

This section outlines the experiments setup as follows. Figure~\ref{fig:tune} shows the performance of the considered algorithms on a synthetic concave loss function after tuning their hyperparameters. A possible explanation of \signhunter's superb performance is that the synthetic loss function is well-behaved in terms of its gradient given an image. That is, most of gradient coordinates share the same sign, since pixels tend to have the same values and the optimal value for all the pixels is the same 
$\frac{\vx_{min} + \vx_{max}}{2}$.  Thus, \signhunter will recover the true gradient sign with as few queries as possible (recall the example in Section~4.1 of the main paper). Moreover, given the structure of the synthetic loss function, the optimal loss value is always at the boundary of the perturbation region. The boundary is where \signhunter samples its perturbations. Tables~\ref{tbl:nes-param},~\ref{tbl:zo-param},~\ref{tbl:bandit-param}, and~\ref{tbl:sign-param} outline the algorithms' hyperparameters, while Table~\ref{tbl:general-setup} describes the general setup for the experiments.

\begin{figure}[h!]
	\centering
	\resizebox{0.49\textwidth}{!}{
		\begin{tabular}{cc}
			\includegraphics[width=0.2\textwidth]{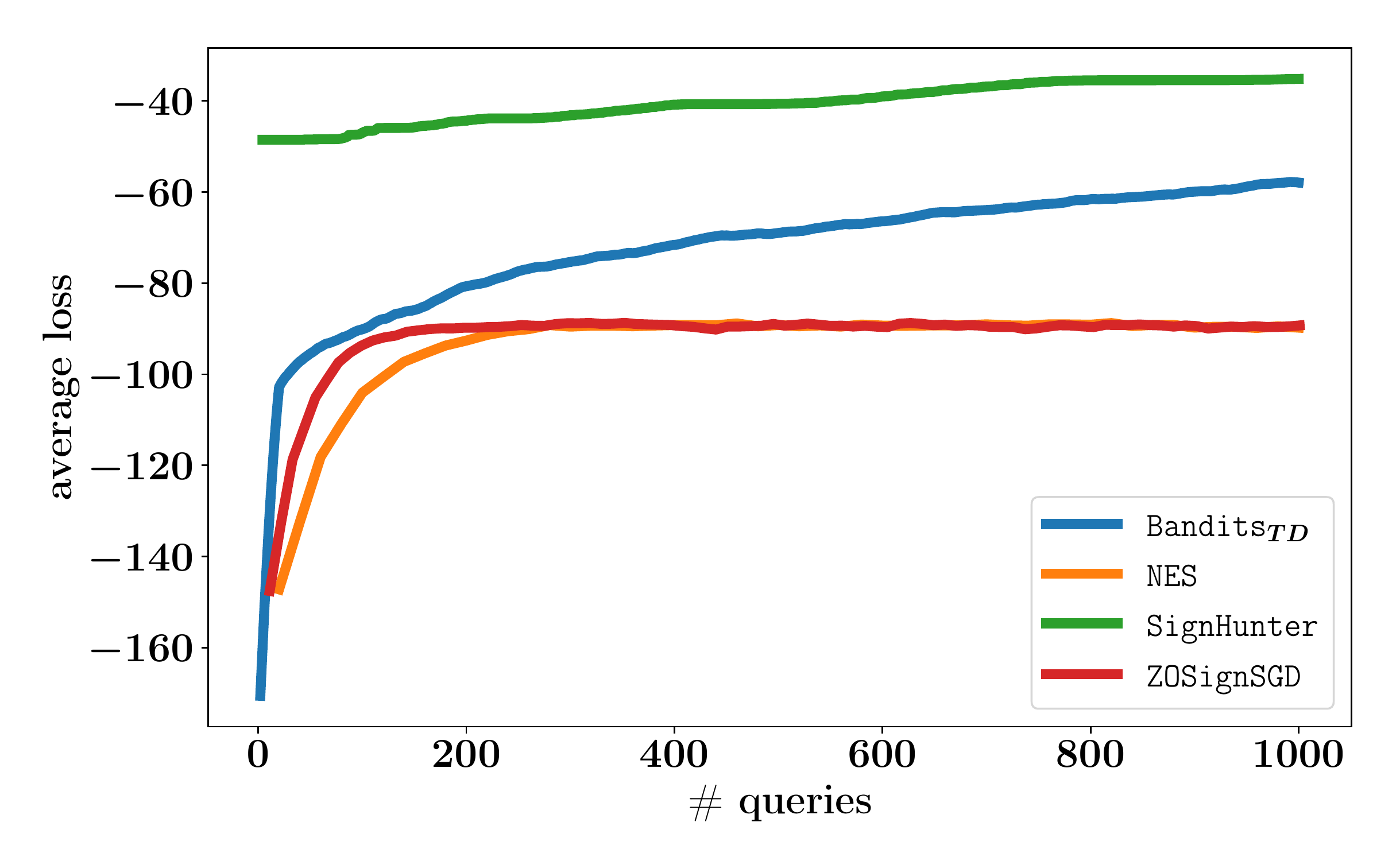} & 
			\includegraphics[width=0.2\textwidth]{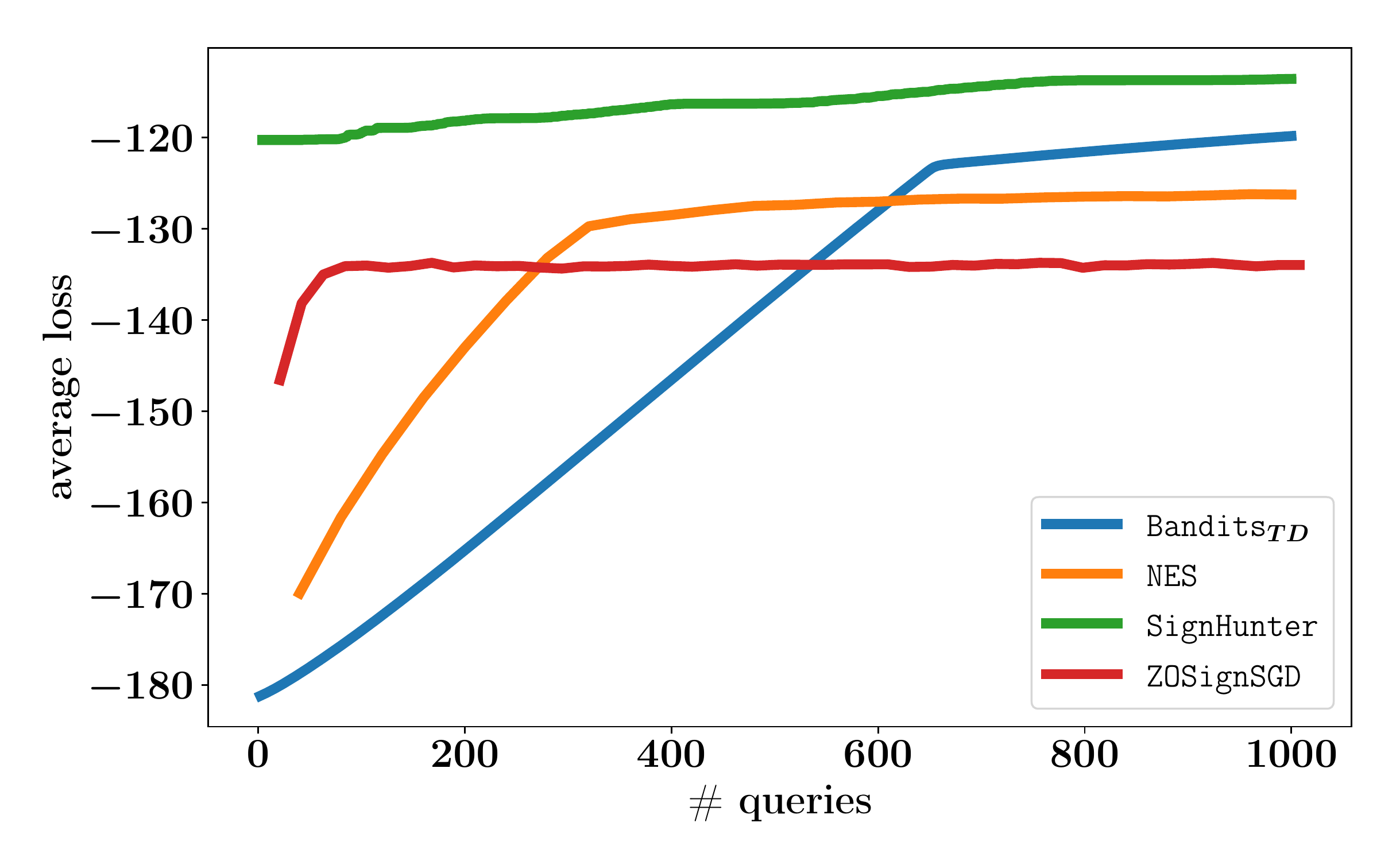} \\
			\tiny \textbf{(a) \mnist~$\linf$} & 
			\tiny \textbf{(b) \mnist~$\ltwo$} \\
			\includegraphics[width=0.2\textwidth]{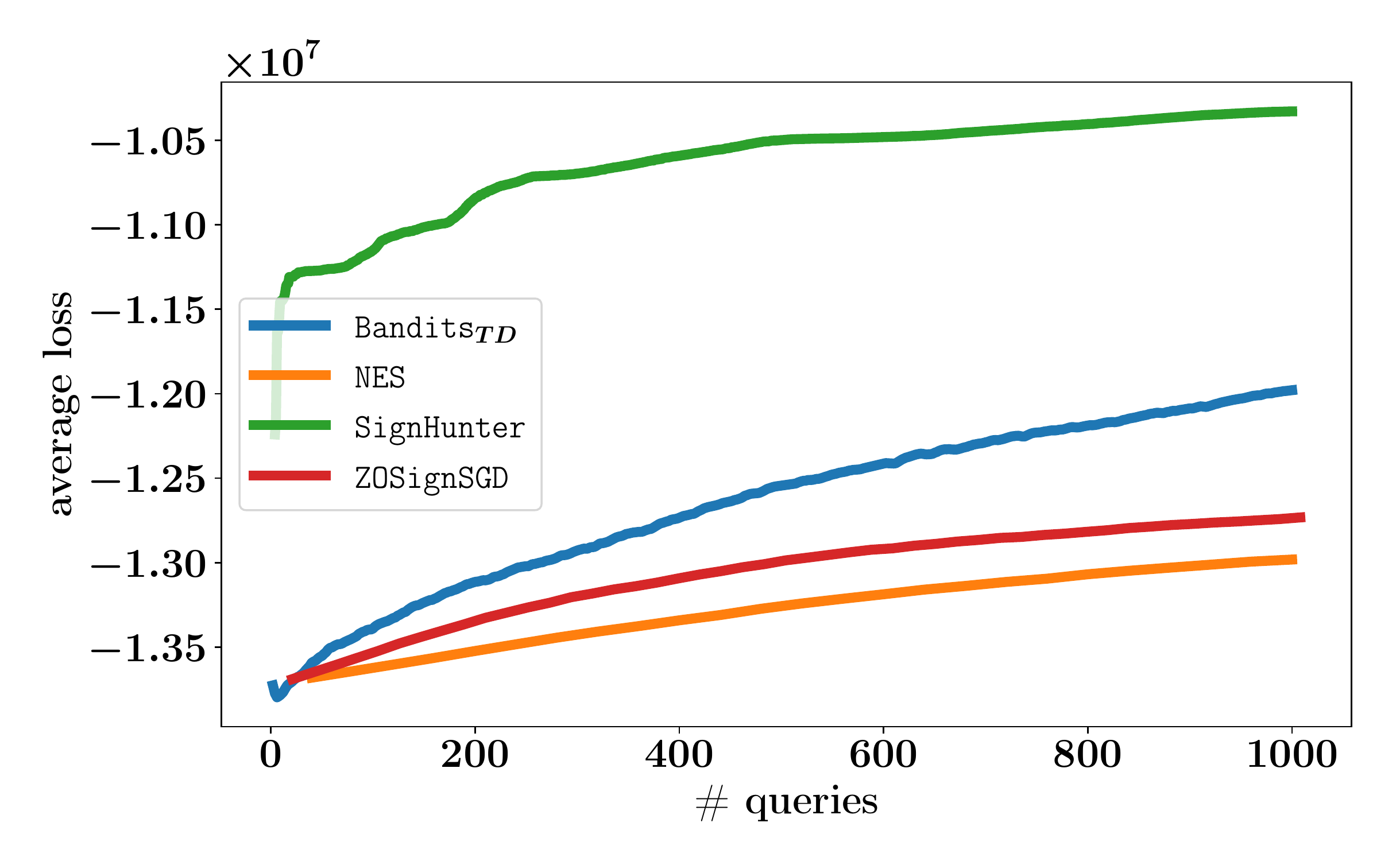} &
			\includegraphics[width=0.2\textwidth]{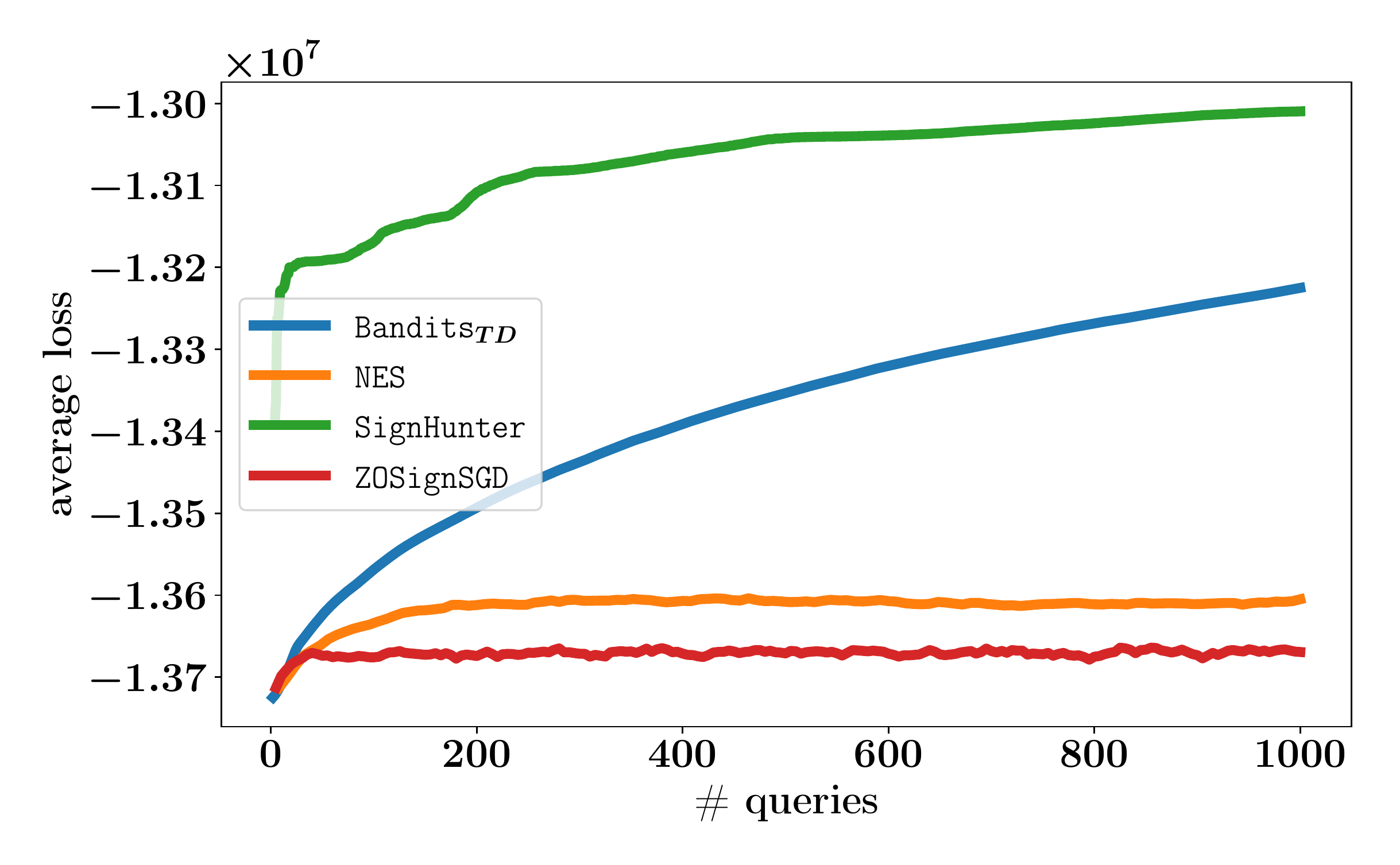} \\ 
			\tiny \textbf{(c) \cifar~$\linf$} & 
			\tiny \textbf{(d) \cifar~$\ltwo$} \\
			\includegraphics[width=0.2\textwidth]{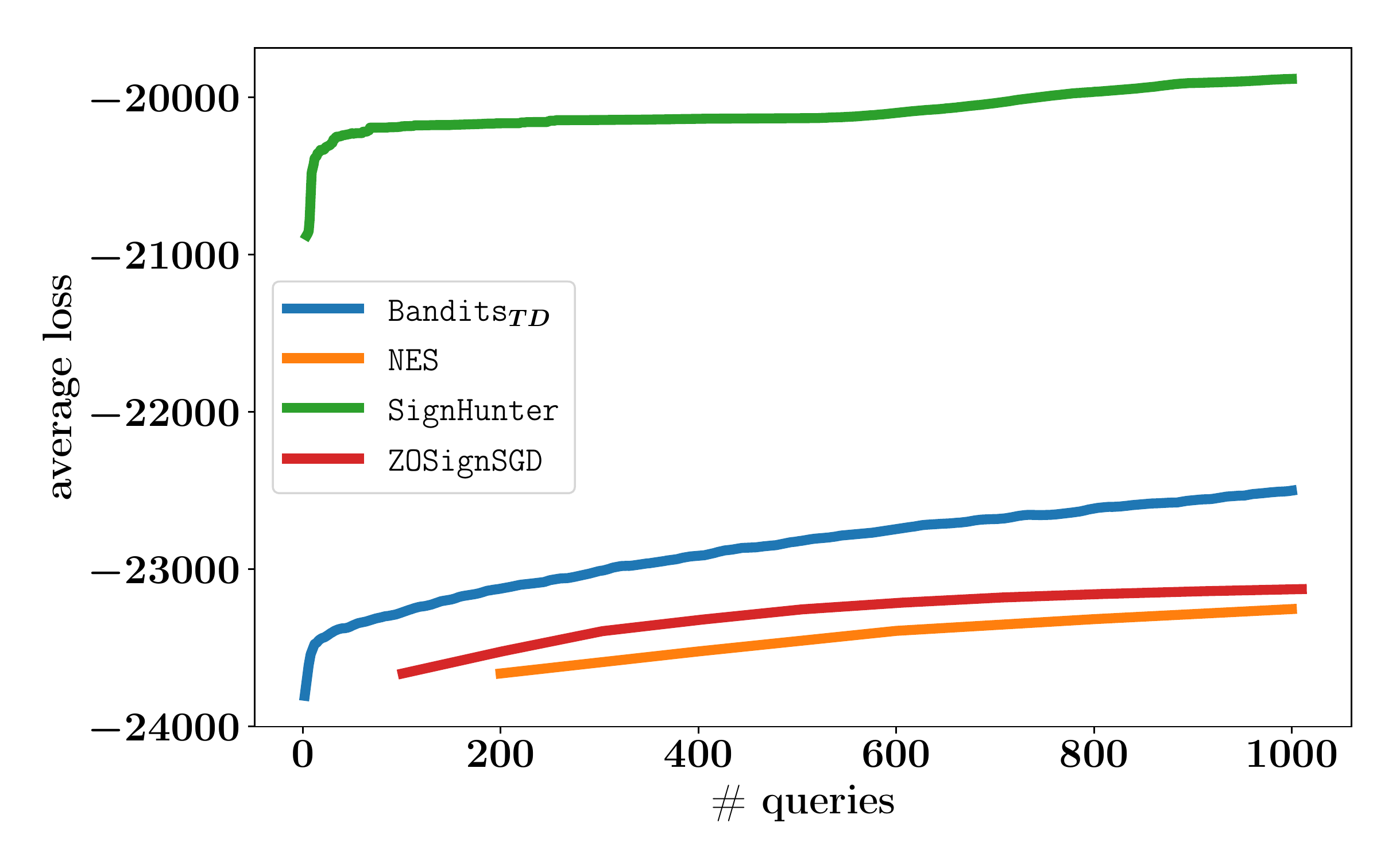} & 
			\includegraphics[width=0.2\textwidth]{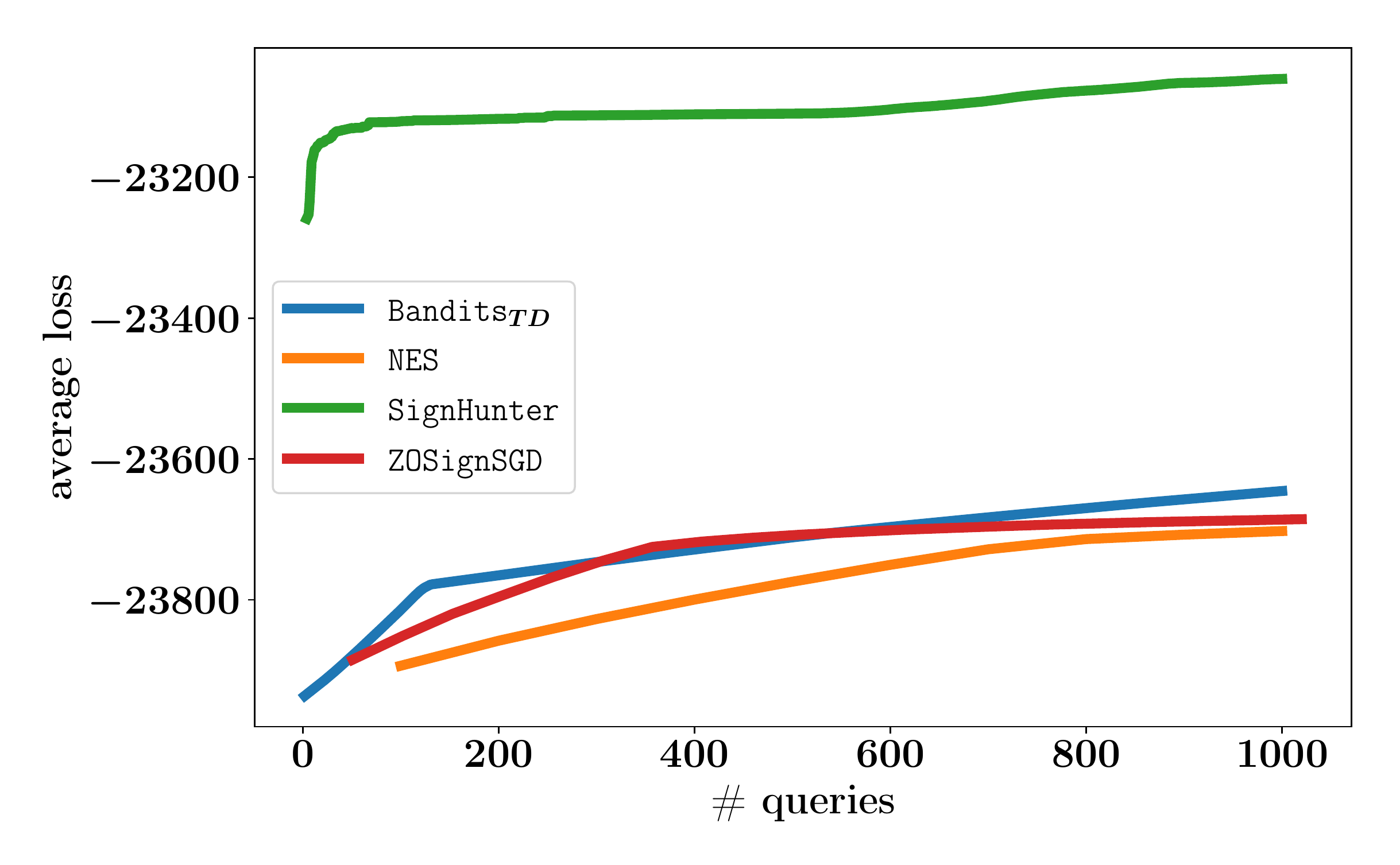} \\
			\tiny \textbf{(e) \imgnt~$\linf$} & 
			\tiny \textbf{(f) \imgnt~$\ltwo$} 
		\end{tabular}
	}
	\caption{Tuning testbed for the attacks. A synthetic loss function was used to tune the performance of the attacks over a random sample of 25 images for each dataset and $\ell_p$ perturbation constraint. The plots above show the average performance of the tuned attacks on the synthetic loss function $L(\vx, y)= - (\vx - \vx^*)^T (\vx - \vx^*)$, where $\vx^*= \frac{\vx_{min} + \vx_{max}}{2}$ using a query limit of $1000$ queries for each image. Note that in all, \bandit outperforms both \nes and \zo. Also, we observe the same behavior reported by \cite{liu2018signsgd} on the fast convergence of \zo compared to \nes. We did not  tune \signhunter; it does not have any tunable parameters.
	}
 \label{fig:tune}
\end{figure}

\begin{table}[h!]
	\caption{General setup for all the attacks}
	\label{tbl:general-setup}
	\centering
	\resizebox{0.4\textwidth}{!}{
		\begin{tabular}{lllllll}
			\toprule
			{} & \multicolumn{6}{l}{\bf{Value}} \\
			{} & \multicolumn{2}{l}{\mnist} & \multicolumn{2}{l}{\cifar} & \multicolumn{2}{l}{\imgnt} \\
			{} & $\ell_{\infty}$ &  $\ell_{2}$ &  $\ell_{\infty}$ &  $\ell_{2}$ &   $\ell_{\infty}$ &  $\ell_{2}$ \\
			\bf{Parameter}                &                 &             &                  &             &                   &             \\
			\midrule
			$\epsilon$ (allowed perturbation)  &             0.3 &           3 &               12 &         127 &              0.05 &           5 \\
			Max allowed queries                &  
			\multicolumn{6}{c}{10000}      \\
			Evaluation/Test set size       & \multicolumn{6}{c}{1000}\\
			Data (pixel value) Range & 
			\multicolumn{2}{c}{[0,1]} & 
			\multicolumn{2}{c}{[0,255]} & 
			\multicolumn{2}{c}{[0,1]} \\
			\bottomrule
		\end{tabular}
	}
\end{table}

\begin{table}[h!]
	\caption{Hyperparameters setup for \nes}
	\centering
	\resizebox{0.4\textwidth}{!}{
		\begin{tabular}{lllllll}
			\toprule
			{} & \multicolumn{6}{l}{\bf{Value}} \\
			{} & \multicolumn{2}{l}{\mnist} & \multicolumn{2}{l}{\cifar} & \multicolumn{2}{l}{\imgnt} \\
			{} & $\ell_{\infty}$ & $\ell_{2}$ &  $\ell_{\infty}$ & $\ell_{2}$ &   $\ell_{\infty}$ & $\ell_{2}$ \\
			\bf{Hyperparameter}                                    &                 &            &                  &            &                   &            \\
			\midrule
			$\delta$ (finite difference probe)                     &             0.1 &        0.1 &             2.55 &       2.55 &               0.1 &        0.1 \\
			$\eta$ (image $\ell_p$ learning rate)                  &             0.1 &          1 &                2 &        127 &              0.02 &          2 \\
			$q$ (number of finite difference estimations per step) &              10 &         20 &               20 &          4 &               100 &         50 \\
			\bottomrule
		\end{tabular}
	}
	\label{tbl:nes-param}
\end{table}

\begin{table}[h!]
	\caption{Hyperparameters setup for \zo}
	\centering
	\resizebox{0.4\textwidth}{!}{
		\begin{tabular}{lllllll}
			\toprule
			{} & \multicolumn{6}{l}{\bf{Value}} \\
			{} & \multicolumn{2}{l}{\mnist} & \multicolumn{2}{l}{\cifar} & \multicolumn{2}{l}{\imgnt} \\
			{} &  $\ell_{\infty}$ &       $\ell_{2}$ &  $\ell_{\infty}$ &       $\ell_{2}$ &   $\ell_{\infty}$ &       $\ell_{2}$ \\
			\bf{Hyperparameter}                                    &                  &                  &                  &                  &                   &                  \\
			\midrule
			$\delta$ (finite difference probe)                     &              0.1 &              0.1 &             2.55 &             2.55 &               0.1 &              0.1 \\
			$\eta$ (image $\ell_p$ learning rate)                  &              0.1 &              0.1 &                2 &                2 &              0.02 &            0.004 \\
			$q$ (number of finite difference estimations per step) &               10 &               20 &               20 &                4 &               100 &               50 \\
			\bottomrule
		\end{tabular}
	}
	\label{tbl:zo-param}
\end{table}

\begin{table}[h!]
	\caption{Hyperparameters setup for \bandit}
	\label{tbl:bandit-param}
	\centering
	\resizebox{0.4\textwidth}{!}{
		\begin{tabular}{lllllll}
			\toprule
			{} & \multicolumn{6}{l}{\bf{Value}} \\
			{} & \multicolumn{2}{l}{\mnist} & \multicolumn{2}{l}{\cifar} & \multicolumn{2}{l}{\imgnt} \\
			{} & $\ell_{\infty}$ &    $\ell_{2}$ &  $\ell_{\infty}$ &    $\ell_{2}$ &   $\ell_{\infty}$ &    $\ell_{2}$ \\
			\bf{Hyperparameter}                                 &                 &               &                  &               &                   &               \\
			\midrule
			$\eta$ (image $\ell_p$ learning rate)               &            0.03 &          0.01 &                5 &            12 &              0.01 &           0.1 \\
			$\delta$ (finite difference probe)                  &             0.1 &           0.1 &             2.55 &          2.55 &               0.1 &           0.1 \\
			$\tau$ (online convex optimization learning rate) &           0.001 &        0.0001 &           0.0001 &         1e-05 &            0.0001 &           0.1 \\
			Tile size (data-dependent prior)                    &               8 &            10 &               20 &            20 &                50 &            50 \\
			$\zeta$ (bandit exploration)                        &            0.01 &           0.1 &              0.1 &           0.1 &              0.01 &           0.1 \\
			\bottomrule
		\end{tabular}
	}
\end{table}

\begin{table}[h!]
	\caption{Hyperparameters setup for \signhunter}
	\label{tbl:sign-param}
	\centering
	\resizebox{0.4\textwidth}{!}{
		\begin{tabular}{lllllll}
			\toprule
			{} & \multicolumn{6}{l}{\bf{Value}} \\
			{} & \multicolumn{2}{l}{\mnist} & \multicolumn{2}{l}{\cifar} & \multicolumn{2}{l}{\imgnt} \\
			{} & $\ell_{\infty}$ &  $\ell_{2}$ &  $\ell_{\infty}$ &  $\ell_{2}$ &   $\ell_{\infty}$ &  $\ell_{2}$ \\
			\bf{Hyperparameter}                &                 &             &                  &             &                   &             \\
			\midrule
			$\delta$ (finite difference probe) &             0.3 &         3 &             12 &        127 &               0.05 &         5 \\
			\bottomrule
		\end{tabular}
	}
\end{table}

\clearpage
\cleardoublepage
\onecolumn
\section*{Appendix E.  Results of Adversarial Black-Box Examples Generation}

This section shows results of our experiments in crafting adversarial black-box examples in the form of tables and performance traces, namely Figures~\ref{fig:mnist-res},~\ref{fig:cifar-res}, and~\ref{fig:imgnet-res}; and Tables~\ref{tbl:mnist_res},~\ref{tbl:cifar_res}, and~\ref{tbl:imgnet_res}.

\begin{table}[h!]
	\caption{Summary of attacks effectiveness on \mnist under $\linf$ and $\ltwo$ perturbation constraints, and with a query limit of $10,000$ queries. The \emph{Failure Rate} $\in [0, 1]$ column lists the fraction of failed attacks over $1000$ images. The \emph{Avg. \# Queries} column reports the average number of queries made to the loss oracle only over successful attacks.}
	\label{tbl:mnist_res}
	\centering
	\resizebox{0.4\textwidth}{!}{
		\begin{tabular}{lllll}
			\toprule
			{} & \multicolumn{2}{l}{\bf{Failure Rate}} & \multicolumn{2}{l}{\bf{Avg. \# Queries}} \\
			{} &  $\bm\ell_\infty$ & $\bm\ell_2$ &       $\bm\ell_\infty$ & $\bm\ell_2$ \\
			\bf{Attack}             &                   &             &                        &             \\
			\midrule
			\texttt{Bandits$_{TD}$} &            $0.68$ &      $0.59$ &               $328.00$ &    $673.16$ \\
			\texttt{NES}            &            $0.63$ &      $0.63$ &               $235.07$ &    $\bf 361.42$ \\
			\texttt{SignHunter}     &            $\bf 0.00$ &      $\bf 0.04$ &               $\bf 11.06$ &   $1064.22$ \\
			\texttt{ZOSignSGD}      &            $0.63$ &      $0.75$ &               $ 157.00$ &    $881.08$ \\
			\bottomrule
		\end{tabular}
	}
\end{table}

\begin{table}[h!]
	\caption{Summary of attacks effectiveness on \cifar under $\linf$ and $\ltwo$ perturbation constraints, and with a query limit of $10,000$ queries. The \emph{Failure Rate} $\in [0, 1]$ column lists the fraction of failed attacks over $1000$ images. The \emph{Avg. \# Queries} column reports the average number of queries made to the loss oracle only over successful attacks.}
	\label{tbl:cifar_res}
	\centering
	\resizebox{0.4\textwidth}{!}{
		\begin{tabular}{lllll}
			\toprule
			{} & \multicolumn{2}{l}{\bf{Failure Rate}} & \multicolumn{2}{l}{\bf{Avg. \# Queries}} \\
			{} &  $\bm\ell_\infty$ & $\bm\ell_2$ &       $\bm\ell_\infty$ & $\bm\ell_2$ \\
			\bf{Attack}             &                   &             &                        &             \\
			\midrule
			\texttt{Bandits$_{TD}$} &            $0.95$ &       $0.39$ &               $432.24$ &       $1201.85$ \\
			\texttt{NES}            &            $0.37$ &       $0.67$ &               $312.57$ &       $\bf 496.99$ \\
			\texttt{SignHunter}     &            $\bf 0.07$ &       $\bf 0.21$ &               $\bf 121.00$ &       $692.39$\\
			\texttt{ZOSignSGD}      &            $0.37$ &      $0.80$ &               $161.28$ &       $528.35$ \\
			\bottomrule
		\end{tabular}
	}
\end{table}

\begin{table}[h!]
	\caption{Summary of attacks effectiveness on \imgnt under $\linf$ and $\ltwo$ perturbation constraints, and with a query limit of $10,000$ queries. The \emph{Failure Rate} $\in [0, 1]$ column lists the fraction of failed attacks over $1000$ images. The \emph{Avg. \# Queries} column reports the average number of queries made to the loss oracle only over successful attacks.}
	\label{tbl:imgnet_res}
	\centering
	\resizebox{0.4\textwidth}{!}{
	\begin{tabular}{lllll}
		\toprule
		{} & \multicolumn{2}{l}{\bf{Failure Rate}} & \multicolumn{2}{l}{\bf{Avg. \# Queries}} \\
		{} &  $\bm\ell_\infty$ & $\bm\ell_2$ &       $\bm\ell_\infty$ & $\bm\ell_2$ \\
		\bf{Attack}             &                   &             &                        &             \\
		\midrule
		\texttt{Bandits$_{TD}$} &            $ 0.07$ &       $\bf 0.11$ &               $1010.05$ &       $1635.55$ \\
		\texttt{NES}            &            $0.26$ &       $0.42$ &               $1536.19$ &       $1393.86$ \\
		\texttt{SignHunter}     &            $\bf 0.02$ &       $0.23$ &               $\bf 578.56$ &       $1985.55$\\
		\texttt{ZOSignSGD}      &            $0.23$ &      $0.52$ &               $1054.98$ &       $\bf 931.15$ \\
		\bottomrule
	\end{tabular}
}
\end{table}

\begin{figure*}[h!]
	\centering
	\resizebox{!}{0.34\textheight}{
		\begin{tabular}{cc}
			 $\linf$ & $\ell_2$\\
 \includegraphics[width=0.4\textwidth]{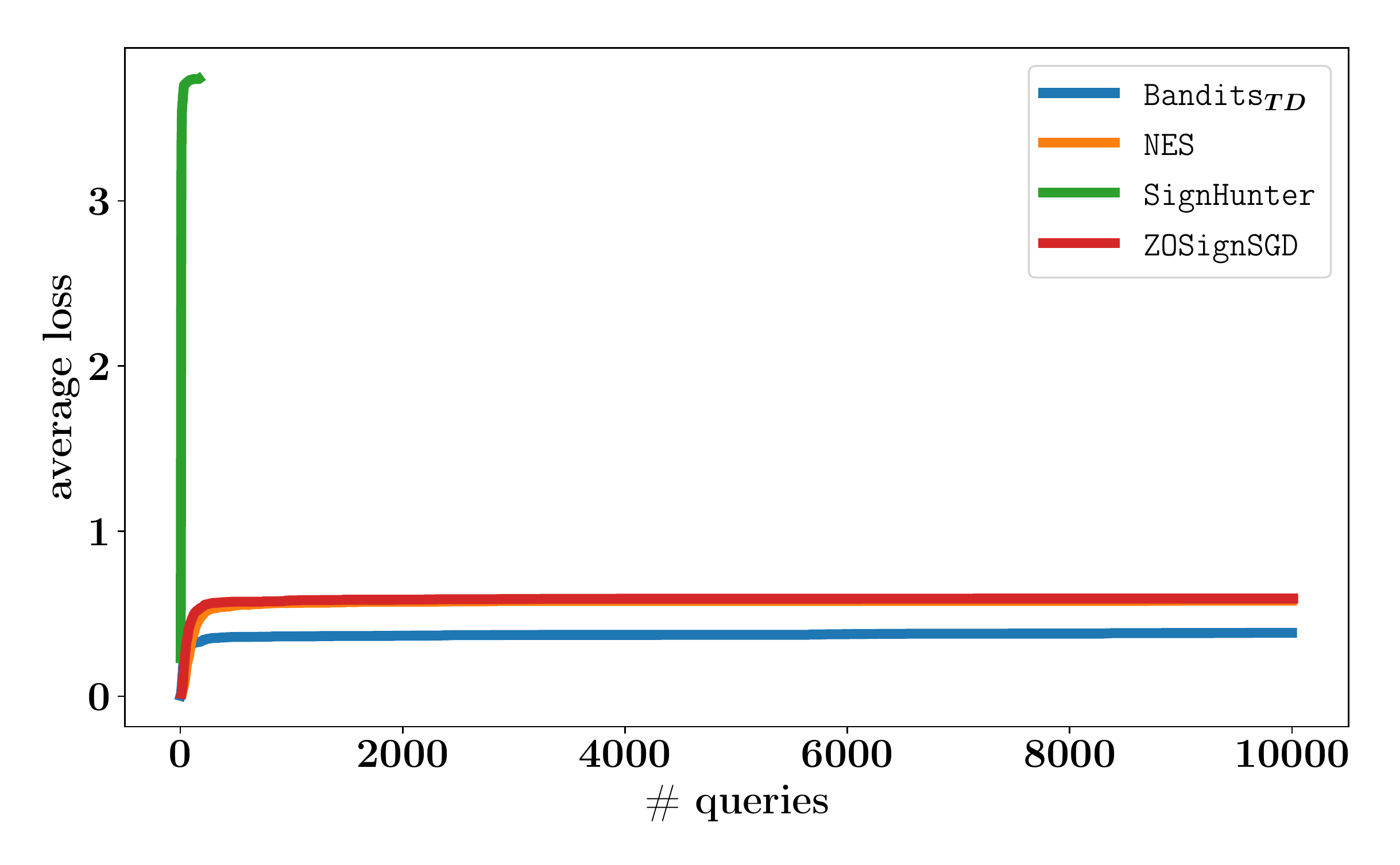} & \includegraphics[width=0.4\textwidth]{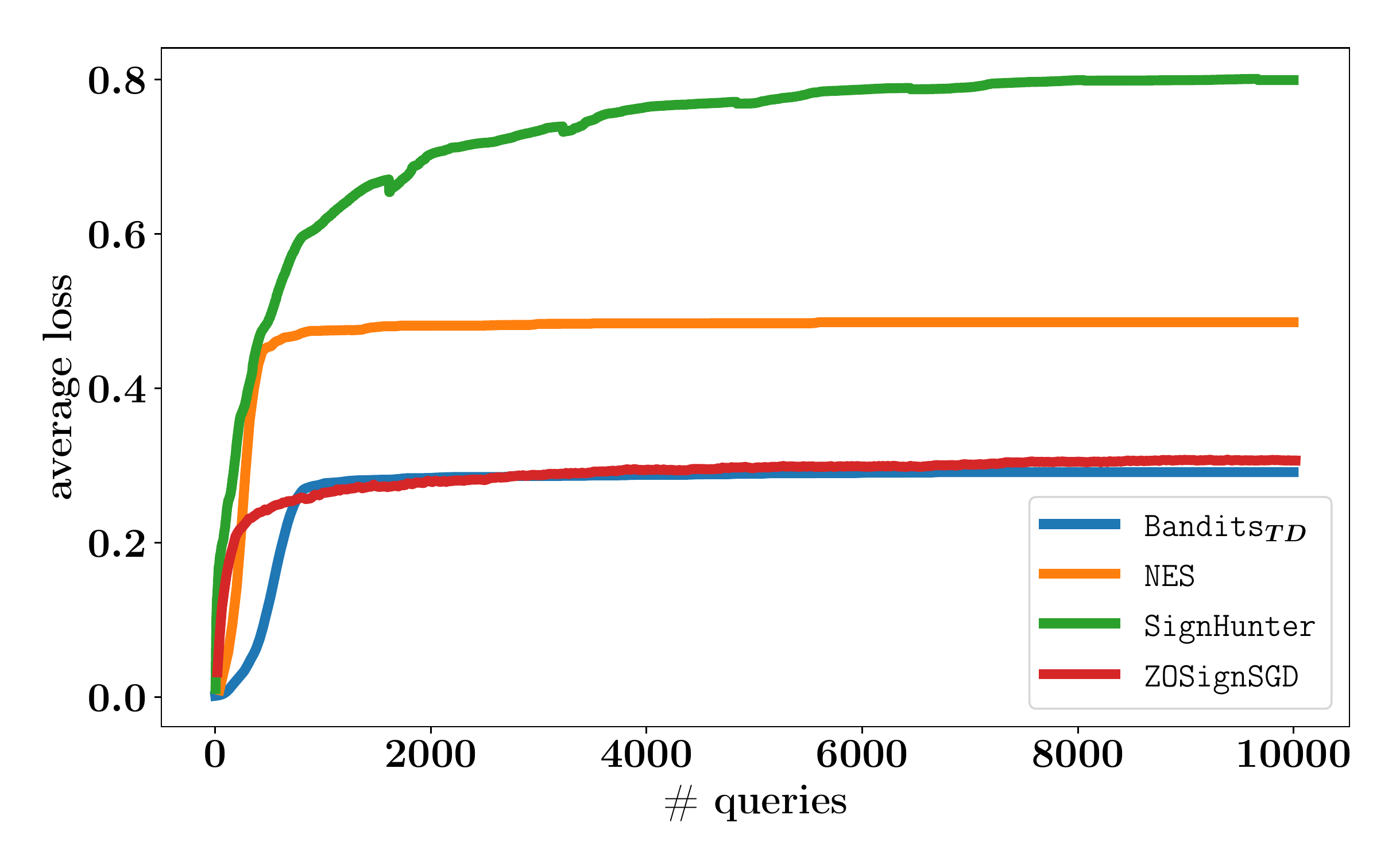} \\ \includegraphics[width=0.4\textwidth]{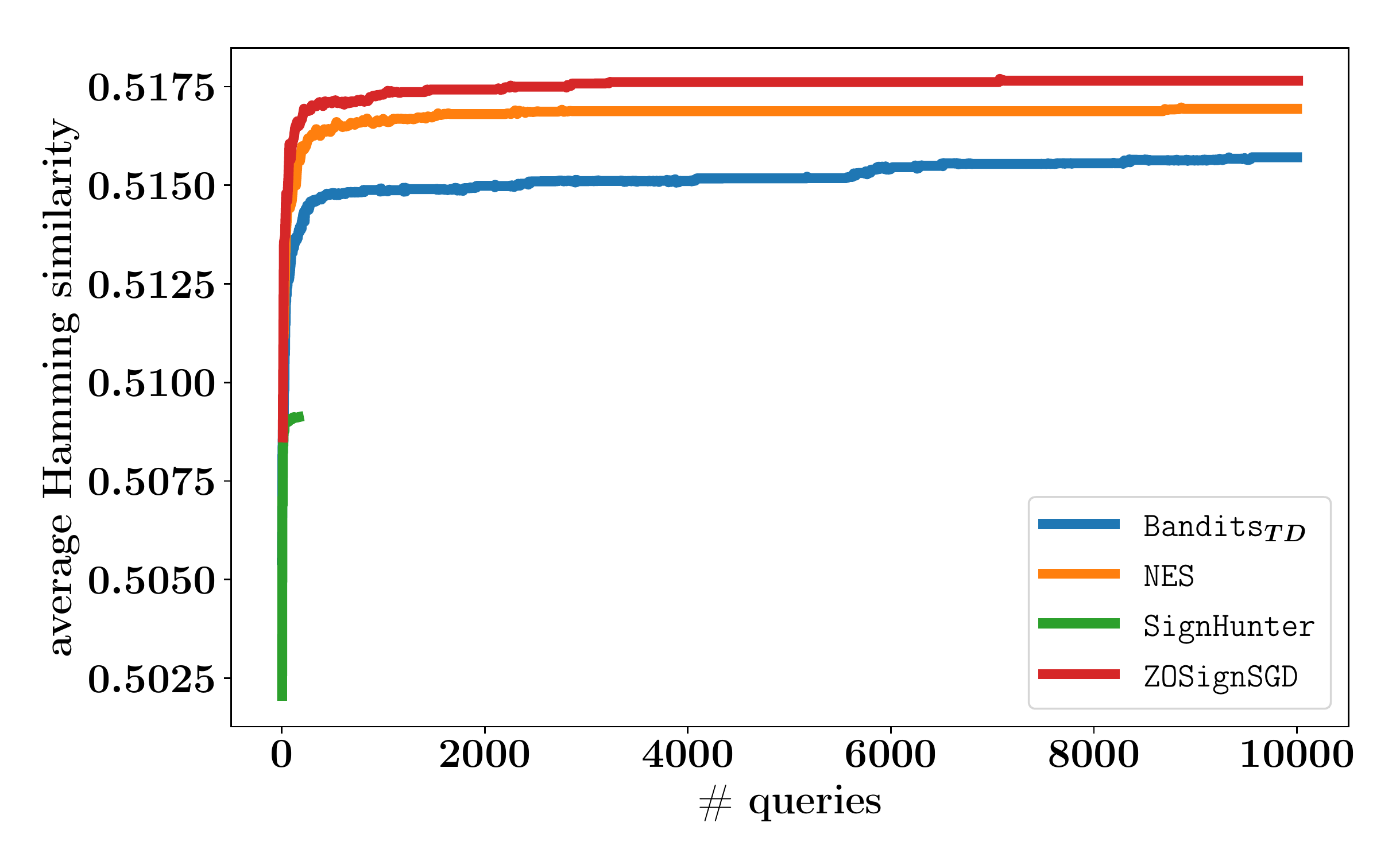} &
			\includegraphics[width=0.4\textwidth]{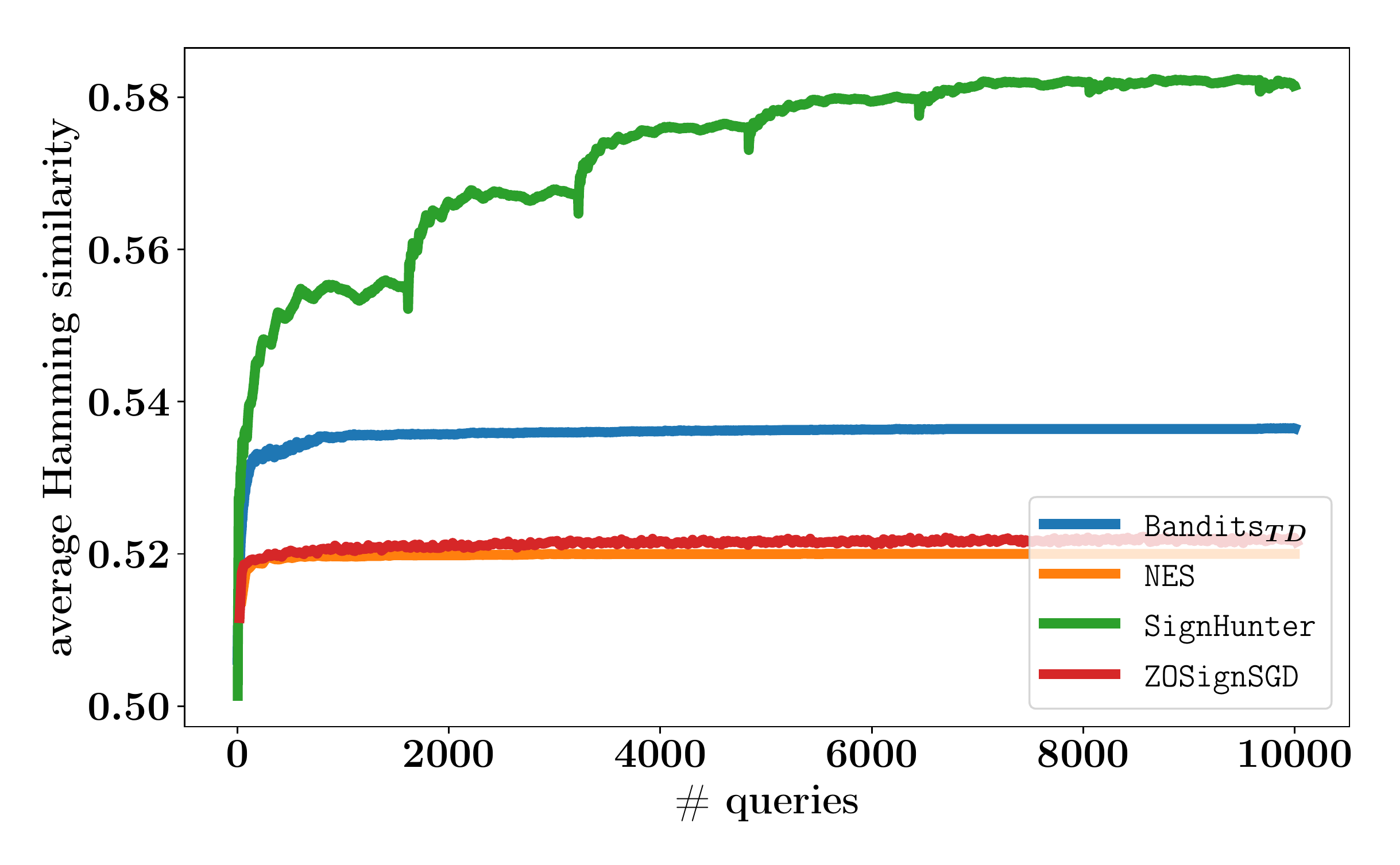}\\
			 \includegraphics[width=0.4\textwidth ]{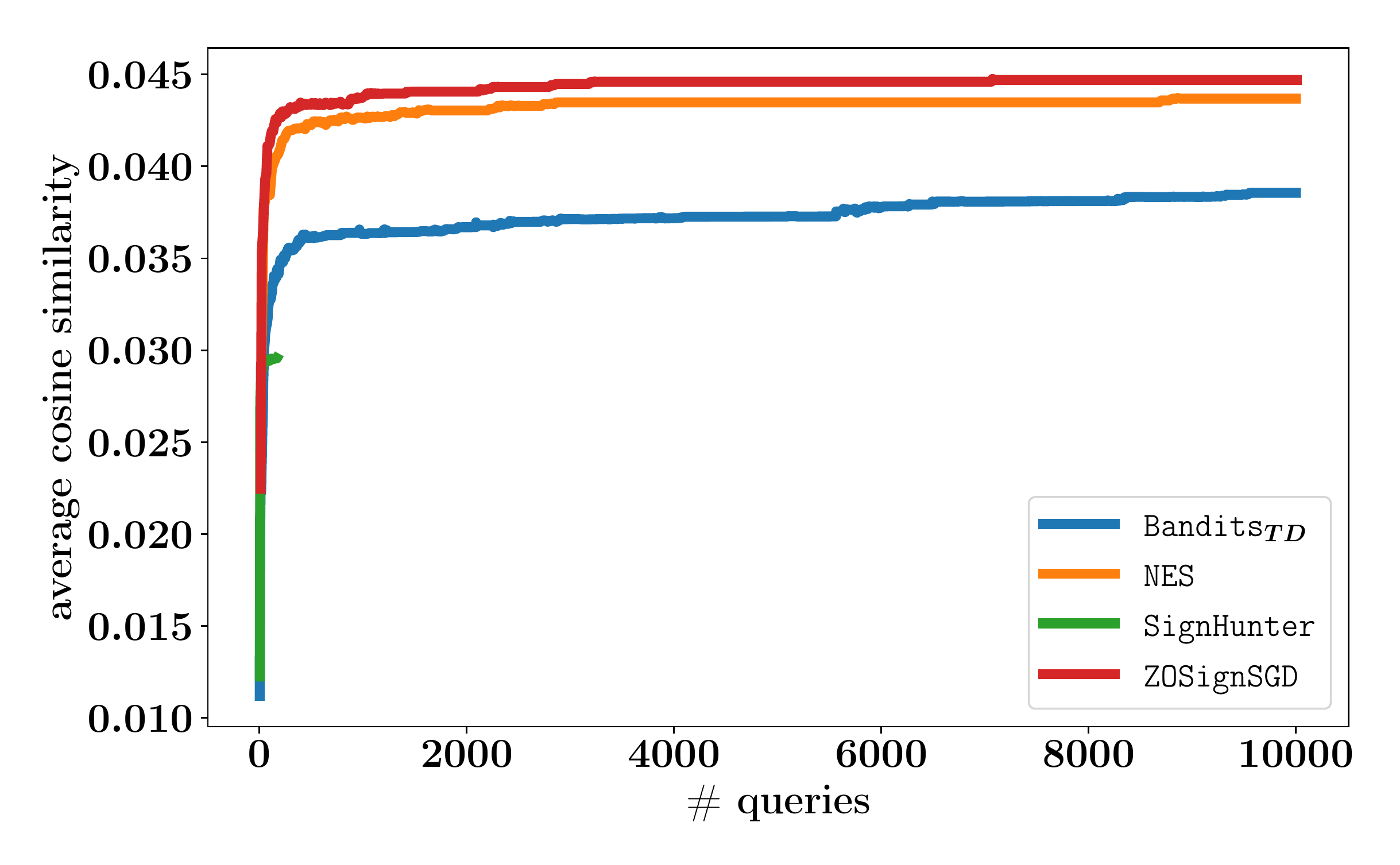} & \includegraphics[width=0.4\textwidth ]{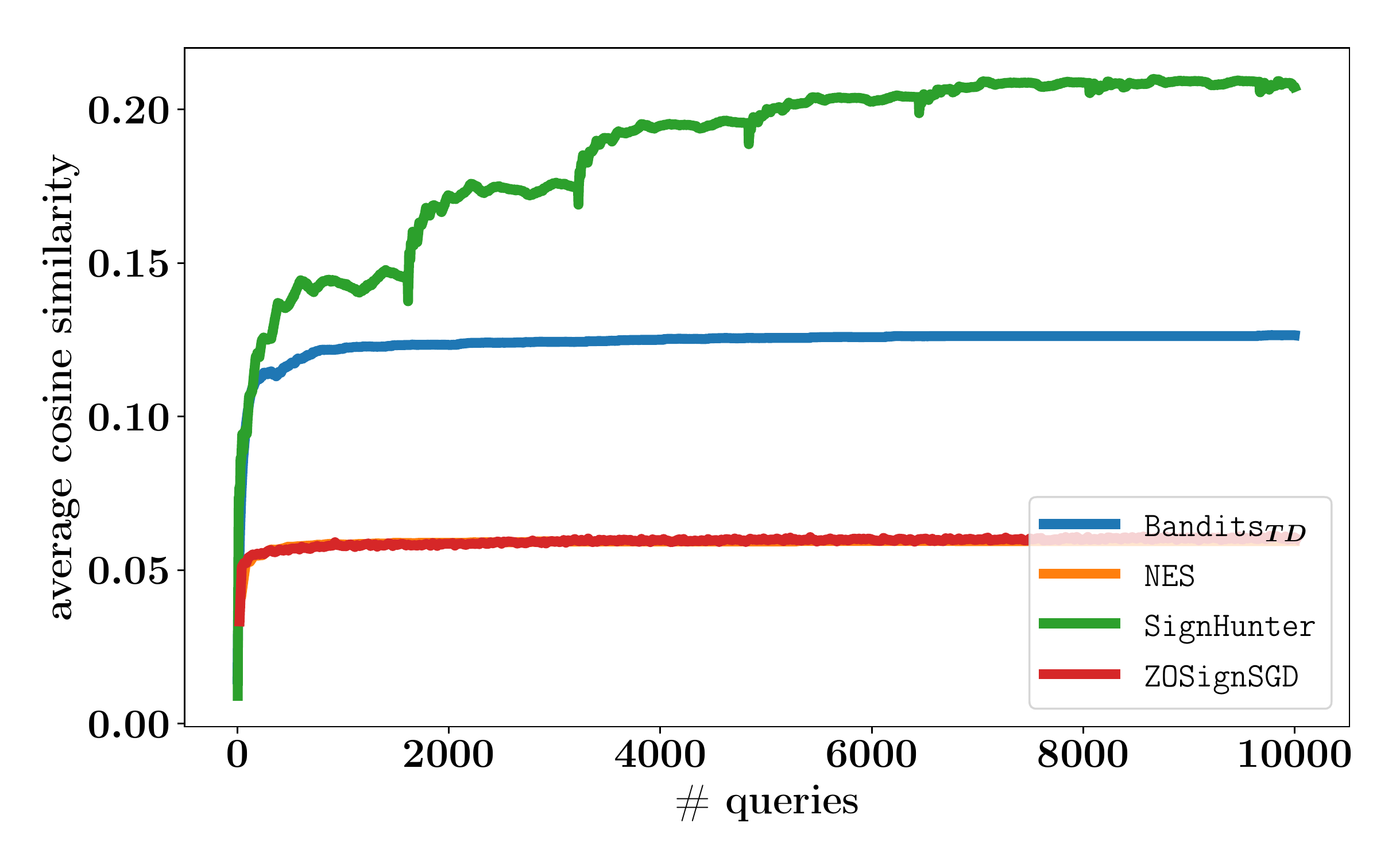} \\
 \includegraphics[width=0.4\textwidth ]{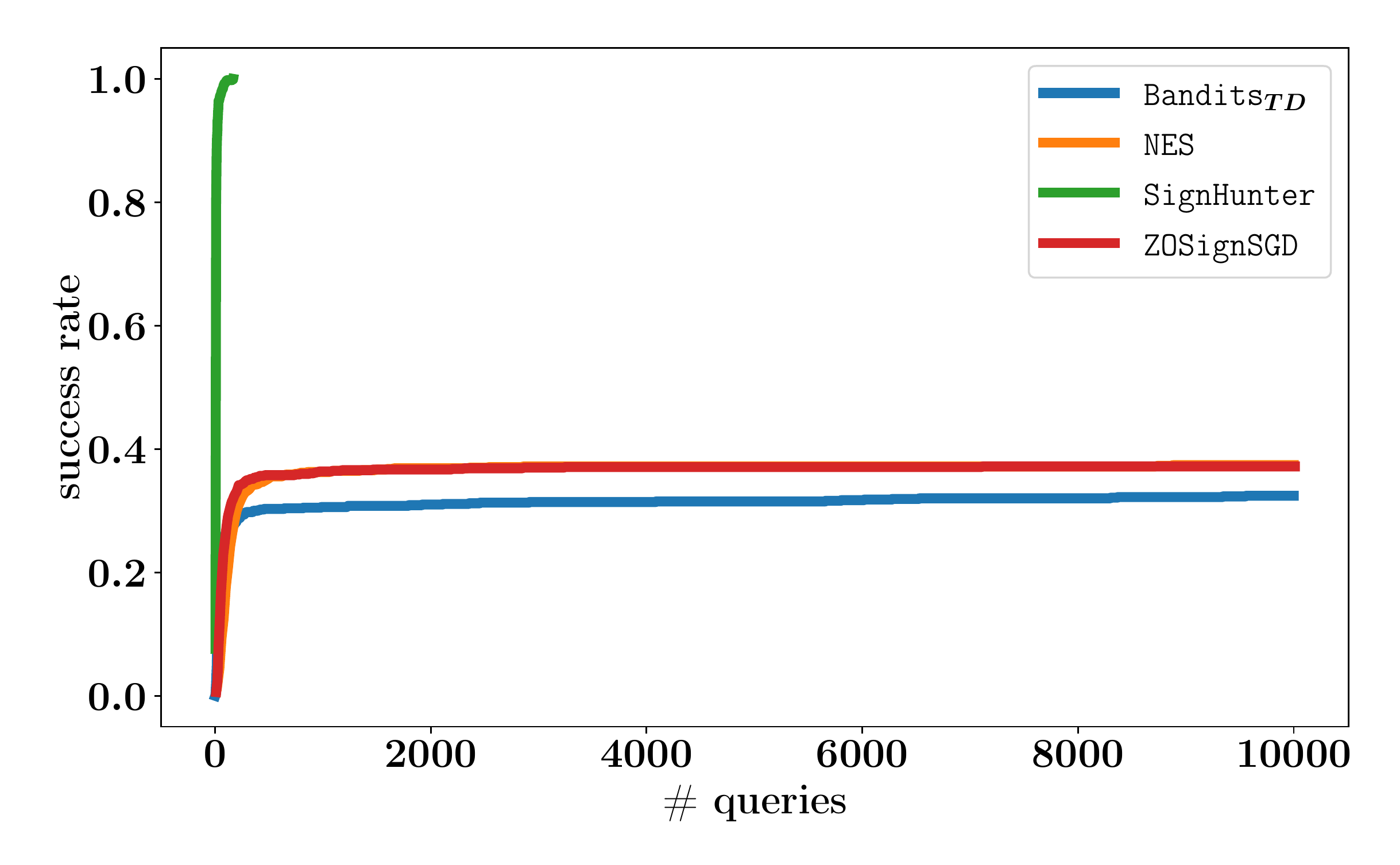} &
			\includegraphics[width=0.4\textwidth ]{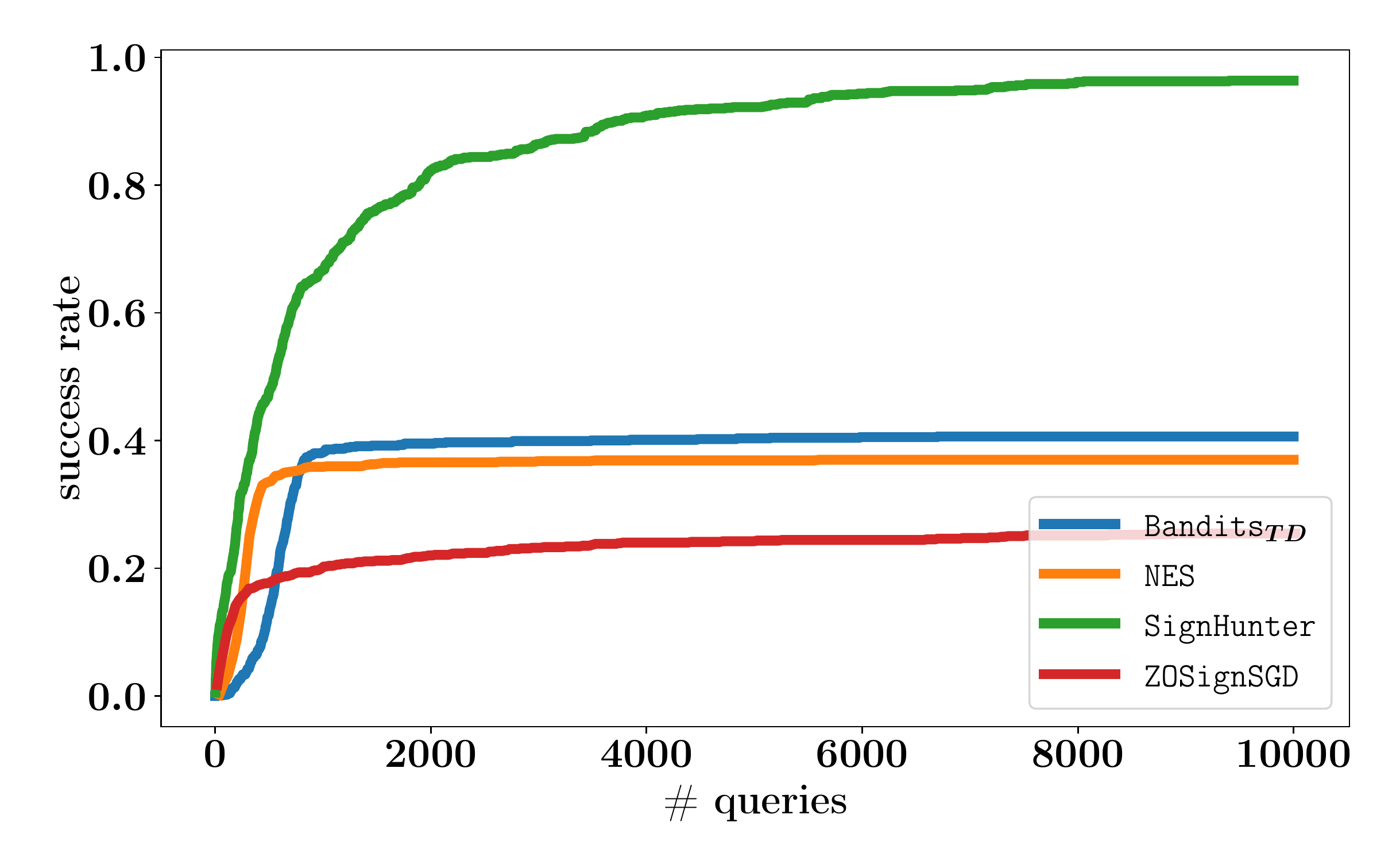} \\
 \includegraphics[width=0.4\textwidth ]{figs/mnist_sota_tbl_plots/mnist_inf_qrt_plt.pdf}  &
			\includegraphics[width=0.4\textwidth ]{figs/mnist_sota_tbl_plots/mnist_2_qrt_plt.pdf} \\
		\end{tabular}
	}
	\caption{Performance curves of attacks on \mnist for $\linf$ (first column) and $\ltwo$ (second column) perturbation constraints. Plots of  \emph{Avg. Loss} row reports the loss as a function of the number of queries averaged over all images. The \emph{Avg. Hamming Similarity} row shows the Hamming similarity of the sign of the attack's estimated gradient $\hat{\vg}$ with true gradient's sign $\vq^*$, computed as $1 - ||\sgn(\hat{\vg}) - \vq^*||_H/ n$ and averaged over all images. Likewise, plots of the \emph{Avg. Cosine Similarity} row show the normalized dot product of $\hat{\vg}$ and $\vg^*$ averaged over all images. The \emph{Success Rate} row reports the attacks' cumulative distribution functions for the number of queries required to carry out a successful attack up to the query limit of $10,000$ queries. The \emph{Avg. \# Queries} row reports the average number of queries used per successful image for each attack when reaching a specified success rate: the more effective the attack, the closer its curve is to the bottom right of the plot.
	}
\label{fig:mnist-res}
\end{figure*}

\begin{figure*}[h!]
	\centering
	\resizebox{!}{0.34\textheight}{
		\begin{tabular}{cc}
			$\linf$ & $\ell_2$\\
			\includegraphics[width=0.4\textwidth]{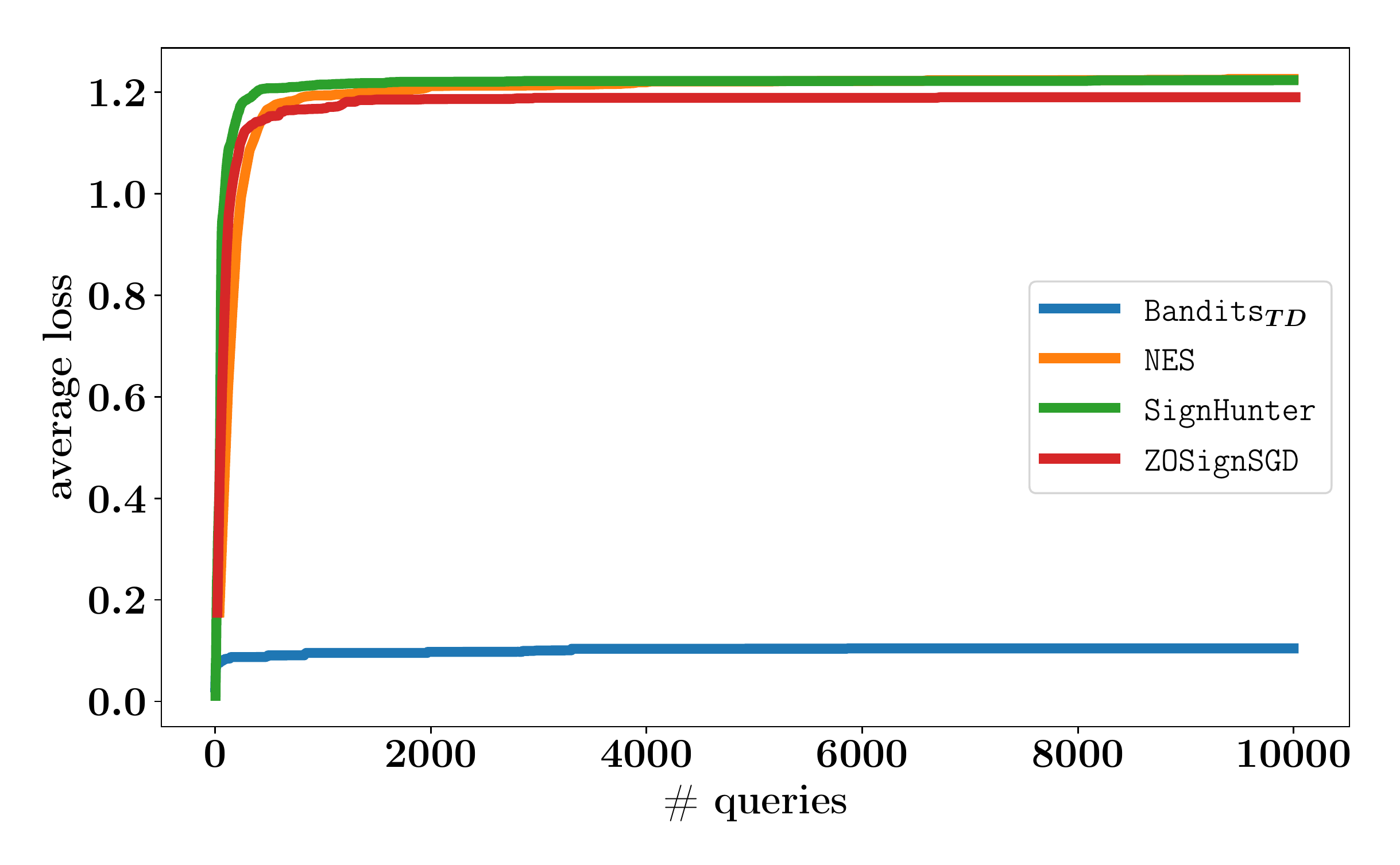} & \includegraphics[width=0.4\textwidth]{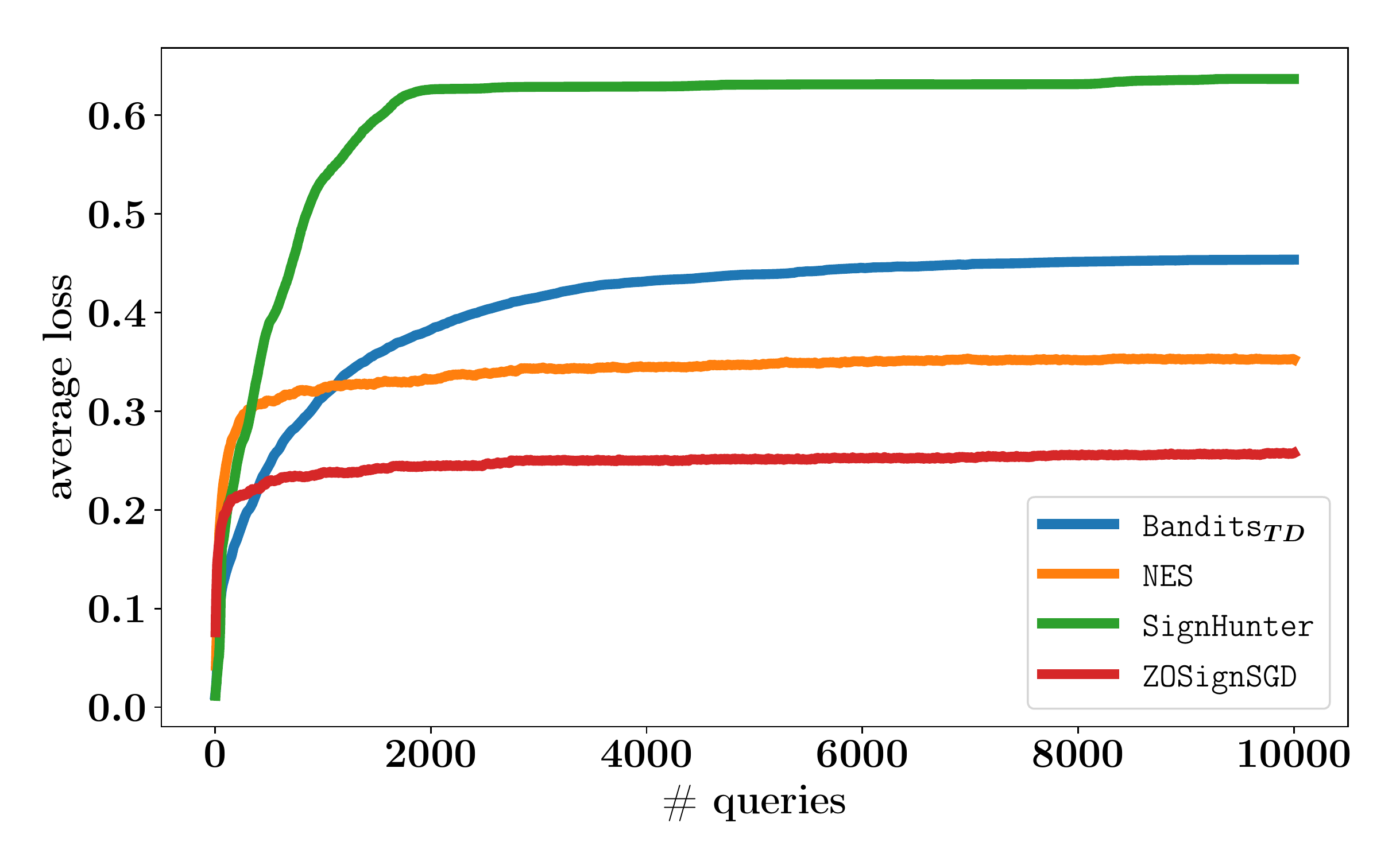} \\ \includegraphics[width=0.4\textwidth]{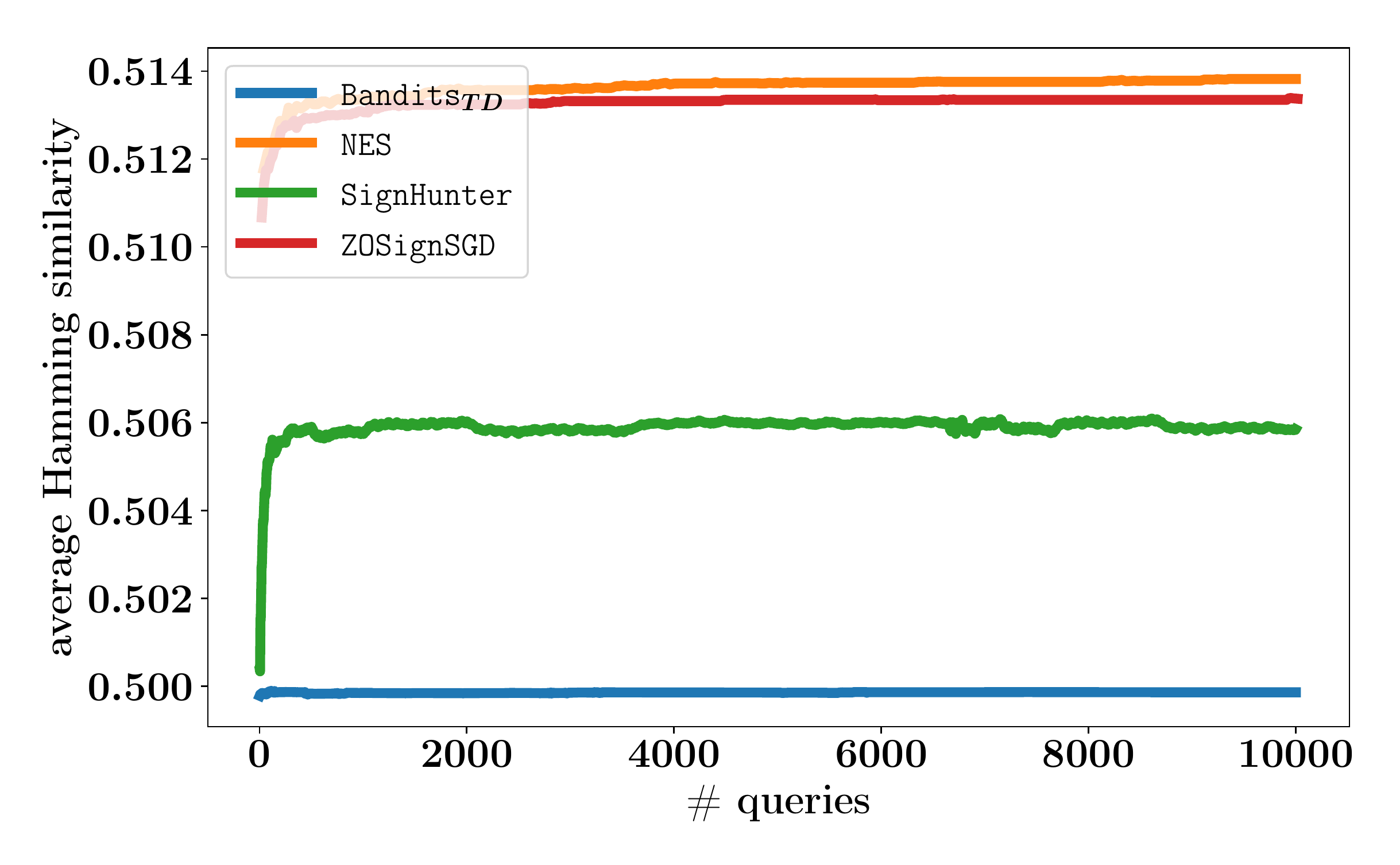} &
			\includegraphics[width=0.4\textwidth]{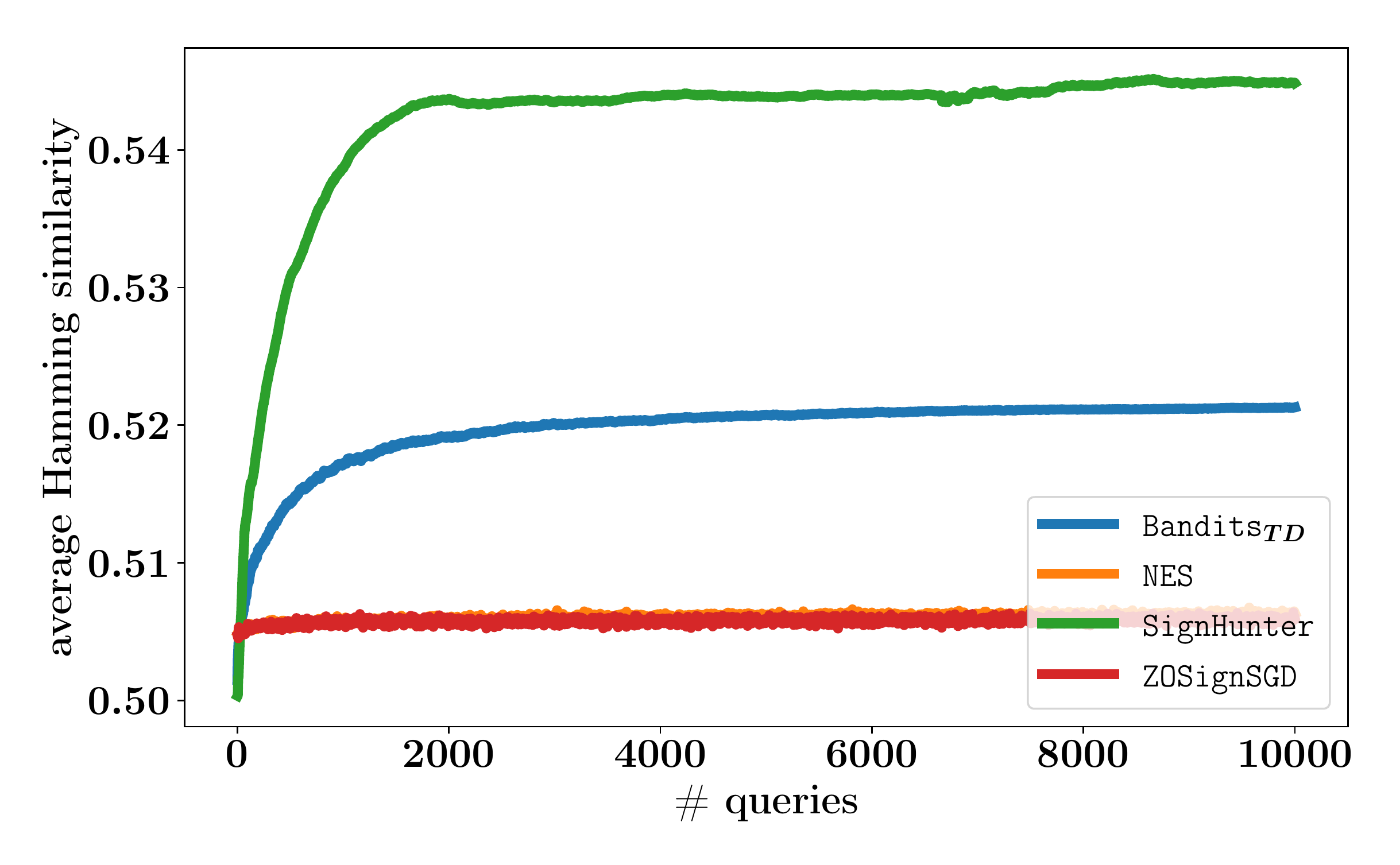}\\
			\includegraphics[width=0.4\textwidth ]{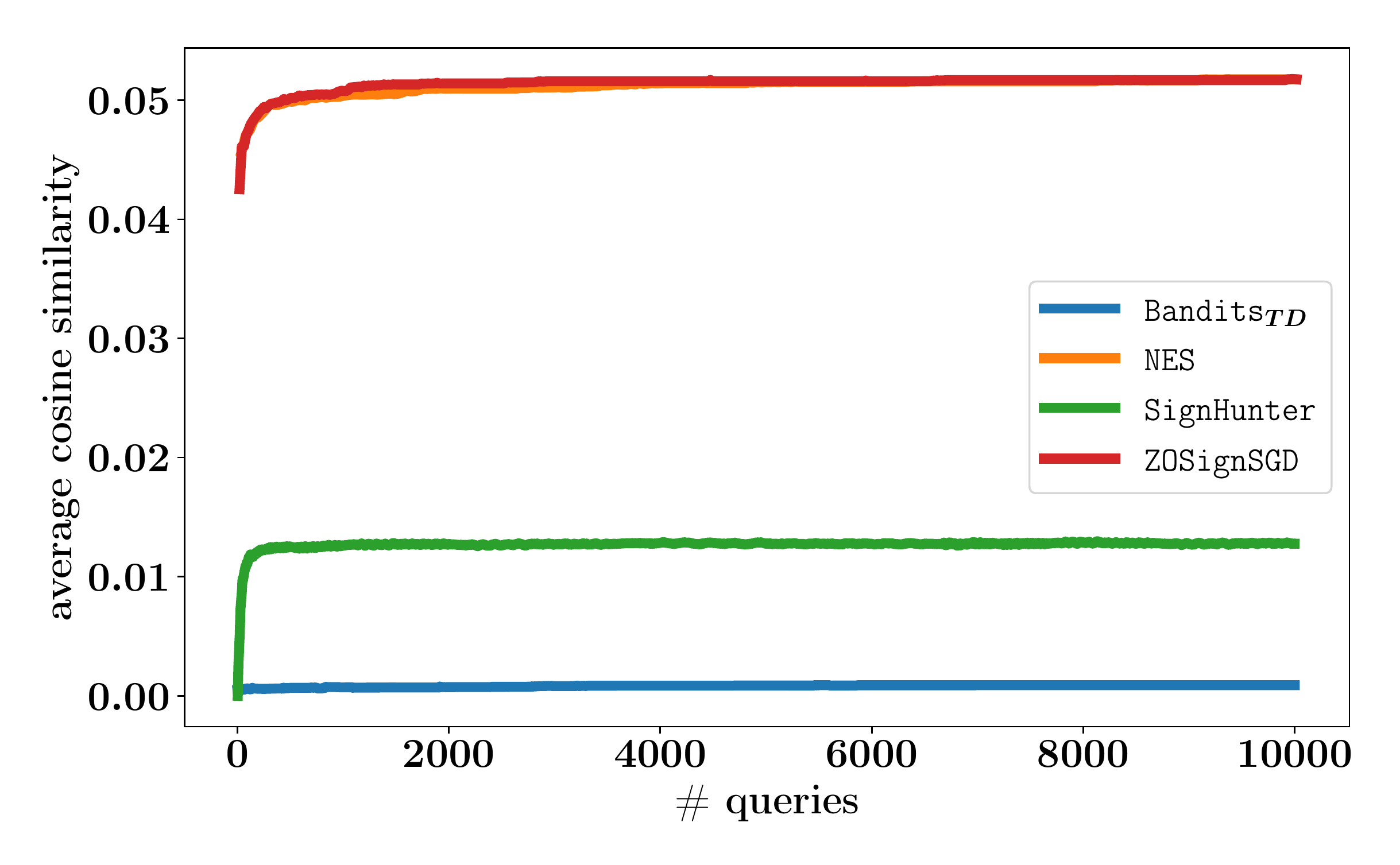} & \includegraphics[width=0.4\textwidth ]{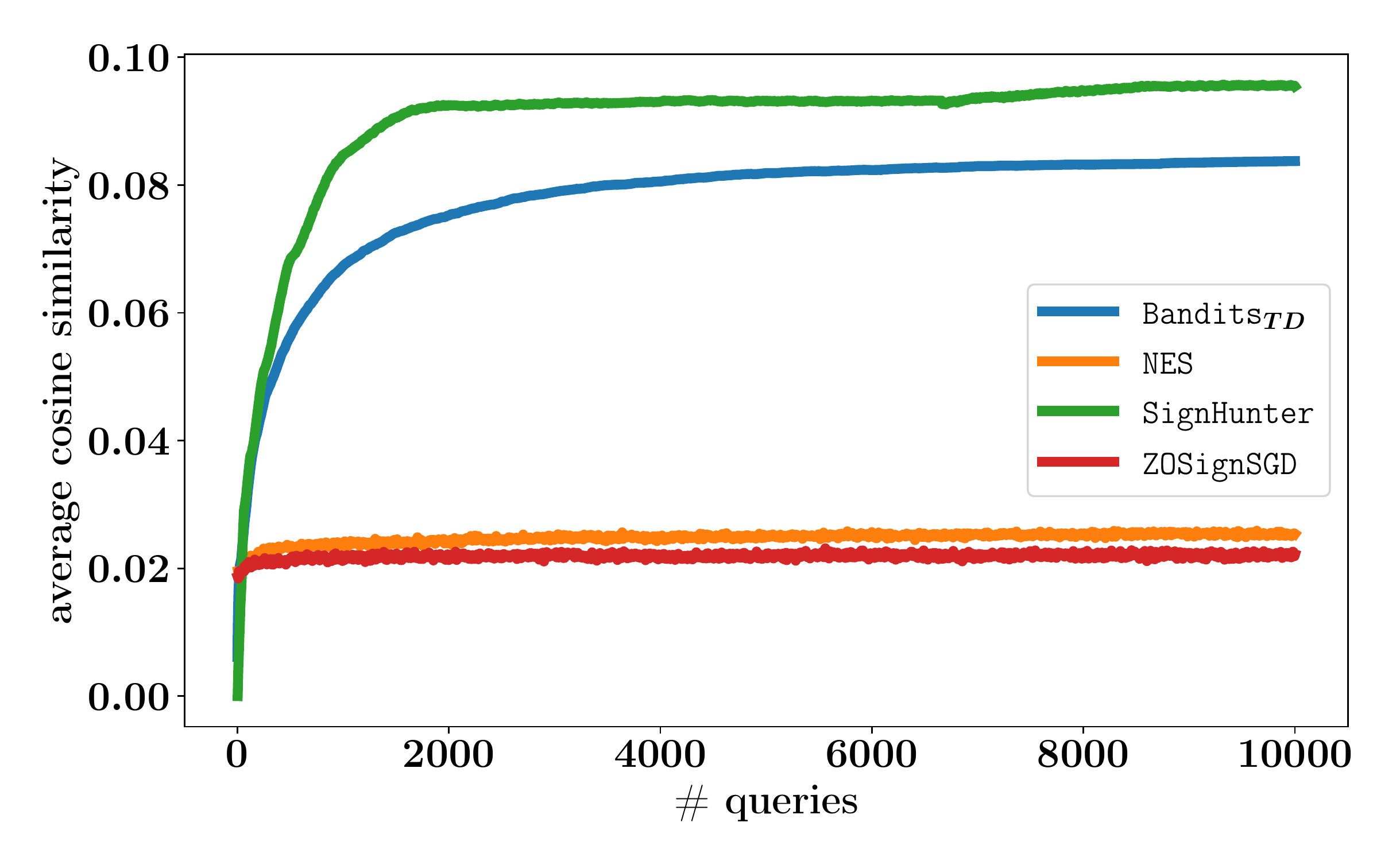} \\
			\includegraphics[width=0.4\textwidth ]{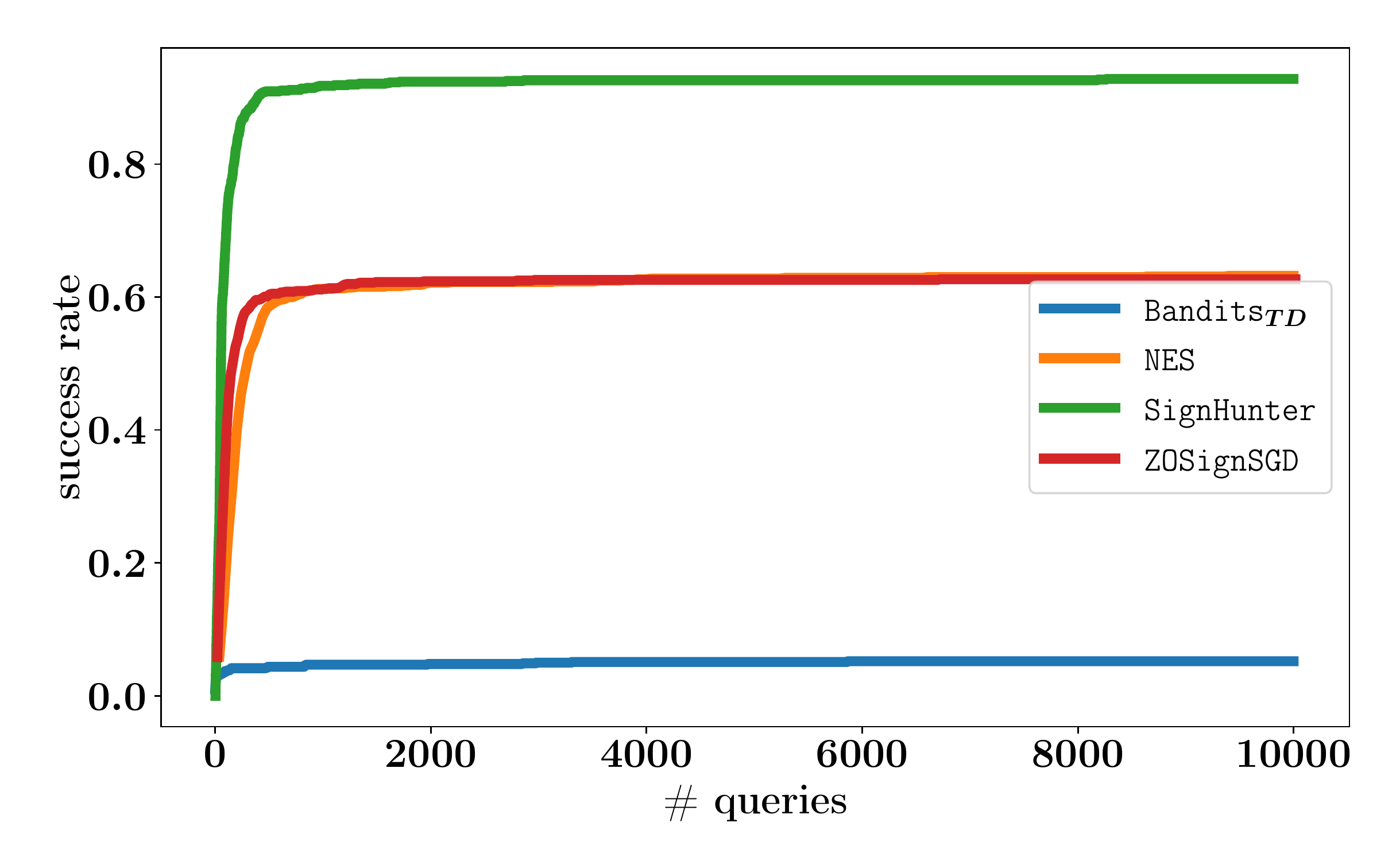} &
			\includegraphics[width=0.4\textwidth ]{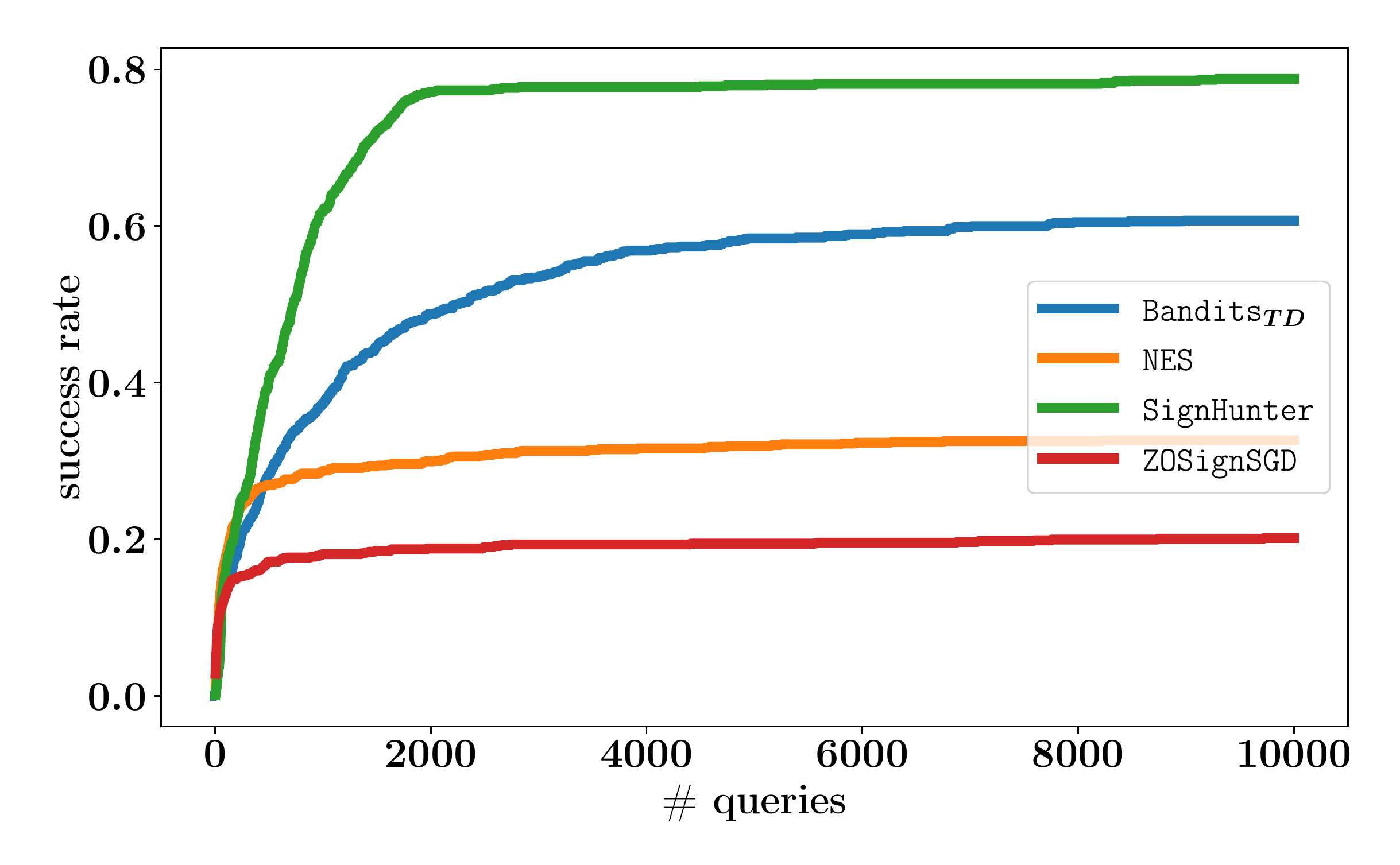} \\
			\includegraphics[width=0.4\textwidth ]{figs/cifar10_sota_tbl_plots/cifar10_inf_qrt_plt.pdf}  &
			\includegraphics[width=0.4\textwidth ]{figs/cifar10_sota_tbl_plots/cifar10_2_qrt_plt.pdf} \\
		\end{tabular}
	}
	\caption{Performance curves of attacks on \cifar for $\linf$ (first column) and $\ltwo$ (second column) perturbation constraints. Plots of  \emph{Avg. Loss} row reports the loss as a function of the number of queries averaged over all images. The \emph{Avg. Hamming Similarity} row shows the Hamming similarity of the sign of the attack's estimated gradient $\hat{\vg}$ with true gradient's sign $\vq^*$, computed as $1 - ||\sgn(\hat{\vg}) - \vq^*||_H/ n$ and averaged over all images. Likewise, plots of the \emph{Avg. Cosine Similarity} row show the normalized dot product of $\hat{\vg}$ and $\vg^*$ averaged over all images. The \emph{Success Rate} row reports the attacks' cumulative distribution functions for the number of queries required to carry out a successful attack up to the query limit of $10,000$ queries. The \emph{Avg. \# Queries} row reports the average number of queries used per successful image for each attack when reaching a specified success rate: the more effective the attack, the closer its curve is to the bottom right of the plot.
	}
\label{fig:cifar-res}
\end{figure*}

% TODO replace cifar with imagenet figs.

\begin{figure*}[h!]
	\centering
	\resizebox{!}{0.34\textheight}{
		\begin{tabular}{cc}
			$\linf$ & $\ell_2$\\
			\includegraphics[width=0.4\textwidth]{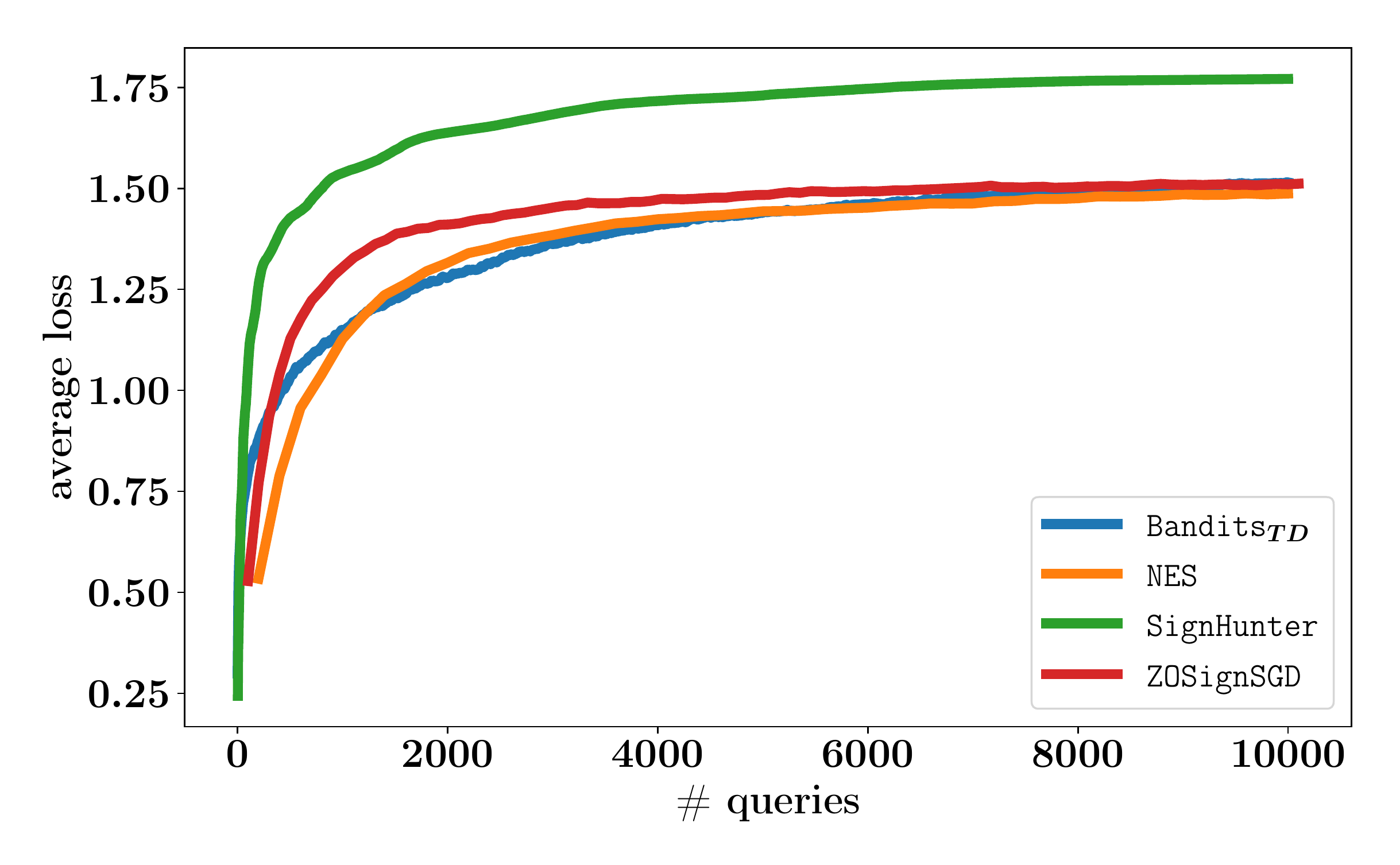} & \includegraphics[width=0.4\textwidth]{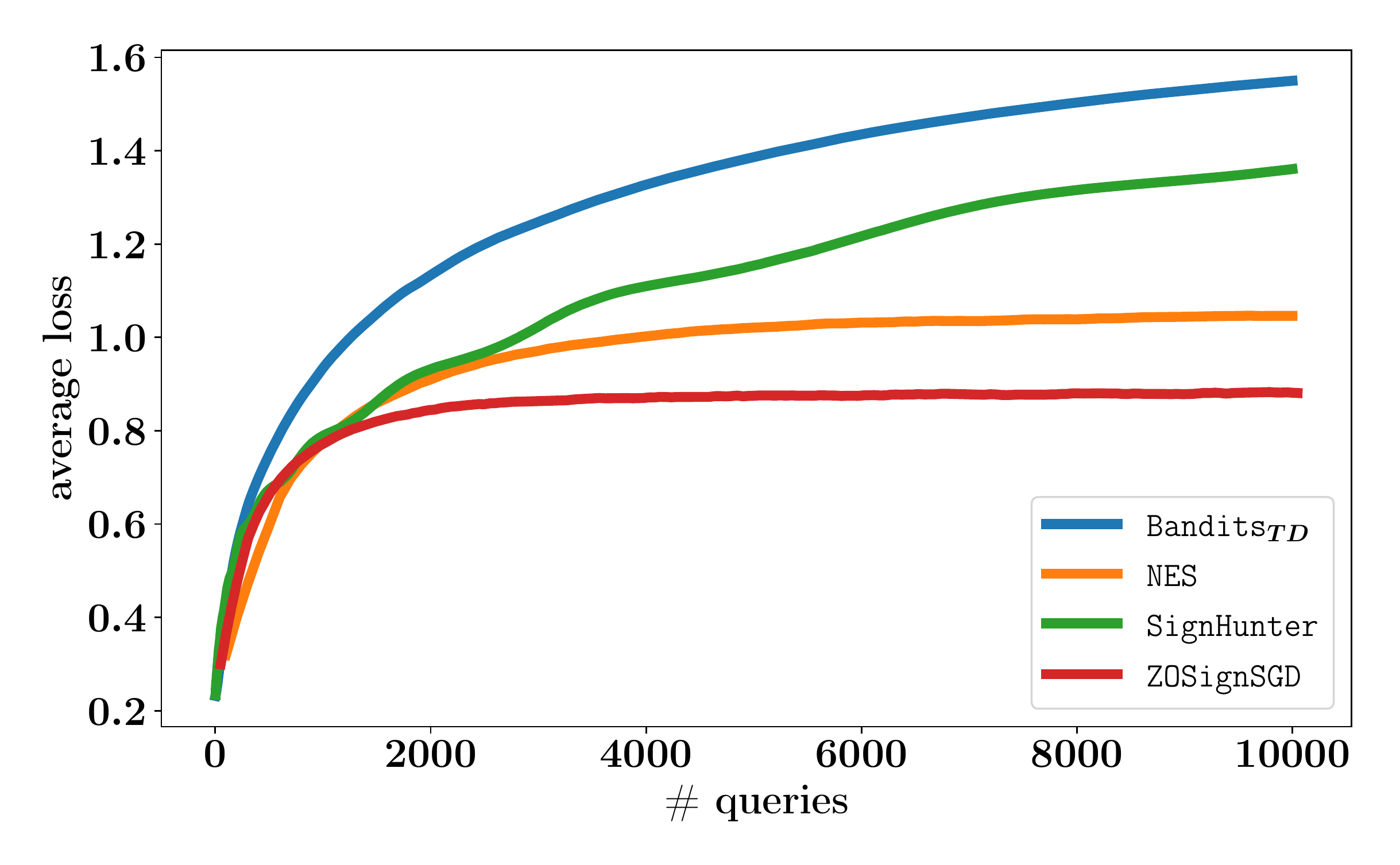} \\ \includegraphics[width=0.4\textwidth]{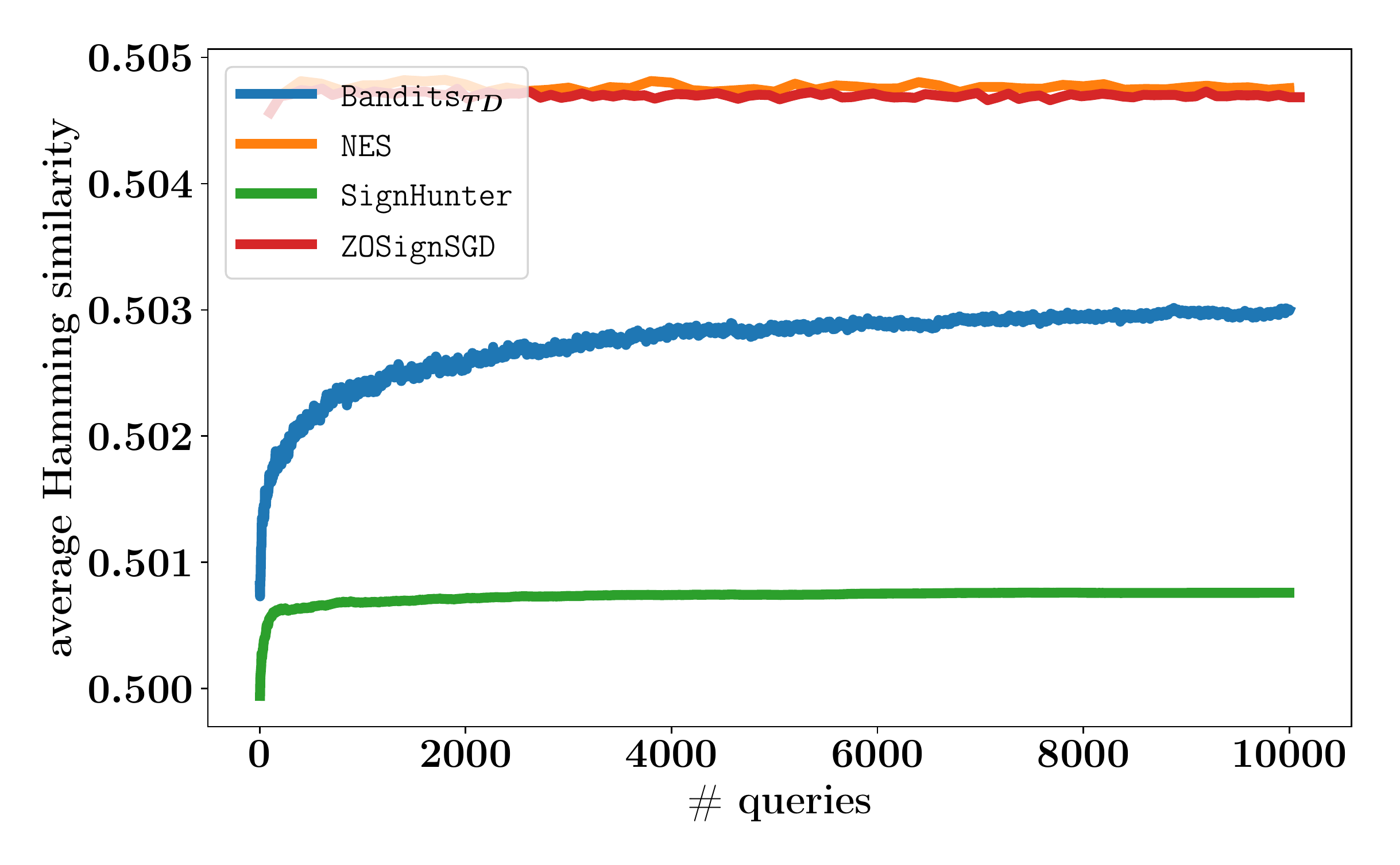} &
			\includegraphics[width=0.4\textwidth]{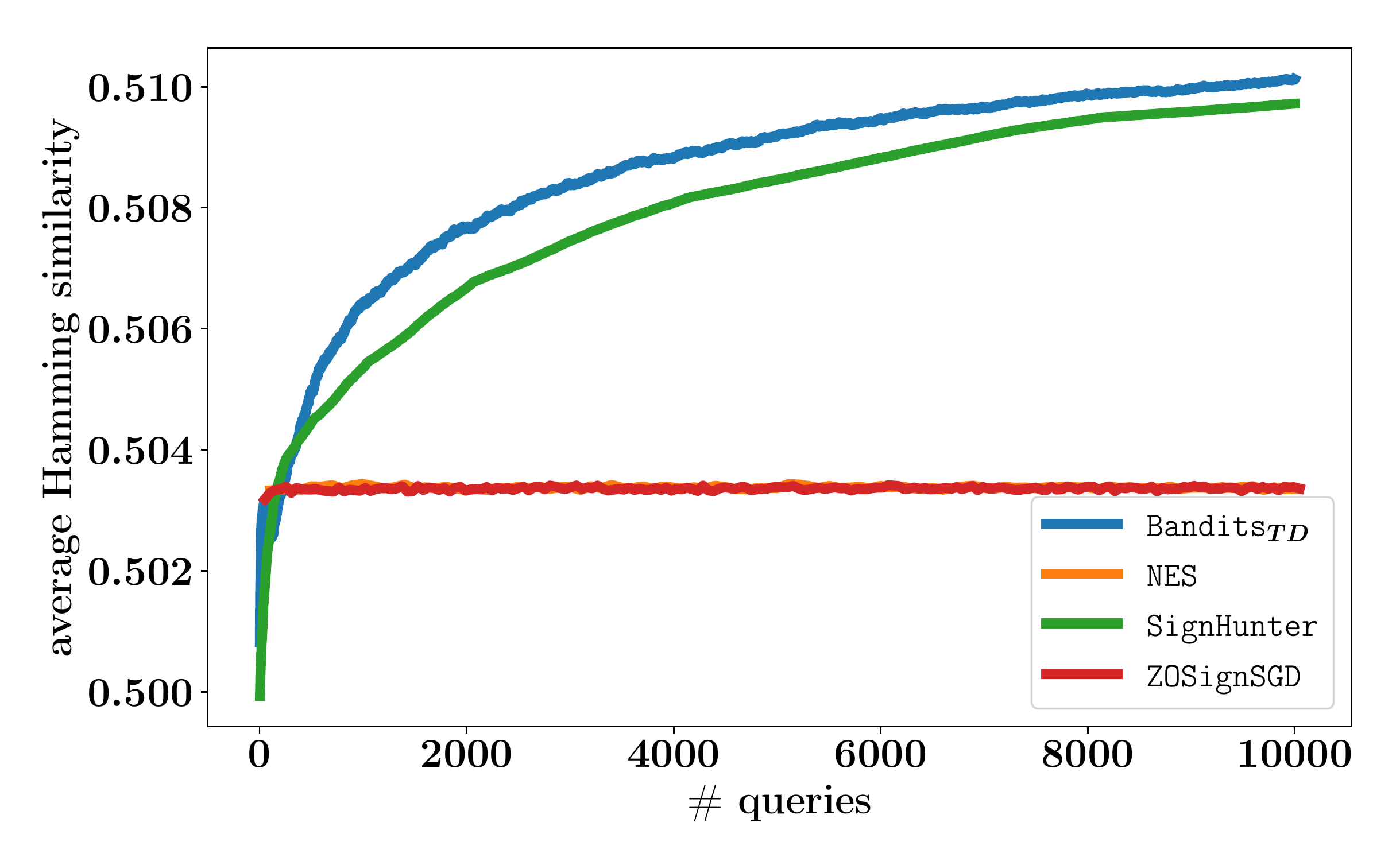}\\
			\includegraphics[width=0.4\textwidth ]{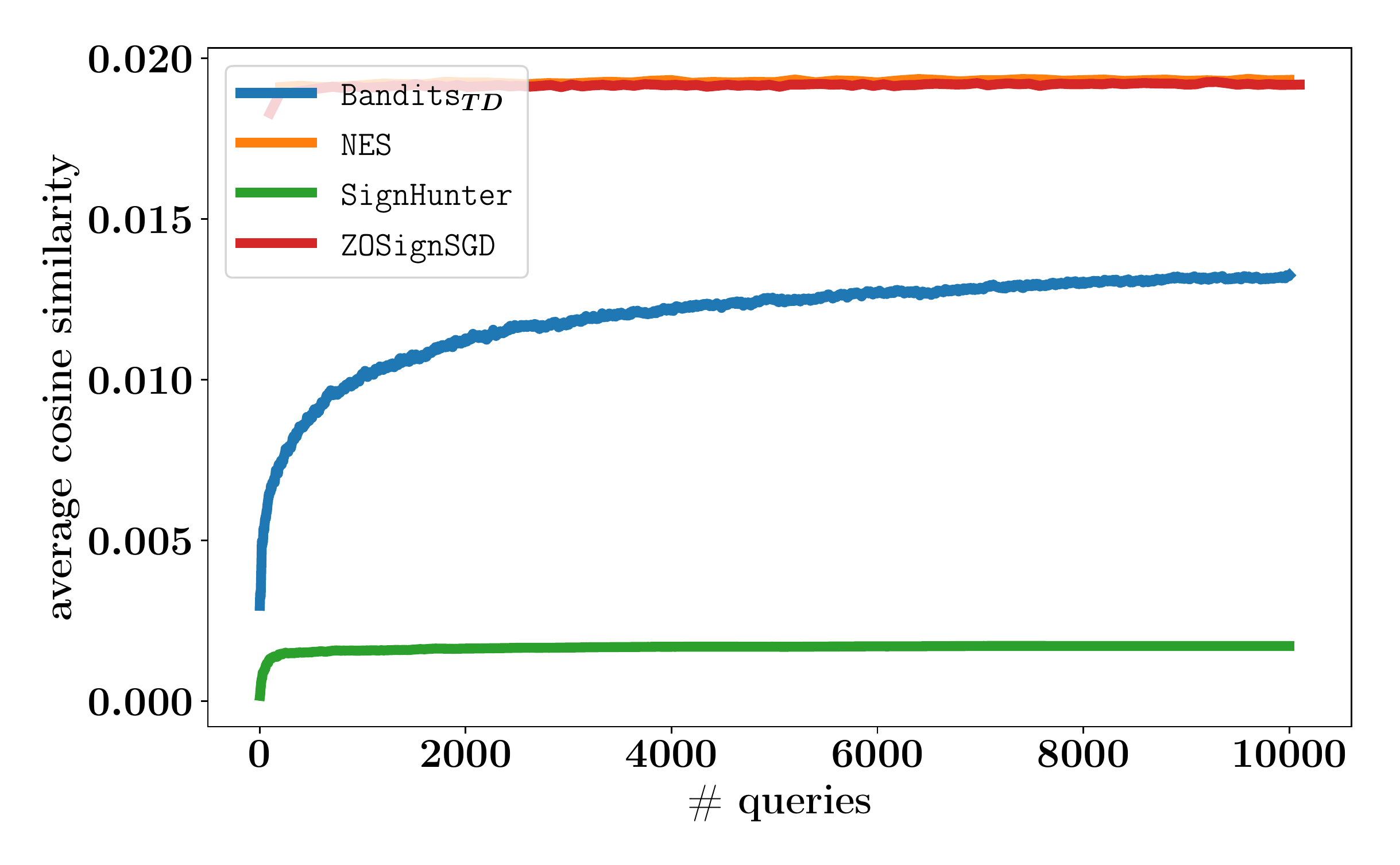} & \includegraphics[width=0.4\textwidth ]{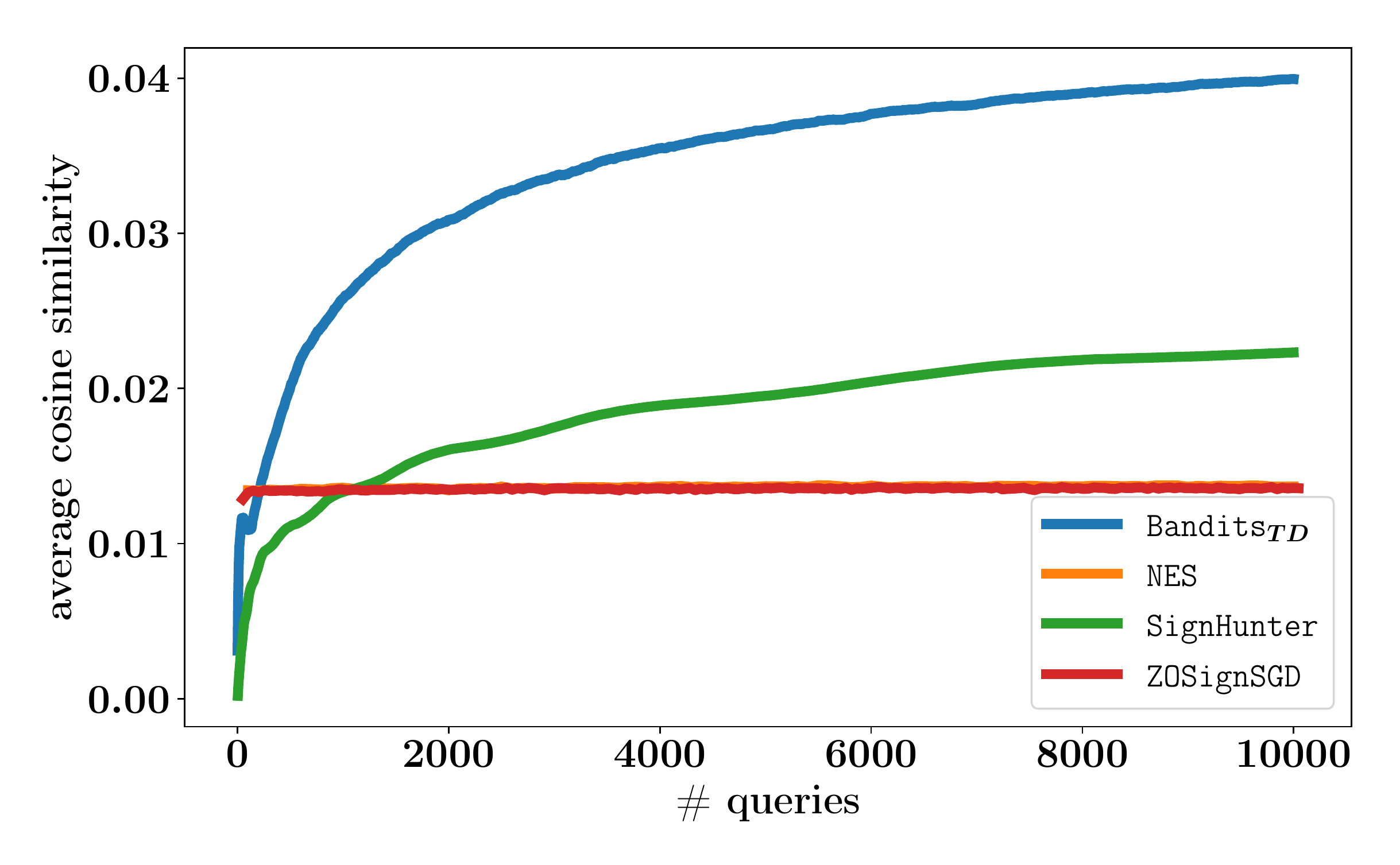} \\
			\includegraphics[width=0.4\textwidth ]{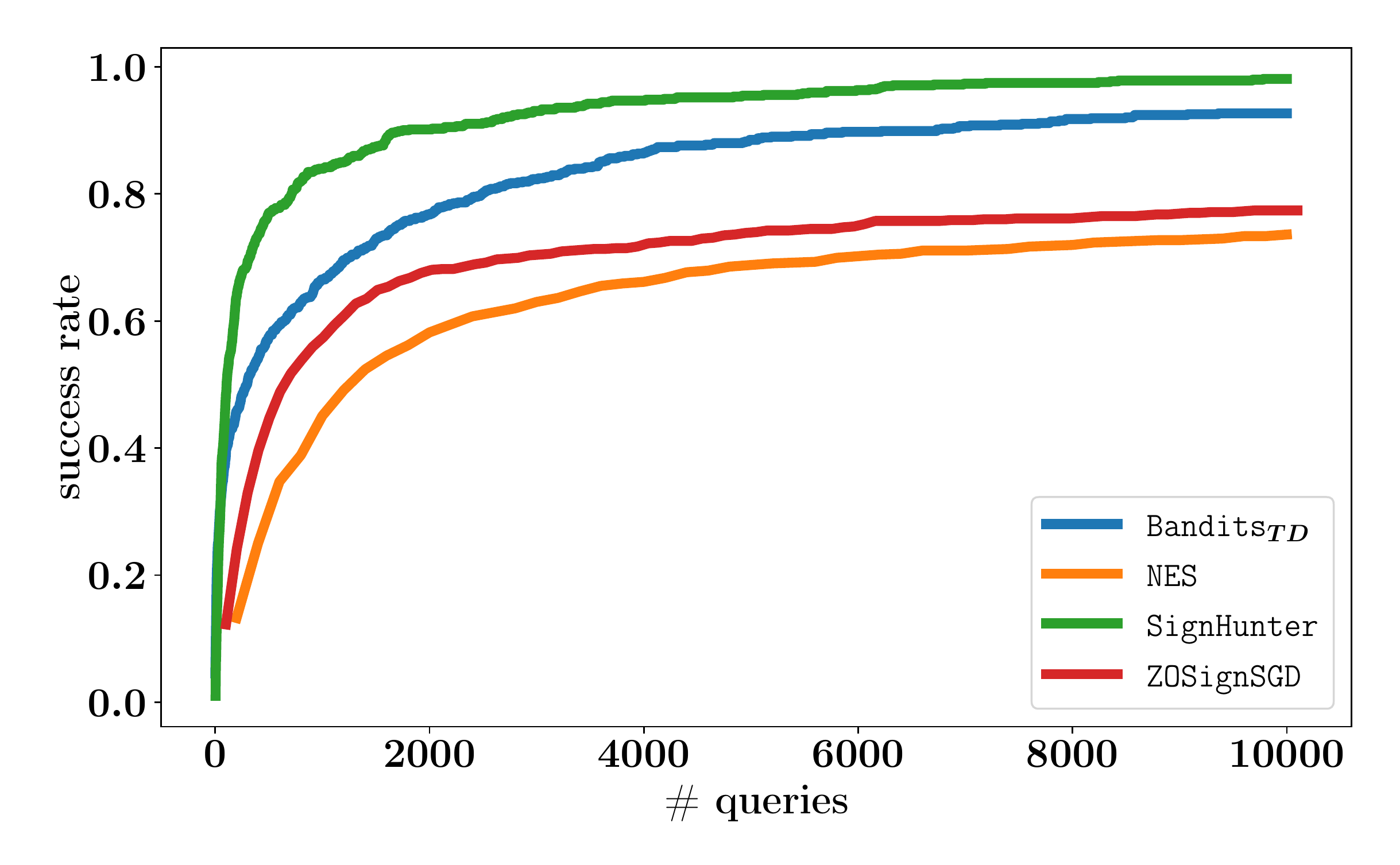} &
			\includegraphics[width=0.4\textwidth ]{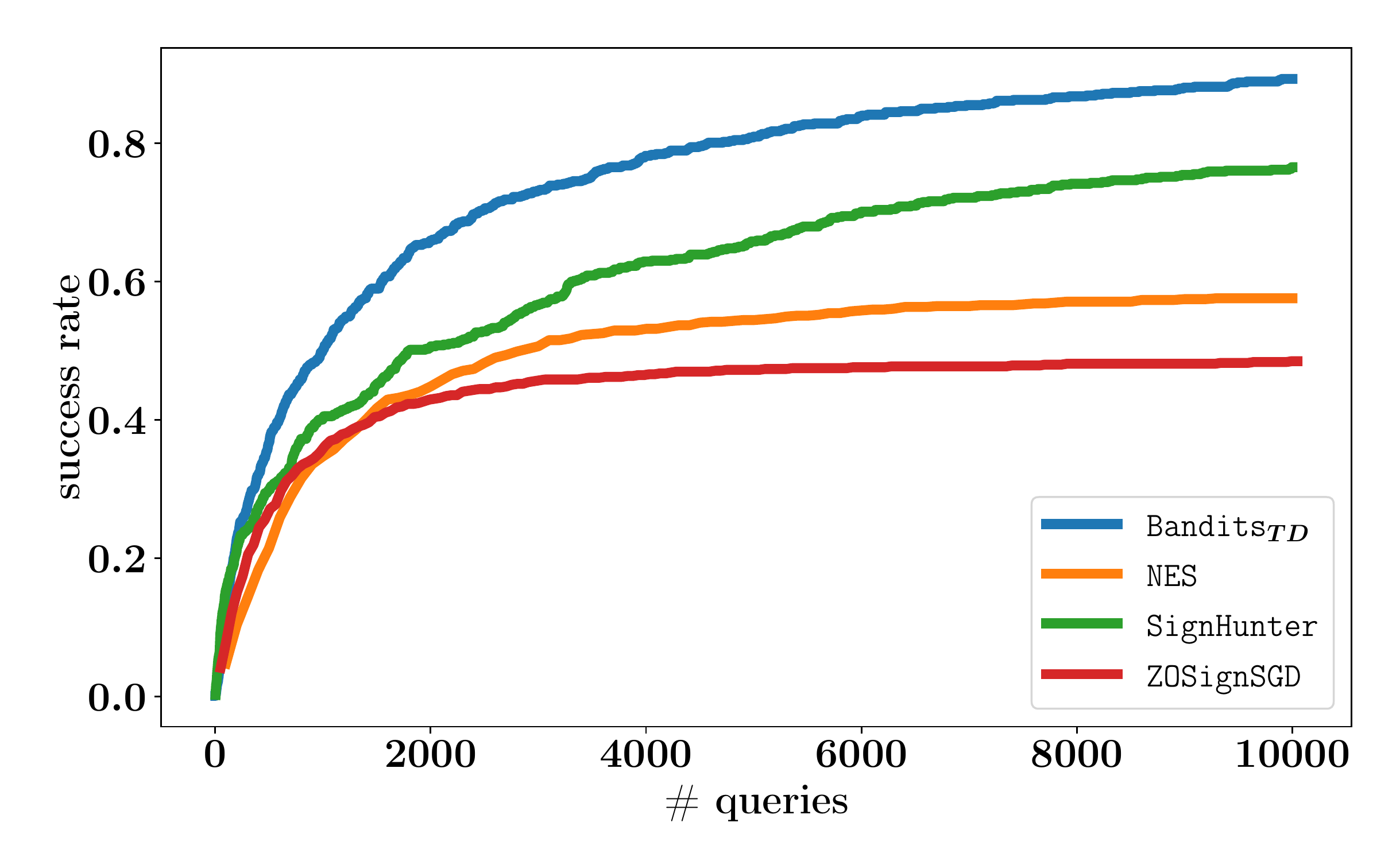} \\
			\includegraphics[width=0.4\textwidth ]{figs/imagenet_sota_tbl_plots/imagenet_inf_qrt_plt.pdf}  &
			\includegraphics[width=0.4\textwidth ]{figs/imagenet_sota_tbl_plots/imagenet_2_qrt_plt.pdf} \\
		\end{tabular}
	}
	\caption{Performance curves of attacks on \imgnt for $\linf$ (first column) and $\ltwo$ (second column) perturbation constraints. Plots of  \emph{Avg. Loss} row reports the loss as a function of the number of queries averaged over all images. The \emph{Avg. Hamming Similarity} row shows the Hamming similarity of the sign of the attack's estimated gradient $\hat{\vg}$ with true gradient's sign $\vq^*$, computed as $1 - ||\sgn(\hat{\vg}) - \vq^*||_H/ n$ and averaged over all images. Likewise, plots of the \emph{Avg. Cosine Similarity} row show the normalized dot product of $\hat{\vg}$ and $\vg^*$ averaged over all images. The \emph{Success Rate} row reports the attacks' cumulative distribution functions for the number of queries required to carry out a successful attack up to the query limit of $10,000$ queries. The \emph{Avg. \# Queries} row reports the average number of queries used per successful image for each attack when reaching a specified success rate: the more effective the attack, the closer its curve is to the bottom right of the plot.
	}
\label{fig:imgnet-res}
\end{figure*}

\cleardoublepage
 
\section*{Appendix F. Public Black-Box Challenge Results}

\begin{figure*}[h!]
	\centering
	\resizebox{!}{0.32\textheight}{
		\begin{tabular}{ccc}
			Madry's Lab (\mnist) & Madry's Lab (\cifar) & Ensemble Adversarial Training (\imgnt) \\
			\includegraphics[width=0.4\textwidth]{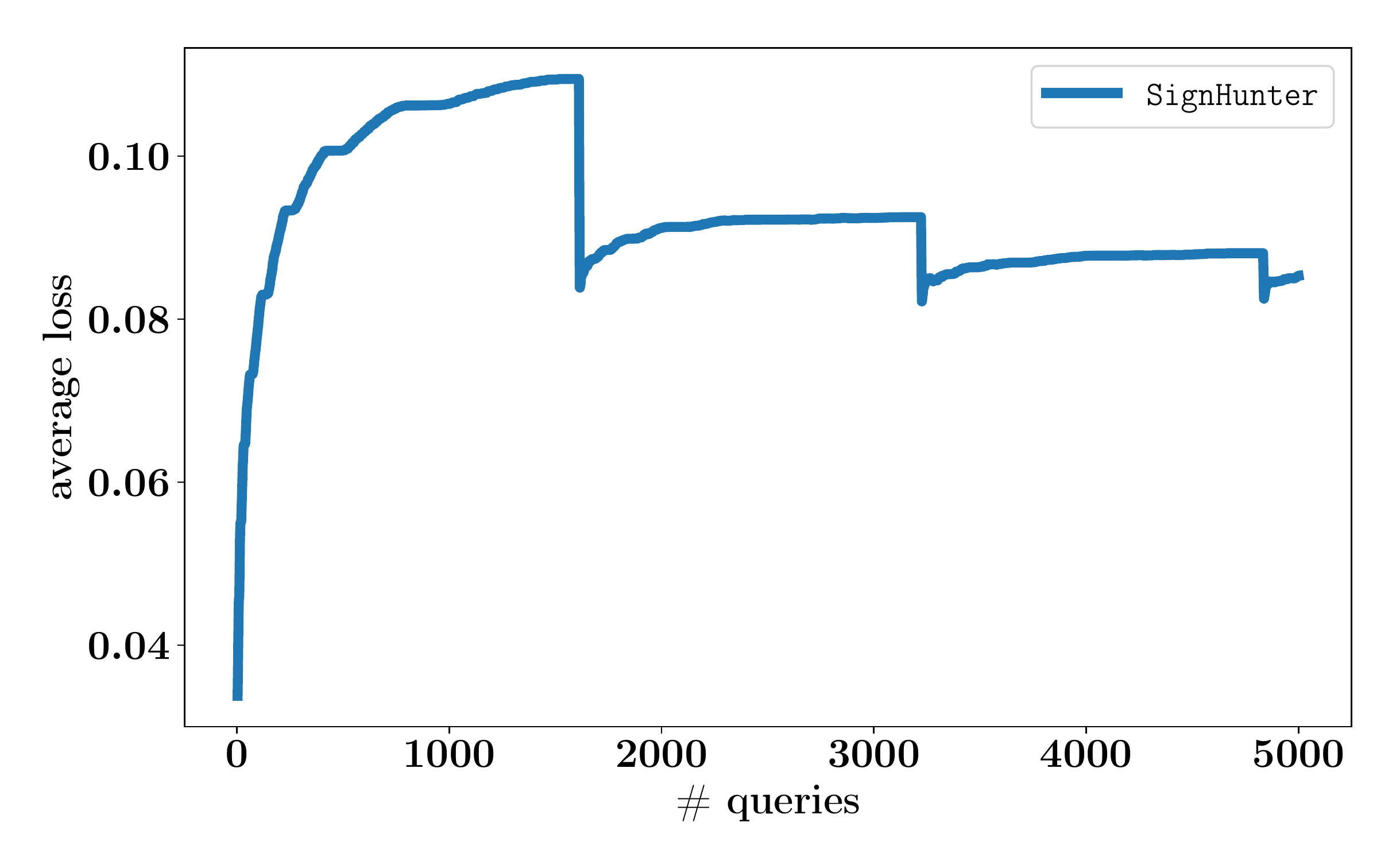} & \includegraphics[width=0.4\textwidth]{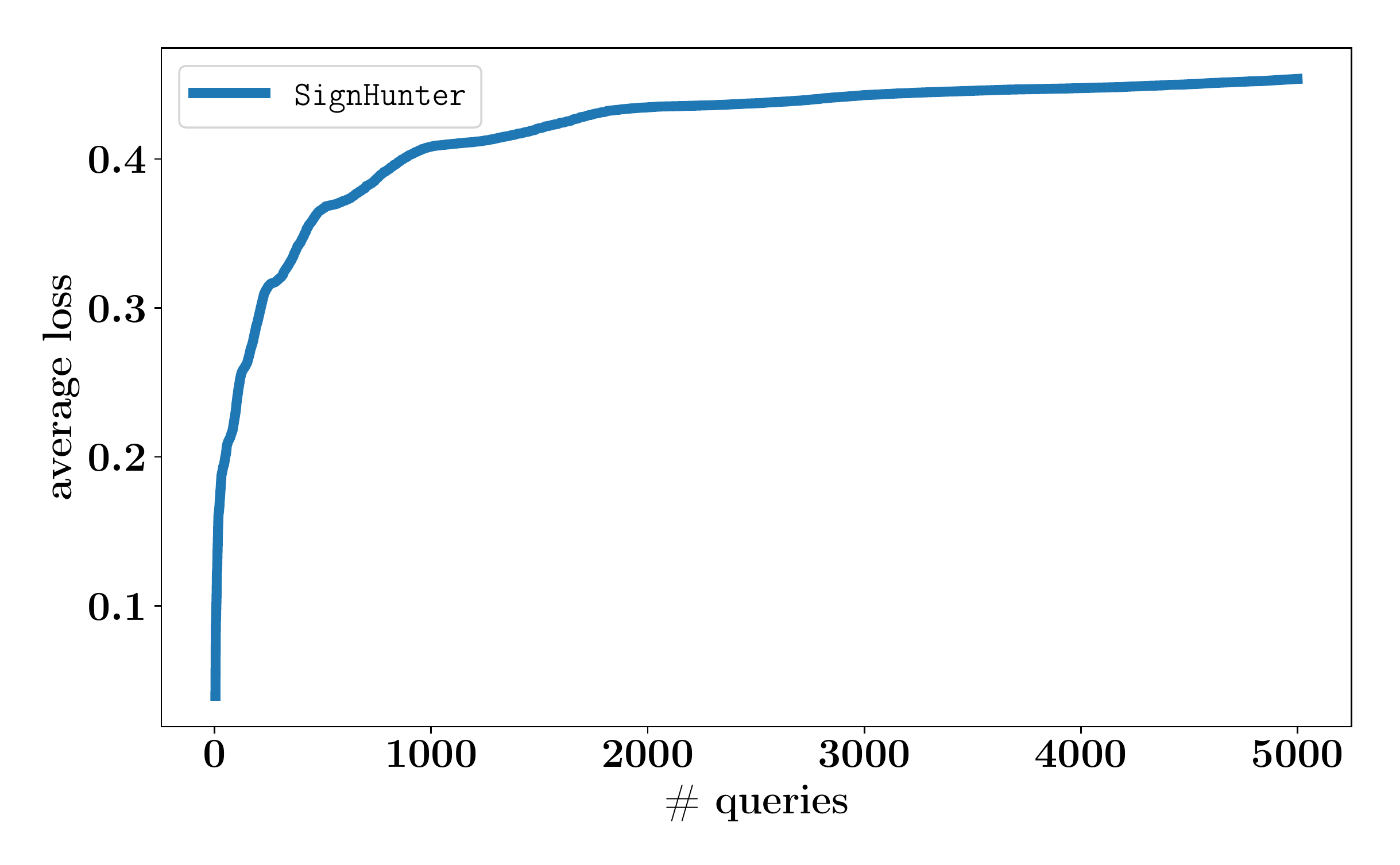} & \includegraphics[width=0.4\textwidth]{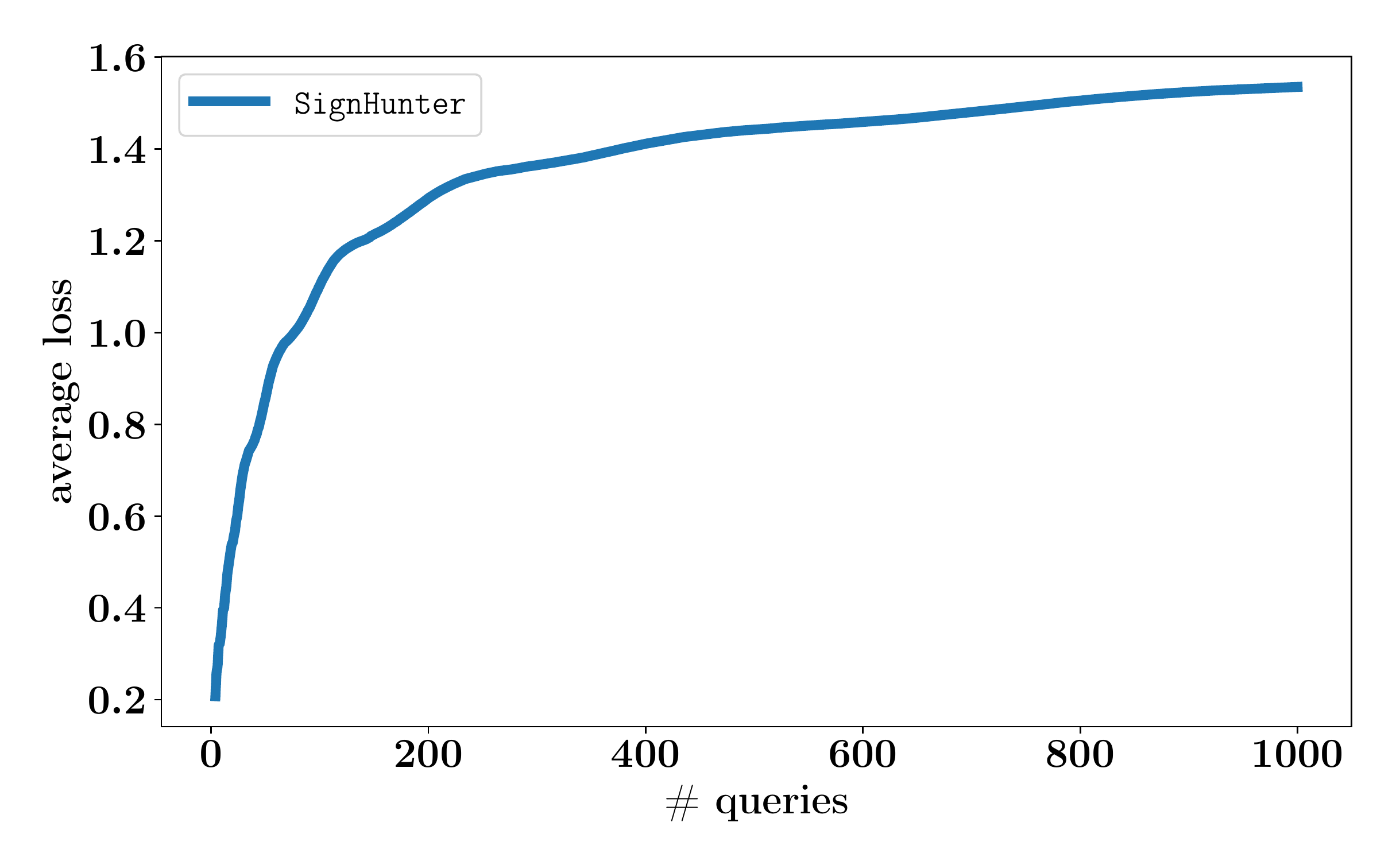} \\ \includegraphics[width=0.4\textwidth]{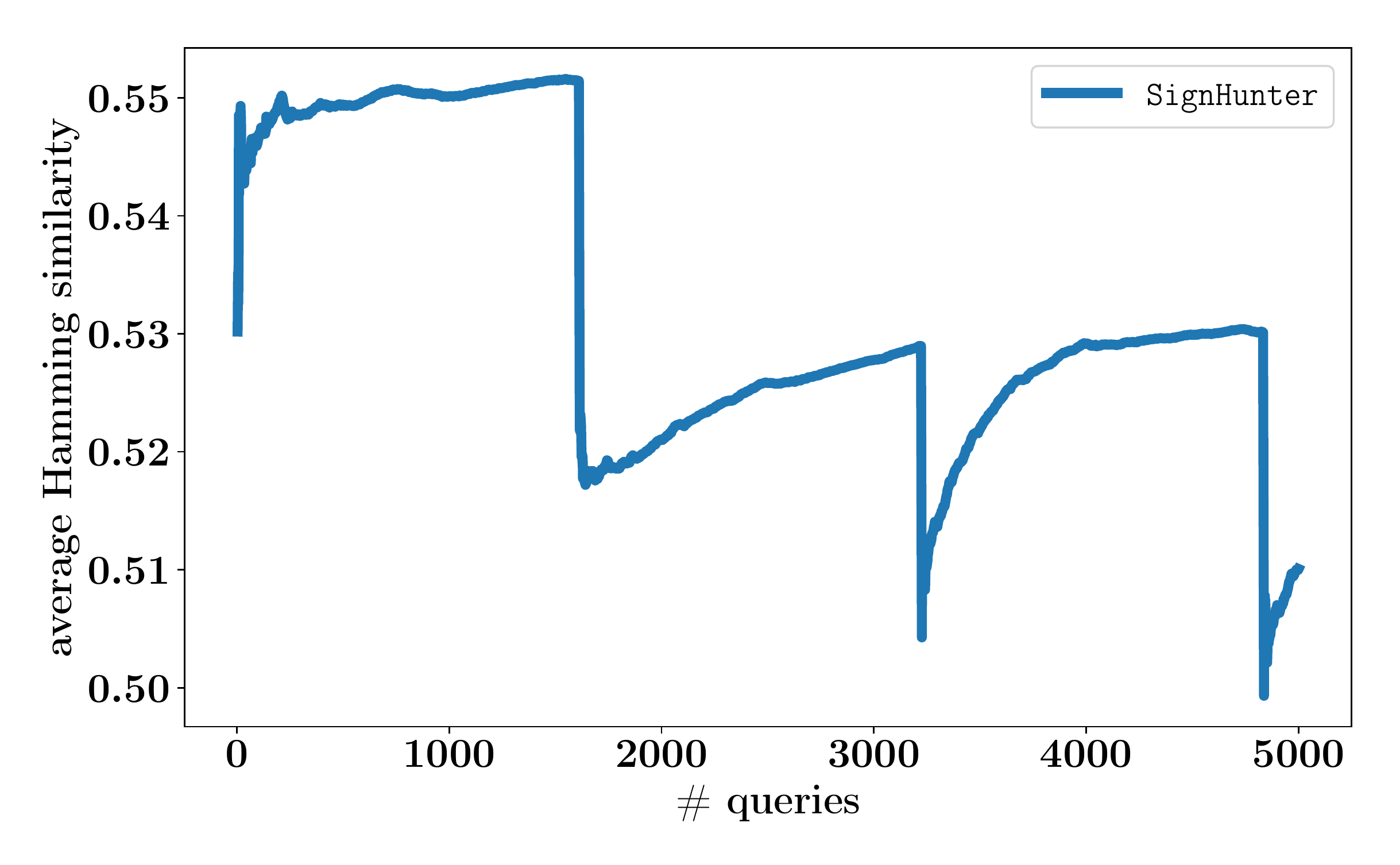} &
			\includegraphics[width=0.4\textwidth]{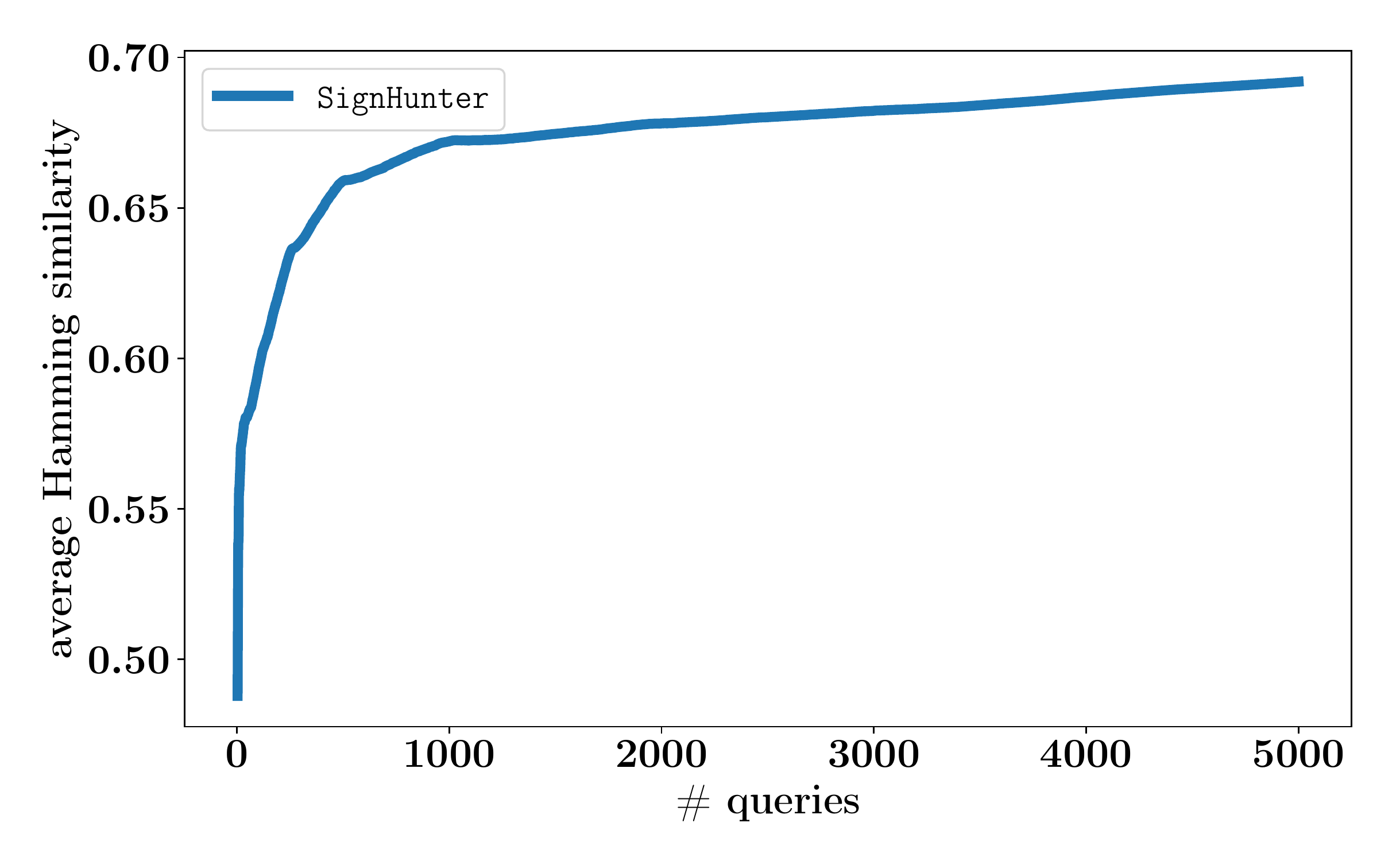} &
			\includegraphics[width=0.4\textwidth]{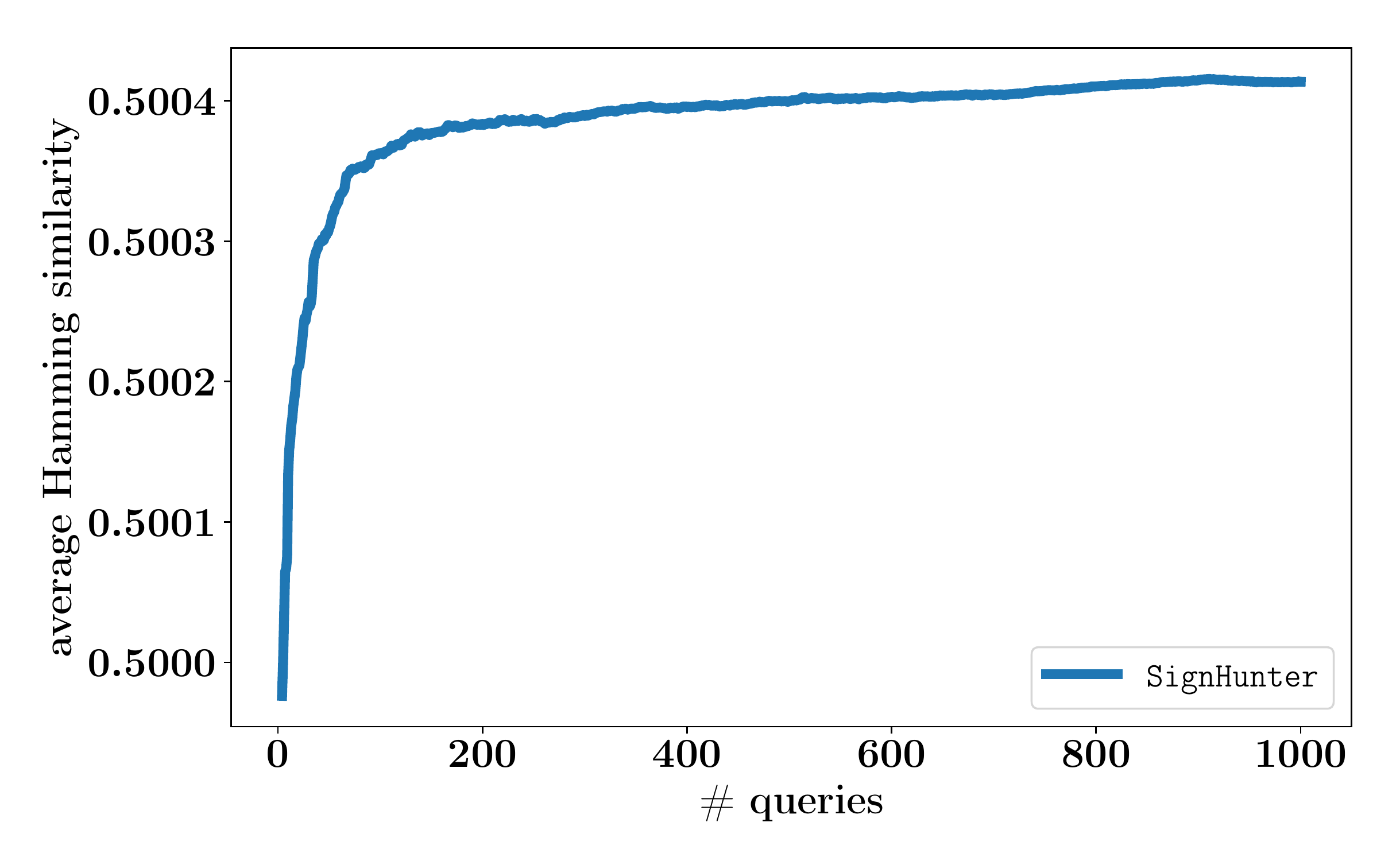}\\
			\includegraphics[width=0.4\textwidth ]{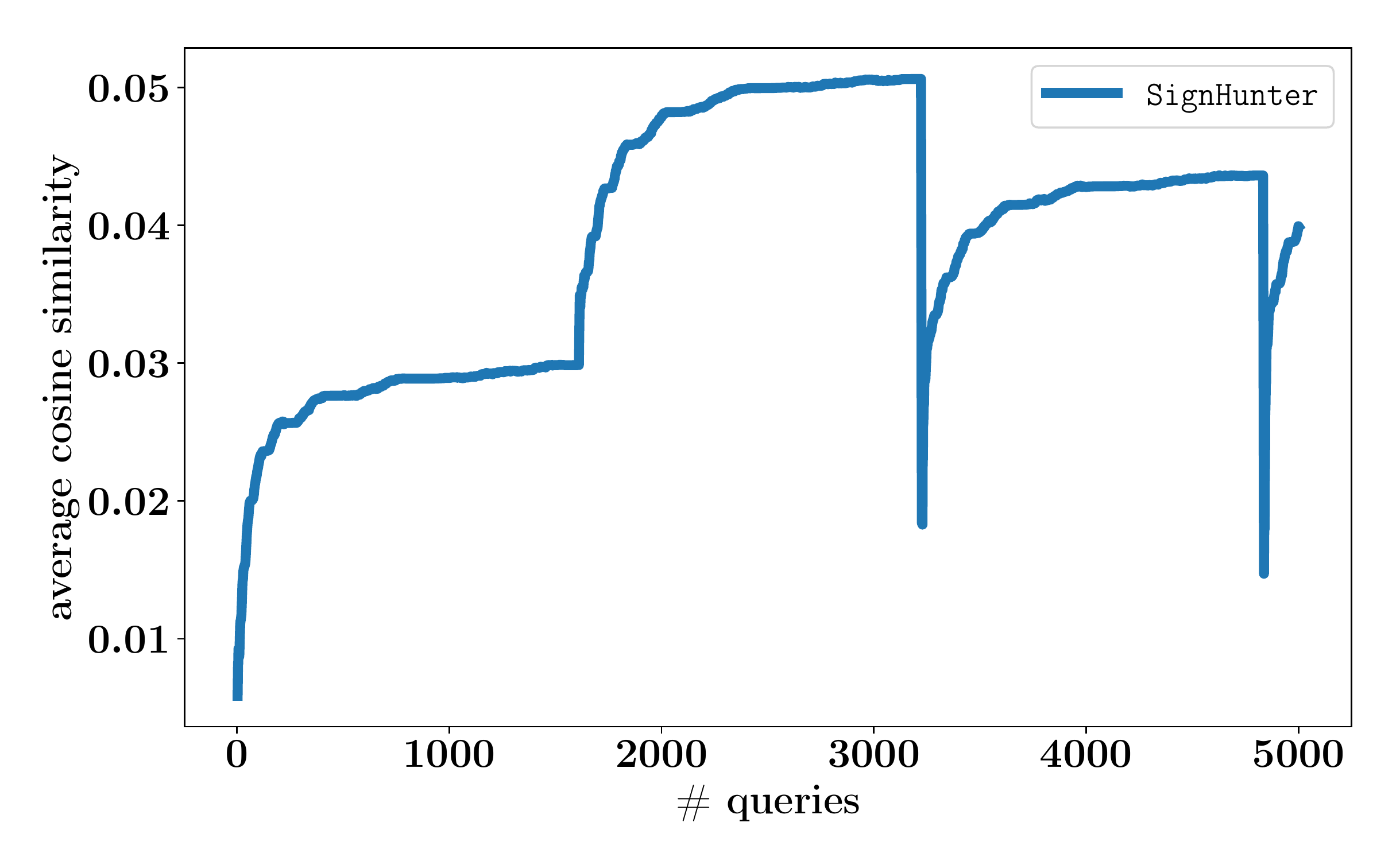} & \includegraphics[width=0.4\textwidth ]{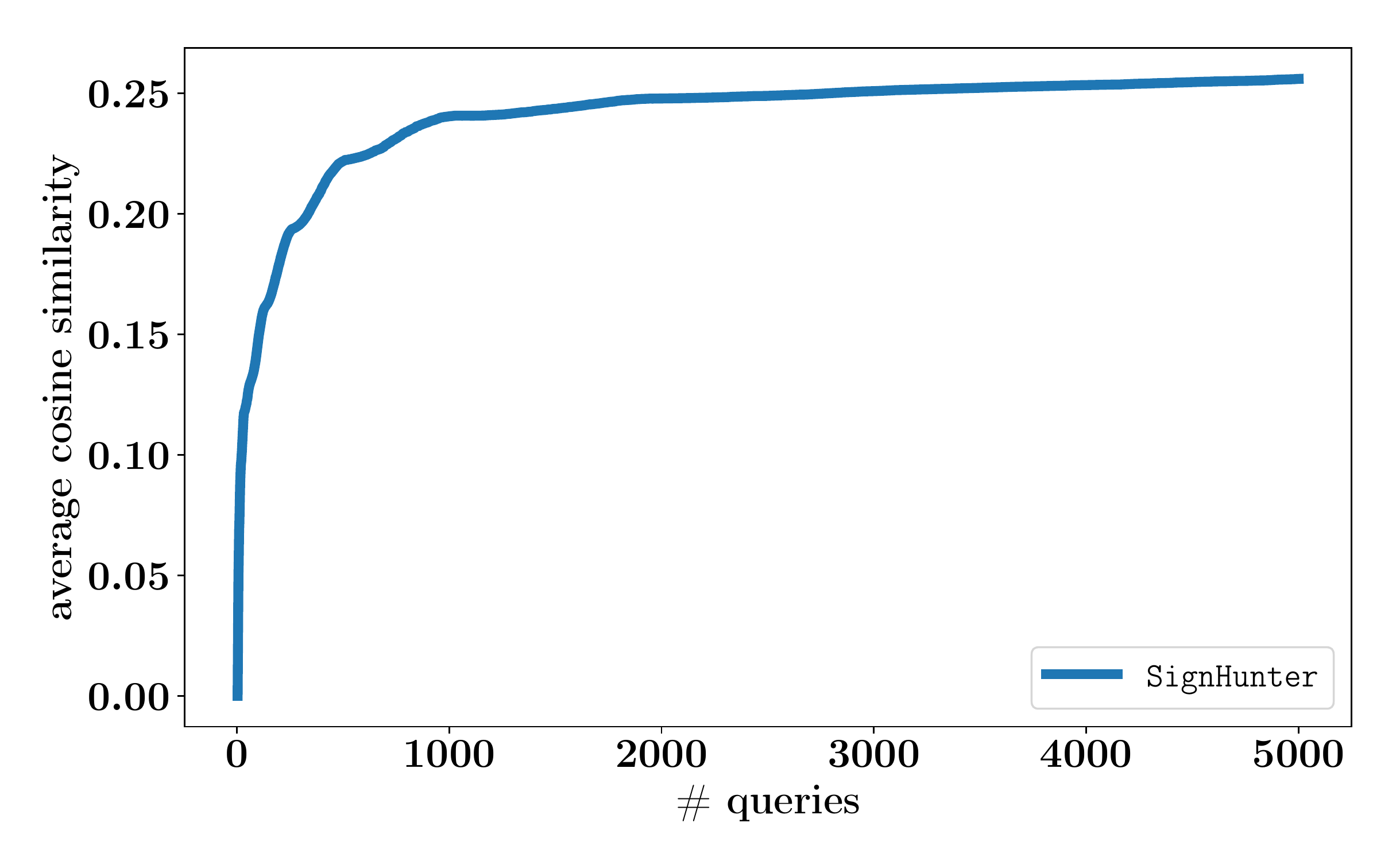} & \includegraphics[width=0.4\textwidth ]{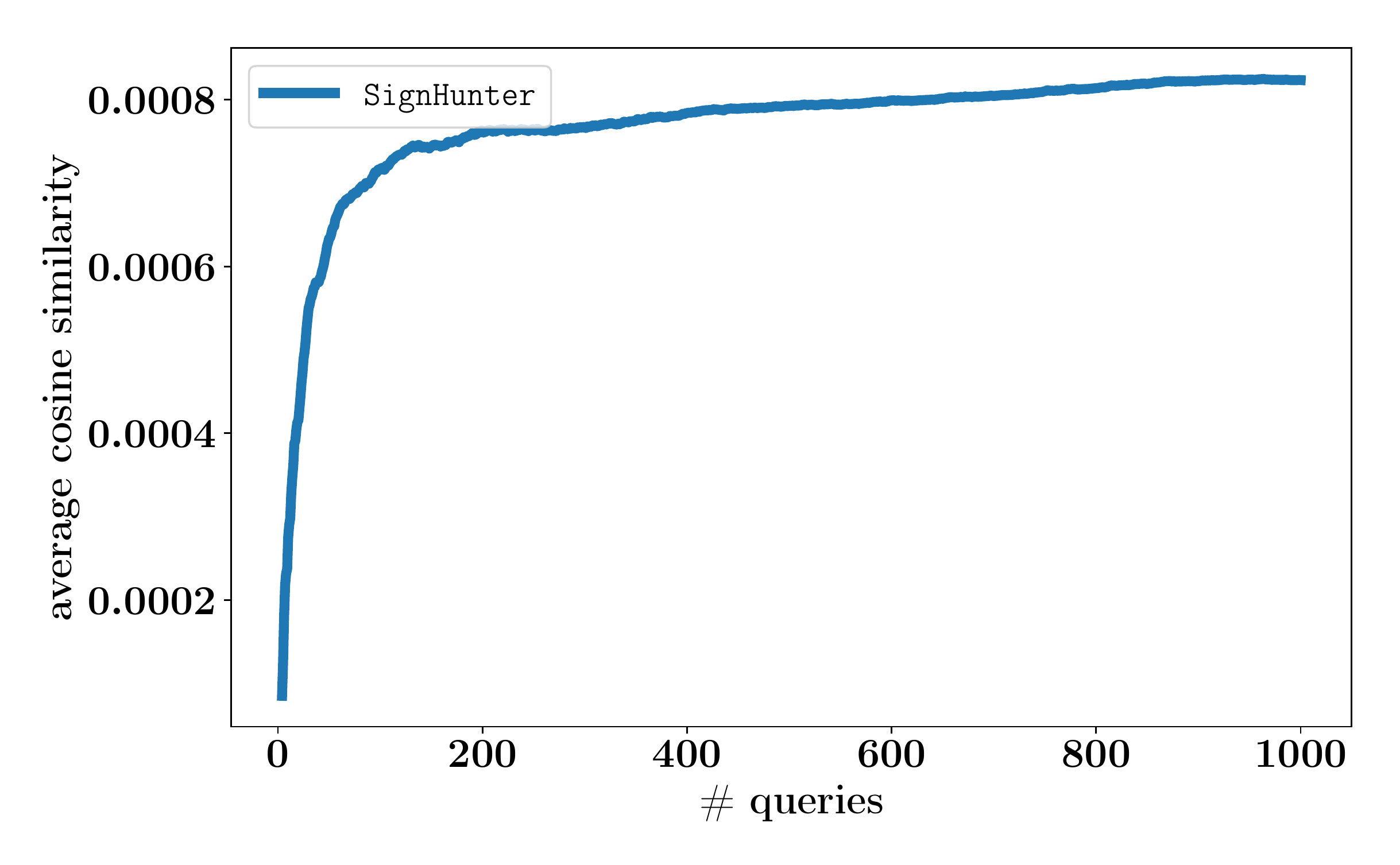} \\
			\includegraphics[width=0.4\textwidth ]{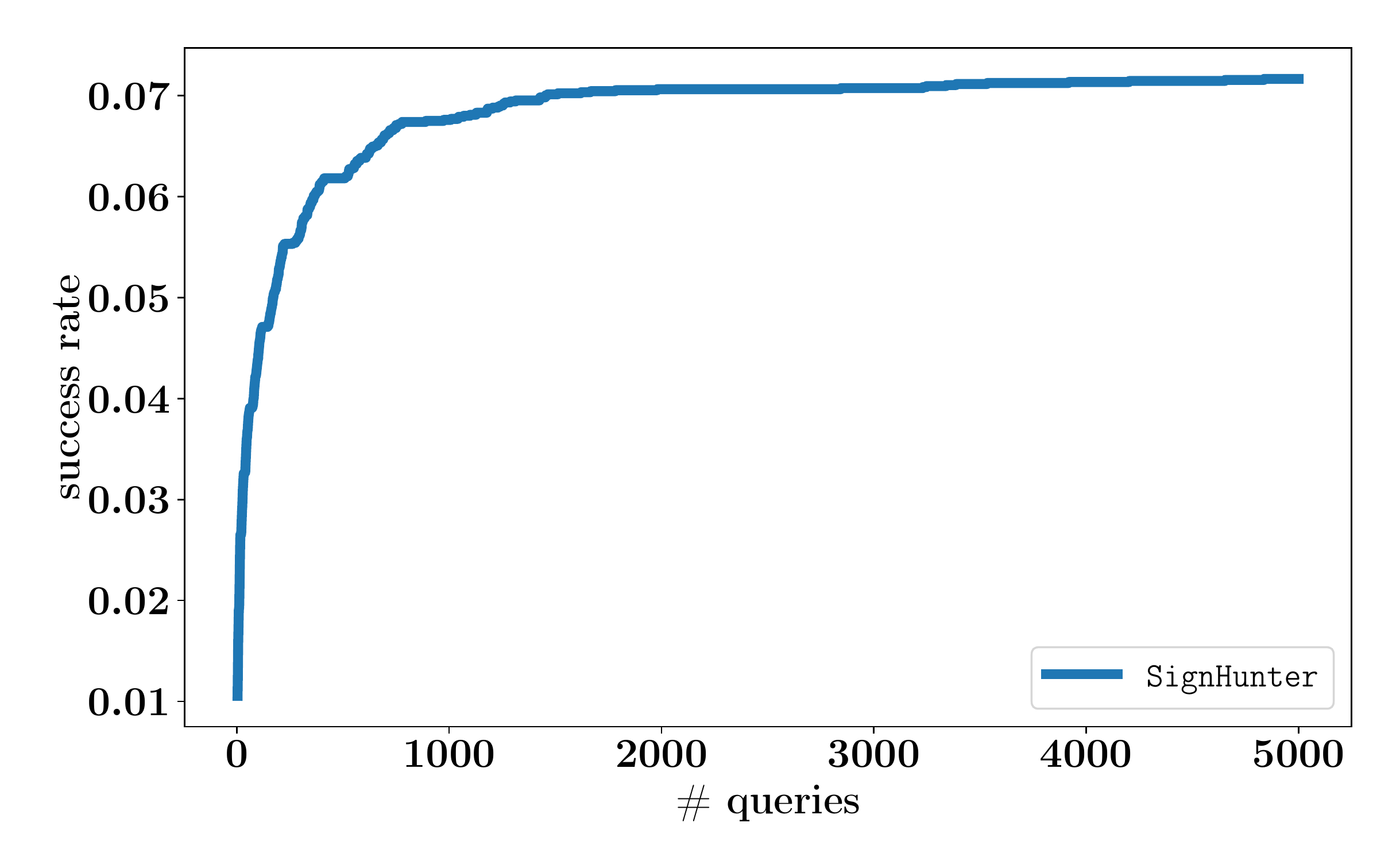} &
			\includegraphics[width=0.4\textwidth ]{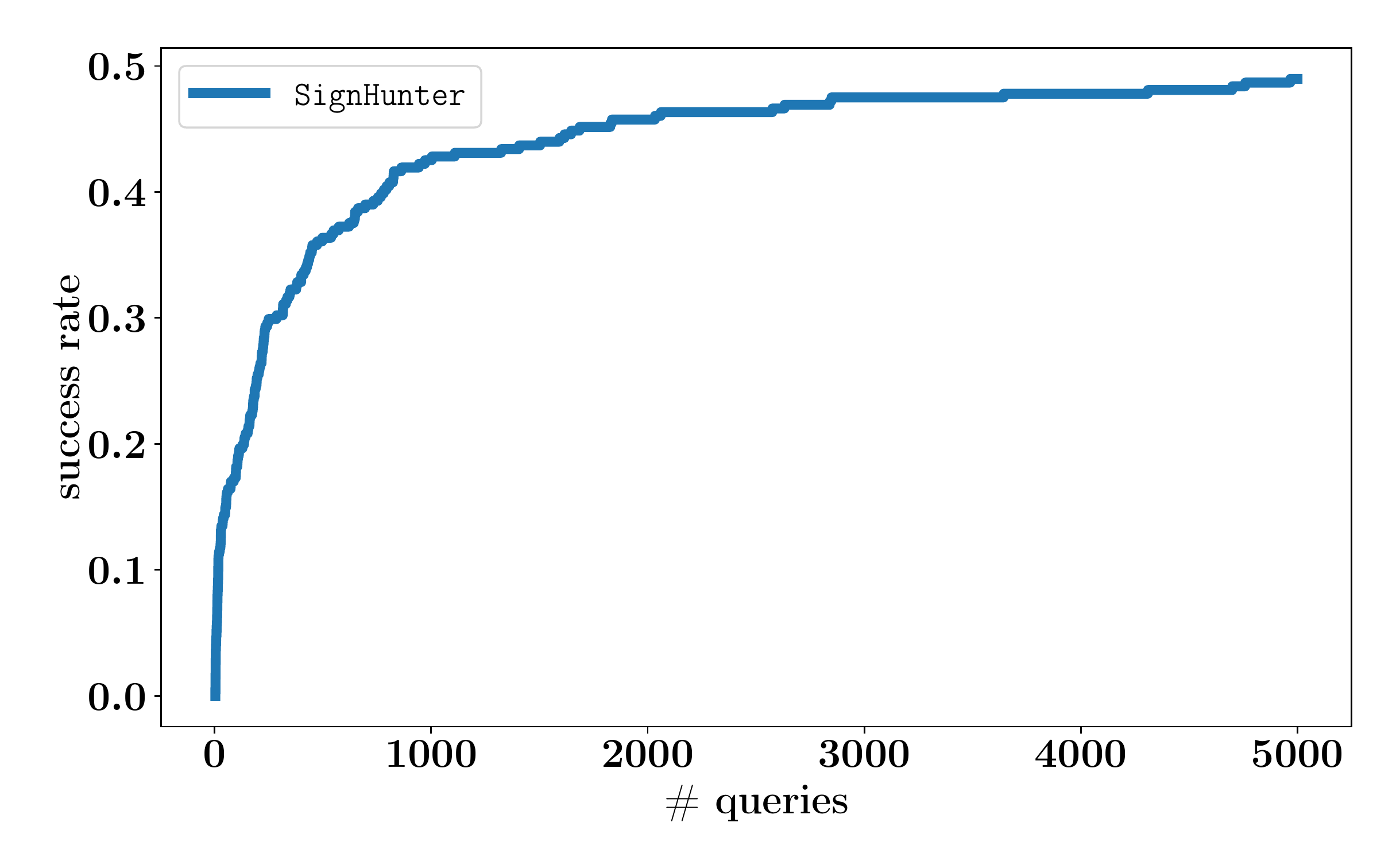} &
			\includegraphics[width=0.4\textwidth ]{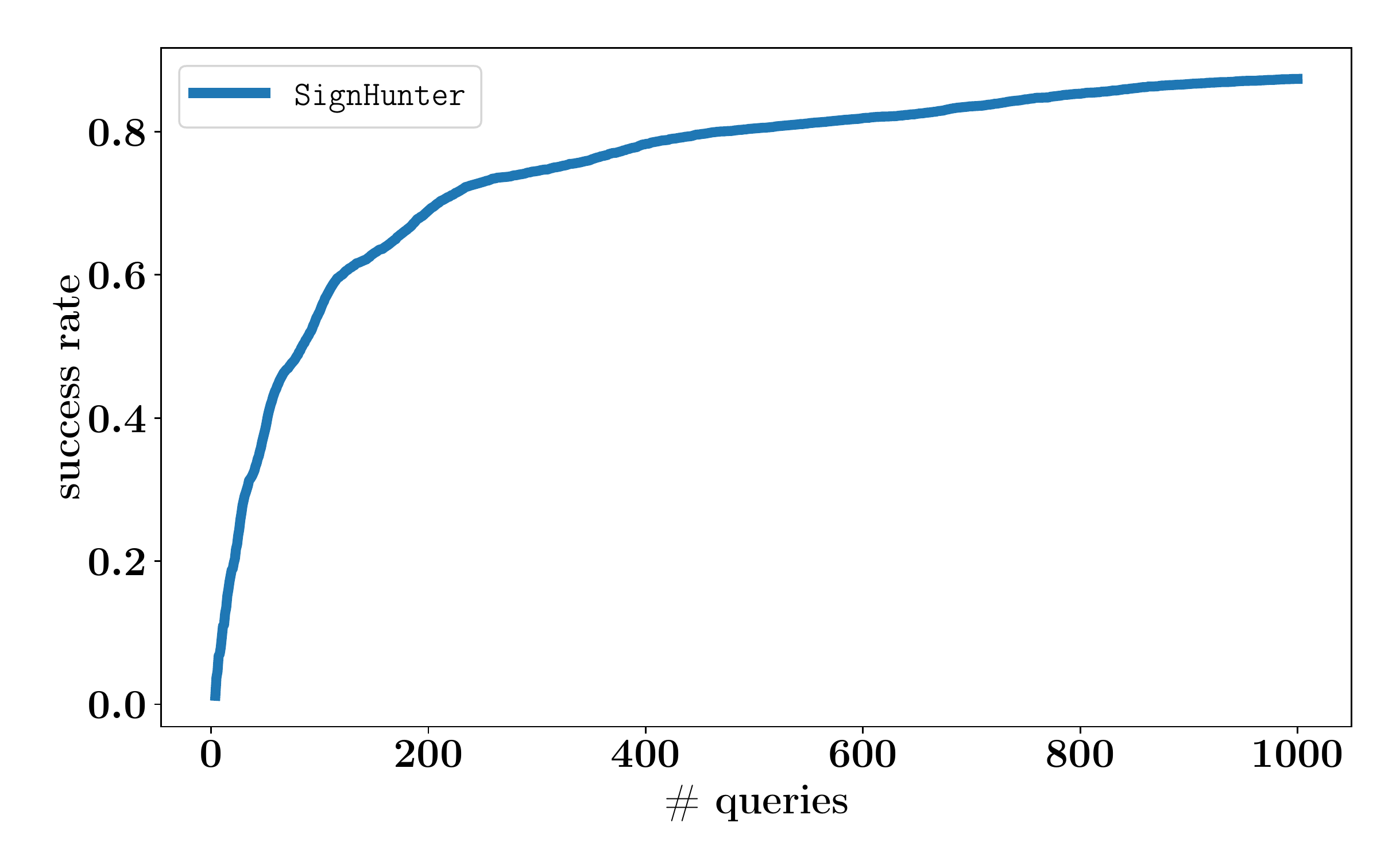} \\
			\includegraphics[width=0.4\textwidth ]{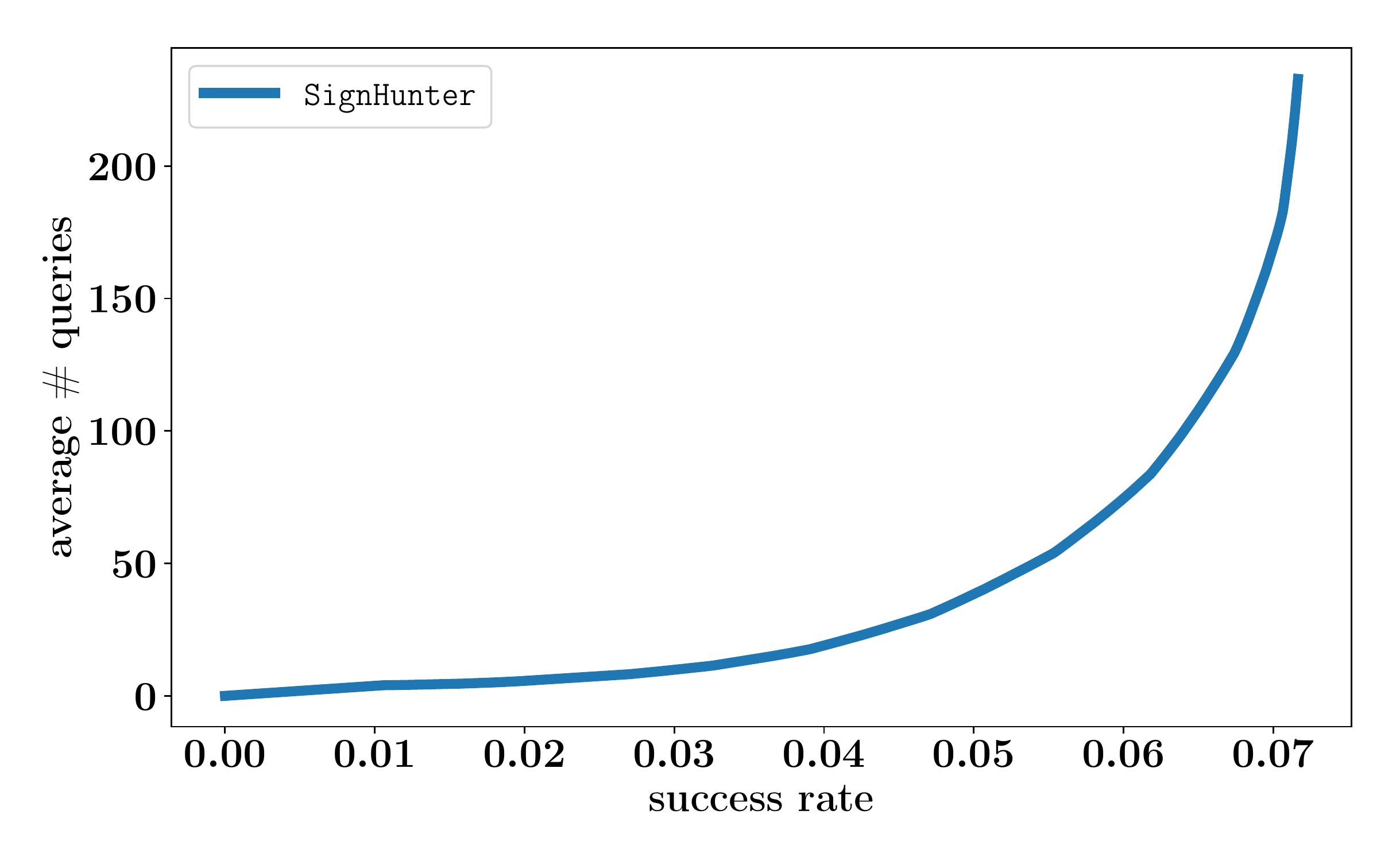}  &
			\includegraphics[width=0.4\textwidth ]{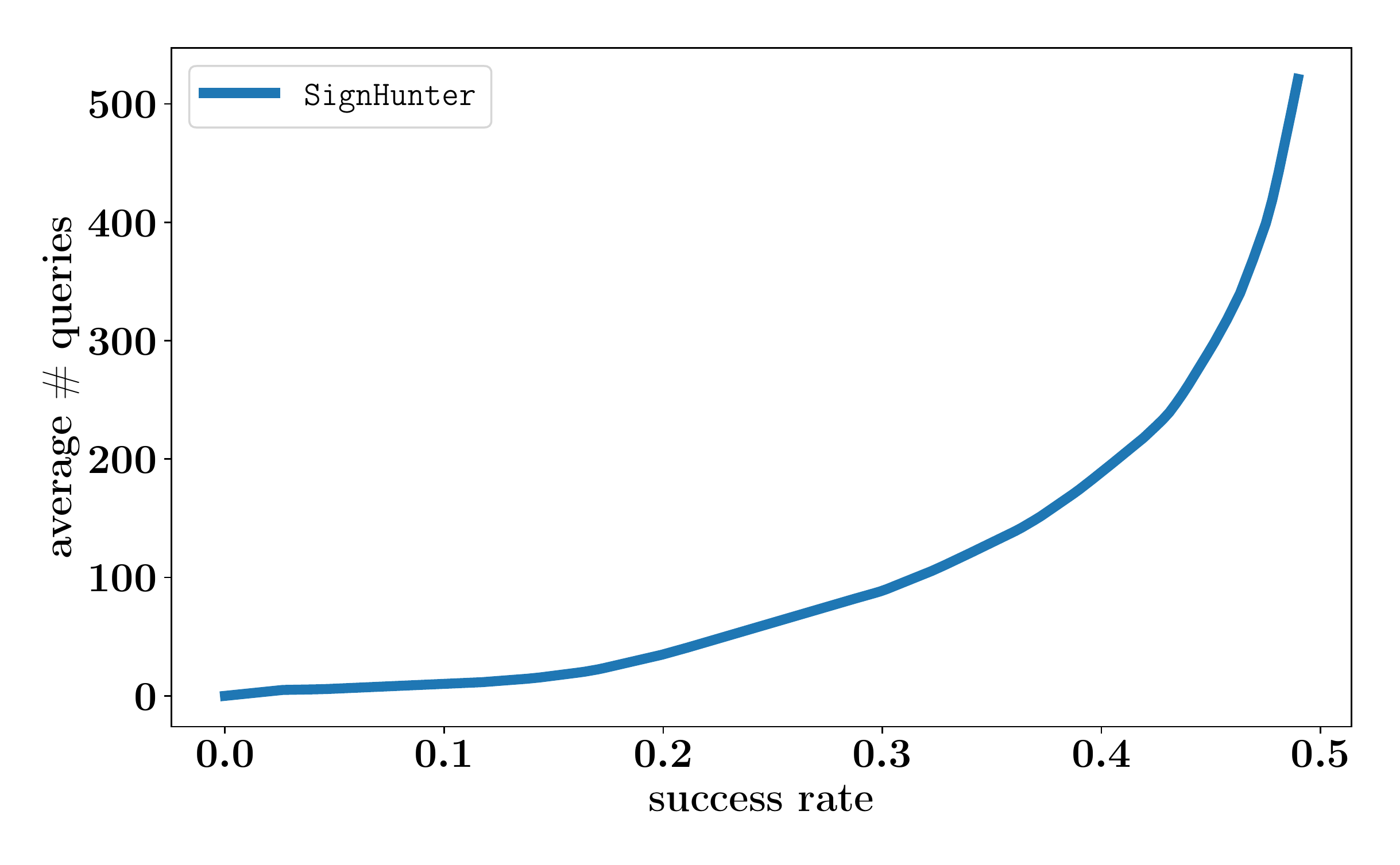} &
			\includegraphics[width=0.4\textwidth ]{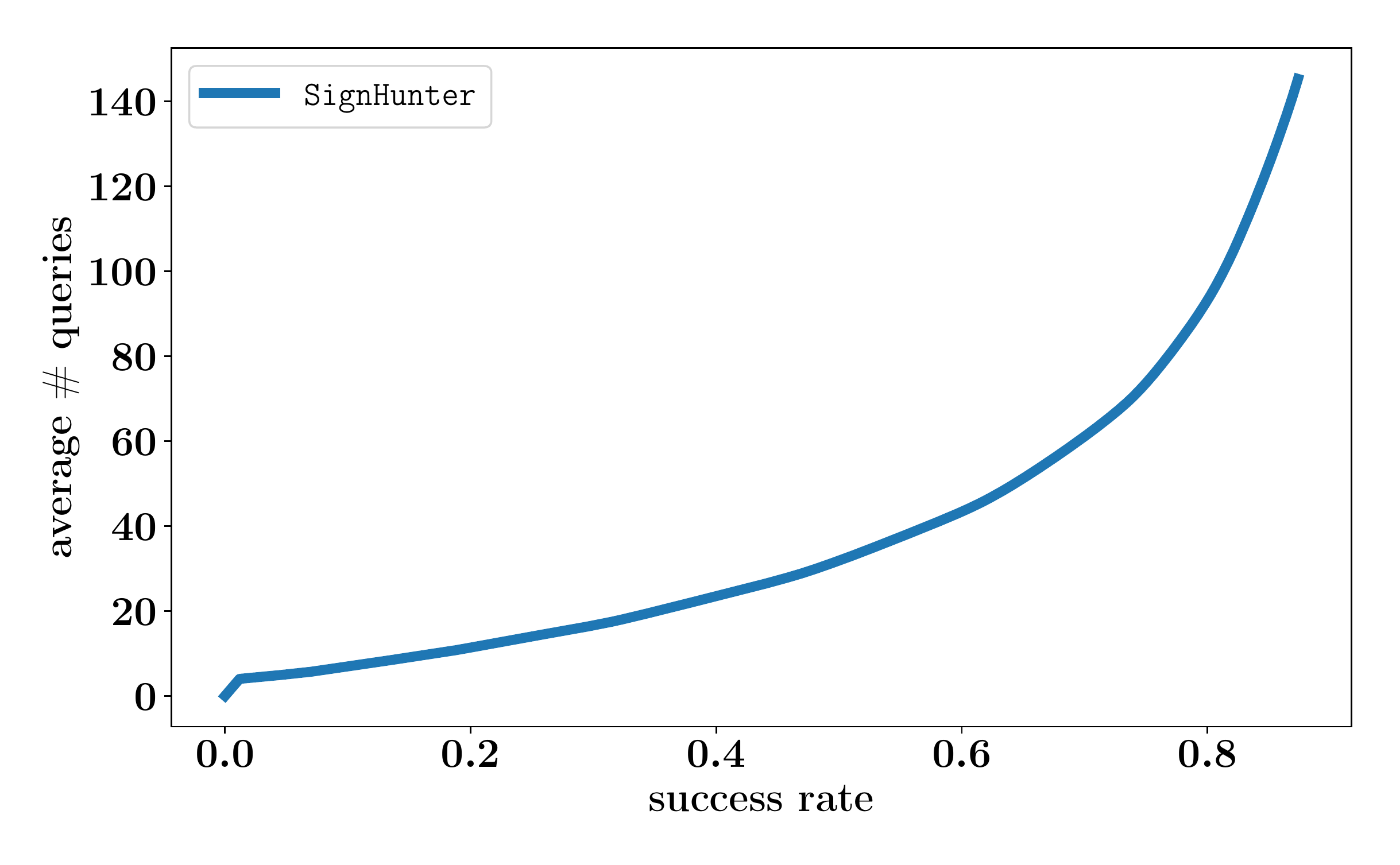} 
			\\
		\end{tabular}
	}
	\caption{Performance curves of attacks on the public black-box challenges for \mnist (first column), \cifar (second column) and \imgnt (third column). Plots of  \emph{Avg. Loss} row reports the loss as a function of the number of queries averaged over all images. The \emph{Avg. Hamming Similarity} row shows the Hamming similarity of the sign of the attack's estimated gradient $\hat{\vg}$ with true gradient's sign $\vq^*$, computed as $1 - ||\sgn(\hat{\vg}) - \vq^*||_H/ n$ and averaged over all images. Likewise, plots of the \emph{Avg. Cosine Similarity} row show the normalized dot product of $\hat{\vg}$ and $\vg^*$ averaged over all images. The \emph{Success Rate} row reports the attacks' cumulative distribution functions for the number of queries required to carry out a successful attack up to the query limit of $5,000$ queries. The \emph{Avg. \# Queries} row reports the average number of queries used per successful image for each attack when reaching a specified success rate: the more effective the attack, the closer its curve is to the bottom right of the plot.
	}
	\label{fig:challenges}
\end{figure*}
\clearpage
\onecolumn
\section*{Appendix G.  Estimating Hamming Oracle }

This section illustrates our experiment on the distribution of the magnitudes of gradient coordinates as summarized in Figure~\ref{fig:hist-mag-grad}. \emph{How to read the plots:} Consider the first histogram in Plot (a) from below; it corresponds to the $1000^{th}$ image from the sampled \mnist evaluation set, plotting the histogram of the values $\{|\partial L(\vx, y)/\partial x_i |\}_{1\leq i \leq n}$, where the \mnist dataset has dimensionality $n=784$. These values are in the range $[0, 0.002]$. Overall, the values are fairly concentrated---with exceptions, in Plot (e) for instance, the magnitudes of the $\sim400^{th}$ image's gradient coordinates are spread from $0$ to $\sim 0.055$. Thus, a Monte Carlo estimate of the mean of $\{|\partial L(\vx, y)/\partial x_i |\}_{1\leq i \leq n}$ would be an appropriate approximation. We release these figures in the form of \textsf{TensorBoard} logs.	

\begin{figure*}[h]
	\centering
	\resizebox{\textwidth}{!}{
		\begin{tabular}{c||c|c|c|}
			& \mnist & \cifar & \imgnt \\ 
			\toprule
			\toprule
			\put(-3,10){
				\rotatebox{90}{ 
					Original Images $\vx$
			}}
			&\begin{overpic}[width=0.31\textwidth,unit=1mm, 
				,trim=5 26 0 20,clip]{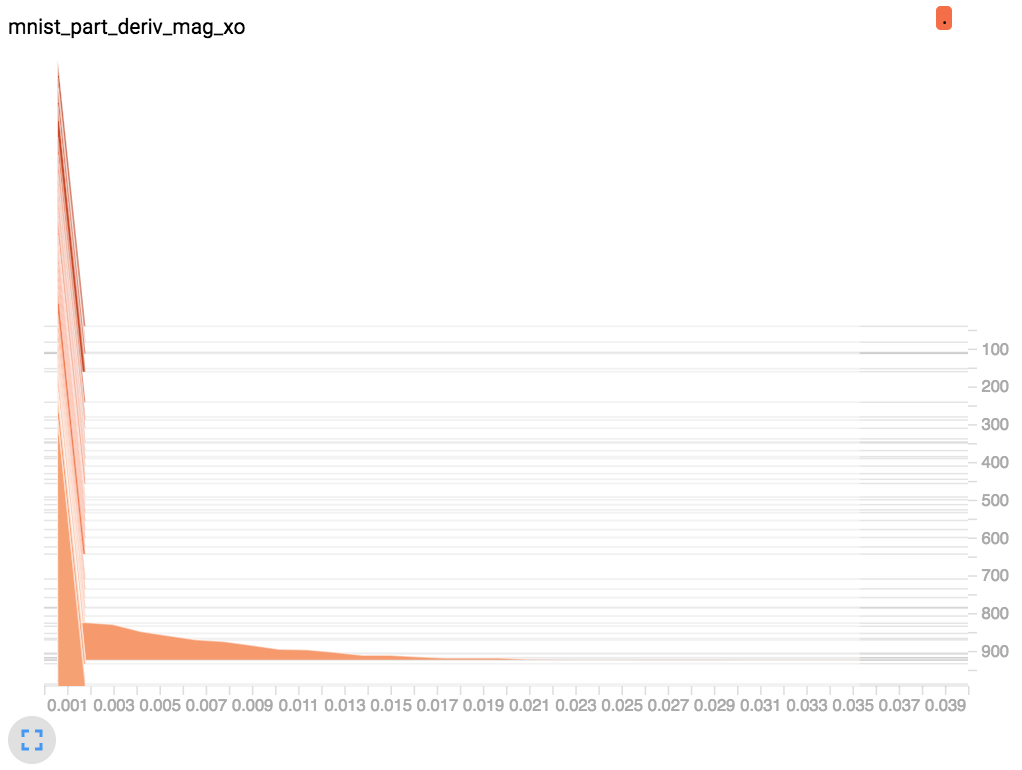}
				\put(5,-2){ \bf \tiny gradient coordinate magnitude}
				\put(-2,7){
					\rotatebox{90}{\bf \tiny  \#gradient coordinates}}
				\put(48,4){
					\rotatebox{90}{\bf \fontsize{4}{3}\selectfont test image index}}
			\end{overpic}
			\hspace{1em}
			&
			\begin{overpic}[width=0.31\textwidth,unit=1mm, 
				,trim=5 26 0 20,clip]{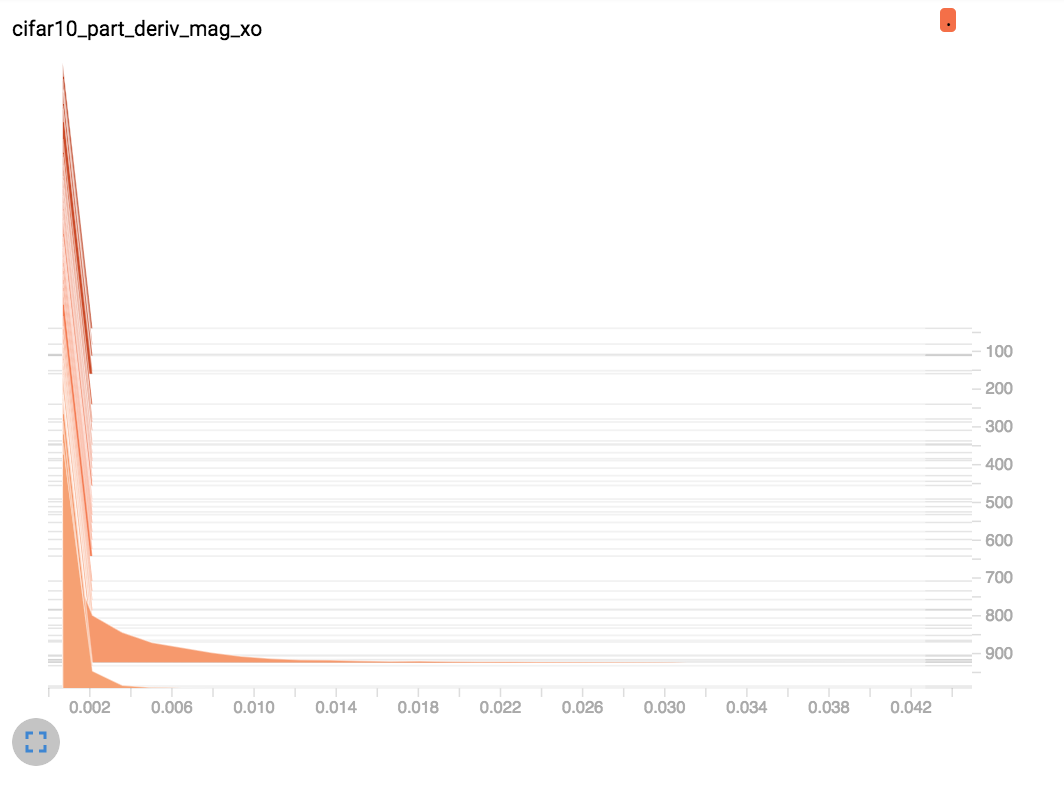}
				\put(5,-2){ \bf \tiny gradient coordinate magnitude}
				\put(-2,7){
					\rotatebox{90}{\bf \tiny  \#gradient coordinates}}
				\put(48,4){
					\rotatebox{90}{\bf \fontsize{4}{3}\selectfont test image index}}
			\end{overpic}
			\hspace{1em}
			&
			\begin{overpic}[width=0.31\textwidth,unit=1mm, 
				,trim=5 26 0 20,clip]{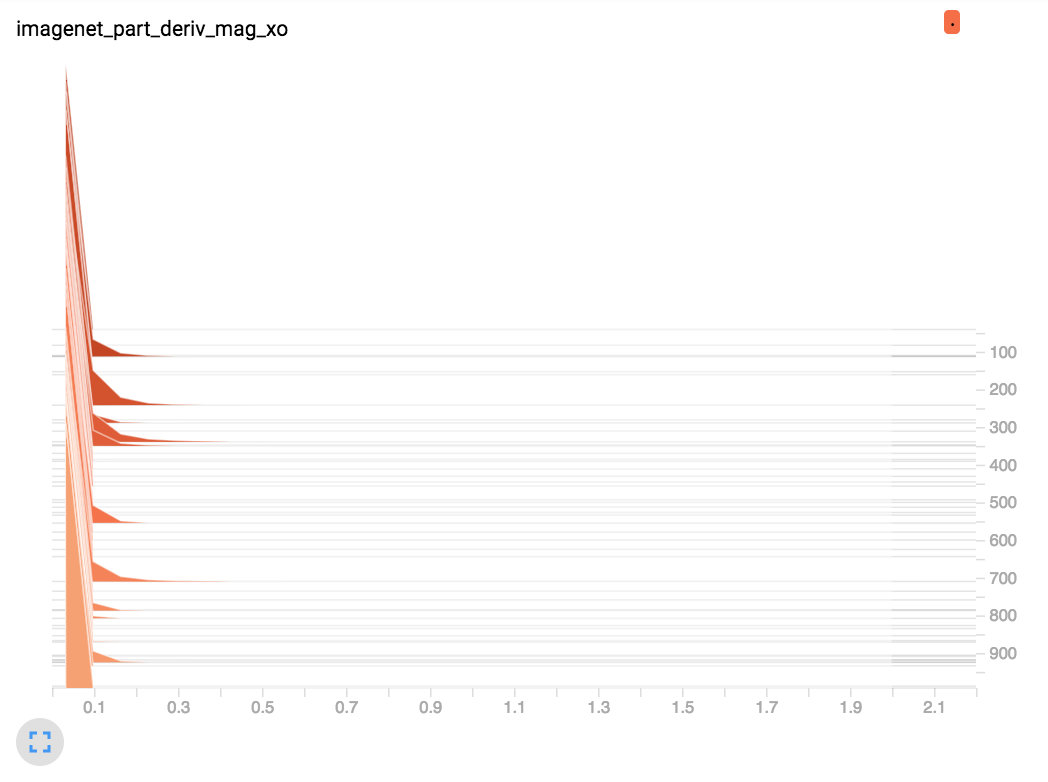}
				\put(5,-2){ \bf \tiny gradient coordinate magnitude}
				\put(-2,7){
					\rotatebox{90}{\bf \tiny  \#gradient coordinates}}
				\put(48,4){
					\rotatebox{90}{\bf \fontsize{4}{3}\selectfont test image index}}
			\end{overpic}
		   \hspace{1em}
			\\
			&\tiny{\textbf{(a)}} & \tiny{\textbf{(b)}} & \tiny{\textbf{(c)}}
			\\ \midrule
			\put(-3,-10){
				\rotatebox{90}{\small Perturbed Images $\in B_{\infty}(\vx, \epsilon)$
			}}
			&\begin{overpic}[width=0.31\textwidth,unit=1mm, 
				,trim=5 26 0 20,clip]{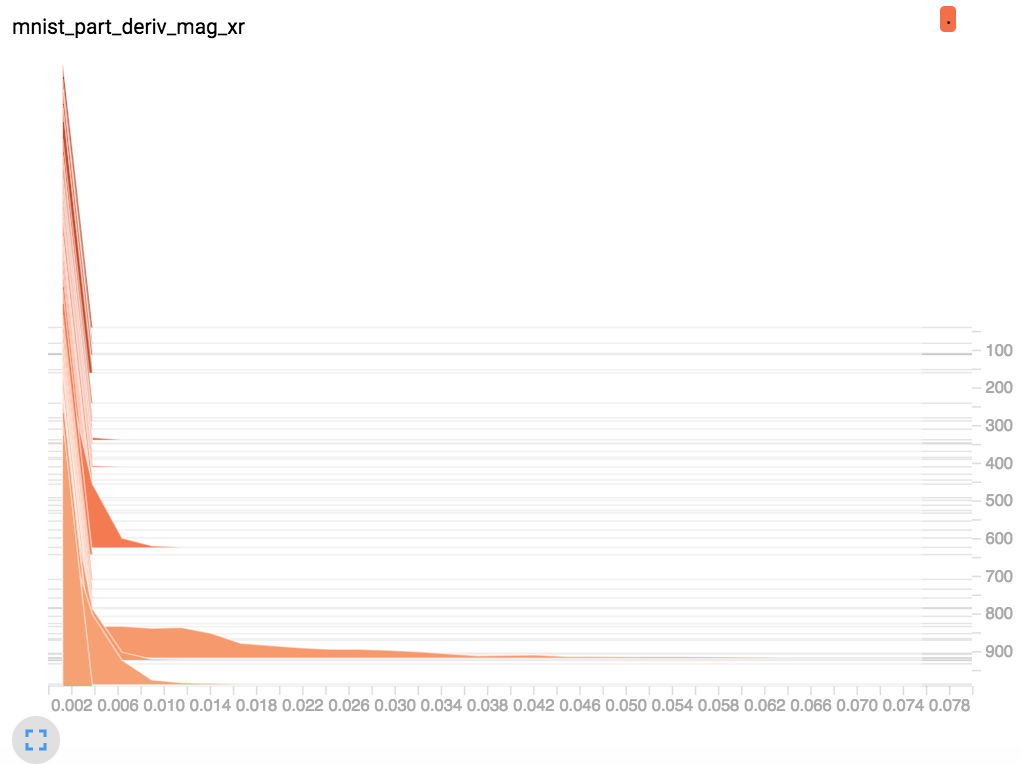}
				\put(5,-2){ \bf \tiny gradient coordinate magnitude}
				\put(-2,7){
					\rotatebox{90}{\bf \tiny  \#gradient coordinates}}
				\put(48,4){
					\rotatebox{90}{\bf \fontsize{4}{3}\selectfont test image index}}
			\end{overpic}
			\hspace{1em}
			&
			\begin{overpic}[width=0.31\textwidth,unit=1mm, 
				,trim=5 26 0 20,clip]{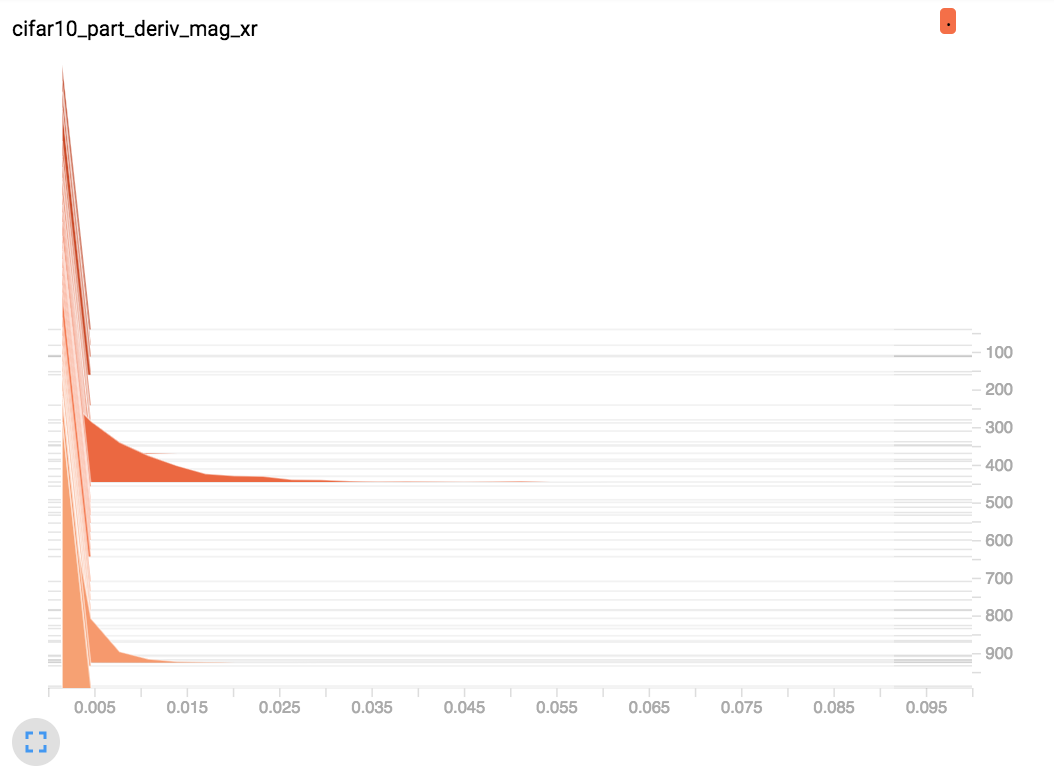}
				\put(5,-2){ \bf \tiny gradient coordinate magnitude}
				\put(-2,7){
					\rotatebox{90}{\bf \tiny  \#gradient coordinates}}
				\put(48,4){
					\rotatebox{90}{\bf \fontsize{4}{3}\selectfont test image index}}
			\end{overpic}
			\hspace{1em}
			&
			\begin{overpic}[width=0.31\textwidth,unit=1mm, 
				,trim=5 26 0 20,clip]{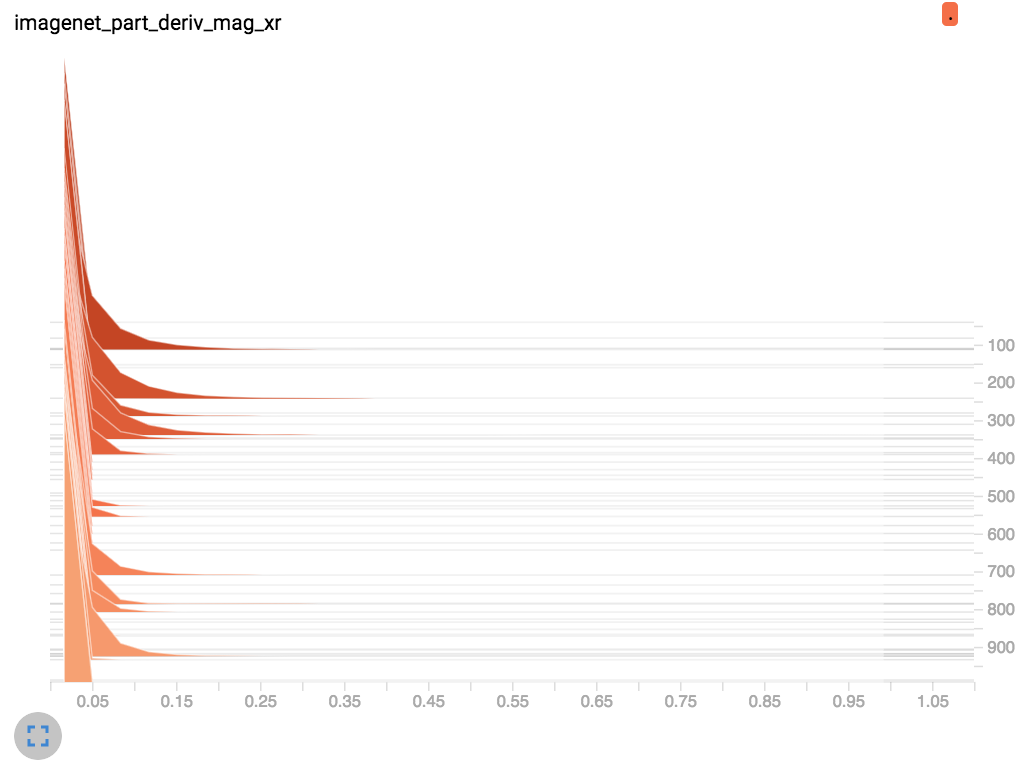}
				\put(5,-2){ \bf \tiny gradient coordinate magnitude}
				\put(-2,7){
					\rotatebox{90}{\bf \tiny  \#gradient coordinates}}
				\put(48,4){
					\rotatebox{90}{\bf \fontsize{4}{3}\selectfont test image index}}
			\end{overpic}
		   \hspace{1em}
			\\
			&\tiny{\textbf{(d)}} & \tiny{\textbf{(e)}} & \tiny{\textbf{(f)}} \\
			\bottomrule
		\end{tabular}
	}
	\caption{\emph{Magnitudes of gradient coordinates are concentrated:} Plots (a), (b), and (c) show histograms of the magnitudes of gradient coordinates of the loss function $L(\vx, y)$ with respect to the input point (image) $\vx$ for \mnist, \cifar, and \imgnt neural net models over $1000$ images from the corresponding evaluation set, respectively. Plots (d), (e), (f) show the same but at input points (images) sampled randomly within $B_\infty(\vx, \epsilon)$: the $\linf$-ball of radius $\epsilon=0.3$, $12$, and $0.05$ around the images in Plots (a), (b), and (c), respectively. 
	}
	\label{fig:hist-mag-grad}
\end{figure*}

\bibliography{bib}
\bibliographystyle{icml2019}

%%%%%%%%%%%%%%%%%%%%%%%%%%%%%%%%%%%%%%%%%%%%%%%%%%%%%%%%%%%%%%%%%%%%%%%%%%%%%%%
%%%%%%%%%%%%%%%%%%%%%%%%%%%%%%%%%%%%%%%%%%%%%%%%%%%%%%%%%%%%%%%%%%%%%%%%%%%%%%%
% DELETE THIS PART. DO NOT PLACE CONTENT AFTER THE REFERENCES!
%%%%%%%%%%%%%%%%%%%%%%%%%%%%%%%%%%%%%%%%%%%%%%%%%%%%%%%%%%%%%%%%%%%%%%%%%%%%%%%
%%%%%%%%%%%%%%%%%%%%%%%%%%%%%%%%%%%%%%%%%%%%%%%%%%%%%%%%%%%%%%%%%%%%%%%%%%%%%%%
%%%%%%%%%%%%%%%%%%%%%%%%%%%%%%%%%%%%%%%%%%%%%%%%%%%%%%%%%%%%%%%%%%%%%%%%%%%%%%%
%%%%%%%%%%%%%%%%%%%%%%%%%%%%%%%%%%%%%%%%%%%%%%%%%%%%%%%%%%%%%%%%%%%%%%%%%%%%%%%

\end{document}